\def\eqref#1{equation~\ref{#1}}
\def\1{\bm{1}}
\DeclareMathAlphabet{\mathsfit}{\encodingdefault}{\sfdefault}{m}{sl}
\SetMathAlphabet{\mathsfit}{bold}{\encodingdefault}{\sfdefault}{bx}{n}
\newtheorem{lemma}{Lemma}
\newtheorem{theorem}{Theorem}
\newtheorem{definition}{Definition}
\title{DOCS: Quantifying Weight Similarity for Deeper Insights into Large Language Models}
\author{Zeping Min \\
Alibaba Group\\
\texttt{minzeping.mzp@alibaba-inc.com} \\
\And
Xinshang Wang \\
Alibaba Group\\
\texttt{xinshang.w@alibaba-inc.com} \\
}
\begin{document}

\maketitle

\begin{abstract}
We introduce a novel index, the Distribution of Cosine Similarity (DOCS), for quantitatively assessing the similarity between weight matrices in Large Language Models (LLMs),  aiming to facilitate the analysis of their complex architectures. Leveraging DOCS, our analysis uncovers intriguing patterns in the latest open-source LLMs: adjacent layers frequently exhibit high weight similarity and tend to form clusters, suggesting depth-wise functional specialization. Additionally, we prove that DOCS is theoretically effective in quantifying similarity for orthogonal matrices, a crucial aspect given the prevalence of orthogonal initializations in LLMs. This research contributes to a deeper understanding of LLM architecture and behavior, offering tools with potential implications for developing more efficient and interpretable models.
\end{abstract}

\section{Introduction}
\label{intro}
Large Language Models (LLMs), built on transformer architectures \citep{vaswani2017attention}, have ushered in a new era in natural language processing \citep{brown2020language}. These complex models have demonstrated remarkable capabilities, but understanding their underlying mechanisms remains a challenge. Similarity analysis techniques \citep{raghu2017svcca, morcos2018insights, kornblith2019similarity} offer a promising approach for gaining insights into the learned representations and computational processes within these models.

In this work, we extend the application of similarity analysis by directly examining the \emph{weight matrices} of various LLMs\footnote{While different transformer-based architectures exist, in this paper we primarily focus on \textit{decoder-only} architectures, which include the $W_v$, $W_k$, $W_q$, $W_o$, \textsc{MLP-Up}, and \textsc{MLP-Down} weight matrices.}, instead of focusing on representations. By analyzing the weights themselves, we aim to uncover deeper insights into the model's structure and functionality that are not apparent from representations alone. 

While prior research has explored many methods for characterizing the similarity of neural networks \citep{wu2020similarity,khosla2024soft,kriegeskorte2008representational,klabunde2023similarity,wang2020towards,barannikov2021representation,hamilton2016diachronic,rahamim2024contrasim,tang2020similarity,camastra2016intrinsic,wang2018towards,raghu2017svcca,morcos2018insights,kornblith2019similarity}, these methods are often not applicable to measuring similarity between weight matrices due to the following two key factors. For further discussion, see Appendix \ref{Motivation}.
\begin{enumerate}
\item \textbf{Focus on Representation, Not Weights:}
Similar representations across layers do not necessarily imply similar weight matrices. This discrepancy arises from the use of \emph{residual connections} in transformer architectures \citep{he2016deep}, which create shortcuts that allow information to bypass layer transformations. Mathematically, a residual connection is represented as
\begin{equation}
    \mathbf{y} = \mathcal{F}(\mathbf{x}, \mathcal{W}) + \mathbf{x},
\label{eq:residual-connection}
\end{equation}
where $\mathbf{x}$ is the layer's input, $\mathcal{W}$ represents the weight matrices, $\mathcal{F}$ is the transformation function (including the feedforward network and attention), and $\mathbf{y}$ is the layer's output. As shown in \eqref{eq:residual-connection}, the output $\mathbf{y}$ directly depends on the input $\mathbf{x}$. While residual connections mitigate issues such as vanishing gradients during training, this direct dependence inherently creates \emph{correlations} between the inputs of different layers, as the output of one layer serves as the input for the next. Since the transformation function $\mathcal{F}$ depends on the input $\mathbf{x}$, these input correlations can lead to \emph{correlated representations} of transformations across layers, even if the underlying weight matrices $\mathcal{W}$ are distinct. This is further evidenced by Figures \ref{fig:representation-before-ffn} and \ref{fig:representation-after-ffn}, which show that the input and output of the feedforward network have similar patterns of representation similarity.

Consequently, observing similar representations across layers does not guarantee that the corresponding weight matrices are also similar. Since representation and weight are two fundamental facets of the model, each offers unique insights. Therefore, while representation analysis can provide profound understanding of large language models, examining the weight matrices can further deepen our comprehension of their structure and behavior, offering additional perspectives for potential applications. For further discussion, see Appendix \ref{weight_rep}.

\item \textbf{Non-Discriminative for Orthogonal Matrices:} Many existing similarity indices, such as Canonical Correlation Analysis (CCA) \citep{ramsay1984matrix, morcos2018insights}, Singular Vector Canonical Correlation Analysis (SVCCA) \citep{raghu2017svcca}, and linear Centered Kernel Alignment (linear CKA) \citep{kornblith2019similarity}, are \emph{non-discriminative for orthogonal matrices}. An \emph{orthogonal matrix} $Q$ is defined by the property $Q^\top Q = Q Q^\top = I$, where $I$ is the identity matrix. This non-discriminative nature means that these indices can yield the same score when assessing the similarity between \emph{any two} orthogonal matrices, regardless of their actual differences. This limitation hinders accurate similarity assessment, as it fails to capture genuine differences between weight matrices. This issue is particularly relevant in the context of LLMs, where orthogonal matrices commonly occur throughout the training process \citep{tian2023joma}.

\end{enumerate}

To address these challenges, we introduce a novel matrix-similarity index called the \textit{Distribution of Cosine Similarity (DOCS)}. DOCS directly measures the similarity of weight matrices by computing the cosine similarity between corresponding vectors and analyzing their distribution. Importantly, DOCS retains the desirable properties of existing similarity indices while overcoming their non-discriminative nature for orthogonal matrices, a critical factor in LLM analysis. Through extensive experiments on various LLMs, we demonstrate that DOCS provides a more reliable and accurate measure of similarity between LLM weight matrices.

\begin{figure}[h]
\centering
\begin{subfigure}[b]{0.2\textwidth}
    \includegraphics[width=\textwidth]{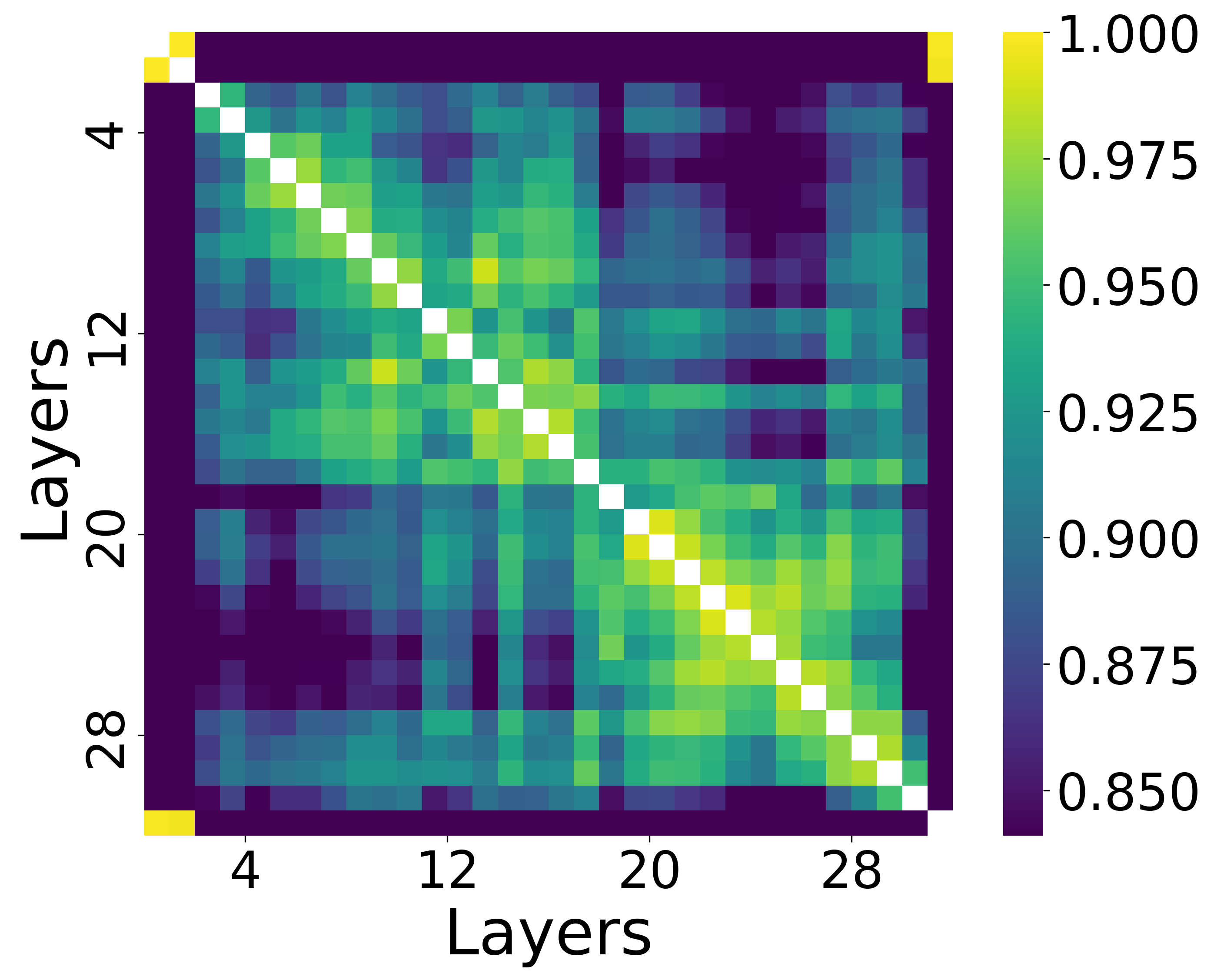}
    \caption{Representation similarity before the feedforward network, measured using linear CKA.}
    \label{fig:representation-before-ffn}
\end{subfigure}
\hfill
\begin{subfigure}[b]{0.2\textwidth}
    \includegraphics[width=\textwidth]{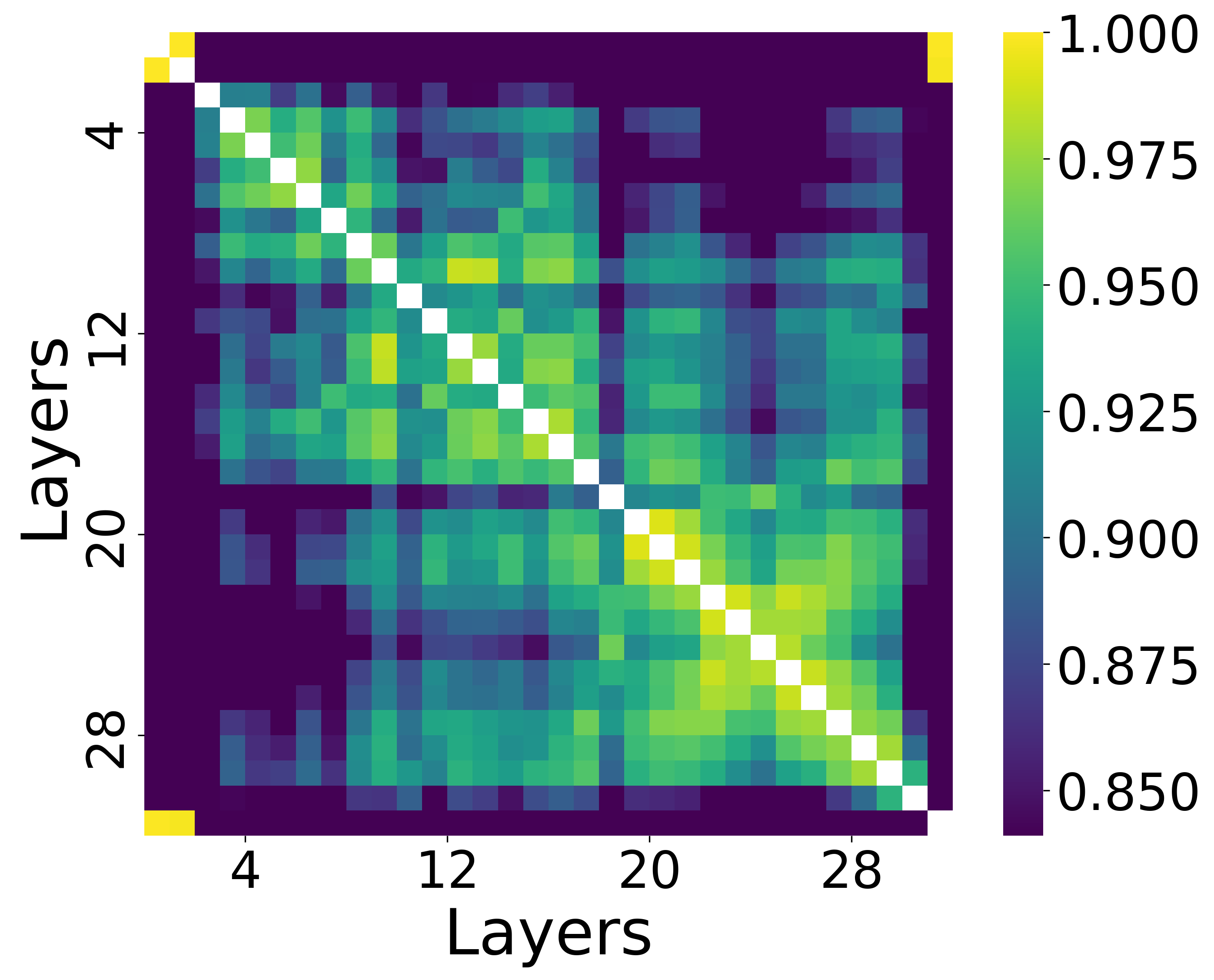}
    \caption{Representation similarity of the feedforward network's output, measured using linear CKA.}
    \label{fig:representation-after-ffn}
\end{subfigure}
\hfill
\begin{subfigure}[b]{0.2\textwidth}
    \includegraphics[width=\textwidth]{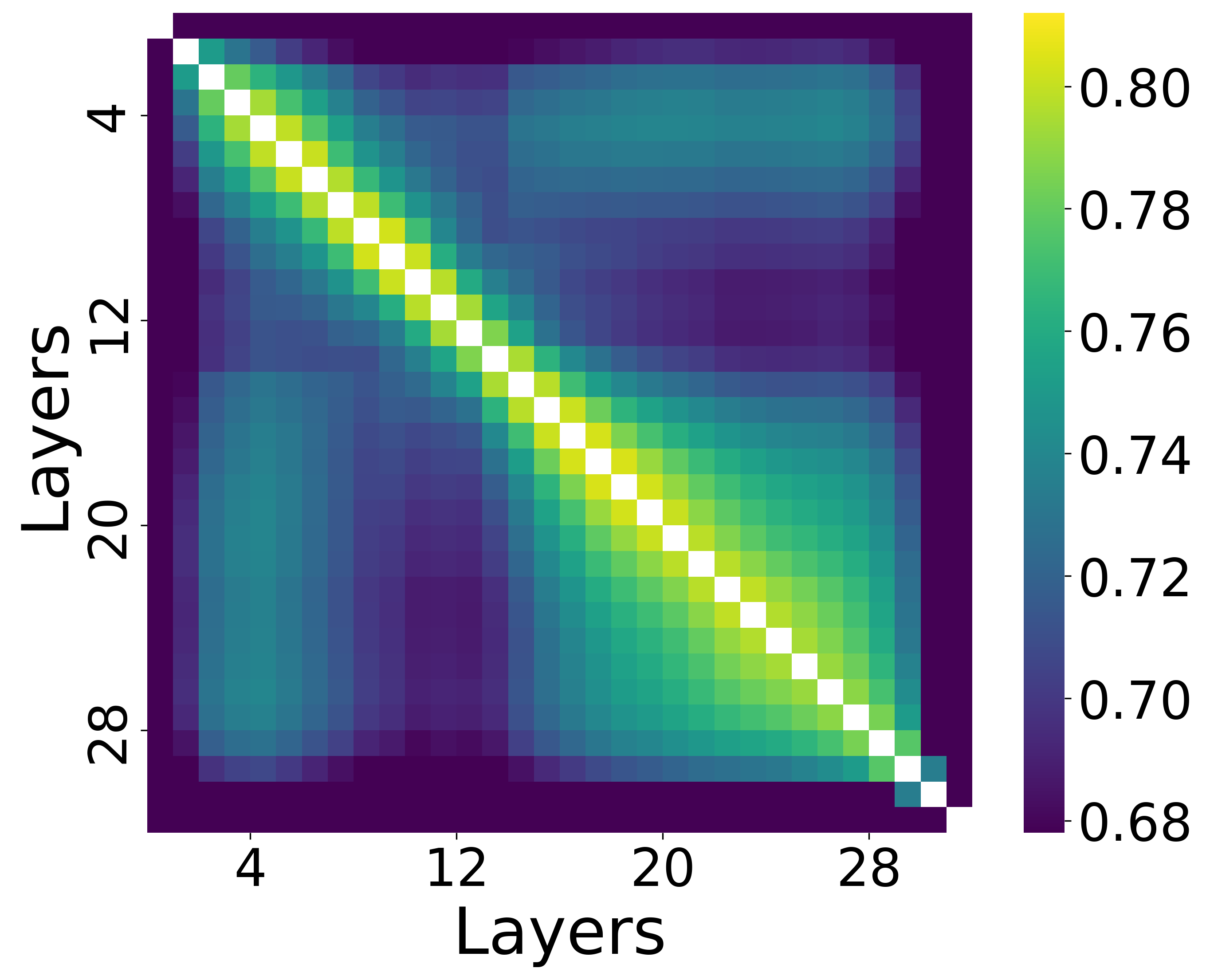}
    \caption{Weight similarity of the \textsc{MLP-Up} matrix, measured using linear CKA.\\ }
    \label{fig:linear-cka-similarity}
\end{subfigure}
\hfill
\begin{subfigure}[b]{0.2\textwidth}
    \includegraphics[width=\textwidth]{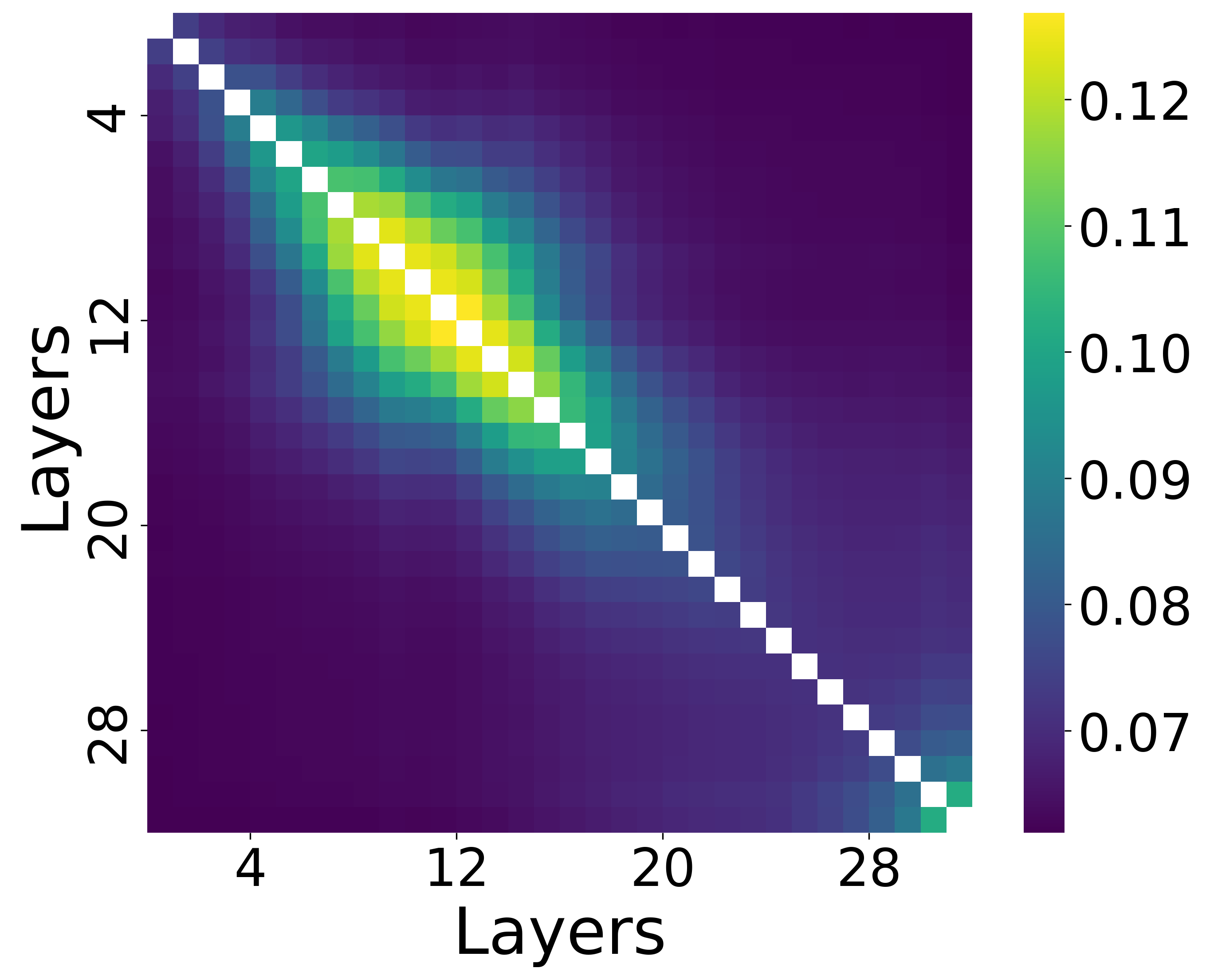}
    \caption{Weight similarity of the \textsc{MLP-Up} matrix, measured using our DOCS method.\\}
    \label{fig:docs-similarity}
\end{subfigure}
\caption{Comparison of similarity indices applied to representation similarities ((a) and (b)) and weight similarities ((c) and (d)) across different layers of Llama 3.1-8B-Instruct \citep{dubey2024llama}.}
\label{fig:similarity-comparison}
\end{figure}

To demonstrate the effectiveness of our DOCS method, Figures \ref{fig:linear-cka-similarity} and \ref{fig:docs-similarity} compare weight similarity heatmaps generated using linear CKA \citep{kornblith2019similarity} and DOCS. The linear CKA heatmap shows light off-diagonal areas and dark stripes, indicating an inability to distinguish between distant layers—possibly due to the orthogonality of weight matrices (see Section \ref{sec:math-property}). In contrast, the DOCS heatmap displays clear light areas near the diagonal, highlighting similarities between adjacent layers in the LLM.

Our results reveal many intriguing similarity patterns within open-source LLMs, leading us to explore several key observations in the following discussions.

\textbf{Neighboring Transformer Layers Exhibit Similar Weights.}

Our analysis reveals a consistent pattern of weight similarity between adjacent layers in open-source LLMs. This suggests that after optimization, similar neurons tend to stay in layers that best suit their function. This observation supports recent studies \citep{lad2024remarkable, song2024sleb, mu2024cross} indicating functional redundancy in adjacent layers. These studies have demonstrated high prediction accuracy even after layer manipulations \citep{lad2024remarkable}, identified block-level redundancy \citep{song2024sleb}, and observed analogous attention patterns in nearby layers \citep{mu2024cross}. Furthermore, we find that weight similarity decreases with increasing layer distance, aligning with the hypothesized universal stages of inference across models \citep{lad2024remarkable}.

\textbf{Clusters of Similar Transformer Layers Exist.}

Beyond merely adjacent layers, we find that clusters of multiple similar, nearby layers exist within LLMs. As depicted in Figure~\ref{fig:docs-similarity}, layers 7--12 form such a cluster, exhibiting relatively high mutual similarity (approximately twice the DOCS index values of other elements) in their \textsc{MLP-Up} module weights according to our DOCS index. This challenges the common practice of uniform layer configurations, as such designs fail to leverage the cluster structure revealed by DOCS, potentially limiting optimization during the SFT stage.

Many open-source LLM implementations, including GPT-2 \citep{radford2019language}, Llama \citep{touvron2023llama}, Mistral \citep{jiang2023mistral}, Llama 3 \citep{dubey2024llama}, Gpt-neox-20b \citep{black2022gpt}, Opt \citep{zhang2022opt}, Codegeex \citep{zheng2023codegeex}, Glm-130b \citep{zeng2022glm}, and Flm \citep{li2023flm}, adopt architectures where all layers have the same size. Furthermore, existing literature on scaling laws for neural language models \citep{kaplan2020scaling, hoffmann2022training} and parameter-efficient fine-tuning (PEFT) methods \citep{hu2021lora,houlsby2019parameter} often assumes uniform layer sizes, treating the model as a homogeneously scaled entity.

Our observations suggest revisiting these assumptions. The presence of mutual similarity within the layers of a cluster indicates the potential to apply a distinct configuration to those layers, such as adjusting neuron sizes or training strategies. DOCS similarity can guide such designs, aligning with prior efforts to leverage layer clusters for reduced computation \citep{liao2024beyond}.

\textbf{Comparing Weight Similarities Between Base and Instruction-Tuned Models.}

We investigate whether base models and their instruction-tuned counterparts exhibit similar weight patterns. We address questions such as: How different are the base and instruction fine-tuned models? Which parts of the LLMs are most affected by instruction fine-tuning? Are there any patterns in the changes of weight matrices due to fine-tuning? This examination sheds light on how instruction tuning affects the internal weights of models \citep{ouyang2022training}.

\textbf{Comparing Similarities Between Experts}

We examine the similarity between experts in pre-trained Mixture of Experts (MoE) models. Do the experts have similar weights? Is there any expert that is significantly different from the others? Our analysis provides insights into the diversity among experts \citep{shazeer2017outrageously, lepikhin2020gshard}.

\section{Mathematical Properties of Similarity Indices} \label{sec:math-property}

When evaluating similarity indices for neural network weights, especially in the context of large language models, a critical property to consider is their ability to \emph{discriminate between orthogonal matrices.} Orthogonal matrices play a significant role in neural network initialization and training. They are often used to improve training stability and preserve gradient flow. Even after optimization, weight matrices may retain orthogonality properties \citep{tian2023joma}. Therefore, a similarity index that can distinguish between different orthogonal matrices is essential for capturing meaningful variations in weight matrices.

We categorize the behavior of similarity indices concerning orthogonal matrices into three classes:

\begin{definition}[Constant on Orthogonal Matrices]
An index \( S \) is \emph{constant on orthogonal matrices} if there exists a constant \( C \in \mathbb{R} \) such that for all \( n \in \mathbb{N} \) and all orthogonal matrices \( X, Y \in \mathbb{R}^{n \times n} \), we have \( S(X, Y) = C \).
\end{definition}

\begin{definition}[Dimension-Dependent on Orthogonal Matrices]
An index \( S \) is \emph{dimension-dependent on orthogonal matrices} if for each \( n \in \mathbb{N} \), there exists a constant \( C(n) \in \mathbb{R} \) such that for all orthogonal matrices \( X, Y \in \mathbb{R}^{n \times n} \), we have \( S(X, Y) = C(n) \).
\end{definition}

\begin{definition}[Discriminative on Orthogonal Matrices]
An index \( S \) is \emph{discriminative on orthogonal matrices} if there exist orthogonal matrices \( X, Y, X', Y' \in \mathbb{R}^{n \times n} \) such that \( S(X, Y) \neq S(X', Y') \). \label{def:discriminative}
\end{definition}

Apart from the behavior concerning orthogonal matrices, there are other desirable mathematical properties that ensure similarity indices provide meaningful and consistent comparisons across different models and layers. These properties include:

\begin{enumerate}

\item \textbf{Permutation Transformation (PT) Invariance \citep{williams2021generalized}}: 
    \[
    S(X, Y) = S(XP_X, YP_Y)
    \]
    where \( P_X \) and \( P_Y \) are permutation matrices. This property characterizes the ability of an index to be unaffected by permutations in the neuron ordering.

\item \textbf{Symmetry.} \( S(X, Y) = S(Y, X) \). The similarity measure should be independent of the order of the inputs, ensuring a fair comparison between two matrices.

\item \textbf{Isotropic Scaling (IS) Invariance. \citep{klabunde2023towards,klabunde2023similarity}} \( S(aX, bY) = S(X, Y) \) for any non-zero scalars \( a \), \( b \). This allows for meaningful comparisons of models trained under different conditions, such as varying learning rates or initialization schemes that scale the weights differently.

\item \textbf{Reflexivity.} \( S(X, X) = 1 \). A matrix should be most similar to itself, providing a normalization baseline for similarity measures.
\end{enumerate}

\begin{table}[H]
\centering
\caption{Comparison of Mathematical Properties Across Different Similarity Indices.}
\resizebox{\textwidth}{!}{
\begin{tabular}{lccccc}
\toprule
\textbf{Method} & \textbf{PT Invariance} & \textbf{Symmetry} & \textbf{IS Invariance} & \textbf{Reflexivity} & \textbf{Behavior on Orthogonal Matrices} \\
\midrule
Linear Regression & \checkmark & \XSolidBrush & \checkmark & \checkmark & Constant \\
CCA ($R_{\mathrm{CCA}}^2$) \citep{morcos2018insights} & \checkmark & \checkmark & \checkmark & \XSolidBrush & Constant \\
CCA ($\bar{\rho}_{\mathrm{CCA}}$) \citep{morcos2018insights} & \checkmark & \checkmark & \checkmark & \checkmark & Constant \\
SVCCA ($R_{\mathrm{SVCCA}}^2$) \citep{raghu2017svcca} & \checkmark & \checkmark & \checkmark & \checkmark & Constant (assuming $T_X = T_Y = I$) \\
SVCCA ($\bar{\rho}_{\mathrm{SVCCA}}$) \citep{raghu2017svcca}  & \checkmark & \checkmark & \checkmark & \checkmark & Constant (assuming $T_X = T_Y = I$) \\
Linear HSIC \citep{gretton2005measuring} & \checkmark & \XSolidBrush & \XSolidBrush & \XSolidBrush & Dimension-Dependent \\
Linear CKA \citep{kornblith2019similarity} & \checkmark & \checkmark & \checkmark & \checkmark & Constant \\
DOCS (Ours) & \checkmark & \checkmark & \checkmark & \checkmark & \textbf{Discriminative} \\
\bottomrule
\end{tabular}
}
\label{tab:mathematical_properties}
\end{table}

Table~\ref{tab:mathematical_properties} compares the behavior of various similarity indices with respect to these mathematical properties. Our proposed DOCS index introduces a \emph{discriminative} behavior on orthogonal matrices (see Section~\ref{sec:construct-proof} for theoretical results) and satisfies all the other mathematical properties outlined above. This discriminative capability allows DOCS to capture meaningful differences between weight matrices that other similarity indices—which are constant or dimension-dependent on orthogonal matrices (see proofs in Appendix~\ref{sec:other-similarity-proofs}) —might overlook. Consequently, DOCS provides a more reliable and accurate measure of similarity between LLM weight matrices, enhancing the analysis and understanding of neural network behaviors.

In Appendix~\ref{app:excluded_math_properties}, we discuss additional mathematical properties, some of which DOCS may not satisfy. These properties, while valuable in other contexts, are not critical for evaluating weight similarity measures in neural networks.

\section{Distribution of Cosine Similarity}

Our DOCS method operates by comparing the weight matrices of two components (e.g., feed-forward networks, attention heads) within a neural network. Each component is represented by a matrix where columns correspond to individual parameter vectors (e.g., neuron weights, attention patterns). The key idea is to quantify how well the parameters from one component align with those from another, based on their vector representations.

\begin{algorithm}[H]
\caption{Computation of the DOCS Similarity Index \( S_{\text{DOCS}} \)}
\label{alg:docs}
\begin{algorithmic}[1]
\State \textbf{Input:} Matrices \( X = [X_1, X_2, \dots, X_m] \in \mathbb{R}^{n \times m} \) and \( Y = [Y_1, Y_2, \dots, Y_m] \in \mathbb{R}^{n \times m} \)
\State \textbf{Output:} Similarity index \( S_{\text{DOCS}} \)
\Statex
\Function{MaxCosSim}{$A$, $B$}
    \State Compute the cosine similarity matrix \( C \in \mathbb{R}^{m \times m} \) where \( C_{jk} = \dfrac{A_j^\top B_k}{\|A_j\| \|B_k\|} \)
    \State For each column \( A_j \), find \( s_{A_j} = \max_{k} |C_{jk}| \)
    \State \Return \( \mathbf{s}_A = [s_{A_1}, s_{A_2}, \dots, s_{A_m}]^\top \)
\EndFunction
\Statex
\State Compute \( \mathbf{s}_X = \Call{MaxCosSim}{X, Y} \)
\State Compute \( \mathbf{s}_Y = \Call{MaxCosSim}{Y, X} \)
\State Fit a Gumbel distribution to \( \mathbf{s}_X \) to estimate the location parameter \( u_X \) using maximum likelihood estimation
\State Fit a Gumbel distribution to \( \mathbf{s}_Y \) to estimate the location parameter \( u_Y \) using maximum likelihood estimation
\State Compute the similarity index:
\[
S_{\text{DOCS}} = \frac{u_X + u_Y}{2}
\]
\end{algorithmic}
\end{algorithm}

The DOCS algorithm consists of the following steps:

\begin{enumerate}
    \item \textbf{Compute Cosine Similarities:} Calculate the cosine similarity between all pairs of parameter vectors from the two weight matrices. This results in a cosine similarity matrix \( C \), where each element \( C_{jk} \) quantifies the similarity between the j-th vector from \( X \) and the k-th vector from \( Y \).
    \item \textbf{Extract Maximum Similarities:} For each vector in \( X \), identify the vector in \( Y \) with which it has the highest absolute cosine similarity. This captures the strongest alignment for each vector.
    \item \textbf{Fit Gumbel Distribution:} Fit separate Gumbel distributions to \( \mathbf{s}_X \) and \( \mathbf{s}_Y \) by treating the elements in each vector as data points. Using maximum likelihood estimation, compute the location parameters \( u_X \) and \( u_Y \), respectively. This allows us to summarize each distribution with a single parameter.
    \item \textbf{Compute DOCS Index:} The location parameters \( u_X \) and \( u_Y \) of the fitted Gumbel distributions represent the central tendency of the maximum similarities. Averaging \( u_X \) and \( u_Y \) yields the DOCS similarity index \( S_{\text{DOCS}} \).
\end{enumerate}

The DOCS index \( S_{\text{DOCS}} \) provides a scalar value between 0 and 1 that reflects the degree of similarity between the two weight matrices. A higher value indicates that the matrices have parameters (e.g., neuron weights or attention patterns) that are highly aligned, suggesting similar functional roles. Conversely, a lower value implies less similarity, indicating that the components may be specialized for different functions.

By focusing on maximum cosine similarities and modeling their distribution, DOCS captures significant parameter alignments instead of averaging over all pairwise similarities. In contrast, similarity indices such as Canonical Correlation Analysis (CCA) \citep{morcos2018insights}, Singular Vector CCA (SVCCA) \citep{raghu2017svcca}, and linear Centered Kernel Alignment (linear CKA) \citep{kornblith2019similarity} rely on matrix multiplication to aggregate pair-wise information across entire matrices. This aggregation can dilute strong correspondences between specific parameter vectors, potentially overlooking meaningful alignments. Consequently, DOCS effectively detects strong correspondences between components, enhancing the analysis of deep neural network structures. 

\subsection{Theoretical Justification} \label{sec:construct-proof}

We establish that DOCS can distinguish between orthogonal matrices, a capability that existing similarity indices lack (see Section~\ref{sec:math-property}). The following theorem demonstrates that DOCS not only meets the Definition \ref{def:discriminative} of being discriminative on orthogonal matrices but also achieves a \emph{stronger} level of distinction through a constructive proof.

\begin{theorem}
For \( n \geq 2 \), there exist \( m = \Omega(n) \) and column-orthogonal matrices \( X, Y \in \mathbb{R}^{n \times m} \) such that their Frobenius norm difference and DOCS similarity satisfy:
\[
\| X - Y \|_F = \Omega(\sqrt{m}), \quad \text{and} \quad S_{\text{DOCS}}(X, Y) = \frac{1}{\sqrt{m}}.
\]
\label{thm:main}
\end{theorem}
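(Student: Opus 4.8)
The plan is to give an explicit construction built from a Hadamard matrix. Choose $m$ to be the largest power of two with $m \le n$; since $n \ge 2$ this forces $m \ge 2$ and $m \ge n/2$, so $m = \Omega(n)$, and we are free to pick $m$ this way. Let $H \in \{-1,+1\}^{m\times m}$ be the Sylvester--Hadamard matrix, which satisfies $H^\top H = m I_m$, and define the two matrices in $\mathbb{R}^{n\times m}$
\[
X = \begin{pmatrix} I_m \\ 0 \end{pmatrix}, \qquad Y = \frac{1}{\sqrt m}\begin{pmatrix} H \\ 0 \end{pmatrix}.
\]
Then $X^\top X = I_m$ and $Y^\top Y = \tfrac1m H^\top H = I_m$, so both matrices are column-orthogonal, and every column of either matrix has unit norm — exactly what the cosine-similarity step of Algorithm~\ref{alg:docs} expects.

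Next I would run DOCS on this pair. Because all columns are unit vectors, the cosine-similarity matrix is simply $C_{jk} = X_j^\top Y_k = \tfrac{1}{\sqrt m} H_{jk}$, so $|C_{jk}| = \tfrac{1}{\sqrt m}$ for every $j,k$. Hence $s_{X_j} = \max_k |C_{jk}| = \tfrac1{\sqrt m}$ for all $j$, and symmetrically $s_{Y_k} = \tfrac1{\sqrt m}$ for all $k$, i.e.\ both $\mathbf s_X$ and $\mathbf s_Y$ are the constant vector $\tfrac1{\sqrt m}\mathbf 1$. It then remains to evaluate the Gumbel location parameter fitted by maximum likelihood to a constant sample $x_1 = \dots = x_m = c$ with $c = \tfrac1{\sqrt m}$. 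The Gumbel log-likelihood equals $-m\log\beta - m(c-\mu)/\beta - m e^{-(c-\mu)/\beta}$, whose $\mu$-derivative $\tfrac m\beta\bigl(1 - e^{-(c-\mu)/\beta}\bigr)$ vanishes, for every scale $\beta>0$, precisely at $\mu = c$; equivalently the usual MLE fixed-point equation for the scale collapses to $\hat\beta = c - c = 0$ with $\hat\mu = c$. Either way the estimated location is $u_X = u_Y = c = \tfrac1{\sqrt m}$, so $S_{\text{DOCS}}(X,Y) = \tfrac12(u_X + u_Y) = \tfrac1{\sqrt m}$.

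For the Frobenius bound I would expand $\|X - Y\|_F^2 = \|X\|_F^2 + \|Y\|_F^2 - 2\operatorname{tr}(X^\top Y) = m + m - \tfrac{2}{\sqrt m}\operatorname{tr}(H)$. The recursive block form of the Sylvester matrix makes $\operatorname{tr}(H) = 0$ for every $m \ge 2$ (the two diagonal blocks of $H_{2k}$ are $H_k$ and $-H_k$), so $\|X - Y\|_F = \sqrt{2m} = \Omega(\sqrt m)$; even discarding this identity, $|\operatorname{tr}(H)| \le m$ gives $\|X-Y\|_F^2 \ge 2m - 2\sqrt m$, which is still $\Omega(m)$ and suffices.

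The only genuinely delicate point is the last part of the DOCS computation: fitting a Gumbel law to a zero-variance sample is degenerate (the likelihood is unbounded as the scale tends to $0$), so one has to argue that the location estimate is nonetheless well defined and equals the common value of the data — which is what makes the clean equality $S_{\text{DOCS}}(X,Y) = \tfrac1{\sqrt m}$ hold rather than merely an order bound. Everything else reduces to elementary linear algebra plus the existence of Hadamard matrices in dimensions that are powers of two.
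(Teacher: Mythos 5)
Your proposal is correct and follows essentially the same route as the paper's proof: the identical construction ($X$ from standard basis vectors, $Y$ a zero-padded normalized Hadamard matrix with $m$ the largest power of two at most $n$), the same observation that all cosine similarities have magnitude $1/\sqrt{m}$, and the same cancellation of the diagonal contribution (your $\operatorname{tr}(H)=0$ is exactly the paper's cancellation of the $\mp 2/\sqrt{m}$ terms). Your trace-based Frobenius computation and your explicit treatment of the degenerate Gumbel fit on a constant sample are minor refinements of detail, not a different argument.
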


The proof of the theorem is deferred to Appendix \ref{sec:proof-of-theorem}.
The intuition behind the proof is to construct the matrices \( X \) and \( Y \) with orthonormal columns but differing structures to highlight their dissimilarity. The matrix \( X \) is built from standard basis vectors, while \( Y \) leverages a normalized Hadamard matrix to ensure orthonormality. By calculating the Frobenius norm of the difference \( X - Y \), we show that it scales as \( \sqrt{m} \). Meanwhile, the DOCS value between \( X \) and \( Y \) is controlled by the normalized entries of the Hadamard matrix, demonstrating that their cosine similarity is small, on the order of \( 1/\sqrt{m} \). This gap between the large Frobenius norm and small DOCS establishes the desired properties. 

This theorem proves the existence of column-orthogonal matrices with significant differences \(\| X - Y \|_F = \Omega(\sqrt{m})\). Unlike existing methods, DOCS similarity effectively captures these differences \(S_{\text{DOCS}}(X, Y) = \frac{1}{\sqrt{m}}, \) demonstrating its superior discriminative power for orthogonal matrices (see Table~\ref{tab:mathematical_properties}). An illustrative example is provided in Appendix~\ref{Proof of Discriminative}.

\section{Experiments}

 We conduct experiments to demonstrate the capabilities of DOCS and to gain insights into the internal structure of LLMs. In LLM implementations\footnote{https://github.com/huggingface/transformers}, the rows of a weight matrix correspond to output dimensions, and the columns correspond to input dimensions. To align the column vectors with meaningful entities (e.g., neuron weights), we transpose \( W_v \), \( W_k \), \( W_q \), and \textsc{MLP-Up} before computing DOCS scores. In the MoE experiment, \( W_1 \) and \( W_3 \) \citep{jiang2024mixtral} are also transposed for consistency.

\subsection{Comparison of Similarity Indices} 
\label{Comparison of Similarity Indices}

Figure~\ref{fig:comparison_of_similarity_indices} provides a visual comparison of eight different similarity indices applied to the \textsc{MLP-Up} layers of the Meta-Llama-3.1-8B-Instruct model. 
\begin{figure*}[ht]  
    \centering
    \begin{subfigure}[b]{0.23\textwidth}  
        \includegraphics[width=\textwidth]{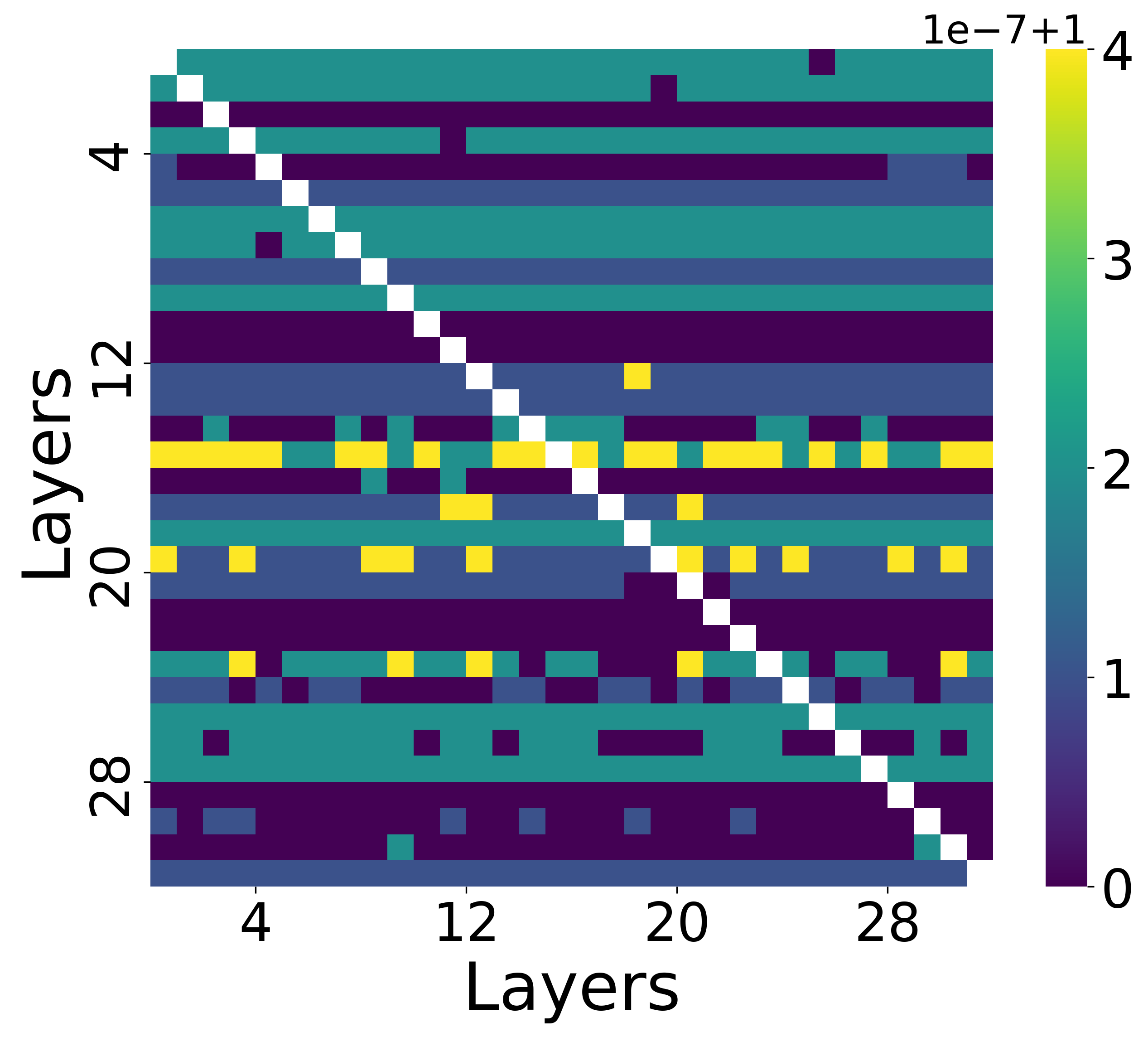}
        \caption{Linear Regression}
    \end{subfigure}
    \hfill
    \begin{subfigure}[b]{0.23\textwidth}
        \includegraphics[width=\textwidth]{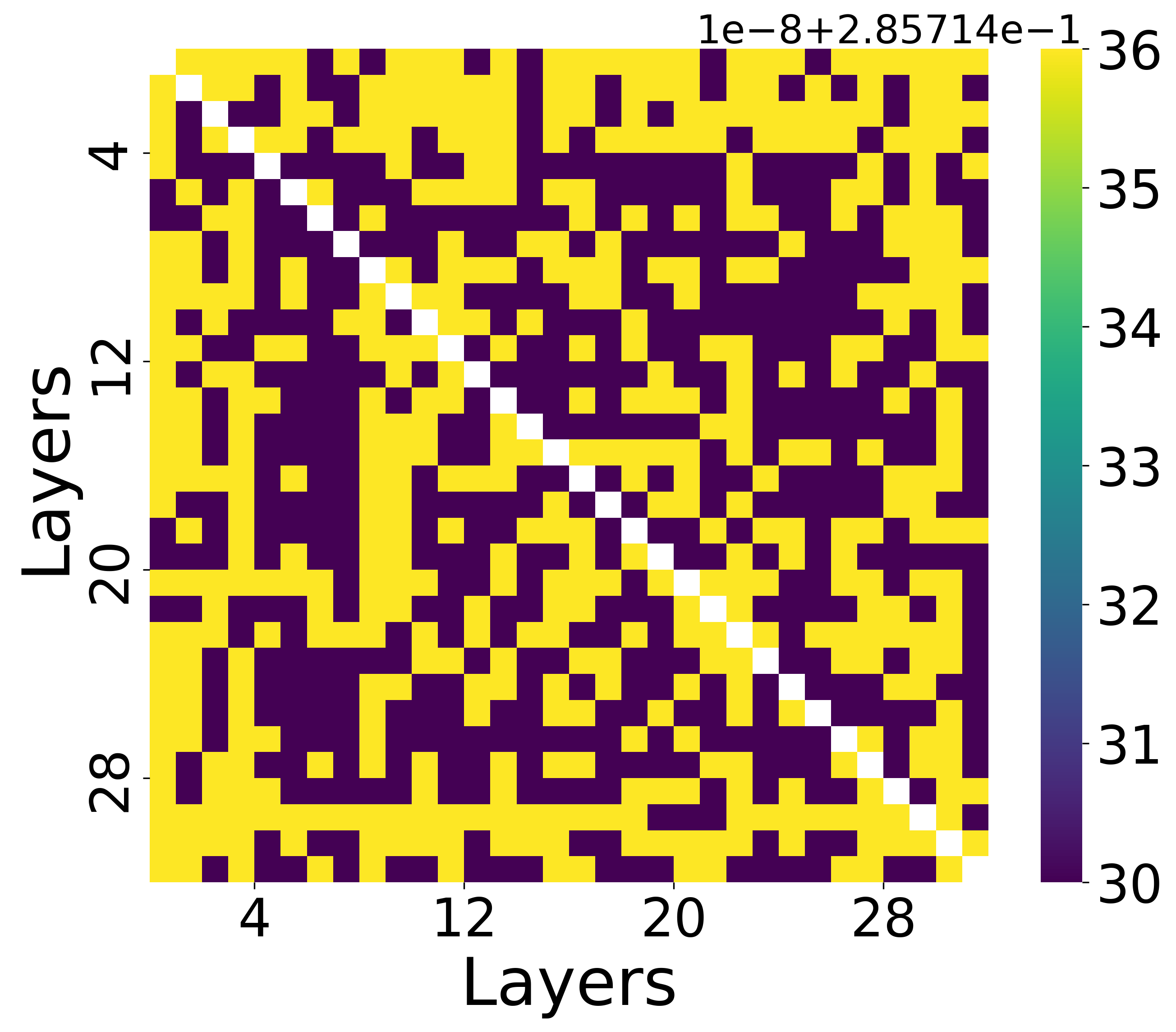}
        \caption{CCA}
    \end{subfigure}
    \hfill
    \begin{subfigure}[b]{0.23\textwidth}
        \includegraphics[width=\textwidth]{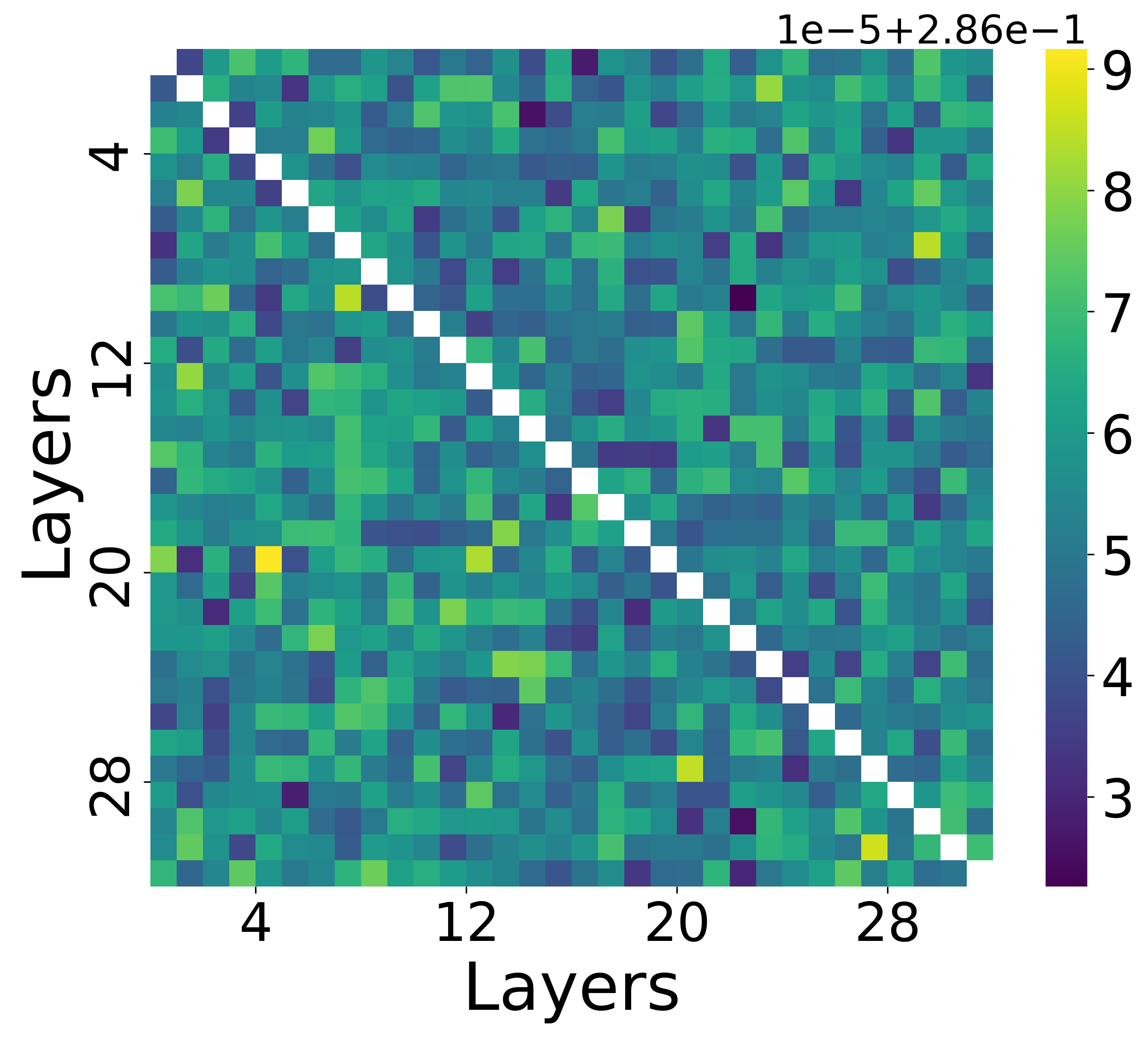}
        \caption{CCA (Nuclear)}
    \end{subfigure}
    \hfill
    \begin{subfigure}[b]{0.23\textwidth}
        \includegraphics[width=\textwidth]{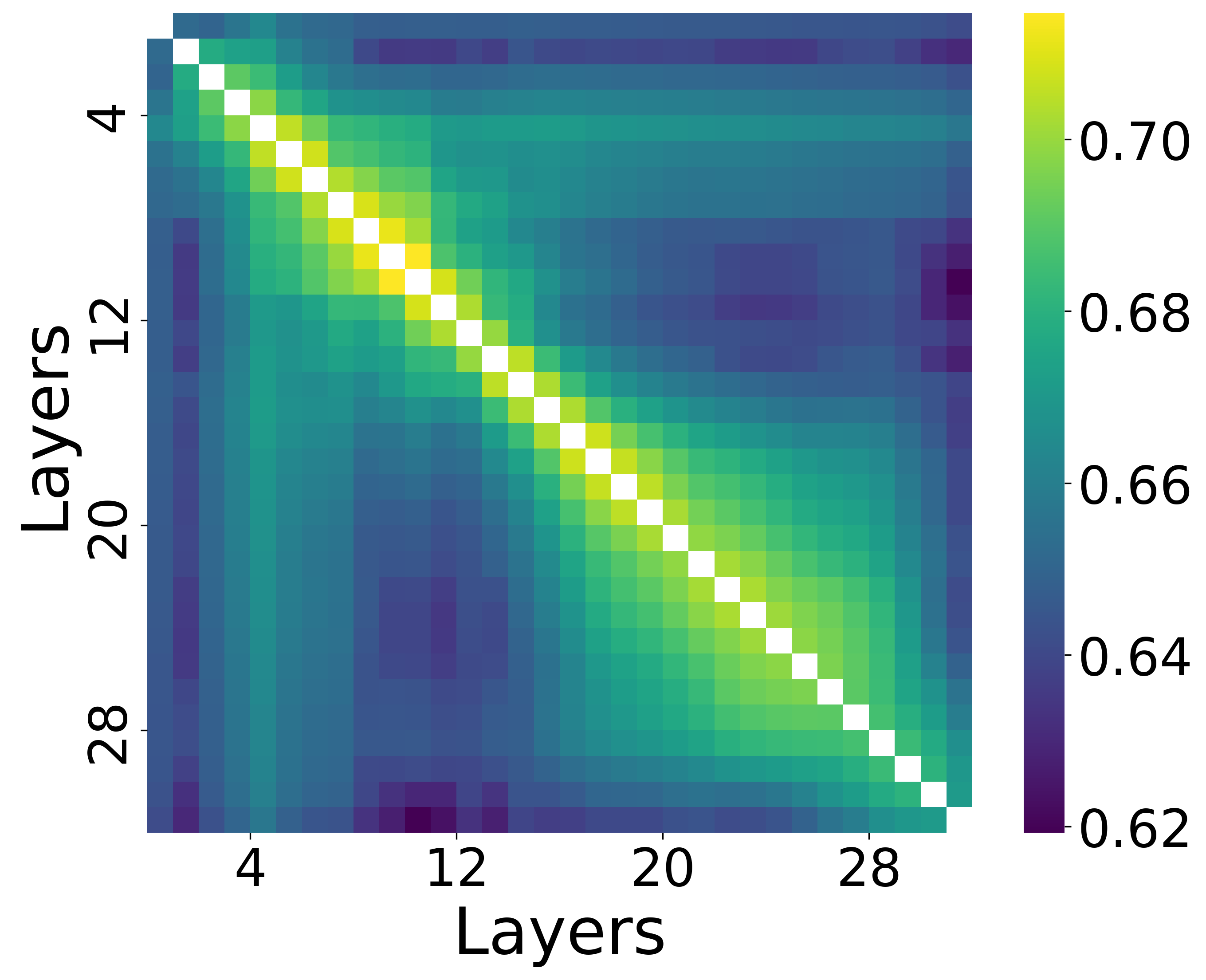}
        \caption{SVCCA}
    \end{subfigure}
    
    \begin{subfigure}[b]{0.23\textwidth}
        \includegraphics[width=\textwidth]{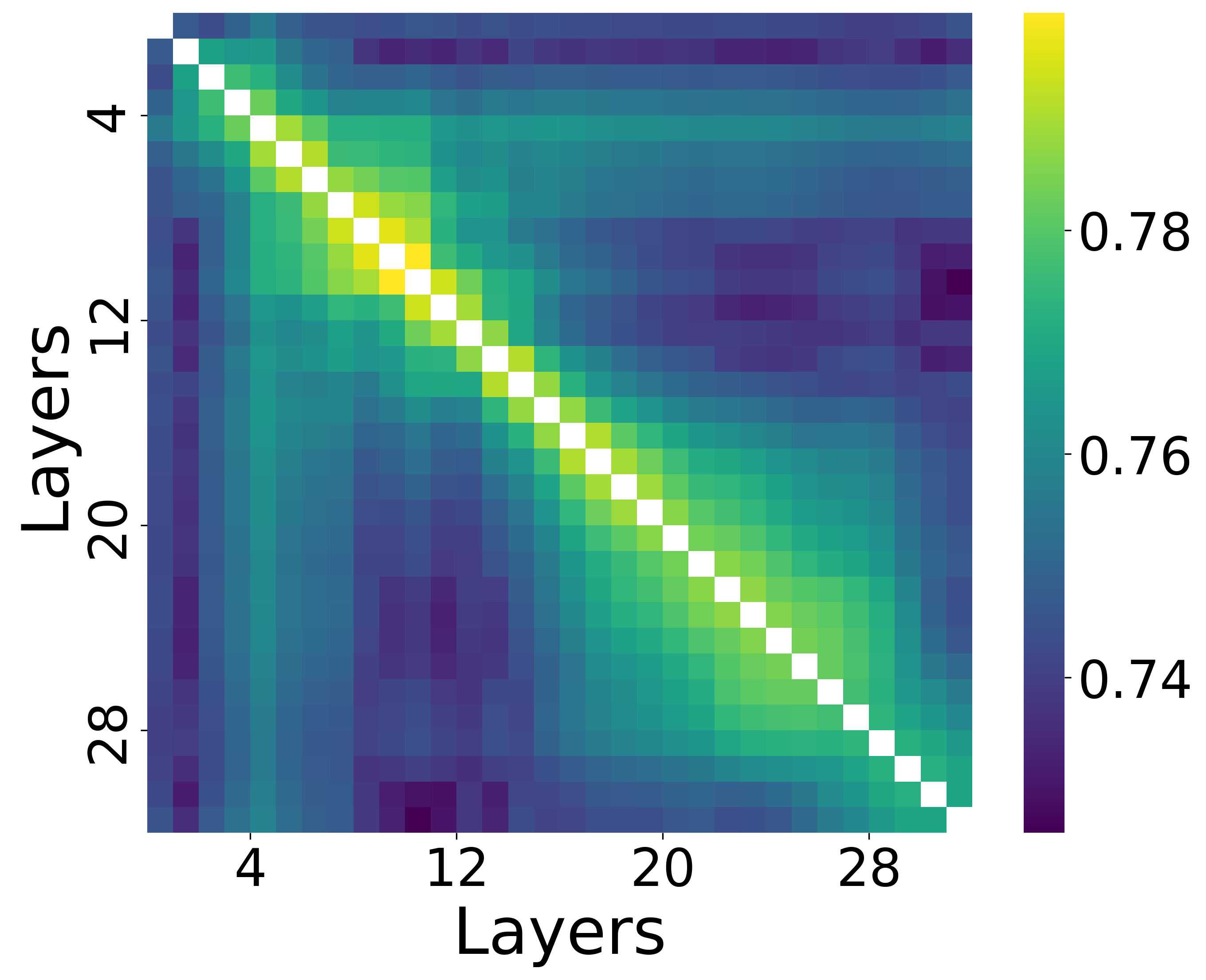}
        \caption{SVCCA (Nuclear)}
    \end{subfigure}
    \hfill
    \begin{subfigure}[b]{0.23\textwidth}
        \includegraphics[width=\textwidth]{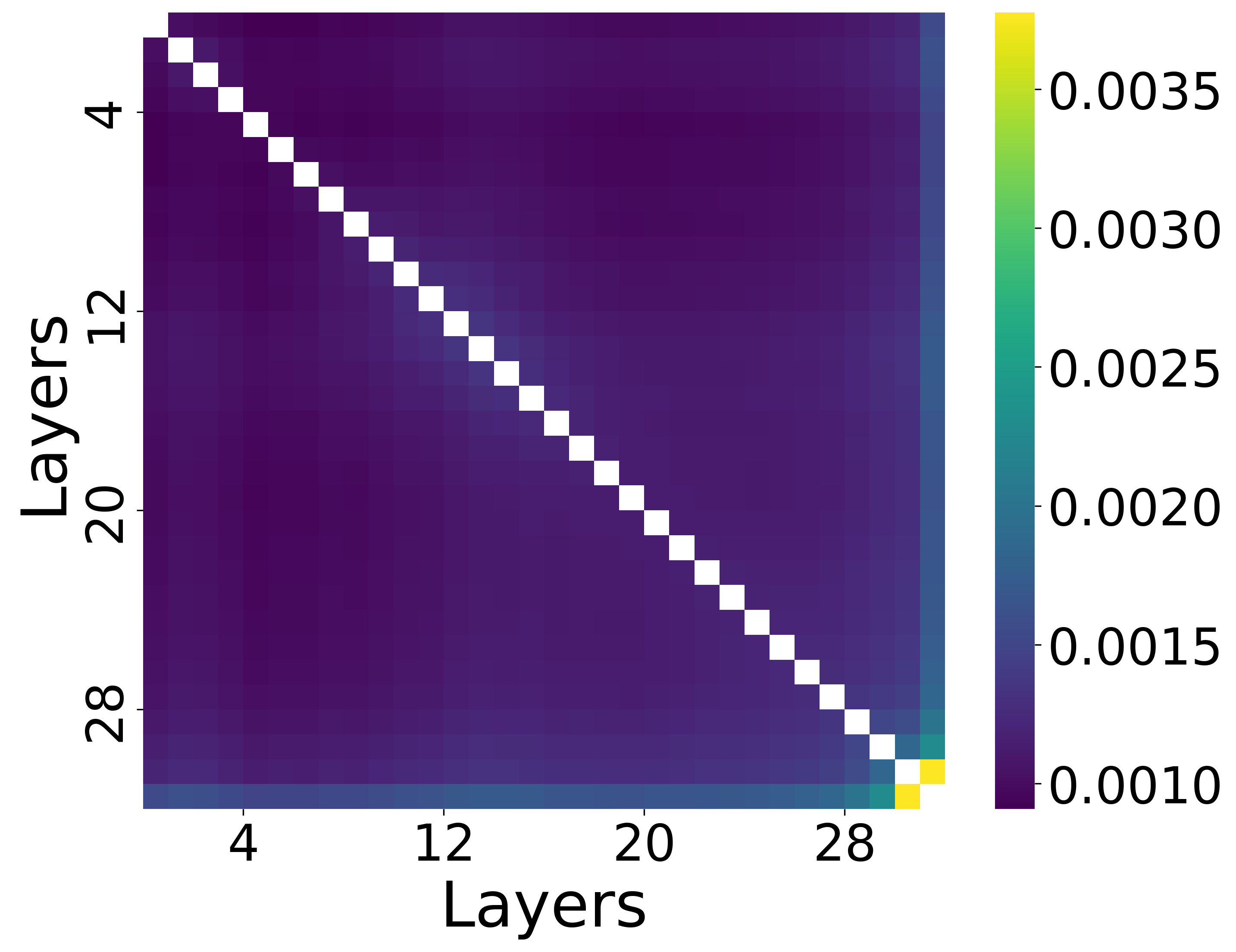}
        \caption{Linear HSIC}
    \end{subfigure}
    \hfill
    \begin{subfigure}[b]{0.23\textwidth}
        \includegraphics[width=\textwidth]{llama8bit_p0/mlp_up_list_Meta-Llama-3.1-8B-Instruct_linear_cka_similarity_heatmap.png}
        \caption{Linear CKA}
    \end{subfigure}
    \hfill
    \begin{subfigure}[b]{0.23\textwidth}
        \includegraphics[width=\textwidth]{llama8bit_p0/mlp_up_list_Meta-Llama-3.1-8B-Instruct_gumbel_u_mean_similarity_heatmap.png}
        \caption{DOCS}
    \end{subfigure}
    \caption{Comparison of similarity indices on the \textsc{MLP-Up} layers of Meta-Llama-3.1-8B-Instruct.}
    \label{fig:comparison_of_similarity_indices}
\end{figure*}

In these visualizations, we observe that Linear Regression, Canonical Correlation Analysis (CCA), and CCA (Nuclear) indices fail to exhibit clear structural patterns, with their heatmaps appearing noisy. This suggests that they may have limitations in accurately capturing meaningful relationships between the parameters of different transformer layers. The observed noise could be indicative of their reduced sensitivity to the underlying layer similarities.

On the other hand, the similarity indices Singular Vector CCA (SVCCA), SVCCA (Nuclear), and Linear Centered Kernel Alignment (Linear CKA) display more discernible patterns. Specifically, they exhibit block-diagonal structures, which suggests that the layers in close proximity share higher similarity. This behavior aligns with our expectations, as neighboring layers within transformer models often exhibit greater parameter correlation due to their sequential processing of information. However, we also observe the presence of minor blocks and stripes in the off-diagonal regions, which could be attributed to biases or noise in the similarity indices. These artifacts may result from the indices' reduced ability to differentiate orthogonal matrices, as elaborated in Section~\ref{sec:math-property}.

In contrast, the heatmap for DOCS exhibits a clear structure along the diagonal, demonstrating its effectiveness in identifying similar layers with minimal interference from noisy or unrelated signals. Furthermore, we calculated the Gini coefficients of the similarity heatmaps obtained through each method (calculation details are provided in Appendix \ref{gini_detail}). A higher Gini coefficient indicates a more uneven distribution of similarity scores, signifying a greater concentration of significant similarities in fewer layer pairs. This concentration potentially highlights structural characteristics of the model parameters. As shown in Table~\ref{tab:gini_values}, DOCS achieves the highest Gini coefficient among all the evaluated similarity indices. This result suggests that DOCS excels in revealing structural characteristics of the model parameters.

\begin{table}[H]
\centering
\caption{Gini Coefficients for Different Similarity Indices on \textsc{MLP-Up} of Meta-Llama-3.1-8B-Instruct}
\begin{tabular}{ll}
\toprule
\textbf{Similarity Index} & \textbf{Gini Value} \\
\midrule
CCA Nuclear Similarity & 0.0098 \\
CCA Similarity & 0.0098 \\
Linear CKA Similarity & 0.0488 \\
Linear HSIC Similarity & 0.0617 \\
Linear Regression Similarity & 0.0098 \\
SVCCA Nuclear Similarity & 0.0186 \\
SVCCA Similarity & 0.0225 \\
DOCS (Ours) & \textbf{0.0745} \\
\bottomrule
\end{tabular}
\label{tab:gini_values}
\end{table}

Appendix \ref{Advantages} provides more experimental results.

\subsection{Neighboring Layers Exhibit Similar Weights}

We investigated the similarity patterns between neighboring transformer layers by analyzing various weight matrices ($W_v$, $W_k$, $W_q$, $W_o$, \textsc{MLP-Up}, \textsc{MLP-Down}) in various LLMs. We employed DOCS to compute and visualize these similarities. Figure~\ref{fig:neighboring_layers_similarity_llama} illustrates the results for $W_k$, $W_q$, and \textsc{MLP-Down} on gemma-2-27b-it.

\begin{figure*}[ht] 
    \centering
    \begin{subfigure}[b]{0.3\textwidth}
        \includegraphics[width=\textwidth]{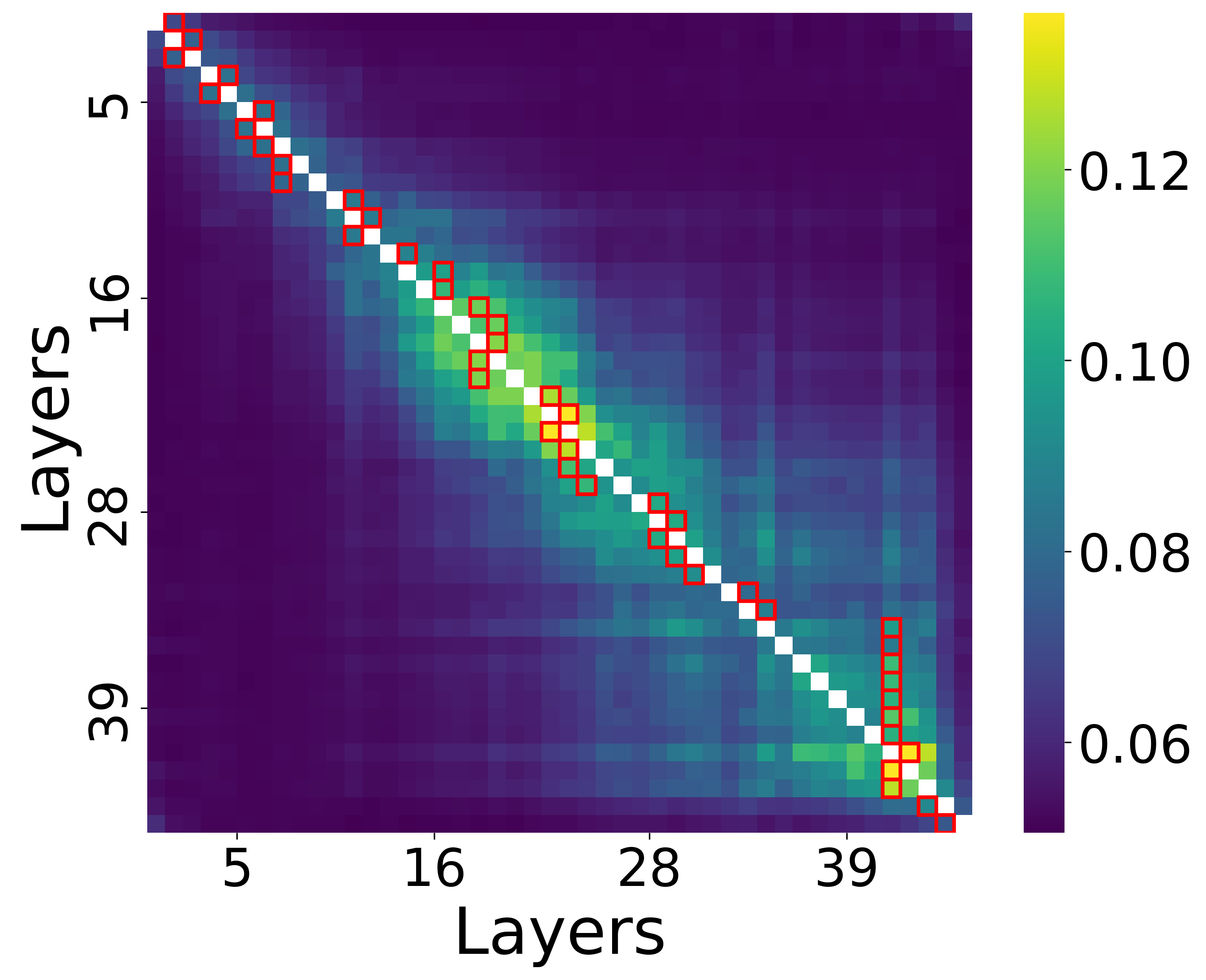} 
        \caption{\(W_k\)}
    \end{subfigure}
    \hfill
    \begin{subfigure}[b]{0.3\textwidth}
        \includegraphics[width=\textwidth]{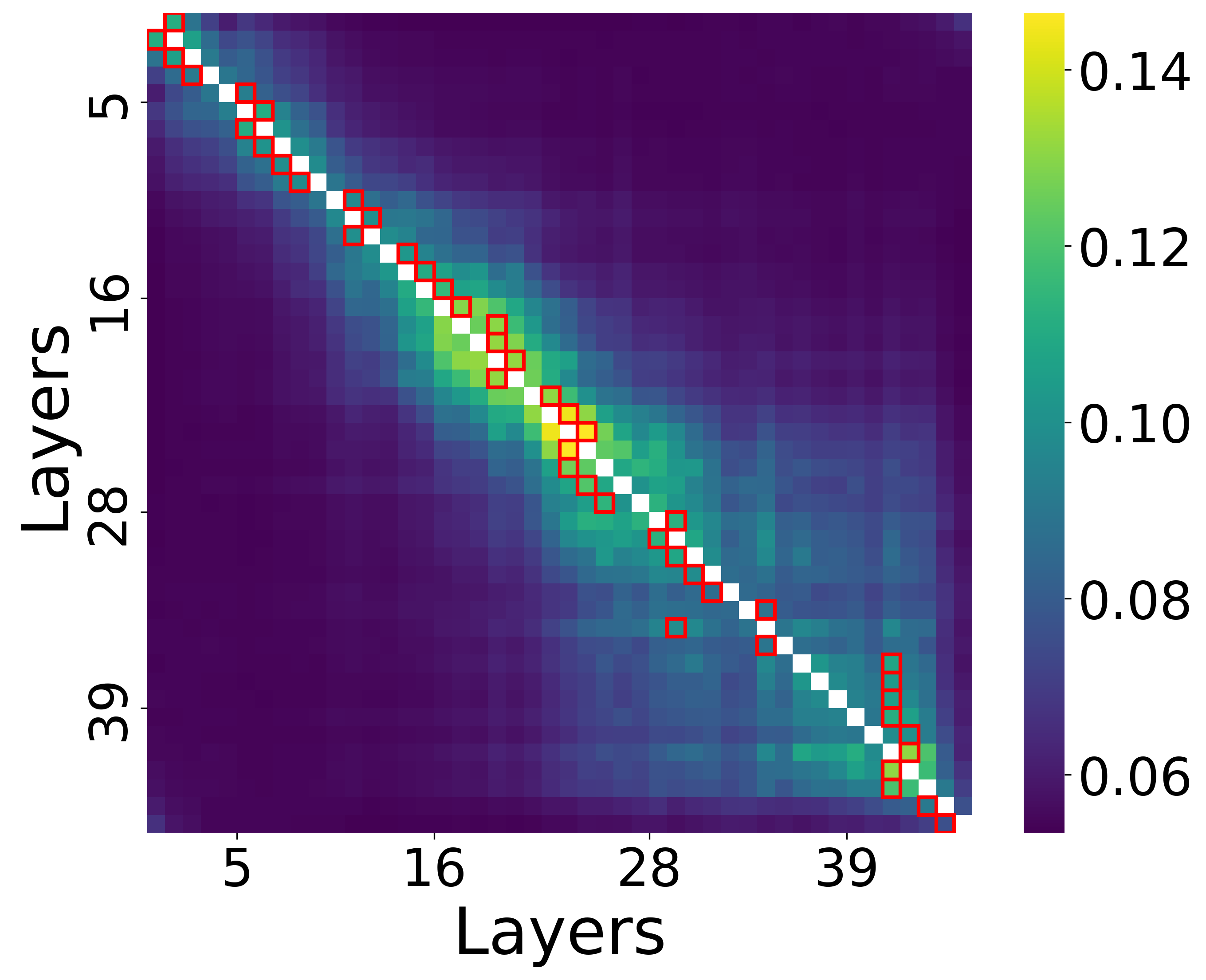} 
        \caption{$W_q$}
    \end{subfigure}
    \hfill
    \begin{subfigure}[b]{0.3\textwidth}
        \includegraphics[width=\textwidth]{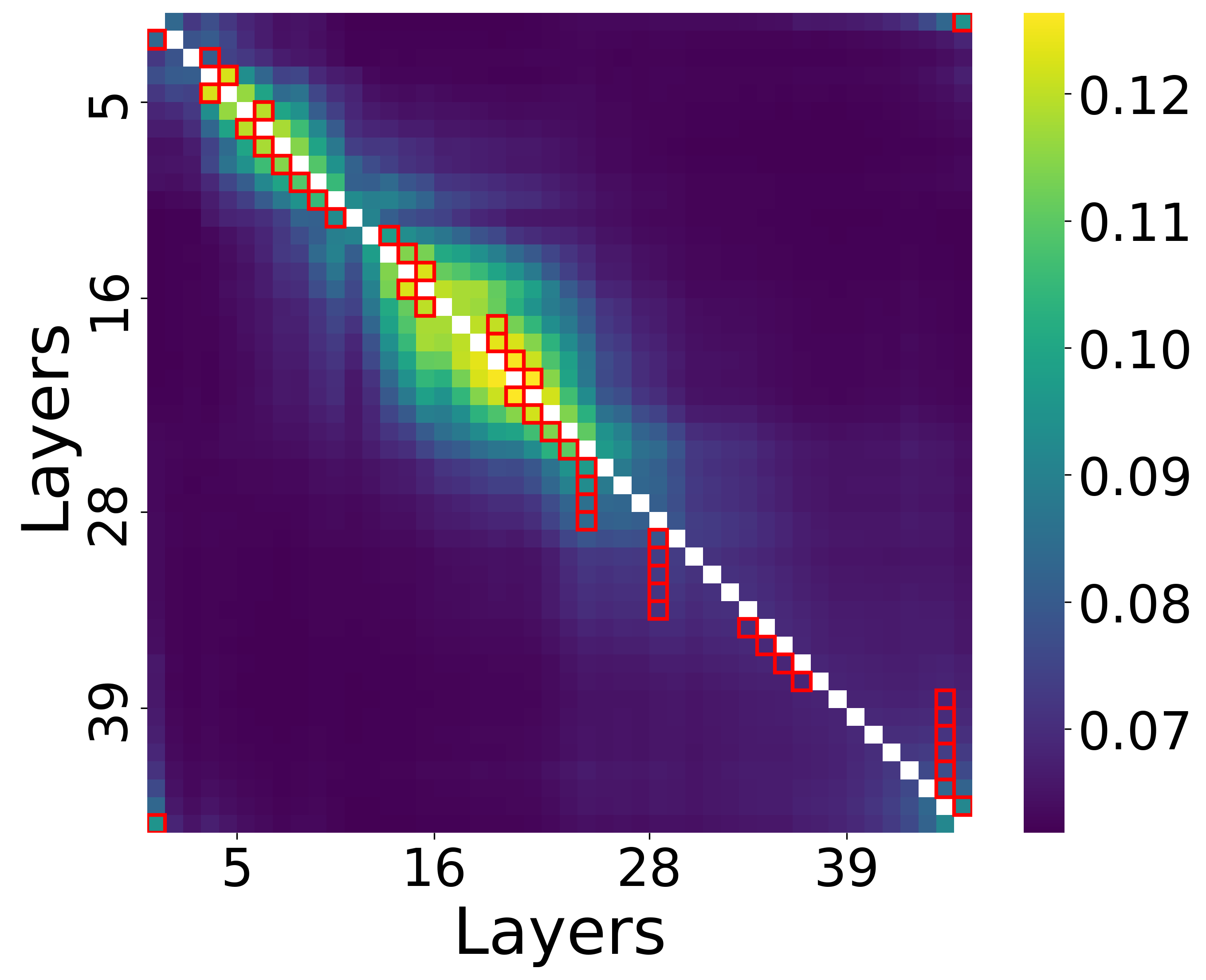} 
        \caption{\textsc{MLP-Up}}
    \end{subfigure}

    \begin{subfigure}[b]{0.3\textwidth}
        \includegraphics[width=\textwidth]{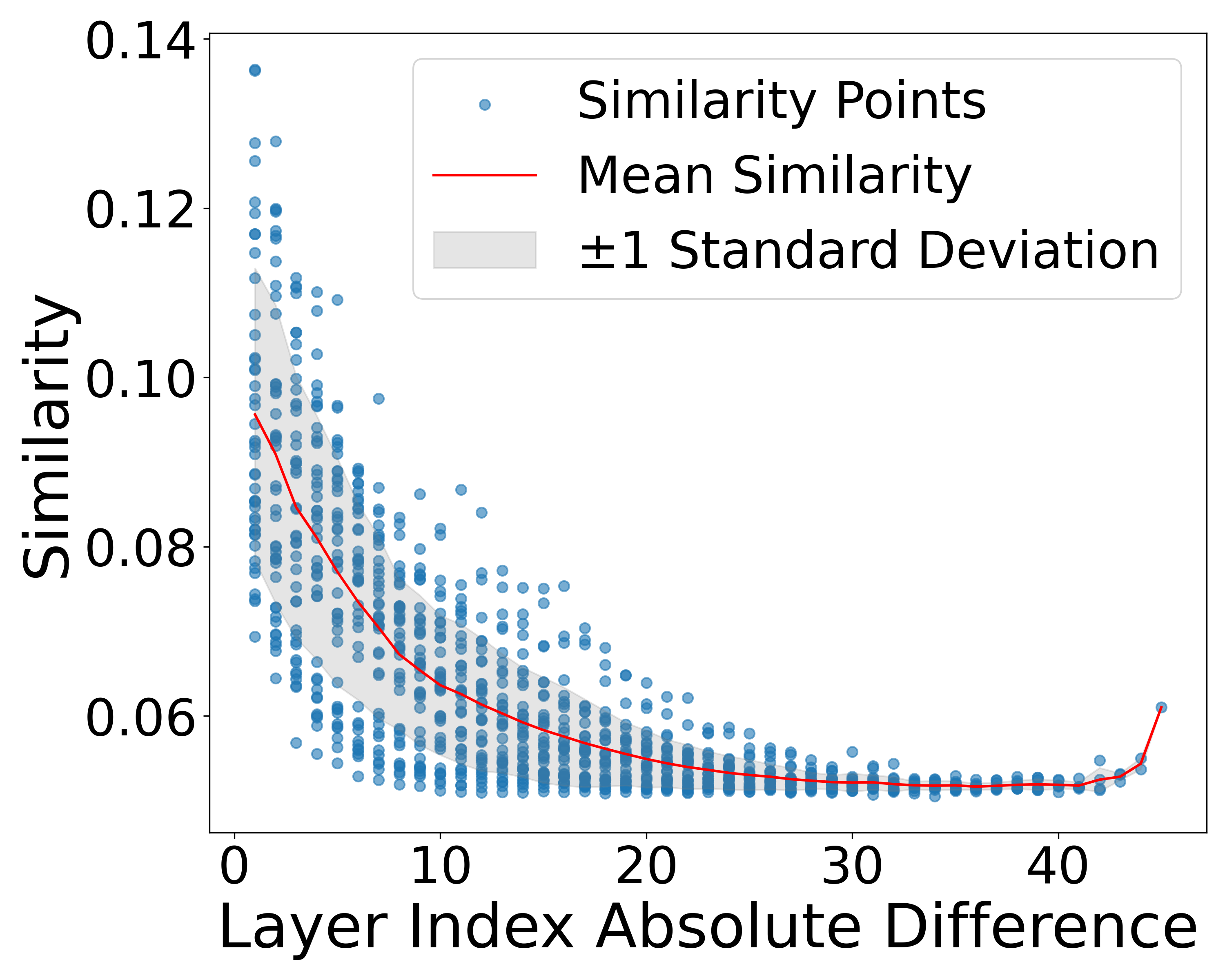} 
        \caption{\(W_k\)}
    \end{subfigure}
    \hfill
    \begin{subfigure}[b]{0.3\textwidth}
        \includegraphics[width=\textwidth]{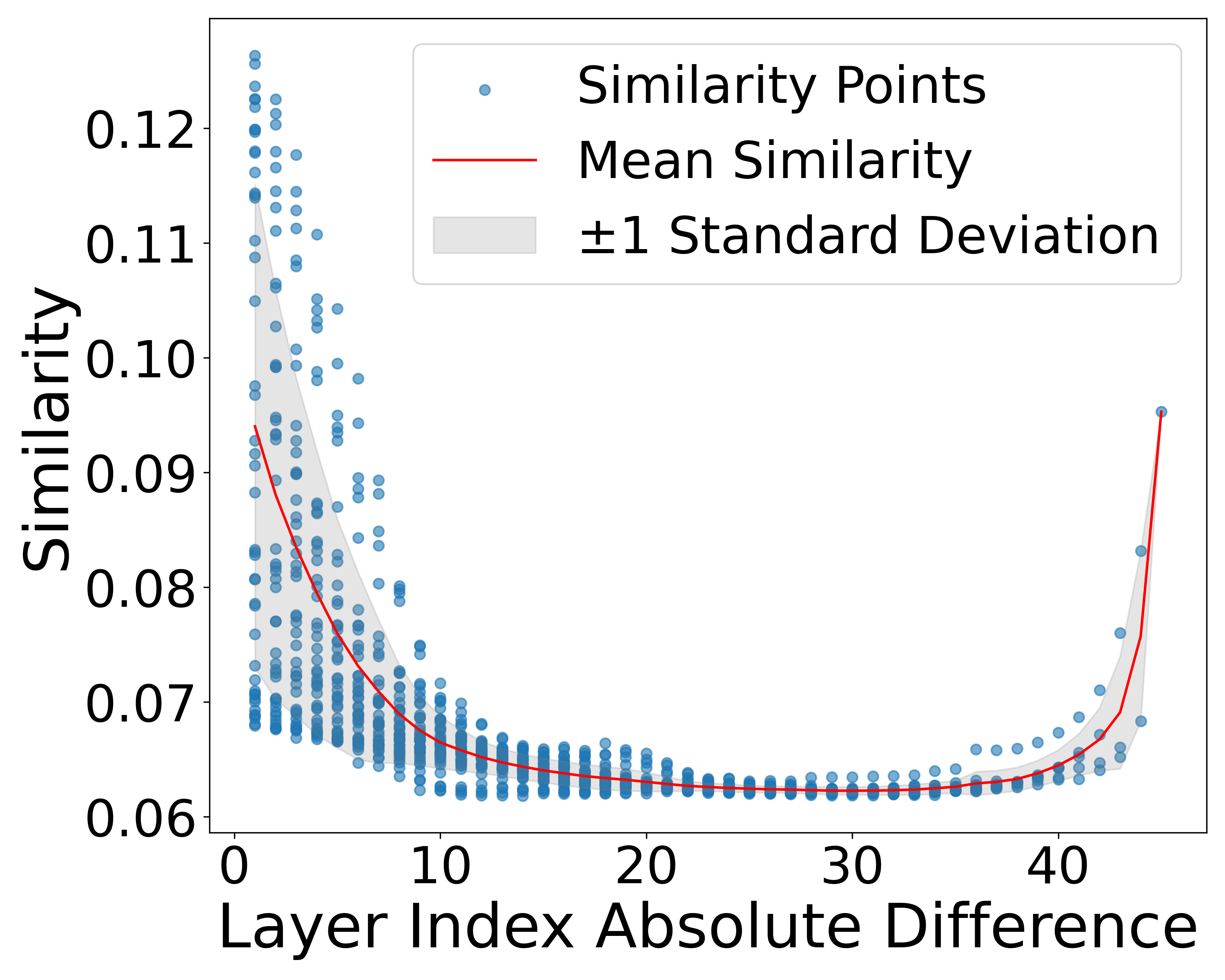} 
        \caption{$W_q$}
    \end{subfigure}
    \hfill
    \begin{subfigure}[b]{0.3\textwidth}
        \includegraphics[width=\textwidth]{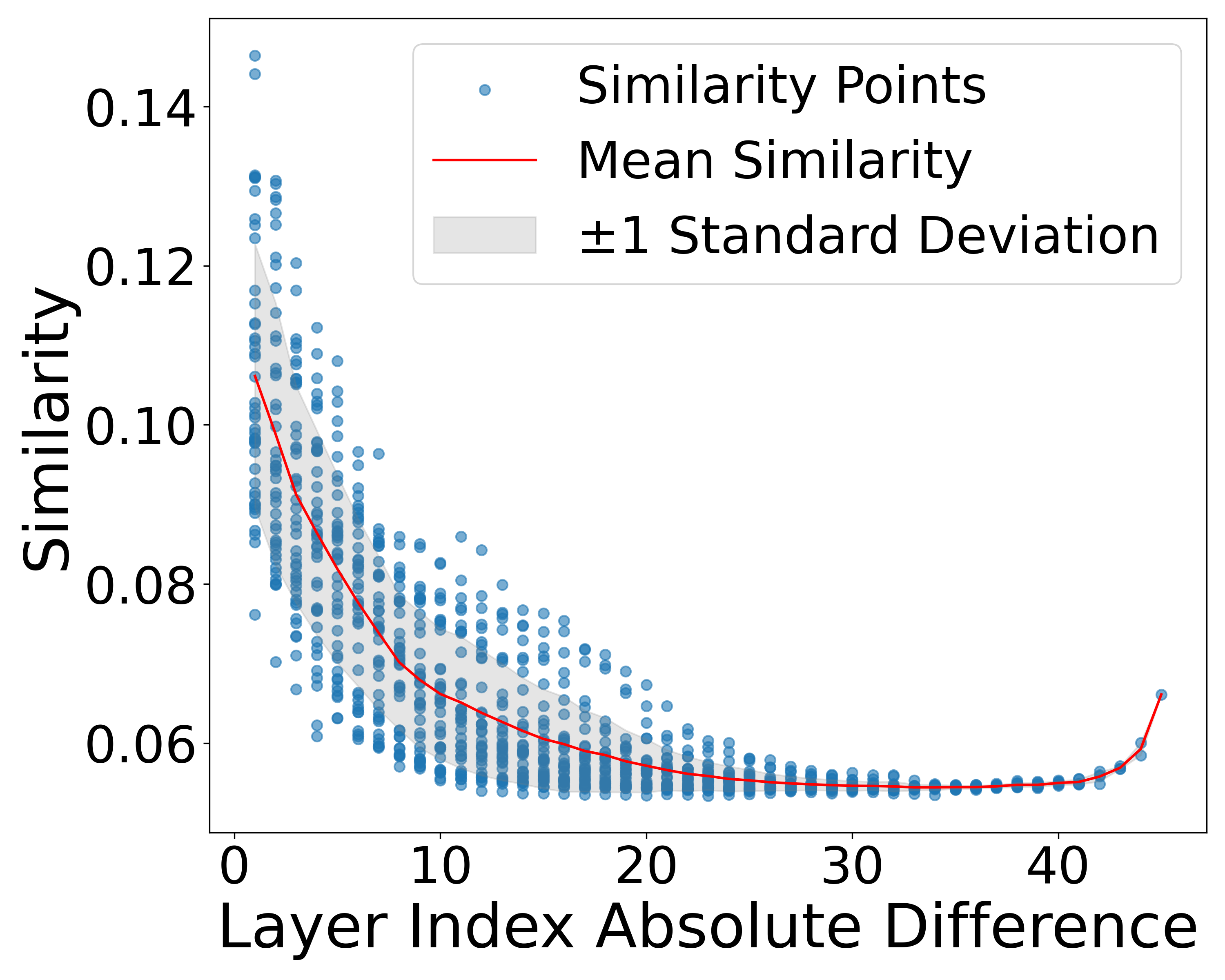} 
        \caption{\textsc{MLP-Up}}
    \end{subfigure}

    \caption{The top row displays heatmaps of DOCS scores between layers for different weight matrices in gemma-2-27b-it. The bottom row illustrates the relationship between DOCS scores and the distance between layers.}
    \label{fig:neighboring_layers_similarity_llama}
\end{figure*}
    
The heatmaps clearly show that adjacent layers exhibit higher similarity scores. Scatter plots in Figures (d), (e), and (f) further support this, displaying a decreasing trend in similarity as the absolute layer index difference increases. The shaded areas, representing one standard deviation around the mean similarity, reinforce this observation.

Interestingly, the first and last layers—the most distant ones—also display higher similarity, as indicated by upward trends at the ends of the scatter plots. We hypothesize that this is because both layers are closely connected to the token embeddings. Similar patterns were observed for other weight matrices and across different LLMs.

Due to space limitations, other results are presented in Appendix~\ref{sec:other-heatmaps}. These findings suggest that neighboring layers tend to share more functional similarities compared to layers that are farther apart.

\subsection{Clusters of Similar Layers}

We examined clusters of similar consecutive layers within LLMs. Figure~\ref{fig:clusters_of_similar_layers} shows heatmaps of DOCS scores for the $W_v$ matrices. They reveal clusters represented by light regions—each containing multiple layers with high mutual DOCS scores. The bottom row displays the average DOCS scores for diagonal blocks ranging from $3 \times 3$ to $7 \times 7$, providing a clearer view of cluster locations and their similarity levels.

\begin{figure*}[ht] 
    \centering
    \begin{subfigure}[b]{0.23\textwidth}
        \includegraphics[width=\textwidth]{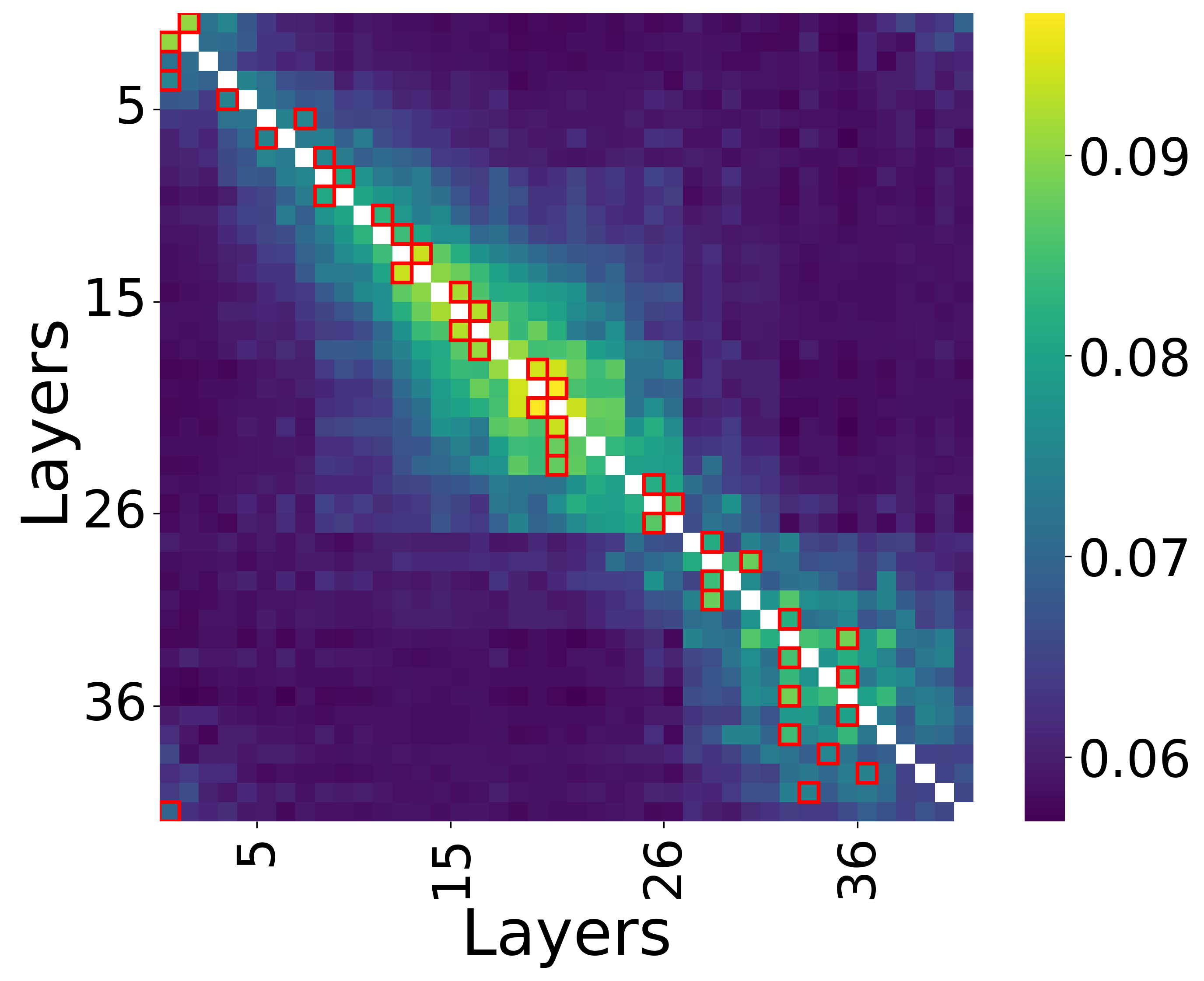} 
        \caption{gemma-2-9b}
    \end{subfigure}
    \hfill
    \begin{subfigure}[b]{0.23\textwidth}
        \includegraphics[width=\textwidth]{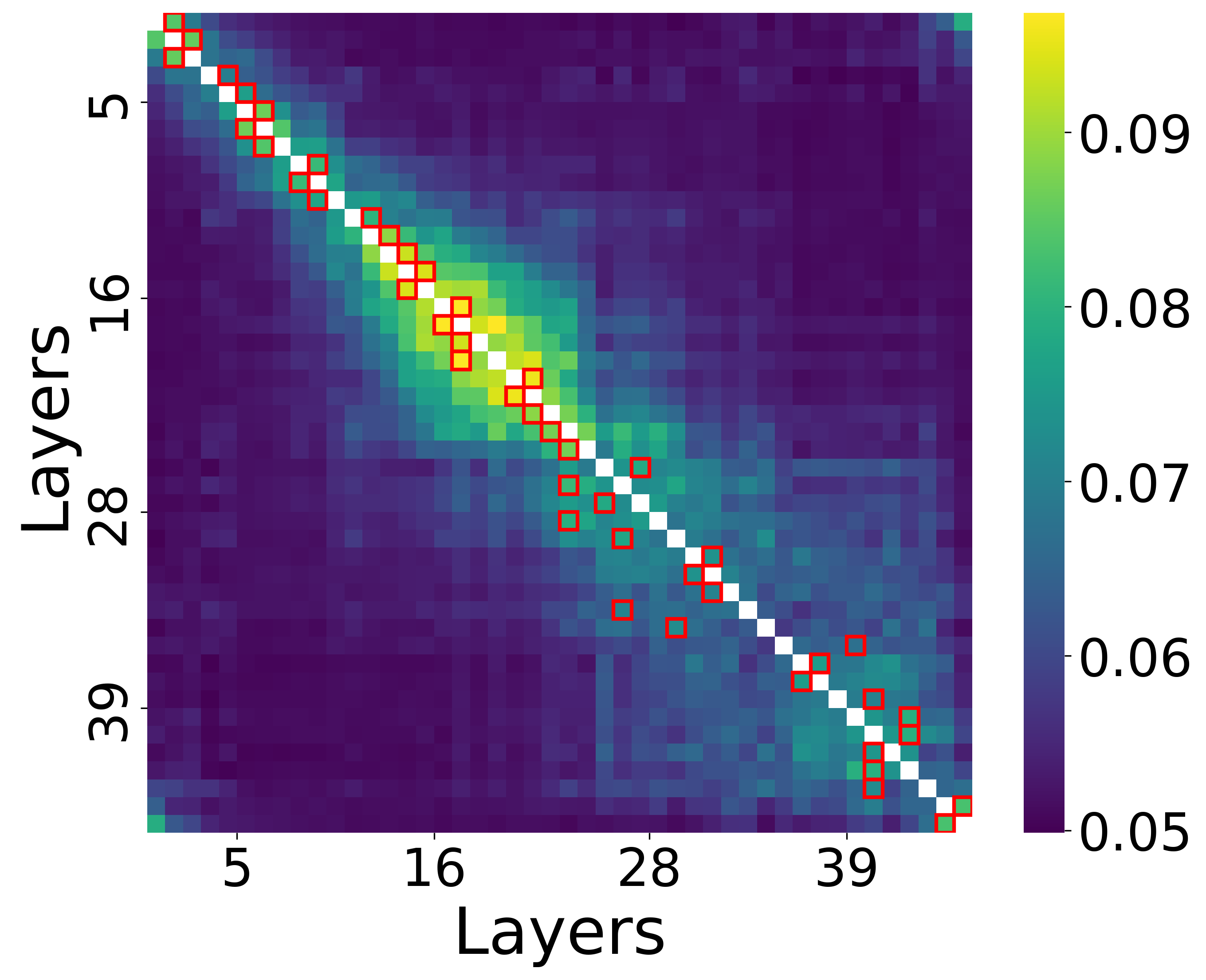} 
        \caption{gemma-2-27b}
    \end{subfigure}
    \hfill
    \begin{subfigure}[b]{0.23\textwidth}
        \includegraphics[width=\textwidth]{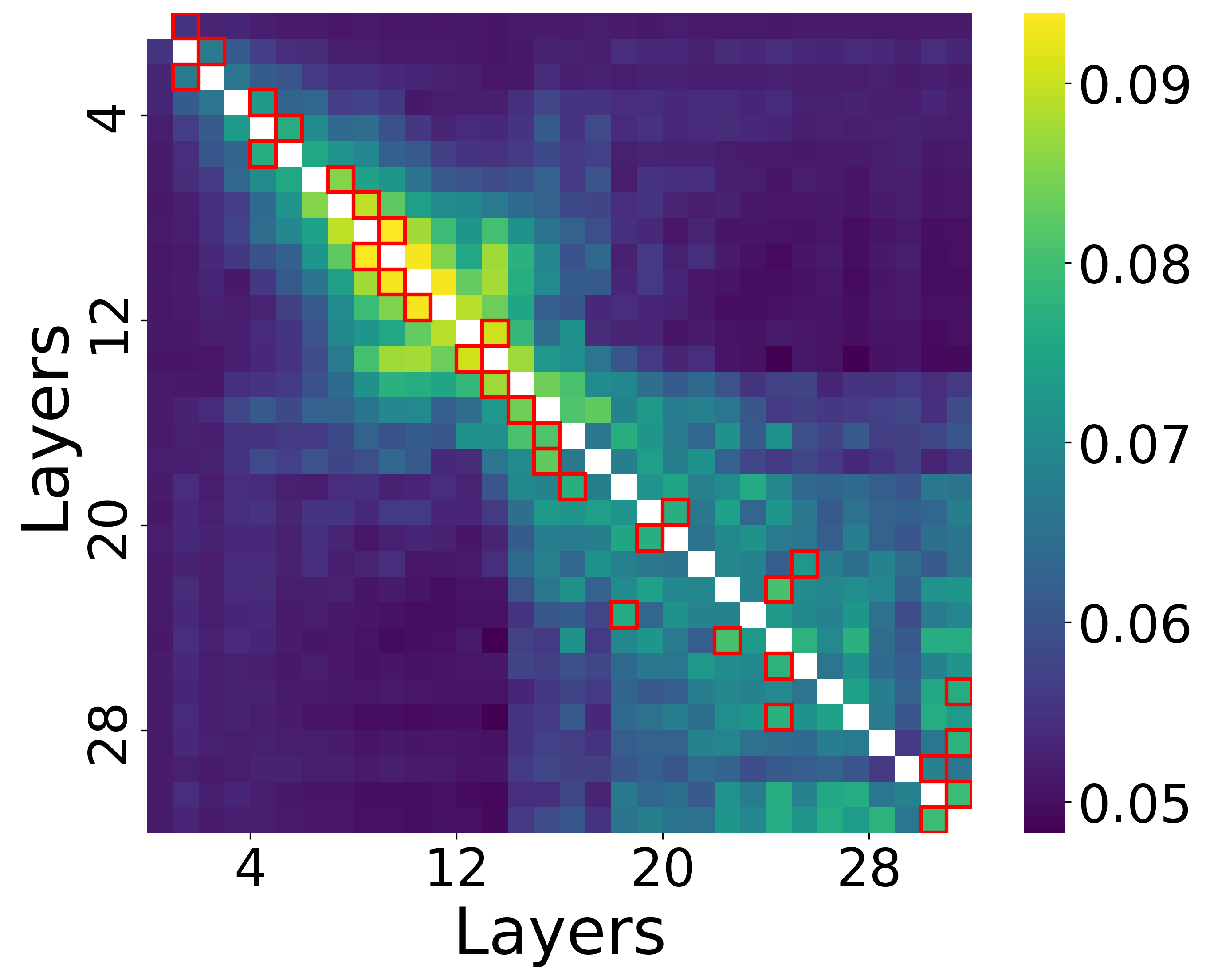} 
        \caption{Llama-3.1-8B}
    \end{subfigure}
    \hfill
    \begin{subfigure}[b]{0.23\textwidth}
        \includegraphics[width=\textwidth]{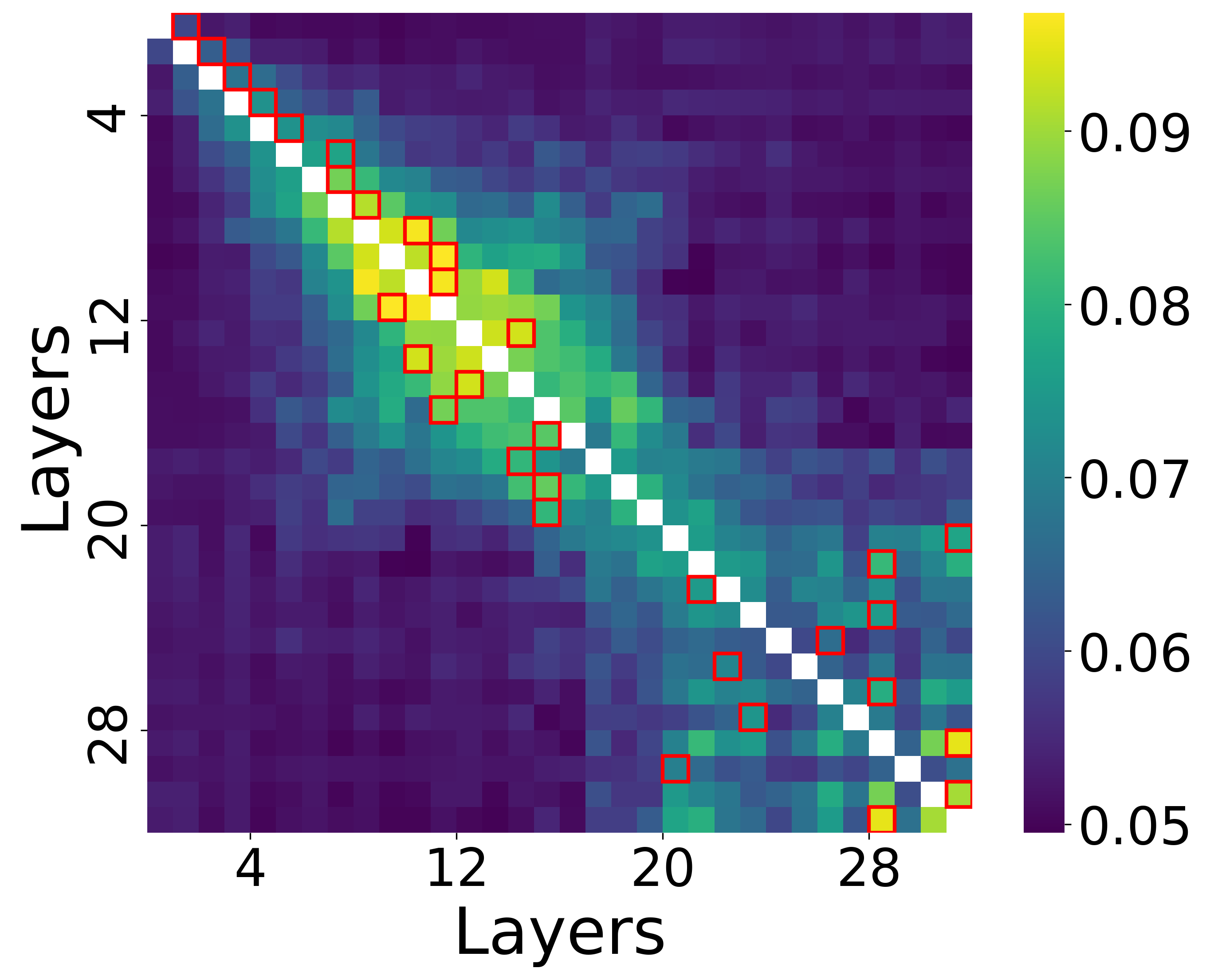} 
        \caption{Mixtral-8x7B}
    \end{subfigure}

    \begin{subfigure}[b]{0.23\textwidth}
        \includegraphics[width=\textwidth]{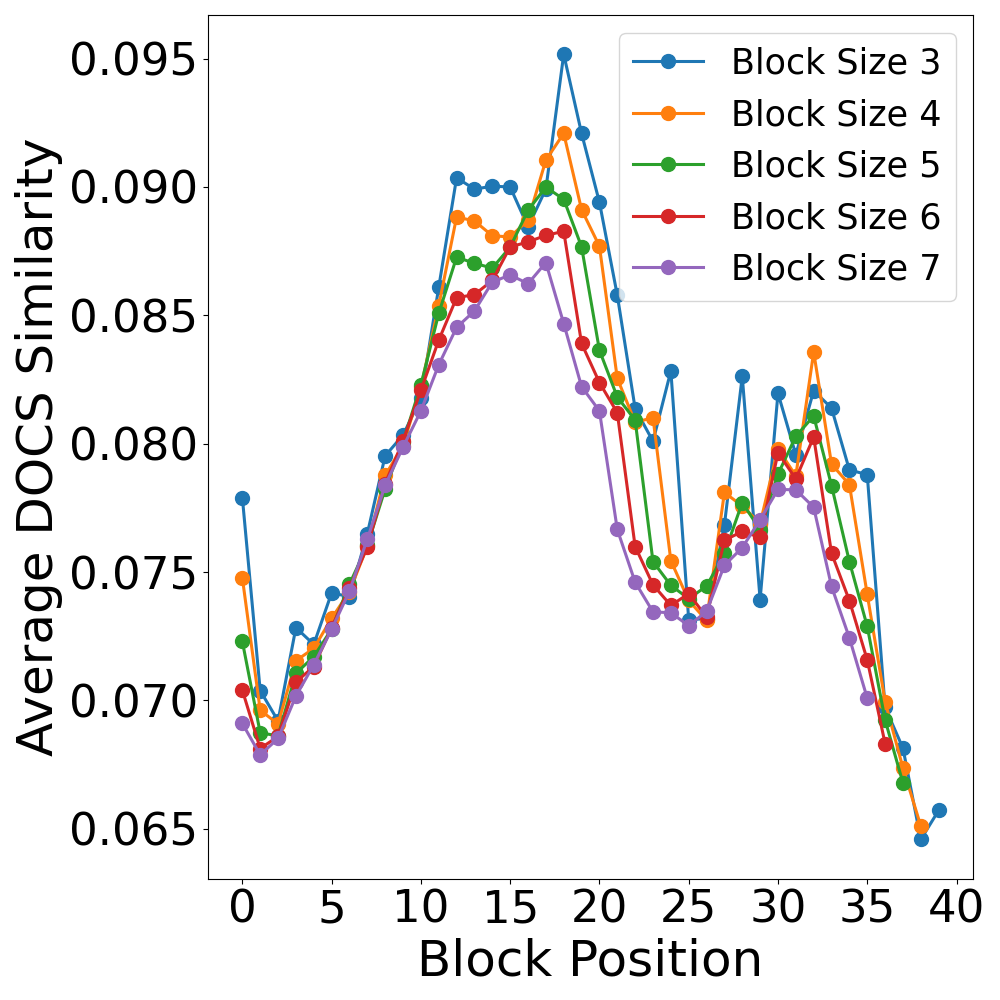} 
        \caption{gemma-2-9b}
    \end{subfigure}
    \hfill
    \begin{subfigure}[b]{0.23\textwidth}
        \includegraphics[width=\textwidth]{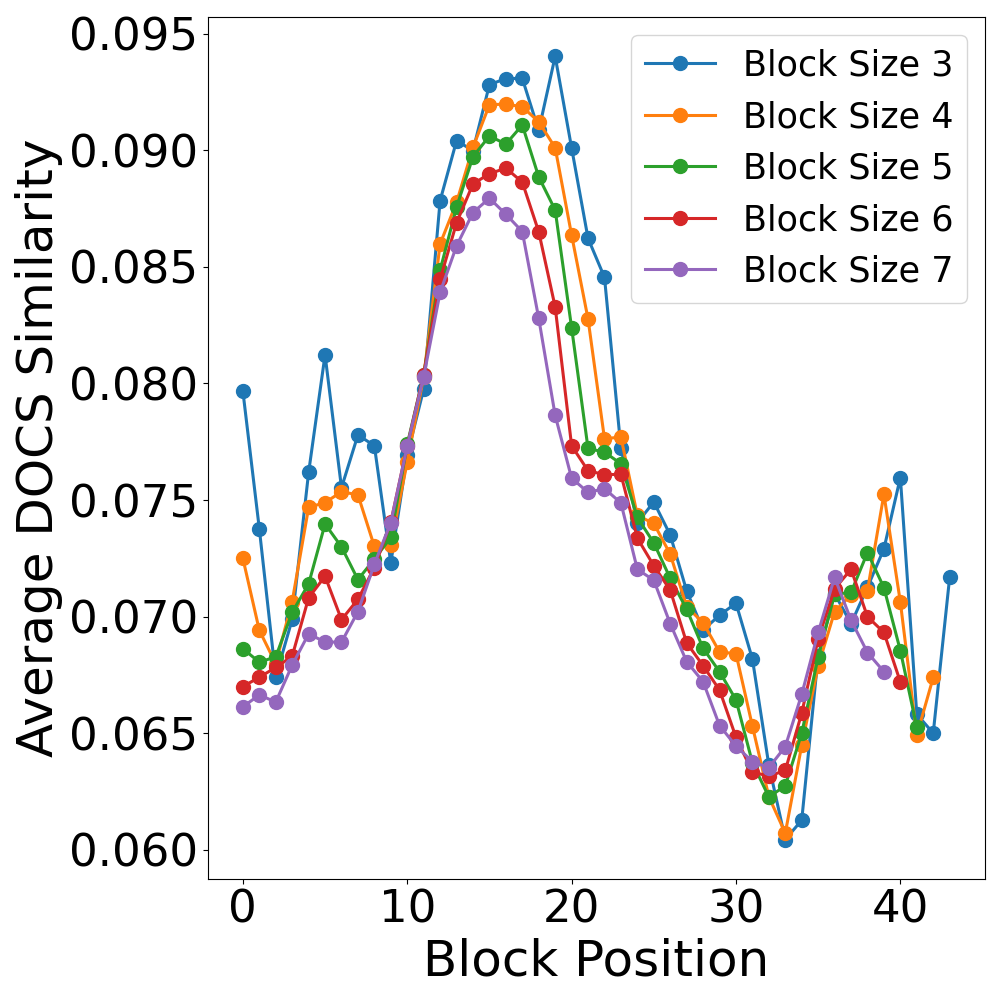} 
        \caption{gemma-2-27b}
    \end{subfigure}
    \hfill
    \begin{subfigure}[b]{0.23\textwidth}
        \includegraphics[width=\textwidth]{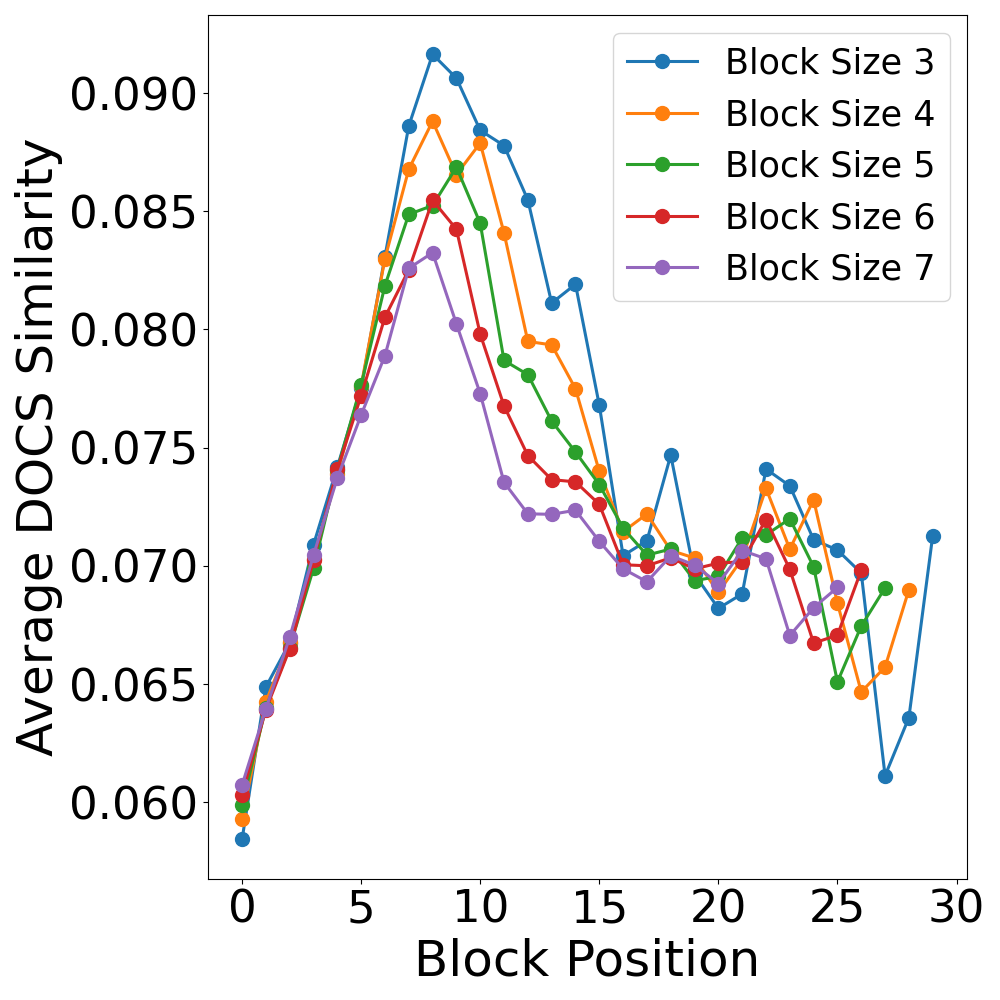} 
        \caption{Llama-3.1-8B}
    \end{subfigure}
    \hfill
    \begin{subfigure}[b]{0.23\textwidth}
        \includegraphics[width=\textwidth]{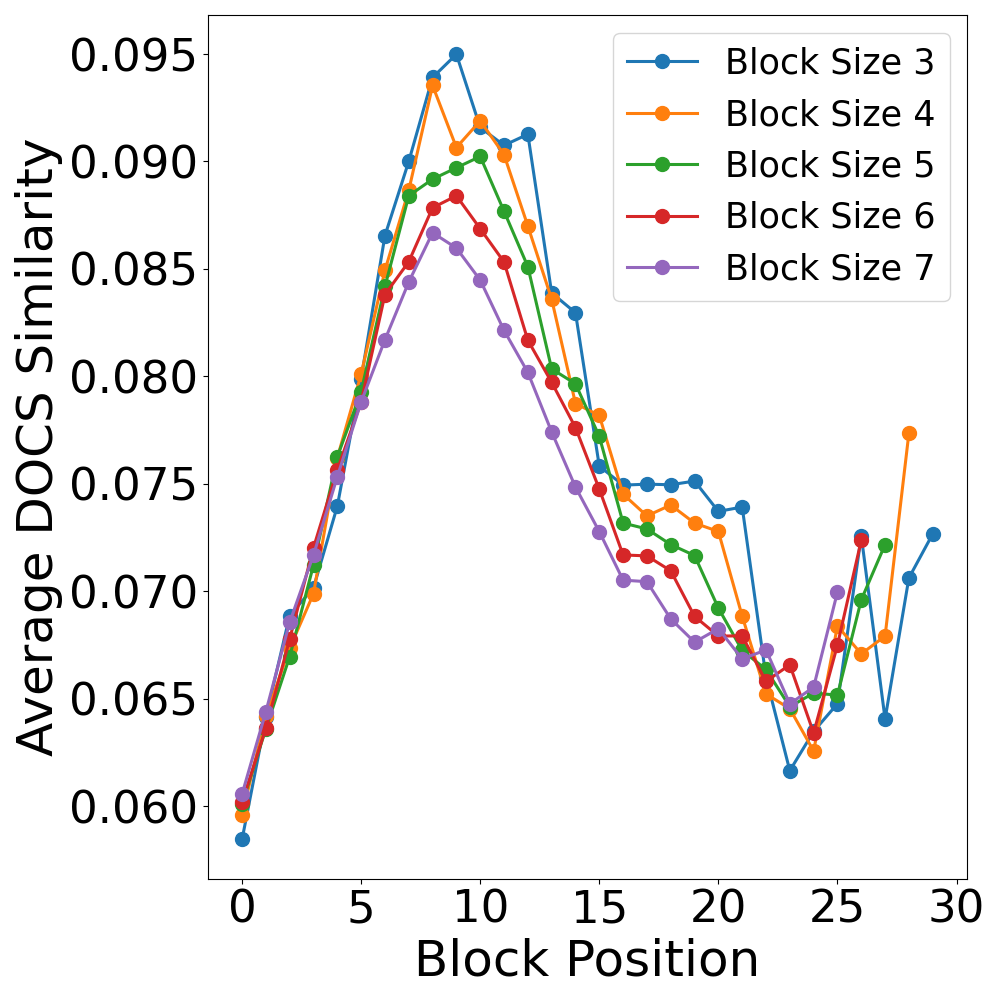} 
        \caption{Mixtral-8x7B}
    \end{subfigure}
    
    \caption{Analysis of $W_v$ matrices across various LLMs. Top row: Heatmaps visualize DOCS similarity scores between transformer layers. Bottom row: Average DOCS scores are computed for diagonal blocks (sizes 3x3 to 7x7) within each heatmap.}
    \label{fig:clusters_of_similar_layers}
\end{figure*}

An interesting pattern emerges: each figure shows two clusters of layers. The first cluster, located in the middle depths (centered around layer 19 in gemma-2-9b and gemma-2-27b, and around layer 10 in Llama-3.1-8B and Mixtral-8x7B), is the most distinct. The second cluster appears in the last layers (after layer 27 in gemma-2-9b, after 33 in gemma-2-27b, after 21 in Llama-3.1-8B, and after 23 in Mixtral-8x7B), though it is less pronounced in some models. This phenomenon, observed across different model sizes and even vendors, suggests a universal structural pattern resulting from LLM training. The consistent appearance of these clusters indicates that the training process leads to the formation of distinct functional groups of layers within the model.

Different models, however, may exhibit a varying number of clusters. For example, Figure~\ref{fig:llama_cluster} shows that Llama-3.1-70B contains multiple clusters.

\begin{figure*}[ht] 
    \centering
    \begin{subfigure}[b]{0.23\textwidth}
        \includegraphics[width=\textwidth]{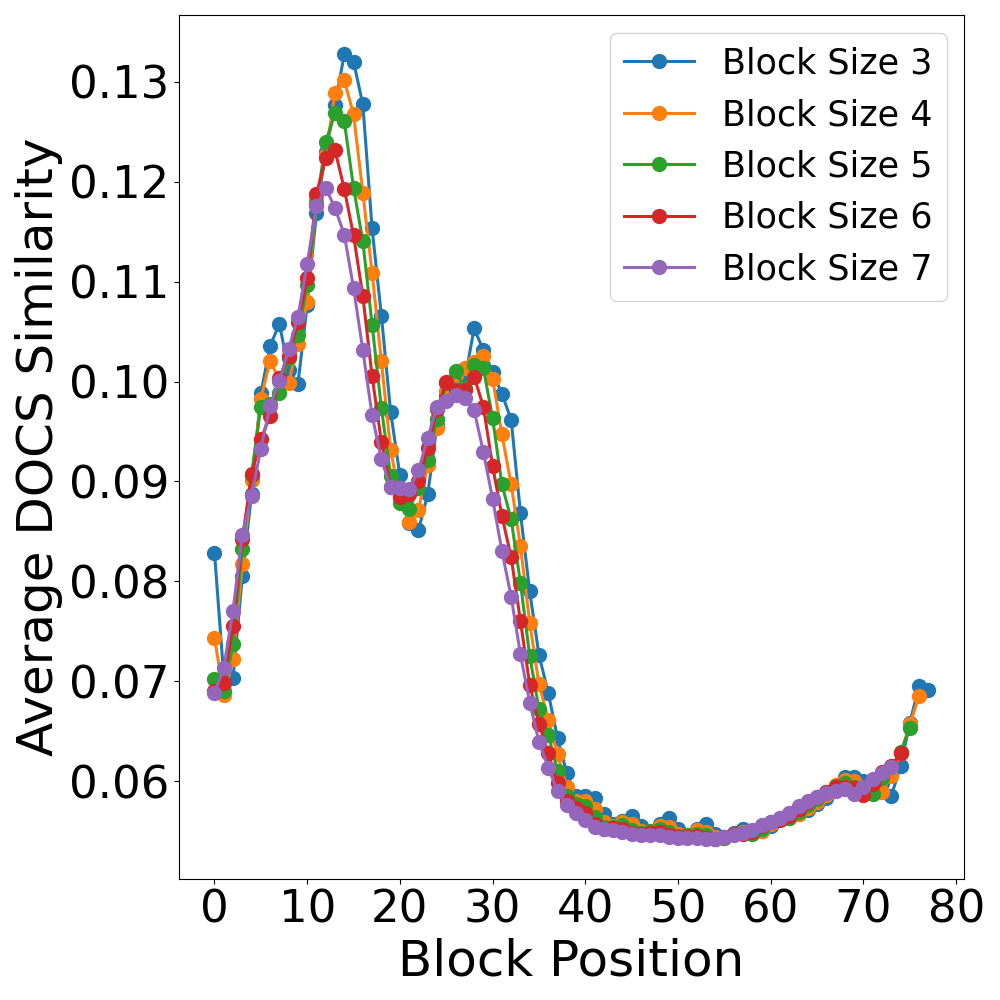} 
        \caption{\textsc{MLP-Down}}
    \end{subfigure}
    \hfill
    \begin{subfigure}[b]{0.23\textwidth}
        \includegraphics[width=\textwidth]{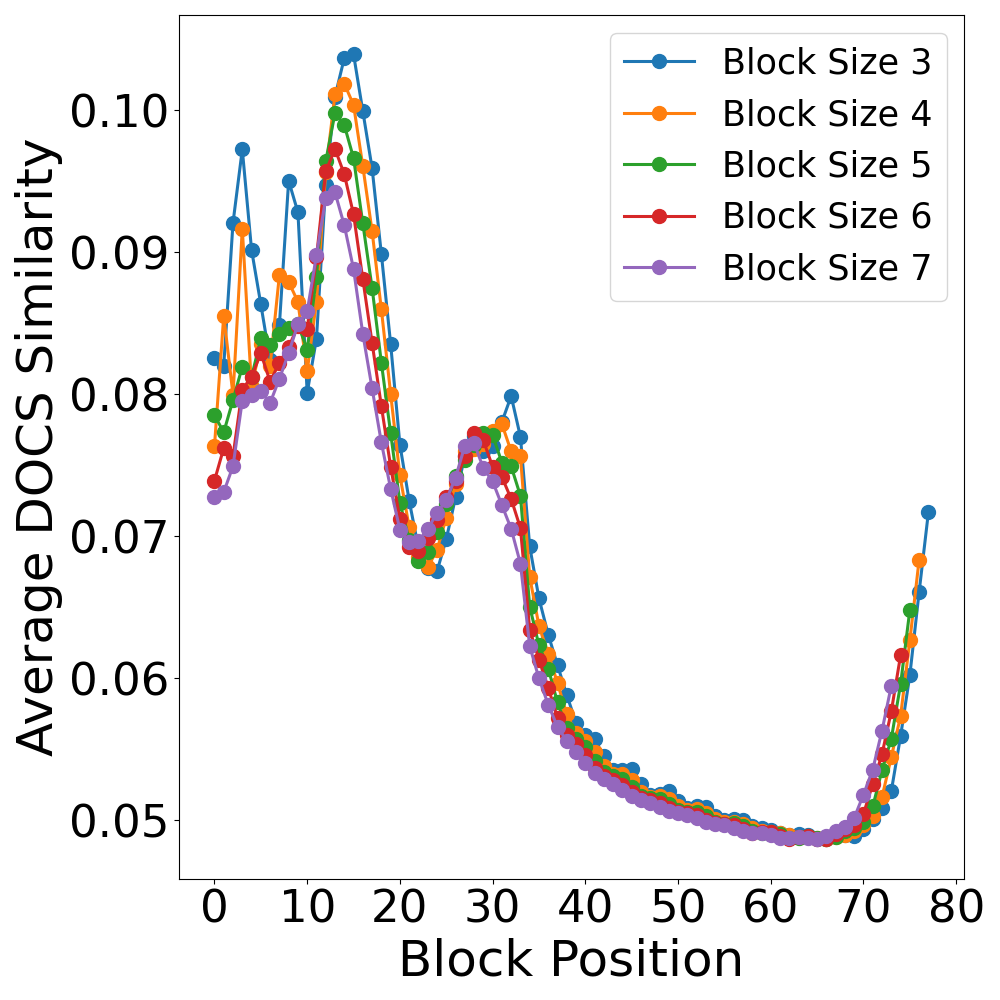} 
        \caption{\textsc{MLP-Up}}
    \end{subfigure}
    \hfill
    \begin{subfigure}[b]{0.23\textwidth}
        \includegraphics[width=\textwidth]{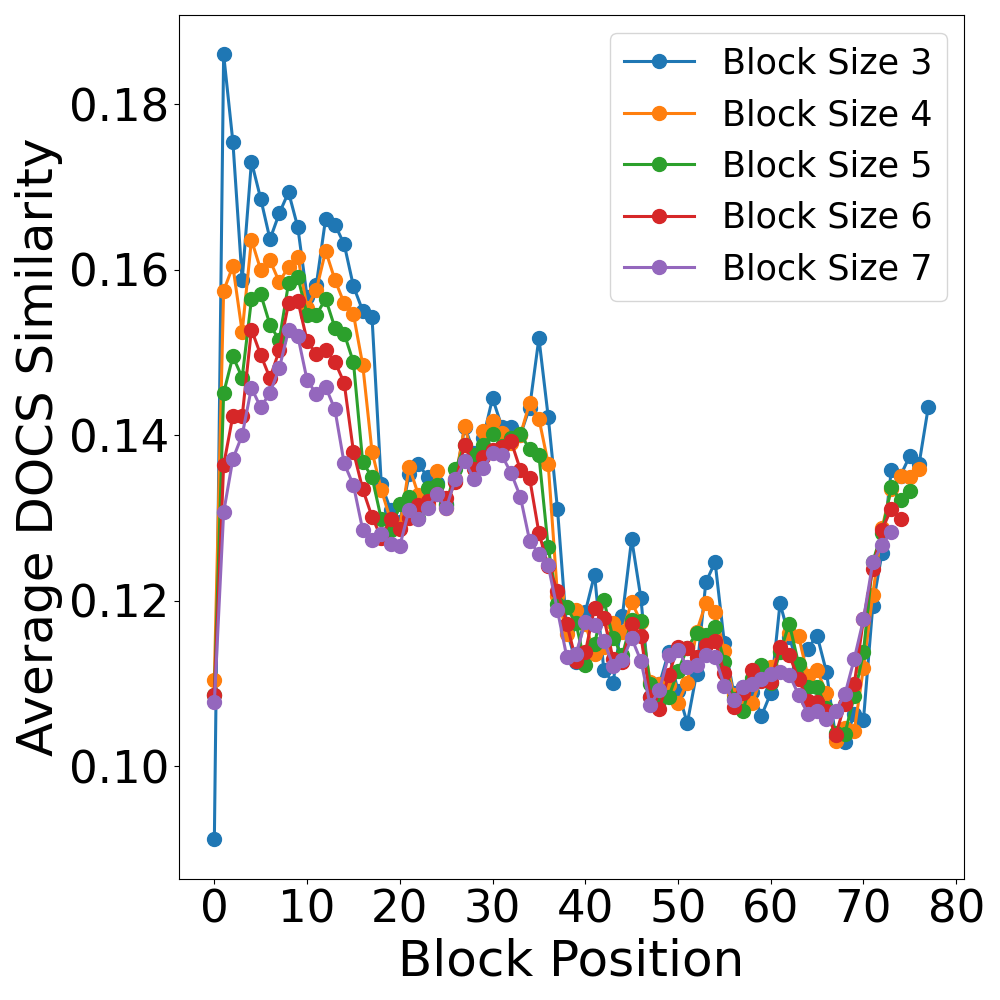} 
        \caption{$W_q$}
    \end{subfigure}
    \hfill
    \begin{subfigure}[b]{0.23\textwidth}
        \includegraphics[width=\textwidth]{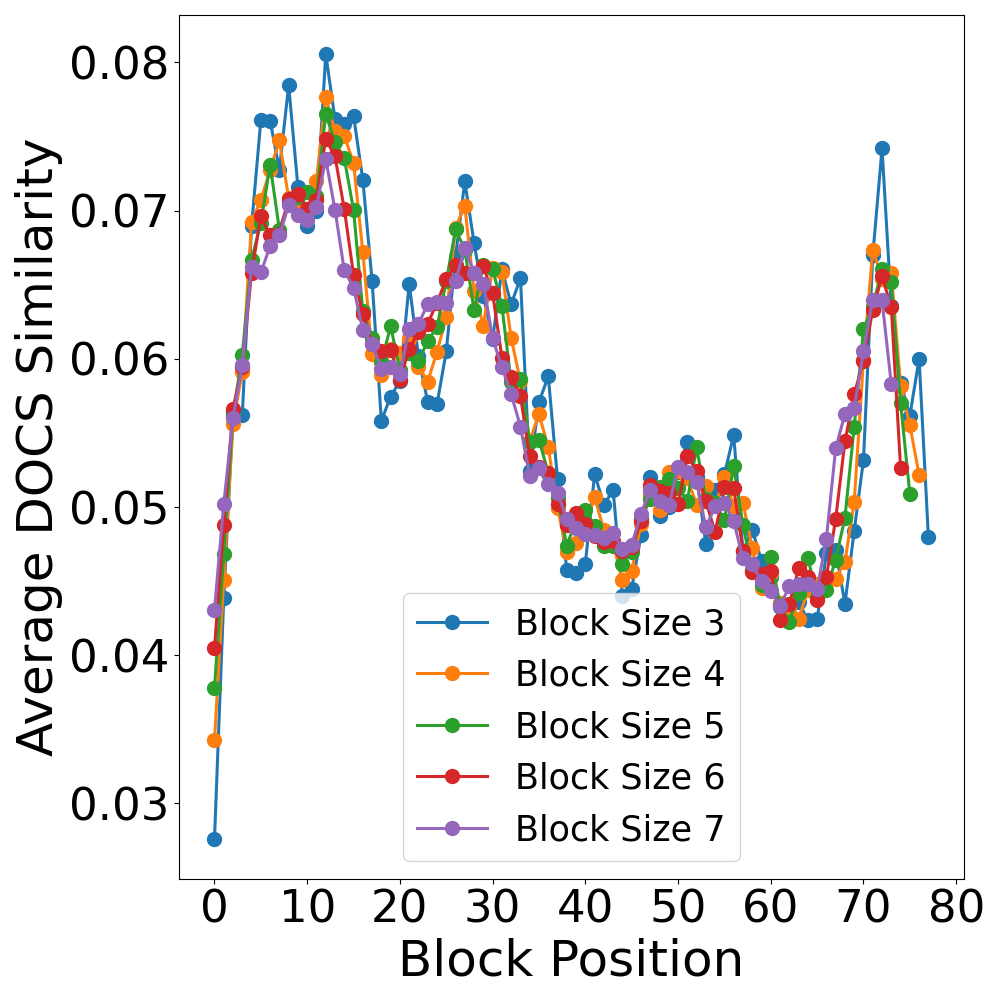} 
        \caption{$W_v$}
    \end{subfigure}

    \caption{Average DOCS scores for diagonal blocks of varying sizes (3x3 to 7x7) within heatmaps representing different weight matrices in Llama-3.1-70B.}
    \label{fig:llama_cluster}
\end{figure*}

\subsection{Base vs. Instruct Models}

We used the DOCS index to measure changes in weight matrices between base and instruction-tuned models, examining LLM families including yi-1.5, Llama-3.1, and gemma-2. Figure~\ref{fig:base_vs_sft_models} illustrates the results. Our analysis reveals that all DOCS scores are notably high, with values exceeding $0.7$ for every matrix evaluated. This indicates that the base and instruction-tuned models largely retain the same foundational knowledge after the fine-tuning process.

On the other hand, a notable observation from our results is the tendency of the weight matrices to cluster into three distinct groups based on the trends observed in their DOCS scores. Specifically, these groups consist of \textsc{MLP-Up} and \textsc{MLP-Down}, $W_q$ and $W_k$, as well as $W_v$ and $W_o$. This grouping may be due to the functional similarities between these matrices in the model architecture.

\begin{figure*}[ht] 
    \centering
    \begin{subfigure}[b]{0.3\textwidth}
        \includegraphics[width=\textwidth]{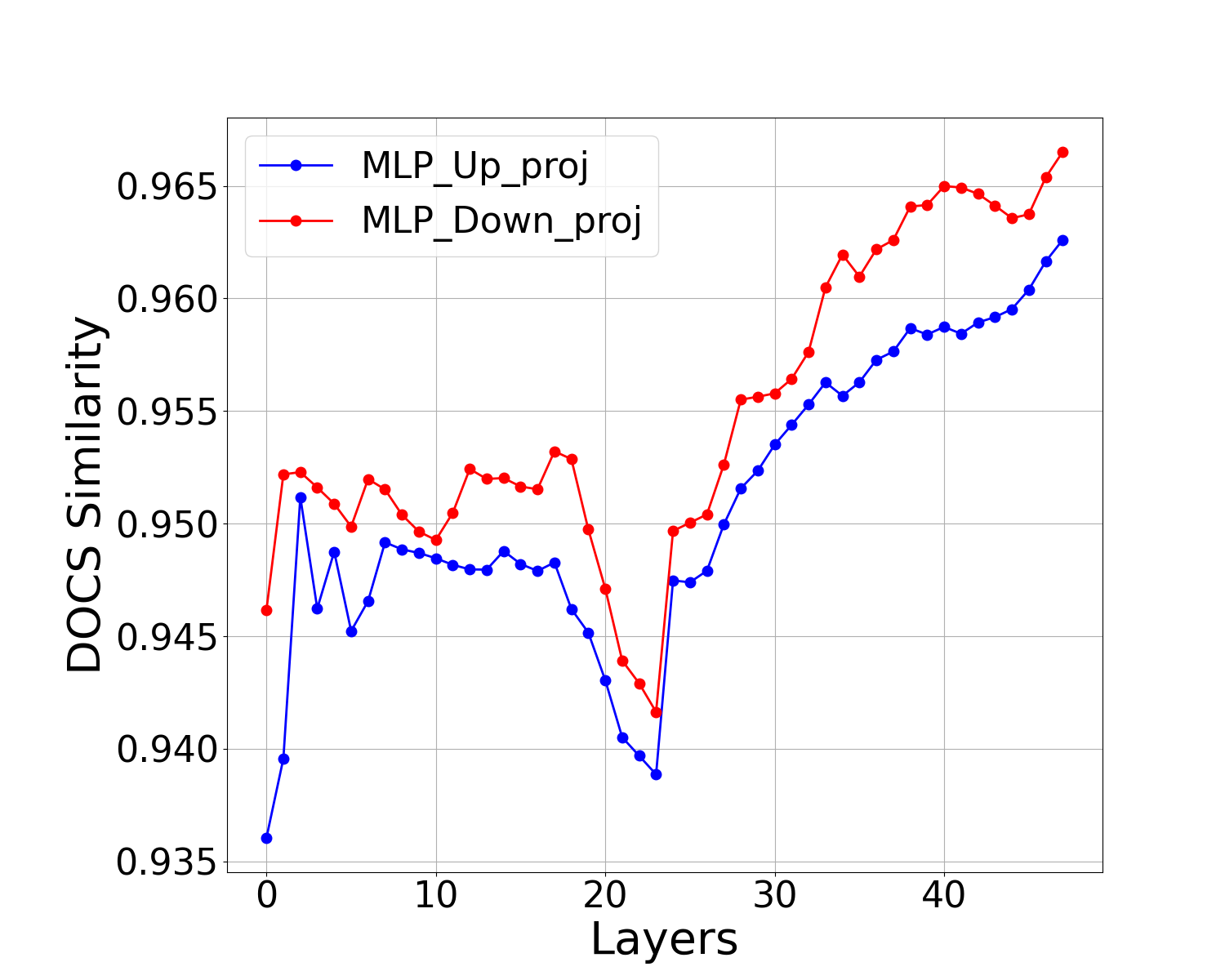} 
        \caption{yi-1.5-9b}
    \end{subfigure}
    \hfill
    \begin{subfigure}[b]{0.3\textwidth}
        \includegraphics[width=\textwidth]{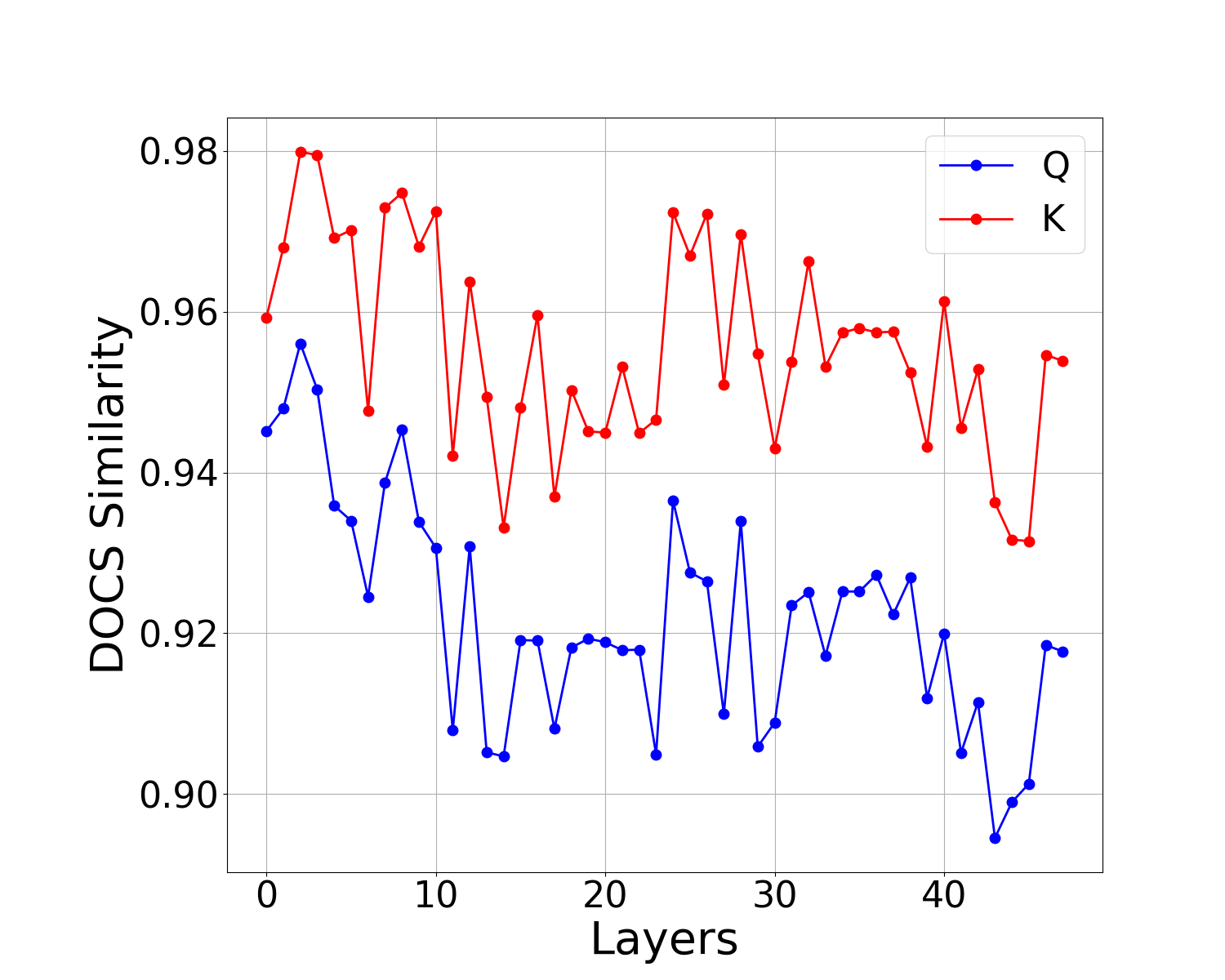} 
        \caption{yi-1.5-9b}
    \end{subfigure}
    \hfill
    \begin{subfigure}[b]{0.3\textwidth}
        \includegraphics[width=\textwidth]{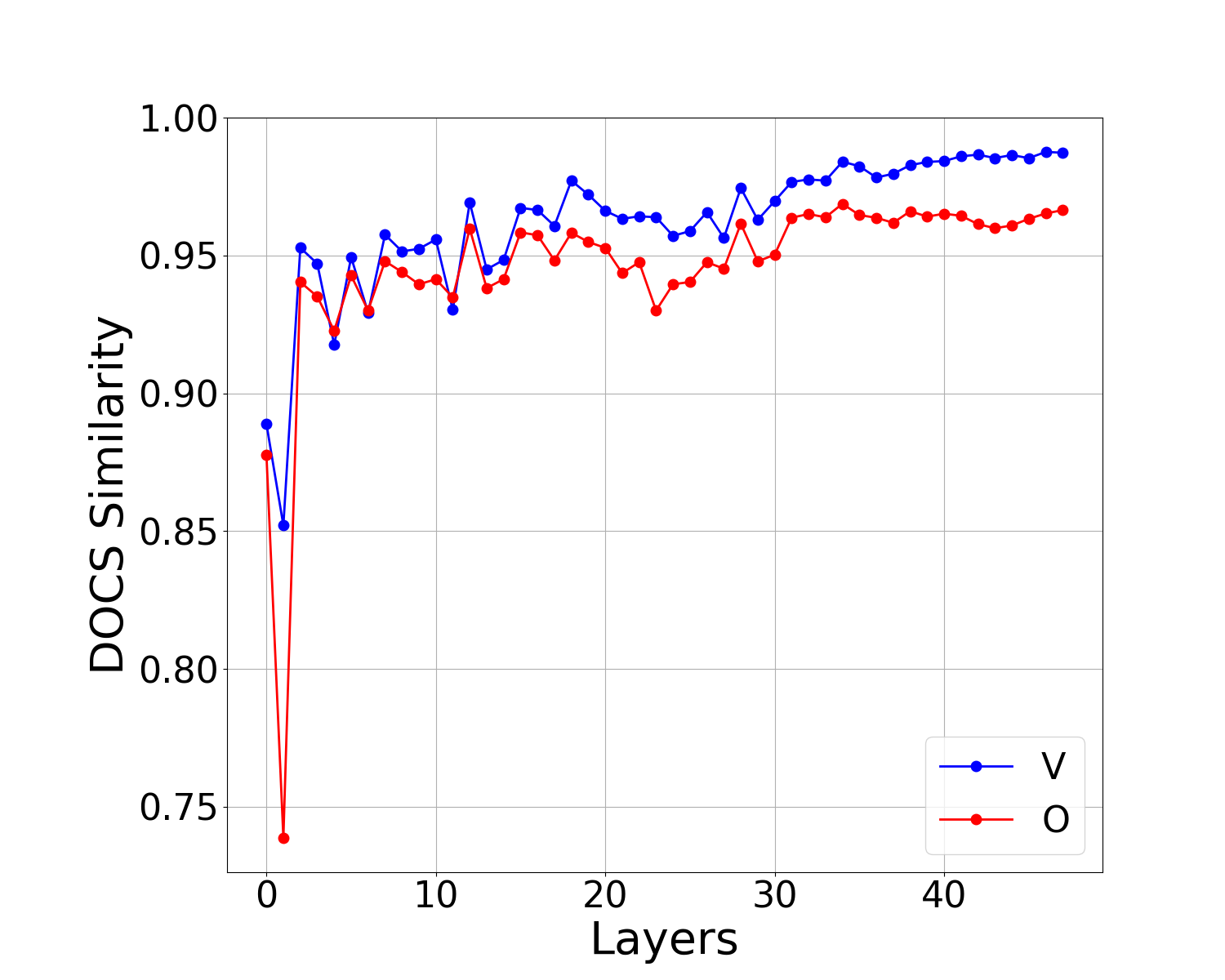} 
        \caption{yi-1.5-9b}
    \end{subfigure}

    \begin{subfigure}[b]{0.3\textwidth}
        \includegraphics[width=\textwidth]{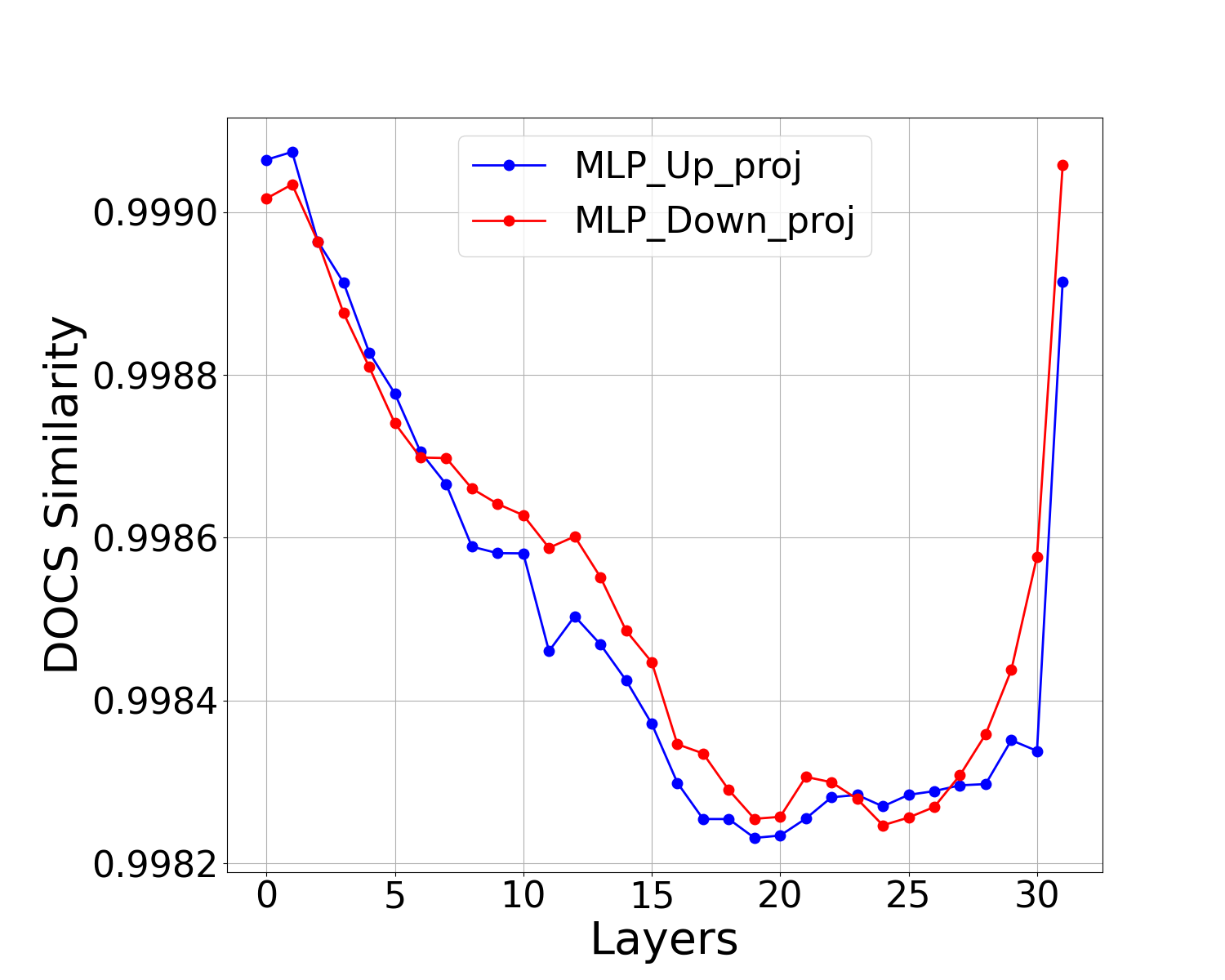} 
        \caption{Llama-3.1-8B}
    \end{subfigure}
    \hfill
    \begin{subfigure}[b]{0.3\textwidth}
        \includegraphics[width=\textwidth]{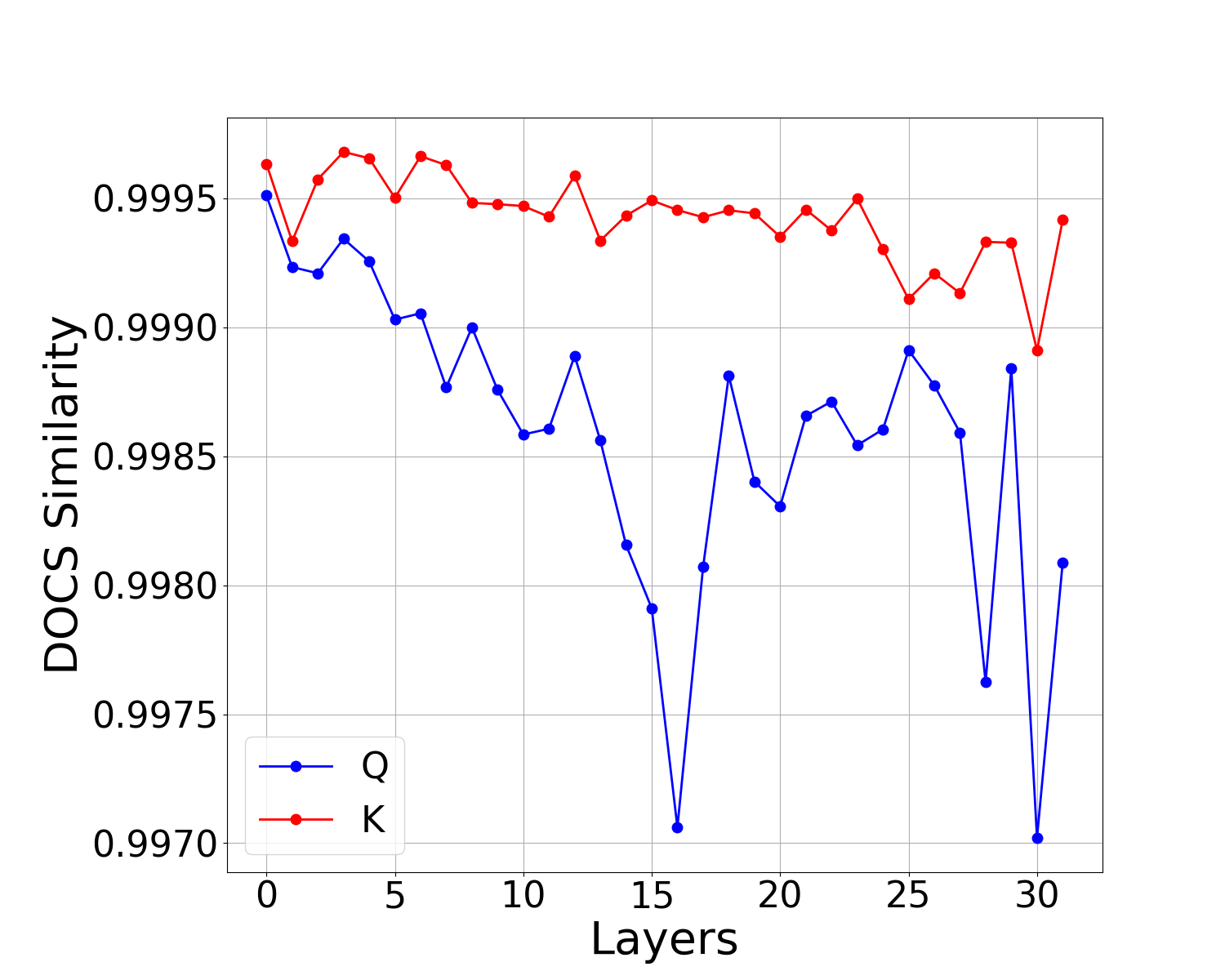} 
        \caption{Llama-3.1-8B}
    \end{subfigure}
    \hfill
    \begin{subfigure}[b]{0.3\textwidth}
        \includegraphics[width=\textwidth]{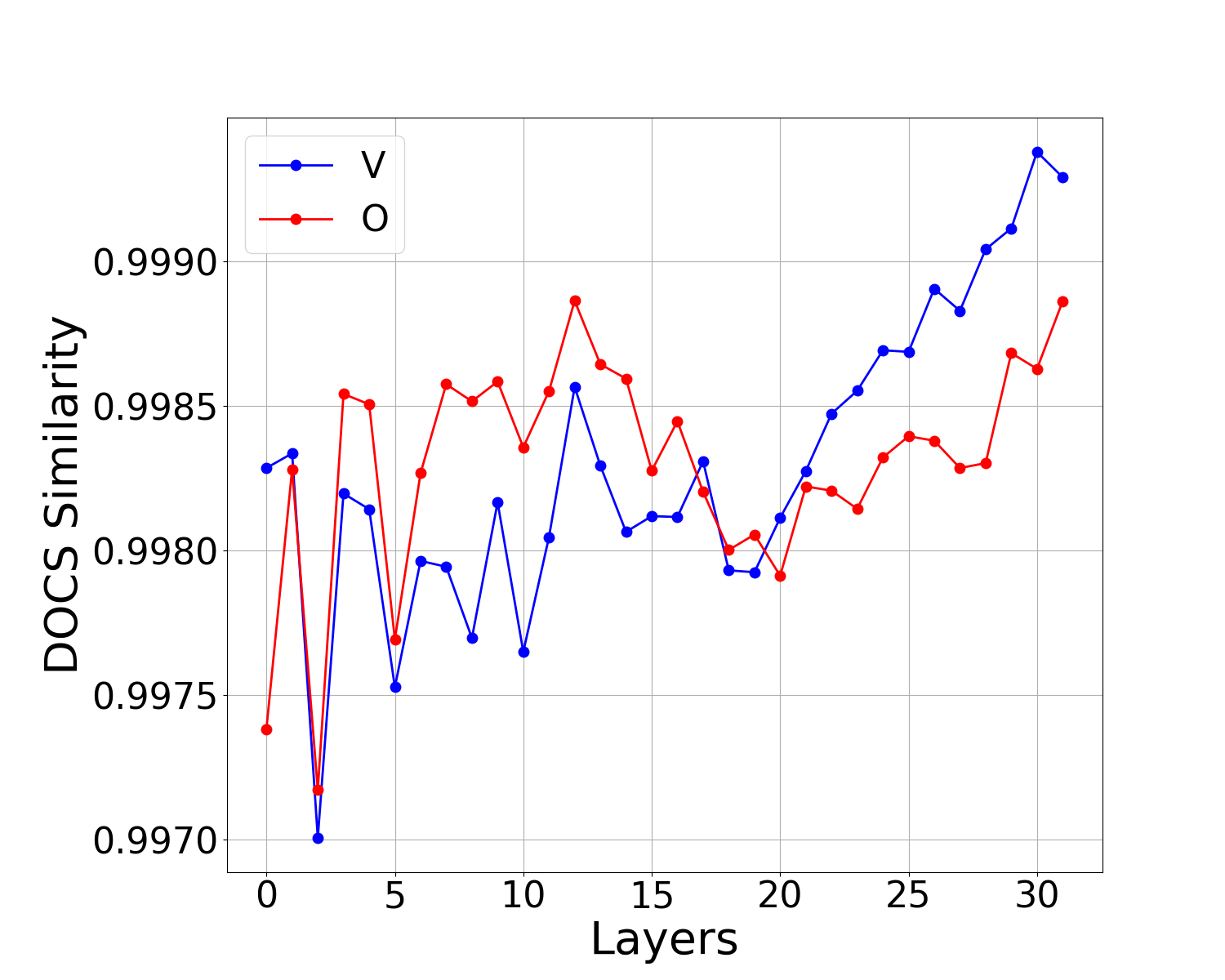} 
        \caption{Llama-3.1-8B}
    \end{subfigure}

    \begin{subfigure}[b]{0.3\textwidth}
        \includegraphics[width=\textwidth]{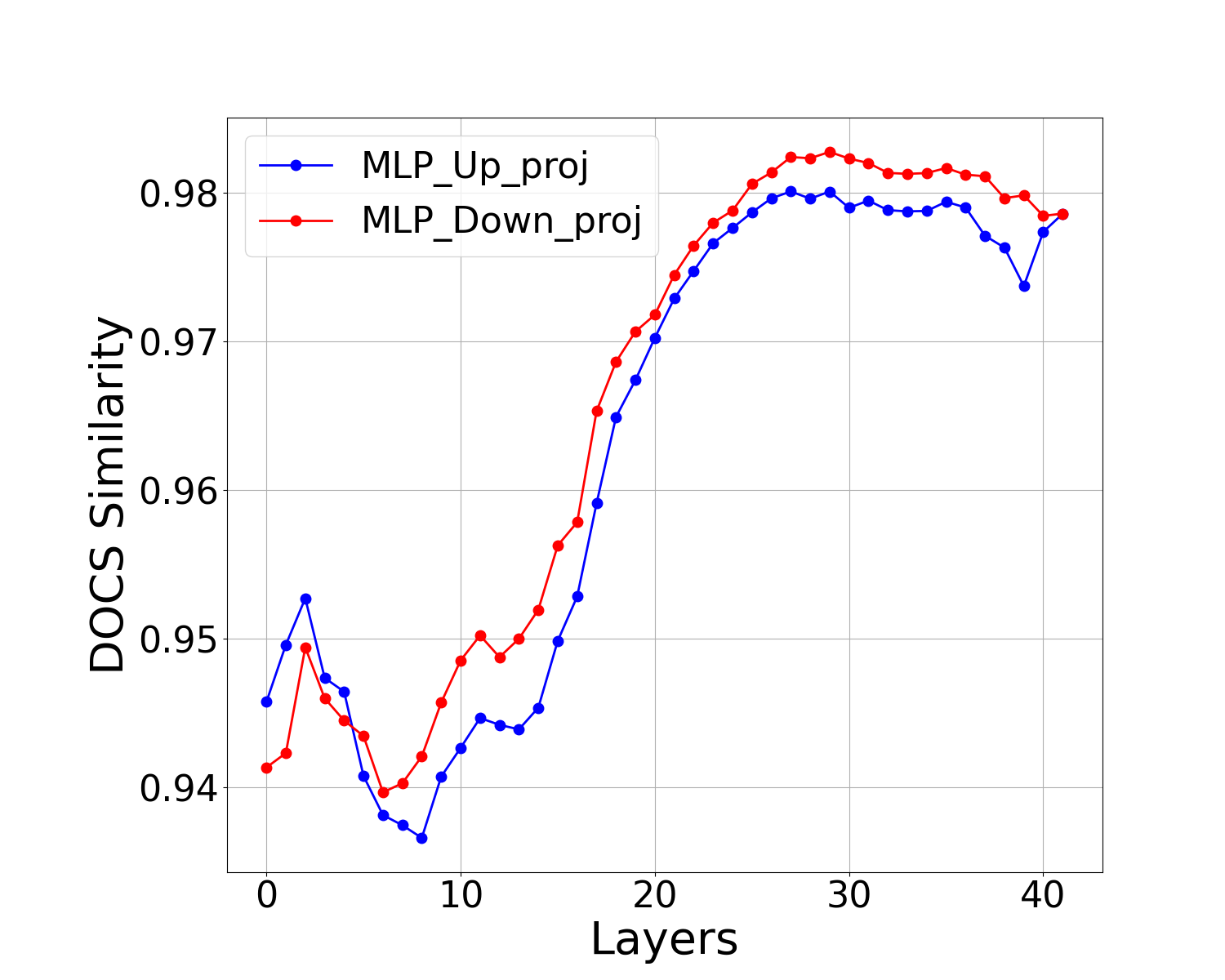} 
        \caption{gemma-2-9b}
    \end{subfigure}
    \hfill
    \begin{subfigure}[b]{0.3\textwidth}
        \includegraphics[width=\textwidth]{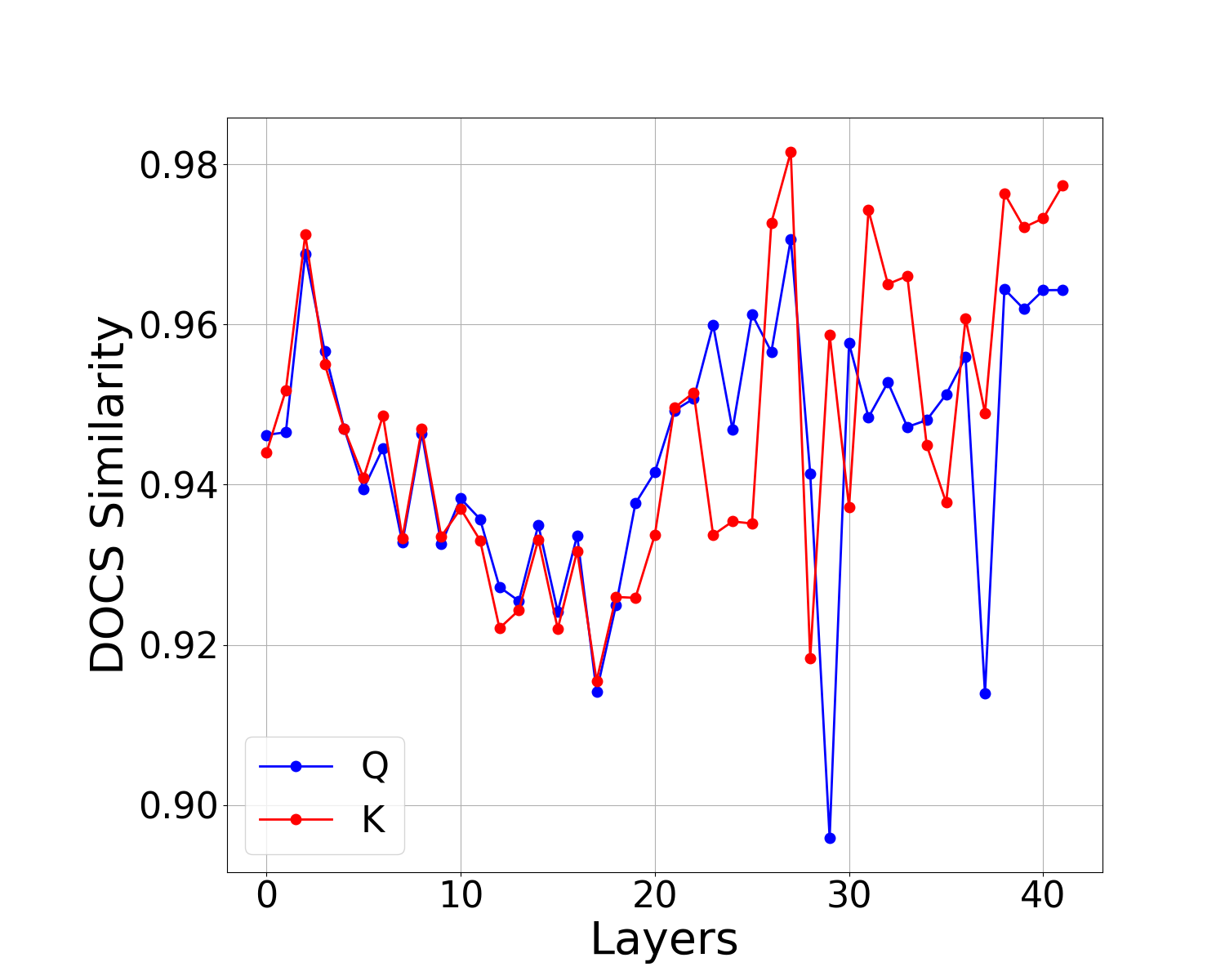} 
        \caption{gemma-2-9b}
    \end{subfigure}
    \hfill
    \begin{subfigure}[b]{0.3\textwidth}
        \includegraphics[width=\textwidth]{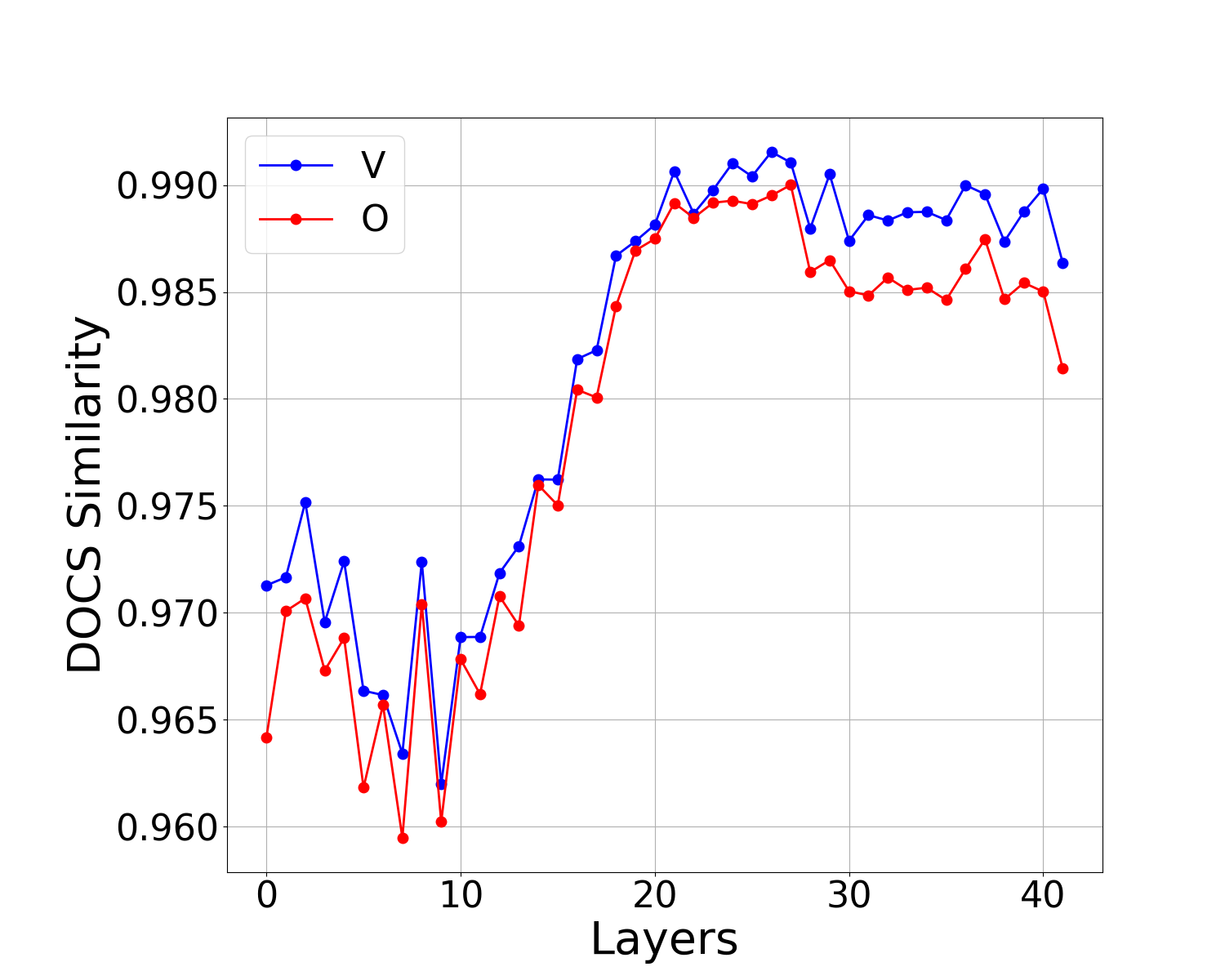} 
        \caption{gemma-2-9b}
    \end{subfigure}
    
    \caption{DOCS similarity scores between the base and instruction fine-tuned weight matrices for various models.}
    \label{fig:base_vs_sft_models}
\end{figure*}

\subsection{MoE Experiment}

We analyzed the weights of different experts in Mixtral-8x7B using DOCS to generate similarity heatmaps for the expert weight matrices ($W_1$, $W_2$, $W_3$) in the MoE network. The outcomes are visualized in Figure~\ref{fig:moe_experiment}.

\begin{figure*}[ht] 
    \centering
    \begin{subfigure}[b]{0.2\textwidth}
        \includegraphics[width=\textwidth]{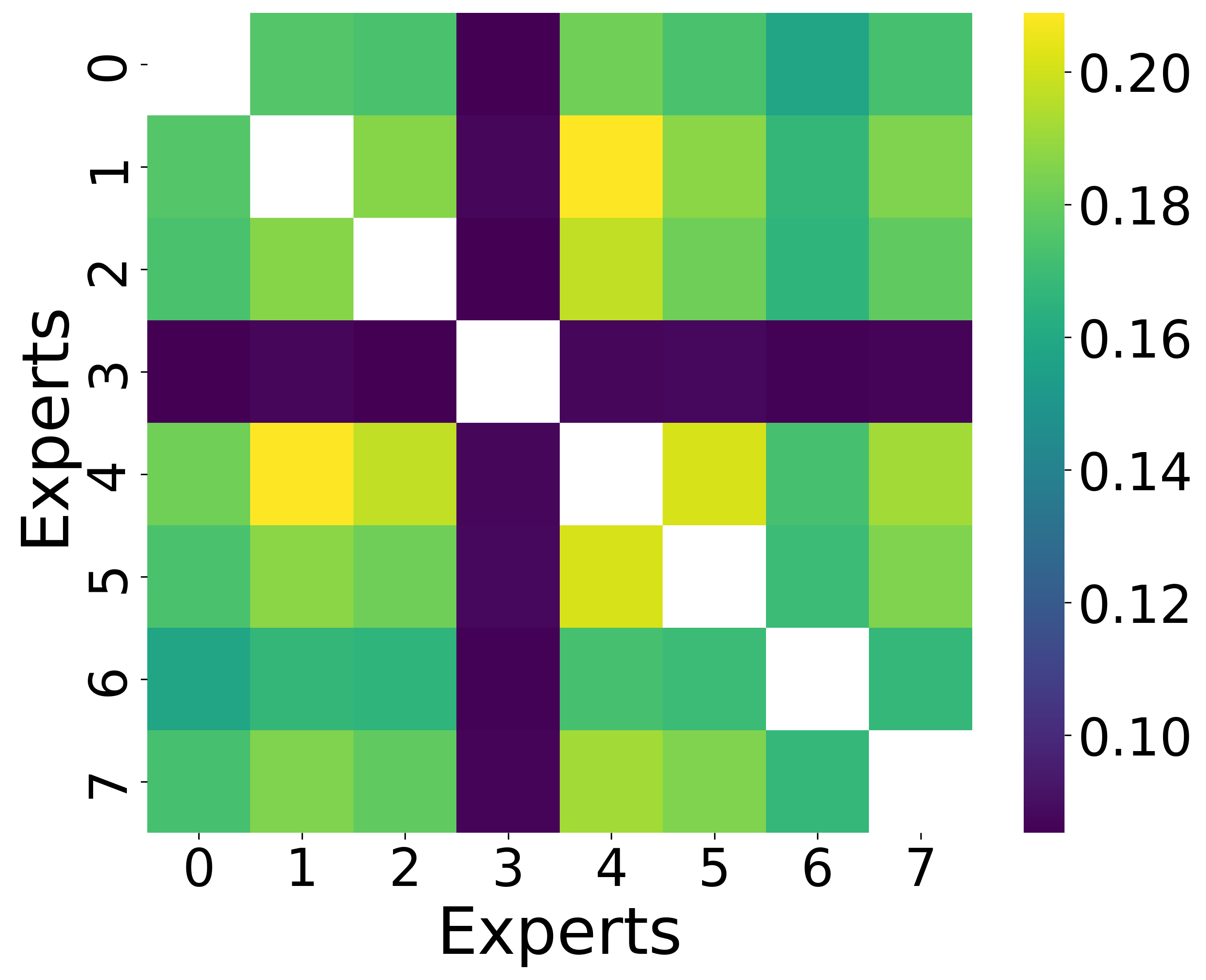} 
        \caption{$W_1$ in layer one}\label{fig:moe1}
    \end{subfigure}
    \hfill
    \begin{subfigure}[b]{0.2\textwidth}
        \includegraphics[width=\textwidth]{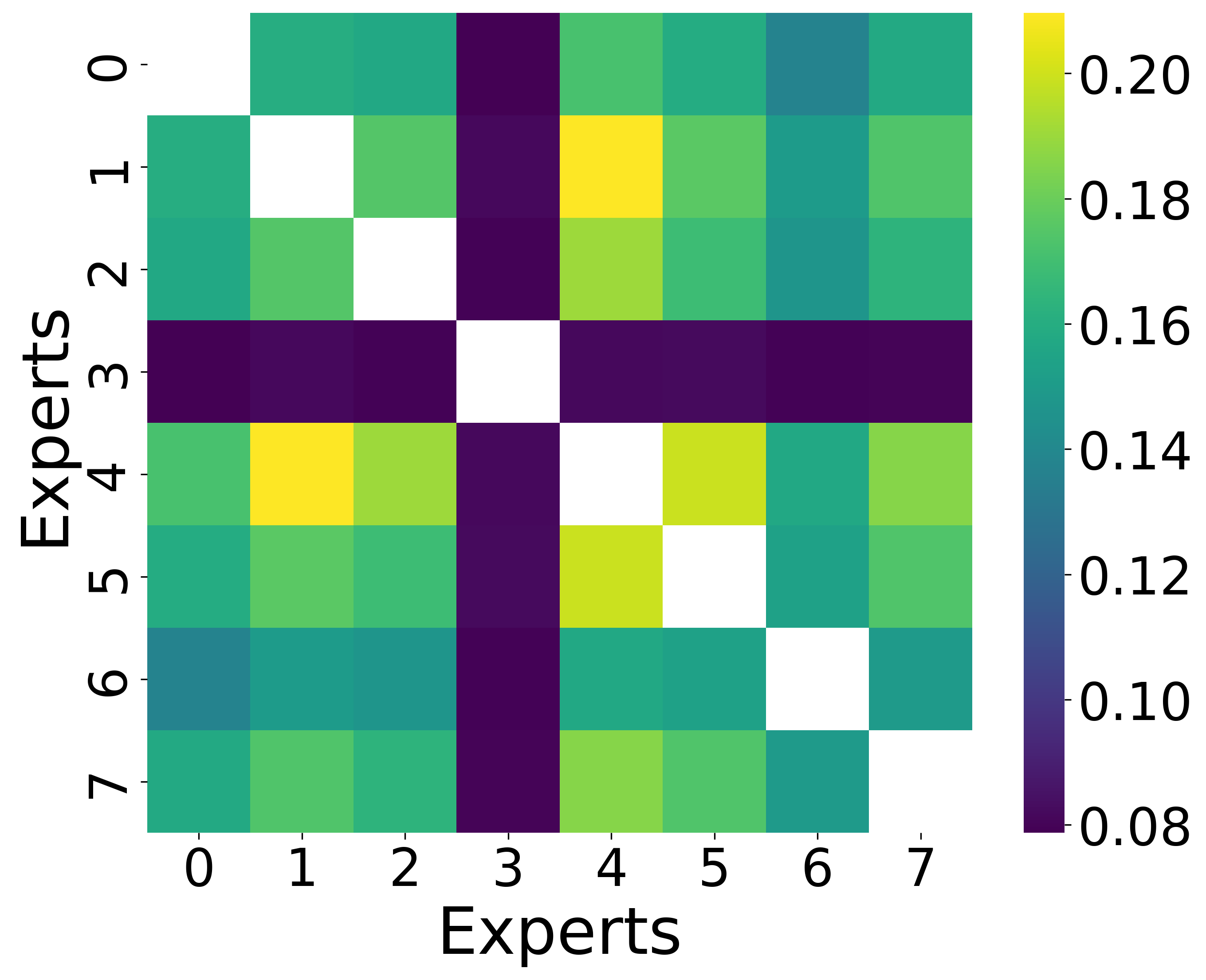} 
        \caption{$W_2$ in layer one}\label{fig:moe2}
    \end{subfigure}
    \hfill
    \begin{subfigure}[b]{0.2\textwidth}
        \includegraphics[width=\textwidth]{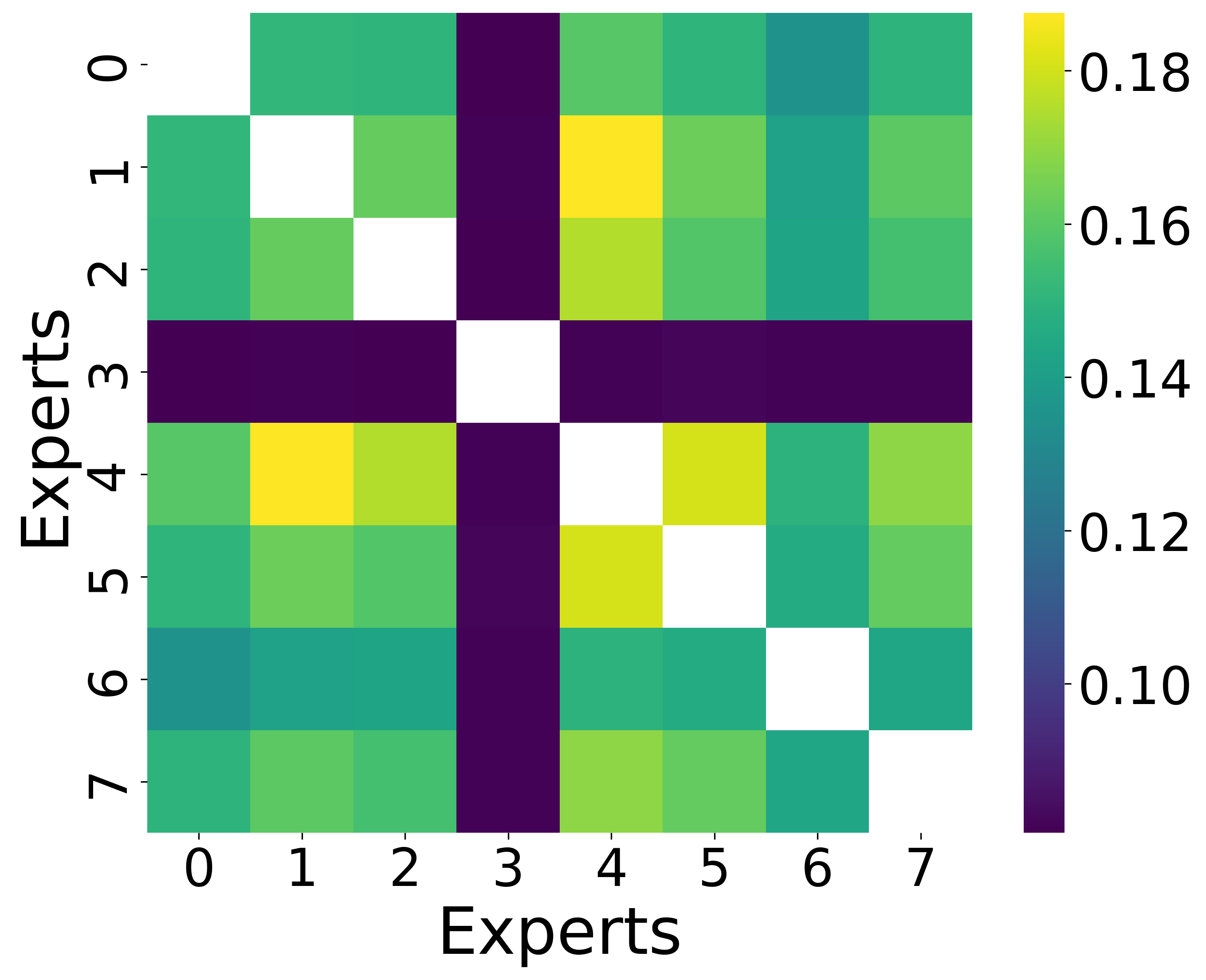} 
        \caption{$W_3$ in layer one}\label{fig:moe3}
    \end{subfigure}
        \hfill
    \begin{subfigure}[b]{0.2\textwidth}
        \includegraphics[width=\textwidth]{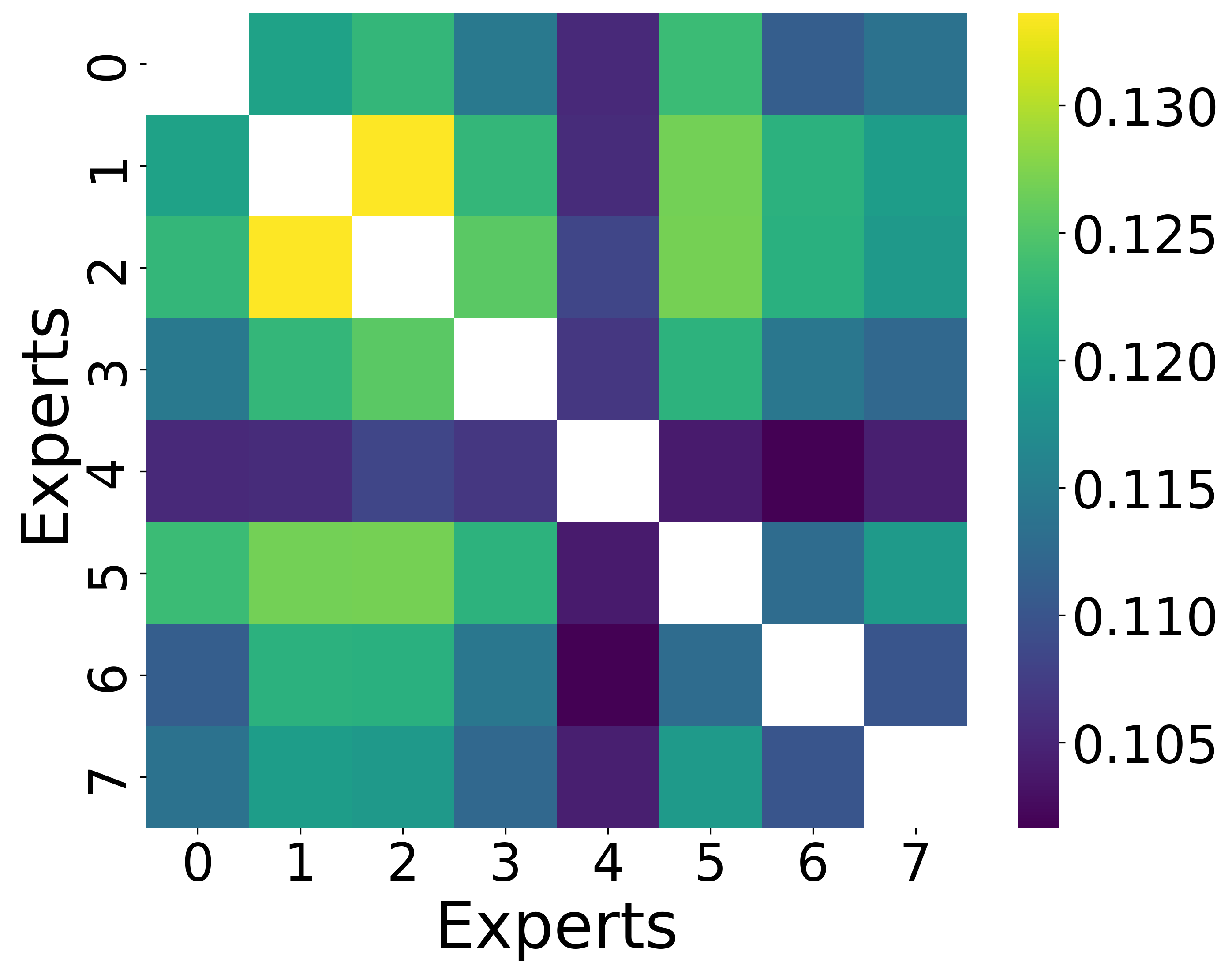} 
        \caption{$W_3$ in layer 25}
        \label{fig:another-moe}
    \end{subfigure}
    \caption{DOCS similarity scores between the MoE experts.}
    \label{fig:moe_experiment}
\end{figure*}

In many model layers, typically just one expert stands out, indicated by dark grids that show clear separation from the others. Figures~\ref{fig:moe1}, \ref{fig:moe2} and \ref{fig:moe3} illustrate this with the third expert's prominent dark intersecting lines, highlighting its uniqueness. This suggests some experts may have specialized roles, differing in function or behavior within the architecture. We hypothesize that this may be related to data load imbalance during the MoE training process, where a large portion of the data concentrates on a single expert~\citep{dai2022stablemoe,zuo2021taming}.

\section{Conclusion and Future Work}
This work introduces DOCS, a novel index for quantifying weight similarity in large language models. Unlike existing similarity indices, DOCS effectively differentiates orthogonal matrices, addressing a key limitation and enabling deeper insights into the internal structures of LLMs. Our experiments uncovered meaningful patterns, including high similarity between neighboring layers and the presence of similar layer clusters. These findings underscore the potential of DOCS to guide applications in several ways.
First, the identified cluster structures could be leveraged to introduce inductive biases during the supervised fine-tuning stage of parameter-efficient techniques. Second, the clustering results offer valuable information for designing sparsity patterns aimed at model compression. By targeting redundant connections within clustered layers, we may be able to significantly reduce model size and computational costs without compromising performance. Third, these findings can inform more efficient knowledge distillation strategies by identifying critical layers that deserve prioritization in the distillation process. A student model could focus on replicating the most representative layers within each cluster, thereby alleviating the computational overhead associated with emulating the teacher model's entire architecture.
\bibliography{refer}
\bibliographystyle{iclr2025_conference}

\newpage

\appendix

\section{Proofs of Mathematical Properties of DOCS Similarity Indices}
\label{sec:proof-of-DOCS}
\subsection{Proof of Permutation Transformation Invariance}

\begin{lemma}[Permutation Transformation Invariance]
Let \( X, Y \in \mathbb{R}^{n \times m} \) and let \( P_X, P_Y \in \mathbb{R}^{m \times m} \) be permutation matrices. Then the DOCS similarity index is invariant under permutation transformations:
\[
S(X, Y) = S(XP_X, YP_Y).
\]
\end{lemma}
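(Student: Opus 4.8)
The plan is to track the effect of right-multiplication by $P_X$ and $P_Y$ through the four stages of Algorithm~\ref{alg:docs} and check that the output scalar is unchanged. The guiding observation is that multiplying $X$ on the right by a permutation matrix only reorders its columns, and this reordering propagates transparently through the cosine-similarity matrix and the vectors $\mathbf{s}_X,\mathbf{s}_Y$, before being annihilated at the Gumbel-fitting step because the maximum-likelihood location parameter is a symmetric function of its data points.

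First I would fix notation: let $\sigma,\tau$ be the permutations of $\{1,\dots,m\}$ with $(XP_X)_j = X_{\sigma(j)}$ and $(YP_Y)_k = Y_{\tau(k)}$ for all $j,k$. Then the cosine-similarity matrix $C'$ formed inside $\textsc{MaxCosSim}(XP_X,YP_Y)$ satisfies $C'_{jk} = C_{\sigma(j)\,\tau(k)}$, where $C$ is the cosine-similarity matrix for $(X,Y)$, since both the inner products and the norms in the defining ratio only see the relabelled columns. Taking absolute values and maximizing over $k$, and using that $\tau$ is a bijection of $\{1,\dots,m\}$, I obtain
\[
\big(\mathbf{s}_{XP_X}\big)_j \;=\; \max_k |C'_{jk}| \;=\; \max_k |C_{\sigma(j)\,\tau(k)}| \;=\; \max_{k'} |C_{\sigma(j)\,k'}| \;=\; \big(\mathbf{s}_X\big)_{\sigma(j)},
\]
where $\mathbf{s}_{XP_X}$ denotes the output of $\textsc{MaxCosSim}(XP_X,YP_Y)$. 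Hence $\mathbf{s}_{XP_X}$ is $\mathbf{s}_X$ with its entries permuted by $\sigma$; the symmetric argument applied to $\textsc{MaxCosSim}(YP_Y,XP_X)$ (with $X$ and $Y$ interchanged) shows $\mathbf{s}_{YP_Y}$ is $\mathbf{s}_Y$ with its entries permuted by $\tau$. In particular $\mathbf{s}_{XP_X}$ and $\mathbf{s}_X$ agree as multisets, and likewise for $Y$.

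Next I would invoke the permutation-symmetry of the Gumbel maximum-likelihood fit: the log-likelihood of observations $s_1,\dots,s_m$ under a Gumbel law with location $u$ and scale $\beta$ is a sum of per-observation terms, so it is invariant under any reordering of the $s_i$, and therefore its maximizer in $u$ depends only on the multiset $\{s_1,\dots,s_m\}$. Applying this to the two pairs of vectors above gives $u_{XP_X} = u_X$ and $u_{YP_Y} = u_Y$, whence
\[
S(XP_X,YP_Y) \;=\; \tfrac{1}{2}\big(u_{XP_X} + u_{YP_Y}\big) \;=\; \tfrac{1}{2}\big(u_X + u_Y\big) \;=\; S(X,Y),
\]
which is the claim. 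The only step that is not pure bookkeeping — and hence the ``main obstacle,'' mild as it is — is this last appeal to the estimator: one must note that the MLE of the Gumbel location parameter is well defined on the relevant data and returns the same value for data sets differing only by a permutation. This is immediate from the product form of the likelihood, but it is where the argument uses a genuine property of the fitting procedure rather than just relabelling indices.
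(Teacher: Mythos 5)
Your proof is correct and follows essentially the same route as the paper's: you track the column permutations through the cosine-similarity matrix to show that \(\mathbf{s}_{XP_X}\) and \(\mathbf{s}_{YP_Y}\) are permuted copies of \(\mathbf{s}_X\) and \(\mathbf{s}_Y\), and then conclude via the permutation-invariance of the Gumbel location estimate. Your explicit justification that the maximum-likelihood location parameter depends only on the multiset of observations (from the additive form of the log-likelihood) is a small added rigor over the paper, which states this step only informally.
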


\begin{proof}
We aim to show that the DOCS similarity index is invariant under permutation transformations:
\[
S_{\text{DOCS}}(X, Y) = S_{\text{DOCS}}(X P_X,\, Y P_Y),
\]
where \( P_X \) and \( P_Y \) are permutation matrices of size \( m \times m \).

First, recall that multiplying a matrix by a permutation matrix permutes its columns. Specifically, if \( \sigma_X \) is the permutation associated with \( P_X \), then:
\[
X P_X = [X_{\sigma_X(1)},\, X_{\sigma_X(2)},\, \dots,\, X_{\sigma_X(m)}],
\]
where \( X_{\sigma_X(j)} \) denotes the \( \sigma_X(j) \)-th column of \( X \). Similarly, \( Y P_Y \) permutes the columns of \( Y \) according to the permutation \( \sigma_Y \):
\[
Y P_Y = [Y_{\sigma_Y(1)},\, Y_{\sigma_Y(2)},\, \dots,\, Y_{\sigma_Y(m)}].
\]

Next, compute the cosine similarity matrix \( C \in \mathbb{R}^{m \times m} \) between \( X \) and \( Y \):
\[
C_{jk} = \frac{X_j^\top Y_k}{\|X_j\| \, \|Y_k\|},
\]
where \( X_j \) and \( Y_k \) are the \( j \)-th and \( k \)-th columns of \( X \) and \( Y \), respectively.

When we compute the cosine similarity matrix \( C' \) between the permuted matrices \( X P_X \) and \( Y P_Y \), we get:
\[
C'_{jk} = \frac{(X P_X)_j^\top (Y P_Y)_k}{\|(X P_X)_j\| \, \|(Y P_Y)_k\|} = \frac{X_{\sigma_X(j)}^\top Y_{\sigma_Y(k)}}{\|X_{\sigma_X(j)}\| \, \|Y_{\sigma_Y(k)}\|}.
\]

This shows that \( C'_{jk} = C_{\sigma_X(j),\, \sigma_Y(k)} \). In other words, \( C' \) is a reordering of the entries of \( C \) based on the permutations \( \sigma_X \) and \( \sigma_Y \).

In the \textsc{MaxCosSim} function of the DOCS algorithm \ref{alg:docs}, for each \( j \), we compute:
\[
s_{X_j} = \max_k |C_{jk}|.
\]
Similarly, for the permuted matrices, we have:
\[
s_{(X P_X)_j} = \max_k |C'_{jk}| = \max_k |C_{\sigma_X(j),\, \sigma_Y(k)}|.
\]

Since \( \sigma_Y \) is a permutation of \( \{1, 2, \dots, m\} \), as \( k \) ranges over \( 1 \) to \( m \), so does \( \sigma_Y(k) \). Therefore:
\[
s_{(X P_X)_j} = \max_{k} |C_{\sigma_X(j),\, \sigma_Y(k)}| = \max_{k} |C_{\sigma_X(j),\, k}| = s_{X_{\sigma_X(j)}}.
\]

This means that \( s_{(X P_X)_j} \) is equal to \( s_{X_{\sigma_X(j)}} \), indicating that the sequence \( \{s_{(X P_X)_j}\} \) is a permutation of \( \{s_{X_j}\} \) based on \( \sigma_X \).

Similarly, for \( Y \) and its permutation \( Y P_Y \), we find:
\[
s_{(Y P_Y)_k} = \max_j |C'_{kj}| = \max_j |C_{\sigma_X(j),\, \sigma_Y(k)}| = \max_j |C_{j,\, \sigma_Y(k)}| = s_{Y_{\sigma_Y(k)}}.
\]
Thus, \( s_{(Y P_Y)_k} \) is equal to \( s_{Y_{\sigma_Y(k)}} \), so the sequence \( \{s_{(Y P_Y)_k}\} \) is a permutation of \( \{s_{Y_k}\} \) based on \( \sigma_Y \).

When we fit Gumbel distributions to the sequences \( \{s_{X_j}\} \) and \( \{s_{(X P_X)_j}\} \), or \( \{s_{Y_k}\} \) and \( \{s_{(Y P_Y)_k}\} \), the estimated location parameters \( u_X \) and \( u_X' \) (or \( u_Y \) and \( u_Y' \)) remain the same because the sets of values are identical up to permutation.

Therefore, the DOCS similarity index remains unchanged:
\[
S_{\text{DOCS}}(X P_X,\, Y P_Y) = \frac{u_X' + u_Y'}{2} = \frac{u_X + u_Y}{2} = S_{\text{DOCS}}(X, Y).
\]

This concludes the proof that the DOCS similarity index is invariant under permutation transformations.
\end{proof}

\subsection{Proof of Symmetry}
\begin{lemma}[Symmetry]
For any matrices \( X, Y \in \mathbb{R}^{n \times m} \), the DOCS similarity index satisfies:
\[
S_{\text{DOCS}}(X, Y) = S_{\text{DOCS}}(Y, X).
\]
\end{lemma}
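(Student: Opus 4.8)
The plan is to observe that the map \( (X,Y) \mapsto S_{\text{DOCS}}(X,Y) \) is, by construction, the average of two quantities that simply trade places when the arguments are swapped. Concretely, the last three lines of Algorithm~\ref{alg:docs} compute \( \mathbf{s}_X = \textsc{MaxCosSim}(X,Y) \) and \( \mathbf{s}_Y = \textsc{MaxCosSim}(Y,X) \), fit a Gumbel distribution to each to obtain location parameters \( u_X \) and \( u_Y \), and output \( (u_X + u_Y)/2 \).

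First I would carry out the substitution \( X \leftrightarrow Y \) and write out what the algorithm does on the pair \( (Y,X) \): it computes \( \textsc{MaxCosSim}(Y,X) \) and \( \textsc{MaxCosSim}(X,Y) \), which are exactly the vectors \( \mathbf{s}_Y \) and \( \mathbf{s}_X \) produced above — the same two function calls, merely listed in the opposite order. For completeness I would also record the elementary identity underlying \( \textsc{MaxCosSim} \): the cosine-similarity matrices satisfy \( C(Y,X)_{kj} = C(X,Y)_{jk} \), since \( Y_k^\top X_j = X_j^\top Y_k \) and the normalizing product of norms is unchanged, so the definition of \( \textsc{MaxCosSim} \) behaves consistently under the swap.

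Next I would invoke the same fact already used in the proof of permutation invariance: the maximum-likelihood Gumbel location parameter is a deterministic function of its input data vector, so fitting to \( \mathbf{s}_X \) always yields \( u_X \) and fitting to \( \mathbf{s}_Y \) always yields \( u_Y \), independently of which ordering of the arguments produced them. Hence
\[
S_{\text{DOCS}}(Y,X) = \frac{u_Y + u_X}{2} = \frac{u_X + u_Y}{2} = S_{\text{DOCS}}(X,Y),
\]
the middle step being commutativity of addition.

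I do not anticipate a genuine obstacle: the statement is a short bookkeeping argument. The only point that merits an explicit sentence — and the only place the argument could be challenged — is the claim that the fitted location parameter depends solely on the data vector and not on any external labelling; this is immediate since the Gumbel log-likelihood is a sum over the data points, hence symmetric in them, and the two arguments of \( S_{\text{DOCS}} \) enter the final index only through these two data vectors.
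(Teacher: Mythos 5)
Your proof is correct, and it takes a genuinely different (and in fact more robust) route than the paper's. You rely only on the symmetrized construction of the index: running Algorithm~\ref{alg:docs} on the pair \((Y,X)\) issues the same two calls \(\textsc{MaxCosSim}(Y,X)\) and \(\textsc{MaxCosSim}(X,Y)\) in the opposite order, the maximum-likelihood Gumbel location is a deterministic function of its input vector, and the final average is commutative; nothing more is needed, and the transpose identity \(C(Y,X)_{kj}=C(X,Y)_{jk}\) you record is the correct bookkeeping fact. The paper instead argues for the stronger intermediate statement \(u_X=u_Y\): it asserts that the cosine-similarity matrix is symmetric (\(C_{jk}=C_{kj}\)) and concludes that the sets \(\{s_{X_j}\}\) (row maxima of \(|C|\)) and \(\{s_{Y_k}\}\) (column maxima of \(|C|\)) coincide. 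That step does not hold in general: \(C(X,Y)\) is not symmetric for \(X\neq Y\) (only the transpose relation holds), and the multiset of row maxima of a nonsymmetric matrix need not equal the multiset of column maxima, so \(u_X\) and \(u_Y\) can differ. Symmetry of \(S_{\text{DOCS}}\) survives regardless, precisely because of the explicit averaging of \(u_X\) and \(u_Y\) — which is exactly the point your argument isolates. In short, your proof buys correctness with minimal hypotheses, whereas the paper's line of reasoning would, where its intermediate claims do hold (e.g., \(X=Y\)), deliver the additional conclusion \(u_X=u_Y\), but as a proof of the lemma it overreaches.
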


\begin{proof}
Compute the cosine similarity matrix \( C \in \mathbb{R}^{m \times m} \) where:
\[
C_{jk} = \frac{X_j^\top Y_k}{\|X_j\| \, \|Y_k\|}.
\]
Note that \( C_{jk} = C_{kj} \) because:
\[
C_{jk} = \frac{X_j^\top Y_k}{\|X_j\| \, \|Y_k\|} = \frac{Y_k^\top X_j}{\|Y_k\| \, \|X_j\|} = C_{kj}.
\]

Therefore, the cosine similarity matrix \( C \) is symmetric.

The MaxCosSim function computes for each \( j \):
\[
s_{X_j} = \max_k |C_{jk}|.
\]
Similarly, for \( Y \) and \( X \):
\[
s_{Y_k} = \max_j |C_{kj}| = \max_j |C_{jk}|.
\]
Thus, the sets \( \{s_{X_j}\} \) and \( \{s_{Y_k}\} \) are identical.

Fitting Gumbel distributions to \( \{s_{X_j}\} \) and \( \{s_{Y_k}\} \) yields identical location parameters:
\[
u_X = u_Y.
\]

Therefore, the DOCS similarity index is:
\[
S_{\text{DOCS}}(X, Y) = \frac{u_X + u_Y}{2} = u_X = u_Y = S_{\text{DOCS}}(Y, X).
\]
\end{proof}

\subsection{Proof of Isotropic Scaling Invariance Invariance}
\begin{lemma}[Isotropic Scaling Invariance]
For any matrices \( X, Y \in \mathbb{R}^{n \times m} \) and nonzero scalars \( a, b \in \mathbb{R} \), the DOCS similarity index satisfies:
\[
S_{\text{DOCS}}(a X, b Y) = S_{\text{DOCS}}(X, Y).
\]
\end{lemma}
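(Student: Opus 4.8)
The plan is to track how the isotropic scalings propagate through each stage of Algorithm~\ref{alg:docs}, and observe that they wash out at the very first step because cosine similarity is scale-insensitive. First I would note that forming $aX$ scales every column of $X$ uniformly, i.e.\ $(aX)_j = a X_j$, and likewise $(bY)_k = b Y_k$; since $a,b \neq 0$, all the norms $\|aX_j\| = |a|\,\|X_j\|$ and $\|bY_k\| = |b|\,\|Y_k\|$ are nonzero whenever the originals are, so every cosine similarity entry remains well defined.

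Next I would compute the perturbed cosine similarity matrix explicitly:
\[
C'_{jk} = \frac{(aX_j)^\top (bY_k)}{\|aX_j\|\,\|bY_k\|} = \frac{ab\, X_j^\top Y_k}{|a|\,|b|\,\|X_j\|\,\|Y_k\|} = \sign(ab)\, C_{jk}.
\]
Taking absolute values kills the sign factor, so $|C'_{jk}| = |C_{jk}|$ for all $j,k$. Consequently $\textsc{MaxCosSim}(aX, bY)$ returns exactly $\mathbf{s}_X$ (each coordinate $\max_k |C'_{jk}| = \max_k |C_{jk}| = s_{X_j}$), and symmetrically $\textsc{MaxCosSim}(bY, aX) = \mathbf{s}_Y$.

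Finally, since the Gumbel maximum-likelihood fit in steps~10--11 of the algorithm depends only on the multiset of sample values, and those multisets are literally unchanged, the estimated location parameters satisfy $u_{aX} = u_X$ and $u_{bY} = u_Y$; averaging gives $S_{\text{DOCS}}(aX, bY) = (u_{aX} + u_{bY})/2 = (u_X + u_Y)/2 = S_{\text{DOCS}}(X, Y)$. There is essentially no hard step here; the only point demanding any care is handling the sign of $ab$, which is precisely why the algorithm uses $|C_{jk}|$ rather than $C_{jk}$ inside \textsc{MaxCosSim}, and confirming that the nonvanishing of $a$ and $b$ keeps the normalization legitimate.
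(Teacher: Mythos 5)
Your proof is correct and follows essentially the same route as the paper's: propagate the scaling through the cosine similarity matrix, observe the maximum absolute similarities (and hence the Gumbel location parameters) are unchanged, and average. If anything, your treatment is slightly more careful than the paper's, since the paper writes $\|aX_j\| = a\|X_j\|$ (valid only for $a>0$) and concludes $\tilde{C}=C$ outright, whereas you correctly obtain $C'_{jk} = \sign(ab)\,C_{jk}$ and note that the absolute value inside \textsc{MaxCosSim} removes the sign, which is the honest reason the claim holds for arbitrary nonzero scalars.
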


\begin{proof}
Consider the scaled matrices:
\[
\tilde{X} = a X, \quad \tilde{Y} = b Y.
\]

The cosine similarity between columns \( \tilde{X}_j \) and \( \tilde{Y}_k \) is:
\[
\tilde{C}_{jk} = \frac{\tilde{X}_j^\top \tilde{Y}_k}{\|\tilde{X}_j\| \, \|\tilde{Y}_k\|} = \frac{(a X_j)^\top (b Y_k)}{\|a X_j\| \, \|b Y_k\|} = \frac{a b X_j^\top Y_k}{a \|X_j\| \cdot b \|Y_k\|} = \frac{X_j^\top Y_k}{\|X_j\| \, \|Y_k\|} = C_{jk}.
\]

Therefore, the cosine similarity matrix \( \tilde{C} \) is identical to \( C \).

For each column \( \tilde{X}_j \), the maximum absolute cosine similarity is:
\[
s_{\tilde{X}_j} = \max_k |\tilde{C}_{jk}| = \max_k |C_{jk}| = s_{X_j}.
\]

Similarly, \( s_{\tilde{Y}_k} = s_{Y_k} \).

Thus, the vectors \( \mathbf{s}_{\tilde{X}} \) and \( \mathbf{s}_X \) are the same, as are \( \mathbf{s}_{\tilde{Y}} \) and \( \mathbf{s}_Y \). Fitting a Gumbel distribution yields the same location parameters:
\[
u_{\tilde{X}} = u_X, \quad u_{\tilde{Y}} = u_Y.
\]

Therefore,
\[
S_{\text{DOCS}}(a X, b Y) = \frac{u_{\tilde{X}} + u_{\tilde{Y}}}{2} = \frac{u_X + u_Y}{2} = S_{\text{DOCS}}(X, Y).
\]
\end{proof}

\subsection{Proof of Reflexivity}
\begin{lemma}[Reflexivity]
For any matrix \( X \in \mathbb{R}^{n \times m} \), the DOCS similarity index satisfies:
\[
S_{\text{DOCS}}(X, X) = 1.
\]
\end{lemma}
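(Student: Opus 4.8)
The plan is to reduce the claim to a single fact about the maximum-likelihood Gumbel fit applied to constant data. First I would run Algorithm~\ref{alg:docs} with both inputs equal to $X$. The cosine similarity matrix is then $C \in \mathbb{R}^{m\times m}$ with $C_{jk} = X_j^\top X_k / (\|X_j\|\,\|X_k\|)$ (assuming, as the use of cosine similarity implicitly requires, that no column of $X$ is the zero vector). Two elementary observations suffice: (i) $C_{jj} = X_j^\top X_j/\|X_j\|^2 = 1$ for every $j$, so the diagonal of $C$ already equals $1$; and (ii) by the Cauchy--Schwarz inequality $|C_{jk}| \le 1$ for all $j,k$. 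Hence $s_{X_j} = \max_k |C_{jk}| = 1$ for every $j$, so $\mathbf{s}_X = (1,\dots,1)^\top$. Because both calls inside the algorithm are \textsc{MaxCosSim}$(X,X)$ when $Y=X$, we also get $\mathbf{s}_Y = \mathbf{s}_X = (1,\dots,1)^\top$.

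It then remains to show that the maximum-likelihood Gumbel fit to the degenerate sample $(1,\dots,1)$ returns location parameter $u=1$. I would establish this through the equivariance of the Gumbel MLE: with log-likelihood $\ell(u,\beta) = -m\log\beta - \sum_{i}(x_i-u)/\beta - \sum_{i}\exp(-(x_i-u)/\beta)$, the translation $x_i \mapsto x_i + t$ is absorbed by $u \mapsto u+t$ and the rescaling $x_i \mapsto c\,x_i$ with $c>0$ by $(u,\beta)\mapsto (cu,c\beta)$. Shifting by $-1$ turns the all-ones sample into the all-zeros sample, whose optimal location $u_0$ is necessarily scale-invariant and therefore equals $0$; undoing the shift gives $u=1$ for the all-ones sample. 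Equivalently, one can check directly that for constant data the likelihood is driven to its supremum only as $\beta \to 0^+$ with $u \to 1$, so in either reading the fitted location parameters are $u_X = u_Y = 1$.

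Combining the two steps, $S_{\text{DOCS}}(X,X) = (u_X + u_Y)/2 = (1+1)/2 = 1$, which is the claim.

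The main obstacle is the second step: the Gumbel MLE on constant data is degenerate — the scale parameter collapses to $0$ and the likelihood is unbounded above — so ``the fitted location parameter'' must be interpreted with care, either as the unique limit of near-maximizers or via the equivariance argument, rather than as an interior stationary point of $\ell$. The first step (Cauchy--Schwarz together with the unit diagonal) and the final averaging are routine.
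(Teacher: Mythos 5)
Your proof is correct and follows essentially the same route as the paper's: the unit diagonal of the cosine similarity matrix together with the Cauchy--Schwarz bound forces $\mathbf{s}_X = \mathbf{s}_Y = (1,\dots,1)^\top$, the fitted Gumbel location parameters are then $u_X = u_Y = 1$, and averaging gives $S_{\text{DOCS}}(X,X)=1$. Your only departure is that you explicitly address why the maximum-likelihood Gumbel fit to constant data returns location $1$ despite the degenerate, unbounded likelihood (via translation/scale equivariance or a limiting argument), a point the paper's proof simply asserts, so your treatment of that step is if anything more careful; like the paper, you correctly flag that nonzero columns of $X$ are implicitly assumed.
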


\begin{proof}
Compute the cosine similarity matrix \( C \in \mathbb{R}^{m \times m} \) where:
\[
C_{jk} = \frac{X_j^\top X_k}{\|X_j\| \, \|X_k\|}.
\]
When \( j = k \), we have:
\[
C_{jj} = \frac{X_j^\top X_j}{\|X_j\|^2} = 1.
\]
Since the absolute value of the cosine similarity is bounded by 1, for each \( j \):
\[
s_{X_j} = \max_k |C_{jk}| = 1.
\]
Therefore, the vector \( \mathbf{s}_X = [1, 1, \dots, 1]^\top \).

Fitting a Gumbel distribution to \( \mathbf{s}_X \) consisting of all ones yields a location parameter:
\[
u_X = 1.
\]
Similarly, since \( Y = X \), we have \( \mathbf{s}_Y = \mathbf{s}_X \) and \( u_Y = u_X = 1 \).

Therefore, the DOCS similarity index is:
\[
S_{\text{DOCS}}(X, X) = \frac{u_X + u_Y}{2} = \frac{1 + 1}{2} = 1.
\]
\end{proof}

\subsection{Proof of Discriminative on Orthogonal Matrices}
\label{Proof of Discriminative}
\begin{lemma}[Discriminative on Orthogonal Matrices]
There exist orthogonal matrices \( X, Y, X', Y' \) such that:
\[
S_{\text{DOCS}}(X, Y) \neq S_{\text{DOCS}}(X', Y').
\]
\end{lemma}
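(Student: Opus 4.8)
The plan is to combine two facts that are already available in the excerpt: the Reflexivity lemma, which gives $S_{\text{DOCS}}(X,X) = 1$ for every matrix — in particular for the orthogonal identity $I_n$ — and Theorem~\ref{thm:main}, which for $n \ge 2$ produces column-orthogonal $X, Y$ with $S_{\text{DOCS}}(X,Y) = 1/\sqrt{m}$ where $m = \Omega(n) \ge 2$. Setting $X' = Y' = I_n$ then exhibits two pairs of orthogonal matrices with $S_{\text{DOCS}}(X,Y) = 1/\sqrt{m} < 1 = S_{\text{DOCS}}(X',Y')$, which is exactly the assertion. At the level of a plan the lemma is thus an immediate corollary of the (deferred) main theorem together with Reflexivity; all the combinatorial substance lives in Theorem~\ref{thm:main}.

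For a self-contained and illustrative argument I would instead spell out the smallest instance by hand. Take $n = 2$, put $X' = Y' = I_2$, and let $X = I_2$ while $Y = \tfrac{1}{\sqrt{2}}\begin{pmatrix} 1 & 1 \\ 1 & -1 \end{pmatrix}$ is the normalized $2\times 2$ Hadamard matrix, which is orthogonal. Tracing Algorithm~\ref{alg:docs}: every entry of the cosine-similarity matrix between the columns of $X$ and $Y$ equals $\pm 1/\sqrt{2}$, so the \textsc{MaxCosSim} step returns $\mathbf{s}_X = \mathbf{s}_Y = (1/\sqrt{2},\, 1/\sqrt{2})^\top$; fitting a Gumbel distribution gives location parameters $u_X = u_Y = 1/\sqrt{2}$, hence $S_{\text{DOCS}}(X,Y) = 1/\sqrt{2}$. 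Since $S_{\text{DOCS}}(X',Y') = 1$ by Reflexivity and $1/\sqrt{2} \neq 1$, the two pairs are separated, and the example is square, so it also meets the stricter $\mathbb{R}^{n\times n}$ reading of Definition~\ref{def:discriminative}.

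The only step requiring any care is the Gumbel fit on a constant sample: one checks that the Gumbel log-likelihood evaluated at $N$ copies of a constant $c$ is, for each fixed scale, maximized at location $u = c$ (and grows without bound as the scale tends to $0$), so the reported location is $c$ — consistent with the constant case used in the proof of Reflexivity. If one prefers to sidestep this degeneracy entirely, the first paragraph's route via Theorem~\ref{thm:main} avoids it, since there the value $1/\sqrt{m}$ is obtained directly. In either presentation the main obstacle is not in this lemma at all but in Theorem~\ref{thm:main}; here Reflexivity merely supplies the contrasting value $1$.
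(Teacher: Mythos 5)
Your proposal is correct, but it takes a genuinely different route from the paper. The paper proves the lemma by brute exhibition: it writes down two explicit pairs of $3\times 3$ orthogonal matrices and reports the numerically computed values $S_{\text{DOCS}}(X,Y)=0.88$ and $S_{\text{DOCS}}(X',Y')=0.76$, so the reader must trust a numerical run of Algorithm~\ref{alg:docs}. You instead contrast the reflexive pair $X'=Y'=I_2$ (value $1$ by the Reflexivity lemma) with the pair $(I_2,\ \tfrac{1}{\sqrt 2}H_2)$, where every pairwise cosine similarity has absolute value $1/\sqrt 2$, so $\mathbf{s}_X=\mathbf{s}_Y$ is constant and the fitted location is $1/\sqrt 2\neq 1$. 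This is fully hand-checkable and reuses machinery the paper already has, which is a real advantage over the paper's unverifiable decimals; it does, however, lean on the convention that a Gumbel fit to a constant sample returns that constant as its location --- a convention the paper itself adopts (in the Reflexivity proof and, implicitly, in the proof of Theorem~\ref{thm:main}, where the maxima are likewise all equal to $1/\sqrt m$), so your argument is consistent with the paper's framework. One small correction: your remark that routing through Theorem~\ref{thm:main} ``avoids'' the constant-sample degeneracy is not quite right, since that proof hits the same degenerate fit; also note that the theorem's matrices are only square (hence orthogonal in the sense of Definition~\ref{def:discriminative}) when $n$ is a power of two, whereas your explicit $n=2$ example sidesteps both issues, so it is the cleaner of your two routes.
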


\begin{proof}
Consider the following orthogonal matrices:

First pair:
\[
X =
\begin{bmatrix}
-0.6676 &  0.5171 & -0.5357 \\
-0.7310 & -0.5917 &  0.3399 \\
-0.1412 &  0.6185 &  0.7730
\end{bmatrix}, \quad
Y =
\begin{bmatrix}
-0.1837 &  0.5950 &  0.7825 \\
 0.0457 & -0.7900 &  0.6114 \\
 0.9819 &  0.1481 &  0.1179
\end{bmatrix}.
\]

Second pair:
\[
X' =
\begin{bmatrix}
-0.8499 &  0.0164 &  0.5267 \\
-0.0816 & -0.9915 & -0.1009 \\
 0.5206 & -0.1287 &  0.8440
\end{bmatrix}, \quad
Y' =
\begin{bmatrix}
-0.7028 & -0.6446 & -0.3009 \\
 0.3734 & -0.6943 &  0.6153 \\
-0.6056 &  0.3200 &  0.7286
\end{bmatrix}.
\]

Compute the DOCS similarity index for each pair:

For \( X \) and \( Y \):
\[
S_{\text{DOCS}}(X, Y) = 0.88.
\]

For \( X' \) and \( Y' \):
\[
S_{\text{DOCS}}(X', Y') = 0.76.
\]

Since \( S_{\text{DOCS}}(X, Y) \neq S_{\text{DOCS}}(X', Y') \), the DOCS similarity index distinguishes between different pairs of orthogonal matrices. Therefore, it is discriminative on orthogonal matrices.
\end{proof}

\section{Proofs of Non-Discriminative Nature of Other Similarity Indices for Orthogonal Matrices}
\label{sec:other-similarity-proofs}

In this section, we present the proofs demonstrating that other common similarity indices are either constant or depend only on the dimensions of the matrices when applied to orthogonal matrices.

\subsection{Linear Regression}

The Linear Regression Similarity is defined as:
\[
S_{\text{LR}}(X, Y) = \frac{\left\|Q_Y^{\mathrm{T}} X\right\|_{\mathrm{F}}^2}{\|X\|_{\mathrm{F}}^2}
\]
where \( Q_X \) and \( Q_Y \) are orthonormal bases for the columns of \( X \) and \( Y \), respectively.

\begin{lemma}
For any orthogonal matrices \( X, Y \in \mathbb{R}^{n \times n} \),
\[
S_{\text{LR}}(X, Y) = \frac{\left\|Q_Y^{\mathrm{T}} X\right\|_{\mathrm{F}}^2}{\|X\|_{\mathrm{F}}^2}
\]
is a constant that is independent of \( n \).
\end{lemma}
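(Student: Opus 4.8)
The plan is to exploit the fact that for a \emph{square} orthogonal matrix the columns already form an orthonormal basis of all of $\R^n$, so one may take $Q_X = X$ and $Q_Y = Y$ directly. First I would dispose of a well-definedness point: any other orthonormal basis for the column space of $Y$ has the form $Q_Y' = Y R$ for some orthogonal $R \in \R^{n\times n}$, and since the Frobenius norm is invariant under left multiplication by an orthogonal matrix, $\|Q_Y'^{\mathrm T} X\|_{\mathrm F} = \|R^{\mathrm T} Y^{\mathrm T} X\|_{\mathrm F} = \|Y^{\mathrm T} X\|_{\mathrm F}$. Hence $S_{\text{LR}}(X,Y)$ is independent of the choice of basis, and I may compute with $Q_Y = Y$.

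Next I would evaluate the two Frobenius norms via the trace identity $\|M\|_{\mathrm F}^2 = \Tr(M^{\mathrm T} M)$. For the numerator, $Q_Y^{\mathrm T} X = Y^{\mathrm T} X$ is a product of two orthogonal matrices, hence itself orthogonal, so $\|Y^{\mathrm T} X\|_{\mathrm F}^2 = \Tr\big((Y^{\mathrm T} X)^{\mathrm T}(Y^{\mathrm T} X)\big) = \Tr(I) = n$. For the denominator, $\|X\|_{\mathrm F}^2 = \Tr(X^{\mathrm T} X) = \Tr(I) = n$. Dividing gives $S_{\text{LR}}(X,Y) = n/n = 1$ for every $n$ and every pair of orthogonal $X,Y$, which is exactly the claim that the index is constant on orthogonal matrices (with constant $C=1$) and in particular independent of $n$.

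There is no genuinely hard step here; the two points that deserve care are (i) the invariance of $\|\cdot\|_{\mathrm F}$ under orthogonal transformations, which follows from $\|RM\|_{\mathrm F}^2 = \Tr(M^{\mathrm T} R^{\mathrm T} R M) = \Tr(M^{\mathrm T} M)$ when $R^{\mathrm T} R = I$, and (ii) the essential use of squareness: for rectangular column-orthonormal matrices the product $Q_Y^{\mathrm T} X$ need not be orthogonal and the ratio can genuinely vary, which is consistent with Theorem~\ref{thm:main} showing that DOCS — unlike $S_{\text{LR}}$ — remains discriminative in the rectangular column-orthogonal regime.
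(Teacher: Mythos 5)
Your proof is correct and follows essentially the same route as the paper's: take $Q_X = X$, $Q_Y = Y$, note $Y^{\mathrm T}X$ is orthogonal so $\|Y^{\mathrm T}X\|_{\mathrm F}^2 = n = \|X\|_{\mathrm F}^2$, giving the constant value $1$. Your additional remark on independence of the choice of orthonormal basis $Q_Y$ is a nice touch the paper omits, but it does not change the argument.
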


\begin{proof}
Since \( X \) and \( Y \) are orthogonal matrices, their columns form orthonormal bases. Simply let \( Q_X = X \) and \( Q_Y = Y \).

We begin by evaluating the Frobenius norm squared of \( Q_Y^{\mathrm{T}} X \):
\[
\left\| Q_Y^{\mathrm{T}} X \right\|_{\mathrm{F}}^2 = \left\| Y^{\mathrm{T}} X \right\|_{\mathrm{F}}^2 = \mathrm{trace}\left( (Y^{\mathrm{T}} X)^{\mathrm{T}} (Y^{\mathrm{T}} X) \right) = \mathrm{trace}\left( X^{\mathrm{T}} Y Y^{\mathrm{T}} X \right).
\]
Since \( Y \) is orthogonal, \( Y Y^{\mathrm{T}} = I \). Thus, the expression simplifies to:
\[
\mathrm{trace}\left( X^{\mathrm{T}} X \right) = \mathrm{trace}\left( I \right) = n.
\]
Next, we compute the Frobenius norm squared of \( X \):
\[
\left\| X \right\|_{\mathrm{F}}^2 = \mathrm{trace}\left( X^{\mathrm{T}} X \right) = \mathrm{trace}\left( I \right) = n.
\]
Therefore, the ratio is:
\[
\frac{\left\| Q_Y^{\mathrm{T}} X \right\|_{\mathrm{F}}^2}{\left\| X \right\|_{\mathrm{F}}^2} = \frac{n}{n} = 1.
\]
\end{proof}

\subsection{Canonical Correlation Analysis (CCA) with $R_{\mathrm{CCA}}^2$}
CCA $\left(R_{\mathrm{CCA}}^2\right)$ is defined as:
\[
S_{\text{CCA} \left(R_{\mathrm{CCA}}^2\right)}(X, Y) = \frac{\left\|Q_Y^{\mathrm{T}} Q_X\right\|_{\mathrm{F}}^2}{n},
\]
where \( Q_X \) and \( Q_Y \) represent orthonormal bases corresponding to the columns of \( X \) and \( Y \), respectively.
\begin{lemma}
For any orthogonal matrices \( X, Y \in \mathbb{R}^{n \times n} \), the quantity 
\[
S_{\text{CCA} \left(R_{\mathrm{CCA}}^2\right)}(X, Y) = \frac{\left\|Q_Y^{\mathrm{T}} Q_X\right\|_{\mathrm{F}}^2}{n}
\]
is a constant that is independent of \( n \).
\end{lemma}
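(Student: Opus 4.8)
The plan is to mirror the argument already used for Linear Regression in the previous lemma, since the two quantities differ only cosmetically. First I would note that because \( X \) and \( Y \) are orthogonal matrices, their columns already form orthonormal bases for \( \mathbb{R}^n \), so we may simply take \( Q_X = X \) and \( Q_Y = Y \). (One should observe in passing that \( \|Q_Y^{\mathrm{T}} Q_X\|_{\mathrm{F}} \) does not depend on which orthonormal bases of the column spaces are chosen, so this substitution is legitimate; here it is in any case immediate since the column spaces are all of \( \mathbb{R}^n \).)

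Next I would expand the Frobenius norm via the trace:
\[
\left\| Q_Y^{\mathrm{T}} Q_X \right\|_{\mathrm{F}}^2 = \left\| Y^{\mathrm{T}} X \right\|_{\mathrm{F}}^2 = \mathrm{trace}\!\left( X^{\mathrm{T}} Y Y^{\mathrm{T}} X \right).
\]
Then, using \( Y Y^{\mathrm{T}} = I \) and subsequently \( X^{\mathrm{T}} X = I \), this collapses to \( \mathrm{trace}(I) = n \). Dividing by \( n \) gives
\[
S_{\text{CCA}\left(R_{\mathrm{CCA}}^2\right)}(X, Y) = \frac{n}{n} = 1,
\]
which is a constant independent of \( n \) (indeed equal to \( 1 \) for every pair of orthogonal matrices in every dimension), establishing both the "constant" and the dimension-independence claims.

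There is essentially no hard step here; the computation is routine linear algebra. The only point worth stating carefully is the reduction \( Q_X = X, Q_Y = Y \), i.e.\ justifying that the defining expression is well-posed and takes this simple form when the inputs are themselves orthogonal — everything after that is two applications of orthogonality inside a trace.
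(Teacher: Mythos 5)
Your proposal is correct and essentially matches the paper's argument: both set \( Q_X = X \), \( Q_Y = Y \) and reduce \( \|Y^{\mathrm{T}} X\|_{\mathrm{F}}^2 \) to \( \mathrm{trace}(I_n) = n \), the only cosmetic difference being that the paper phrases this via "\( Y^{\mathrm{T}} X \) is orthogonal" while you expand the trace directly using \( YY^{\mathrm{T}} = I \) and \( X^{\mathrm{T}}X = I \). No gaps.
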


\begin{proof}
Since \( X \) and \( Y \) are orthogonal matrices, their columns form orthonormal bases. Therefore, \( Q_X = X \) and \( Q_Y = Y \).

Consider the matrix \( Q_Y^{\mathrm{T}} Q_X = Y^{\mathrm{T}} X \). Since both \( Y \) and \( X \) are orthogonal, their product \( Y^{\mathrm{T}} X \) is also an orthogonal matrix. 

The Frobenius norm of an orthogonal matrix \( Y^{\mathrm{T}} X \) is given by
\[
\left\| Y^{\mathrm{T}} X \right\|_{\mathrm{F}}^2 = \text{trace}\left( (Y^{\mathrm{T}} X)^{\mathrm{T}} (Y^{\mathrm{T}} X) \right) = \text{trace}(I_n) = n,
\]
where \( I_n \) is the \( n \times n \) identity matrix.

Therefore, 
\[
\frac{\left\|Q_Y^{\mathrm{T}} Q_X\right\|_{\mathrm{F}}^2}{n} = \frac{n}{n} = 1.
\]
\end{proof}

\subsection{Canonical Correlation Analysis (CCA) with $\bar{\rho}_{\mathrm{CCA}}$}
CCA $\left(\bar{\rho}_{\mathrm{CCA}}\right)$ is defined as:
\[
S_{CCA \left(\bar{\rho}_{\mathrm{CCA}}\right)}(X,Y)=\frac{\left\|Q_Y^{\mathrm{T}} Q_X\right\|_* }{n},
\]
where \( Q_X \) and \( Q_Y \) represent orthonormal bases corresponding to the columns of \( X \) and \( Y \), respectively.

\begin{lemma}
For any orthogonal matrices \(X, Y \in \mathbb{R}^{n \times n}\),
\[
S_{CCA \left(\bar{\rho}_{\mathrm{CCA}}\right)}(X,Y)=\frac{\left\|Q_Y^{\mathrm{T}} Q_X\right\|_* }{n}
\]
is a constant that is independent of \(n\).
\end{lemma}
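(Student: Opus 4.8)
The plan is to mirror the two preceding CCA lemmas, replacing the Frobenius norm argument with the corresponding statement for the nuclear norm. First I would observe that since $X$ and $Y$ are orthogonal matrices, their columns already form orthonormal bases for their respective column spaces, so without loss of generality we may take $Q_X = X$ and $Q_Y = Y$. Then $Q_Y^{\mathrm{T}} Q_X = Y^{\mathrm{T}} X$, and the product of two orthogonal matrices is again orthogonal, so $M := Y^{\mathrm{T}} X$ satisfies $M^{\mathrm{T}} M = I_n$.

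The key step is to evaluate the nuclear norm $\|M\|_*$, which is the sum of the singular values of $M$. Since $M^{\mathrm{T}} M = I_n$, every eigenvalue of $M^{\mathrm{T}} M$ equals $1$, so each of the $n$ singular values of $M$ equals $1$. Hence
\[
\left\| Q_Y^{\mathrm{T}} Q_X \right\|_* = \left\| Y^{\mathrm{T}} X \right\|_* = \sum_{i=1}^{n} 1 = n.
\]
Dividing by $n$ gives
\[
S_{CCA \left(\bar{\rho}_{\mathrm{CCA}}\right)}(X,Y) = \frac{\left\| Q_Y^{\mathrm{T}} Q_X \right\|_*}{n} = \frac{n}{n} = 1,
\]
which is the claimed constant, independent of $n$ (and indeed independent of the particular orthogonal matrices $X$ and $Y$).

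I do not anticipate a real obstacle here: the argument is essentially a one-line consequence of the fact that orthogonal matrices have all singular values equal to $1$. The only point requiring a sentence of care is justifying the reduction $Q_X = X$, $Q_Y = Y$ — namely, that the similarity index is insensitive to the particular choice of orthonormal basis for each column space, so the canonical choice using the orthogonal matrices themselves is legitimate — and recalling the definition of the nuclear norm as the sum of singular values so that the identity $\|M\|_* = n$ is immediate.
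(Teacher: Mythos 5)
Your proposal is correct and follows essentially the same route as the paper's own proof: take $Q_X = X$, $Q_Y = Y$, note that $Y^{\mathrm{T}}X$ is orthogonal so all its singular values are $1$, hence $\|Q_Y^{\mathrm{T}}Q_X\|_* = n$ and the ratio is $1$. No gaps.
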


\begin{proof}
Since \(X\) and \(Y\) are orthogonal matrices, their columns form orthonormal bases. Therefore, \(Q_X\) and \(Q_Y\) are also orthogonal matrices.

The product \(Q_Y^{\mathrm{T}} Q_X\) is an orthogonal matrix because the product of orthogonal matrices is orthogonal. For any orthogonal matrix \(A\), the nuclear norm \(\|A\|_*\) is equal to the sum of its singular values. Since all singular values of an orthogonal matrix are equal to 1, we have:
\[
\| Q_Y^{\mathrm{T}} Q_X \|_* = \sum_{i=1}^n \sigma_i(Q_Y^{\mathrm{T}} Q_X) = \sum_{i=1}^n 1 = n
\]
Therefore,
\[
\frac{\| Q_Y^{\mathrm{T}} Q_X \|_*}{n} = \frac{n}{n} = 1.
\]
\end{proof}

\subsection{Singular Vector CCA (SVCCA) with $R_{\mathrm{SVCCA}}^2$}

SVCCA $\left(R_{\mathrm{SVCCA}}^2\right)$ is defined as:
\[
S_{R_{\mathrm{SVCCA}}^2}=\frac{\left\|\left(U_Y T_Y\right)^{\mathrm{T}} U_X T_X\right\|_{\mathrm{F}}^2}{\min \left(\left\|T_X\right\|_{\mathrm{F}}^2,\left\|T_Y\right\|_{\mathrm{F}}^2\right)},
\]
where \( U_X \) and \( U_Y \) represent the left singular vectors of \( X \) and \( Y \), respectively, arranged in descending order based on their associated singular values. Meanwhile, \( T_X \) and \( T_Y \) denote truncated identity matrices that retain the left singular vectors, ensuring that the accumulated variance meets a predefined limit. Here, we consider \( T_X = T_Y = I \).

\begin{lemma}
Assume that \( T_X = T_Y = I \). For any orthogonal matrices \( X, Y \in \mathbb{R}^{n \times n} \),
\[
S_{R_{\mathrm{SVCCA}}^2}=\frac{\left\|\left(U_Y T_Y\right)^{\mathrm{T}} U_X T_X\right\|_{\mathrm{F}}^2}{\min \left(\left\|T_X\right\|_{\mathrm{F}}^2,\left\|T_Y\right\|_{\mathrm{F}}^2\right)}
\]
is a constant independent of \( n \).
\end{lemma}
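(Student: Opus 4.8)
The plan is to mirror the structure of the preceding proofs for Linear Regression and the two CCA variants, exploiting the single fact that every orthogonal matrix has all of its singular values equal to $1$. First I would write the singular value decompositions $X = U_X \Sigma_X V_X^\top$ and $Y = U_Y \Sigma_Y V_Y^\top$; since $X^\top X = I$ we get $\Sigma_X = I_n$, and likewise $\Sigma_Y = I_n$. Consequently the matrices of left singular vectors $U_X = X V_X$ and $U_Y = Y V_Y$ are themselves $n \times n$ orthogonal matrices, being products of orthogonal matrices.

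Next, under the stated assumption $T_X = T_Y = I_n$, the numerator collapses to $\left\| (U_Y T_Y)^\top U_X T_X \right\|_{\mathrm{F}}^2 = \left\| U_Y^\top U_X \right\|_{\mathrm{F}}^2$. Since $U_Y^\top U_X$ is again a product of orthogonal matrices, it is orthogonal, so $\left\| U_Y^\top U_X \right\|_{\mathrm{F}}^2 = \mathrm{trace}\!\left( U_X^\top U_Y U_Y^\top U_X \right) = \mathrm{trace}(I_n) = n$. The denominator is $\min\!\left( \|T_X\|_{\mathrm{F}}^2, \|T_Y\|_{\mathrm{F}}^2 \right) = \min(n, n) = n$. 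Dividing gives $S_{R_{\mathrm{SVCCA}}^2} = n/n = 1$, which is independent of $n$, as claimed.

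There is essentially no hard step here; the only point requiring a word of care is that the SVD is not unique precisely when singular values repeat, which is exactly the situation for an orthogonal matrix, so $U_X$ and $U_Y$ are determined only up to right multiplication by an orthogonal matrix (equivalently, by the freedom in choosing $V_X, V_Y$). This ambiguity is harmless: for any admissible choice, $U_Y^\top U_X$ remains orthogonal, hence its squared Frobenius norm is still $n$, and the value of $S_{R_{\mathrm{SVCCA}}^2}$ is unaffected. I would state this remark explicitly so the reader sees that the conclusion does not depend on the particular SVD used.
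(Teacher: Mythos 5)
Your proof is correct and follows essentially the same route as the paper's: reduce to $\|U_Y^{\top}U_X\|_{\mathrm{F}}^2 / \|I\|_{\mathrm{F}}^2$, use orthogonality of $U_Y^{\top}U_X$ to get $n/n = 1$. Your explicit justification that $U_X, U_Y$ are orthogonal via the SVD (with all singular values equal to $1$), and your remark that the non-uniqueness of the SVD is harmless, are welcome refinements but do not change the argument.
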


\begin{proof}
Given that \( T_X = T_Y = I \), the expression simplifies to
\[
\frac{\left\|\left(U_Y I\right)^{\mathrm{T}} U_X I\right\|_{\mathrm{F}}^2}{\min \left(\left\|I\right\|_{\mathrm{F}}^2,\left\|I\right\|_{\mathrm{F}}^2\right)} = \frac{\left\|U_Y^{\mathrm{T}} U_X\right\|_{\mathrm{F}}^2}{\left\|I\right\|_{\mathrm{F}}^2}.
\]
Since \( U_X \) and \( U_Y \) are orthogonal matrices, their product \( U_Y^{\mathrm{T}} U_X \) is also an orthogonal matrix, denoted by \( Q \). The Frobenius norm of an orthogonal matrix satisfies
\[
\left\|Q\right\|_{\mathrm{F}}^2 = \sum_{i=1}^{n} \sum_{j=1}^{n} Q_{ij}^2 = \text{trace}(Q^{\mathrm{T}} Q) = \text{trace}(I) = n.
\]
Additionally, the Frobenius norm of the identity matrix \( I \) is
\[
\left\|I\right\|_{\mathrm{F}}^2 = \sum_{i=1}^{n} \sum_{j=1}^{n} I_{ij}^2 = \text{trace}(I) = n.
\]
Substituting these results back into the original expression, we obtain
\[
\frac{\left\|U_Y^{\mathrm{T}} U_X\right\|_{\mathrm{F}}^2}{\left\|I\right\|_{\mathrm{F}}^2} = \frac{n}{n} = 1.
\]
\end{proof}

\subsection{Singular Vector CCA (SVCCA) with $\bar{\rho}_{\text {SVCCA }}$}

SVCCA $\left(\bar{\rho}_{\text {SVCCA }}\right)$ is defined as:
\[
S_{\bar{\rho}_{\text {SVCCA }}}=\frac{\left\| \left( U_Y T_Y \right)^{\mathrm{T}} U_X T_X \right\|_*}{\min\left( \left\| T_X \right\|_{\mathrm{F}}^2, \left\| T_Y \right\|_{\mathrm{F}}^2 \right)},
\]
where \( U_X \) and \( U_Y \) represent the left singular vectors of \( X \) and \( Y \), respectively, arranged in descending order based on their associated singular values. Meanwhile, \( T_X \) and \( T_Y \) denote truncated identity matrices that retain the left singular vectors, ensuring that the accumulated variance meets a predefined limit. Here, we consider \( T_X = T_Y = I \).
 
\begin{lemma}
Assume \( T_X = T_Y = I \). For any orthogonal matrices \( X, Y \in \mathbb{R}^{n \times n} \),
\[
S_{\bar{\rho}_{\text {SVCCA }}}=\frac{\left\| \left( U_Y T_Y \right)^{\mathrm{T}} U_X T_X \right\|_*}{\min\left( \left\| T_X \right\|_{\mathrm{F}}^2, \left\| T_Y \right\|_{\mathrm{F}}^2 \right)}
\]
is a constant independent of \( n \).
\end{lemma}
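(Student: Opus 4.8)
The plan is to follow exactly the template of the preceding $R^2_{\mathrm{SVCCA}}$ lemma, replacing the squared Frobenius norm in the numerator by the nuclear norm. First I would use the hypothesis $T_X = T_Y = I$ to reduce the defining quantity to
\[
S_{\bar{\rho}_{\mathrm{SVCCA}}} = \frac{\left\| U_Y^{\mathrm{T}} U_X \right\|_*}{\min\left( \left\| I \right\|_{\mathrm{F}}^2, \left\| I \right\|_{\mathrm{F}}^2 \right)} = \frac{\left\| U_Y^{\mathrm{T}} U_X \right\|_*}{n},
\]
where the last equality uses $\left\| I \right\|_{\mathrm{F}}^2 = \mathrm{trace}(I) = n$ for the $n \times n$ identity matrix.

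Next I would observe that since $X$ and $Y$ are orthogonal, their singular value decompositions have all singular values equal to $1$; consequently their matrices of left singular vectors $U_X$ and $U_Y$ are themselves $n \times n$ orthogonal matrices. Therefore $Q := U_Y^{\mathrm{T}} U_X$ is orthogonal, so $Q^{\mathrm{T}} Q = I$ and each of its $n$ singular values equals $1$. Computing the nuclear norm directly then gives $\left\| Q \right\|_* = \sum_{i=1}^{n} \sigma_i(Q) = n$, so that $S_{\bar{\rho}_{\mathrm{SVCCA}}} = n/n = 1$, a constant independent of $n$.

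The argument presents no genuine obstacle; the only subtlety worth flagging is the observation that the SVD of an orthogonal matrix has a unit singular spectrum, which is precisely what guarantees that $U_X$ and $U_Y$ are full $n \times n$ orthogonal matrices (rather than tall matrices with merely orthonormal columns). This makes $U_Y^{\mathrm{T}} U_X$ square and orthogonal, pinning its nuclear norm to exactly $n$ and making the ratio collapse to $1$.
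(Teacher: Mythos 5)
Your proof is correct and follows essentially the same route as the paper's: set $T_X = T_Y = I$ to reduce the quantity to $\left\| U_Y^{\mathrm{T}} U_X \right\|_* / n$, note that the orthogonality of $X$ and $Y$ forces a unit singular spectrum so $U_X$ and $U_Y$ are orthogonal, and conclude that the nuclear norm of the orthogonal product equals $n$, giving the constant value $1$. No gaps.
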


\begin{proof}
Since \( X \) and \( Y \) are orthogonal matrices in \( \mathbb{R}^{n \times n} \), their singular value decompositions (SVDs) are given by
\[
X = U_X \Sigma_X V_X^\mathrm{T}, \quad Y = U_Y \Sigma_Y V_Y^\mathrm{T},
\]
where \( \Sigma_X \) and \( \Sigma_Y \) are diagonal matrices of singular values. Because \( X \) and \( Y \) are orthogonal, their singular values are all equal to 1; thus, \( \Sigma_X = \Sigma_Y = I \). Therefore, the SVDs simplify to
\[
X = U_X V_X^\mathrm{T}, \quad Y = U_Y V_Y^\mathrm{T}.
\]
Given that \( T_X = T_Y = I \), the expression reduces to
\[
\frac{\left\| U_Y^\mathrm{T} U_X \right\|_*}{\min\left( \left\| I \right\|_{\mathrm{F}}^2, \left\| I \right\|_{\mathrm{F}}^2 \right)} = \frac{\left\| U_Y^\mathrm{T} U_X \right\|_*}{n}.
\]
Since \( U_X \) and \( U_Y \) are orthonormal matrices, \( U_Y^\mathrm{T} U_X \) is also an orthogonal matrix. The singular values of \( U_Y^\mathrm{T} U_X \) are thus all equal to 1, and the nuclear norm is
\[
\left\| U_Y^\mathrm{T} U_X \right\|_* = \sum_{i=1}^n \sigma_i = n.
\]
Substituting back, we obtain
\[
\frac{\left\| U_Y^\mathrm{T} U_X \right\|_*}{n} = \frac{n}{n} = 1.
\]
Therefore, the quantity is a constant equal to 1, independent of \( n \).
\end{proof}

\subsection{Linear Hilbert-Schmidt Independence Criterion (HSIC)}
Linear HSIC is defined as:

\[
S_{\text{Linear HSIC}}(X, Y) = \frac{\left\|Y^{\mathrm{T}} X\right\|_{\mathrm{F}}^2}{(n-1)^2}.
\]
\begin{lemma}
For any orthogonal matrices \(X, Y \in \mathbb{R}^{n \times n}\), 
\[
S_{\text{Linear HSIC}}(X, Y) = \frac{\left\|Y^{\mathrm{T}} X\right\|_{\mathrm{F}}^2}{(n-1)^2}
\]
is a constant that depends solely on \(n\).
\end{lemma}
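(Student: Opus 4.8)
The plan is to mirror the structure of the preceding lemmas in this appendix: reduce the Frobenius norm in the numerator to a trace, exploit orthogonality to collapse that trace to $n$, and conclude that the whole expression equals $n/(n-1)^2$, which manifestly depends only on $n$.

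First I would observe that since $X$ and $Y$ are orthogonal, the product $Y^{\mathrm{T}} X$ is again orthogonal, as the set of orthogonal matrices is closed under multiplication (and transposition). Next I would expand the squared Frobenius norm via the trace identity $\|A\|_{\mathrm{F}}^2 = \mathrm{trace}(A^{\mathrm{T}} A)$ with $A = Y^{\mathrm{T}} X$, giving
\[
\left\|Y^{\mathrm{T}} X\right\|_{\mathrm{F}}^2 = \mathrm{trace}\!\left( X^{\mathrm{T}} Y Y^{\mathrm{T}} X \right).
\]
Using $Y Y^{\mathrm{T}} = I_n$ (orthogonality of $Y$) this simplifies to $\mathrm{trace}(X^{\mathrm{T}} X) = \mathrm{trace}(I_n) = n$. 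Substituting back yields
\[
S_{\text{Linear HSIC}}(X, Y) = \frac{n}{(n-1)^2},
\]
which is independent of the particular choice of $X$ and $Y$ and hence depends solely on $n$; this also places Linear HSIC in the ``dimension-dependent on orthogonal matrices'' category of Table~\ref{tab:mathematical_properties}.

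There is essentially no obstacle here: the only thing to be careful about is that the conclusion is the weaker ``dimension-dependent'' statement rather than ``constant'' (unlike Linear Regression, CCA, and SVCCA, the denominator $(n-1)^2$ does not cancel the numerator $n$), so the write-up should state that $S_{\text{Linear HSIC}}$ varies with $n$ but is fixed once $n$ is fixed, matching Definition~2 rather than Definition~1. One could optionally note for completeness that $n/(n-1)^2$ is strictly decreasing for $n \geq 2$, underscoring that the value genuinely changes across dimensions, but this is not needed for the lemma as stated.
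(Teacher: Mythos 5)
Your proposal is correct and follows essentially the same route as the paper's proof: note that \(Y^{\mathrm{T}} X\) is orthogonal, expand the squared Frobenius norm as \(\mathrm{trace}(X^{\mathrm{T}} Y Y^{\mathrm{T}} X) = \mathrm{trace}(X^{\mathrm{T}} X) = n\), and conclude the value is \(n/(n-1)^2\), fixed once \(n\) is fixed. Your added remark that this matches the dimension-dependent (rather than constant) classification is consistent with Table~\ref{tab:mathematical_properties}.
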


\begin{proof}
Let \(X\) and \(Y\) be any orthogonal matrices in \(\mathbb{R}^{n \times n}\). Since both \(X\) and \(Y\) are orthogonal, their transpose inverses satisfy \(X^{\mathrm{T}} X = Y^{\mathrm{T}} Y = I_n\), where \(I_n\) is the \(n \times n\) identity matrix.

Consider the product \(Y^{\mathrm{T}} X\). Since the product of two orthogonal matrices is also orthogonal, \(Y^{\mathrm{T}} X\) is orthogonal. The Frobenius norm of an orthogonal matrix \(A\) satisfies
\[
\|A\|_{\mathrm{F}}^2 = \mathrm{trace}(A^{\mathrm{T}} A).
\]
Applying this to \(Y^{\mathrm{T}} X\), we have
\[
\left\|Y^{\mathrm{T}} X\right\|_{\mathrm{F}}^2 = \mathrm{trace}\left( (Y^{\mathrm{T}} X)^{\mathrm{T}} Y^{\mathrm{T}} X \right) = \mathrm{trace}\left( X^{\mathrm{T}} Y Y^{\mathrm{T}} X \right).
\]
Since \(Y Y^{\mathrm{T}} = I_n\), this simplifies to
\[
\mathrm{trace}\left( X^{\mathrm{T}} X \right) = \mathrm{trace}(I_n) = n.
\]
Therefore,
\[
\frac{\left\|Y^{\mathrm{T}} X\right\|_{\mathrm{F}}^2}{(n-1)^2} = \frac{n}{(n-1)^2},
\]
which depends only on \(n\). This concludes the proof.
\end{proof}

\subsection{Linear Centered Kernel Alignment (CKA)}

Linear CKA is defined as:
\[
S_{\text{CKA}}(X, Y) = \frac{\| X^\top Y \|_F^2}{\| X^\top X \|_F \| Y^\top Y \|_F}.
\]

\begin{lemma}
For any orthogonal matrices \(X, Y \in \mathbb{R}^{n \times n}\), the ratio
\[
\frac{\left\|Y^{\mathrm{T}} X\right\|_{\mathrm{F}}^2}{\left\|X^{\mathrm{T}} X\right\|_{\mathrm{F}} \left\|Y^{\mathrm{T}} Y\right\|_{\mathrm{F}}}
\]
is a constant independent of \(n\).
\end{lemma}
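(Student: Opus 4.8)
The plan is to reuse the template that handled linear regression and the two CCA variants: orthogonality forces each of the three Frobenius norms in the expression to be a fixed power of $n$, and those powers cancel. First I would evaluate the denominator. Since $X$ is orthogonal, $X^{\mathrm{T}} X = I_n$, so $\left\|X^{\mathrm{T}} X\right\|_{\mathrm{F}} = \left\|I_n\right\|_{\mathrm{F}} = \sqrt{\mathrm{trace}(I_n)} = \sqrt{n}$, and the identical argument gives $\left\|Y^{\mathrm{T}} Y\right\|_{\mathrm{F}} = \sqrt{n}$. Hence the denominator equals $\sqrt{n}\cdot\sqrt{n} = n$.

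Next I would evaluate the numerator. The product $Y^{\mathrm{T}} X$ of two orthogonal matrices is again orthogonal, so setting $Q = Y^{\mathrm{T}} X$ we have
\[
\left\|Y^{\mathrm{T}} X\right\|_{\mathrm{F}}^2 = \mathrm{trace}\!\left( Q^{\mathrm{T}} Q \right) = \mathrm{trace}\!\left( X^{\mathrm{T}} Y Y^{\mathrm{T}} X \right) = \mathrm{trace}\!\left( X^{\mathrm{T}} X \right) = \mathrm{trace}(I_n) = n,
\]
using $Y Y^{\mathrm{T}} = I_n$ in the middle step and $X^{\mathrm{T}} X = I_n$ at the end. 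Combining with the denominator computation, the ratio equals $n/n = 1$, a constant that does not depend on $n$.

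There is essentially no obstacle: the argument is a direct specialization of the CCA $\left(R_{\mathrm{CCA}}^2\right)$ proof with the normalization adjusted. The only point worth flagging is that linear CKA's denominator uses the \emph{unsquared} norms $\left\|X^{\mathrm{T}} X\right\|_{\mathrm{F}}$ and $\left\|Y^{\mathrm{T}} Y\right\|_{\mathrm{F}}$, so each contributes a factor $\sqrt{n}$ rather than $n$; it is precisely these square roots that make the constant exactly $1$, in contrast to linear HSIC, whose $(n-1)^2$ normalization leaves a genuine dependence on $n$.
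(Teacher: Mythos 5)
Your proof is correct and matches the paper's argument: both use $X^{\mathrm{T}}X = Y^{\mathrm{T}}Y = I_n$ to get $\sqrt{n}$ for each denominator factor, note that $Y^{\mathrm{T}}X$ is orthogonal so its squared Frobenius norm is $n$, and conclude the ratio is identically $1$. Your closing remark contrasting the unsquared normalization here with the $(n-1)^2$ in linear HSIC is a fair observation but not needed for the result.
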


\begin{proof}
Since \(X\) and \(Y\) are orthogonal matrices, we have:
\[
X^{\mathrm{T}} X = I_n \quad \text{and} \quad Y^{\mathrm{T}} Y = I_n,
\]
where \(I_n\) is the \(n \times n\) identity matrix.

The Frobenius norm of the identity matrix is:
\[
\left\|I_n\right\|_{\mathrm{F}} = \sqrt{\sum_{i=1}^n \sum_{j=1}^n \delta_{ij}^2} = \sqrt{n},
\]
where \(\delta_{ij}\) is the Kronecker delta.

Thus, we have:
\[
\left\|X^{\mathrm{T}} X\right\|_{\mathrm{F}} = \left\|I_n\right\|_{\mathrm{F}} = \sqrt{n}, \quad \left\|Y^{\mathrm{T}} Y\right\|_{\mathrm{F}} = \left\|I_n\right\|_{\mathrm{F}} = \sqrt{n}.
\]

Next, consider the matrix \(Y^{\mathrm{T}} X\). Since both \(X\) and \(Y\) are orthogonal, \(Y^{\mathrm{T}} X\) is also orthogonal:
\[
(Y^{\mathrm{T}} X)^{\mathrm{T}} (Y^{\mathrm{T}} X) = X^{\mathrm{T}} Y Y^{\mathrm{T}} X = X^{\mathrm{T}} X = I_n.
\]
Therefore, the Frobenius norm of \(Y^{\mathrm{T}} X\) is:
\[
\left\|Y^{\mathrm{T}} X\right\|_{\mathrm{F}} = \left\|I_n\right\|_{\mathrm{F}} = \sqrt{n}.
\]

Substituting these results into the original ratio, we obtain:
\[
\frac{\left\|Y^{\mathrm{T}} X\right\|_{\mathrm{F}}^2}{\left\|X^{\mathrm{T}} X\right\|_{\mathrm{F}} \left\|Y^{\mathrm{T}} Y\right\|_{\mathrm{F}}} = \frac{(\sqrt{n})^2}{\sqrt{n} \cdot \sqrt{n}} = \frac{n}{n} = 1.
\]
Therefore, the ratio is equal to 1, which is a constant independent of \(n\).

\end{proof}

\section{Proof of Theorem \ref{thm:main}}
\label{sec:proof-of-theorem}

We construct specific matrices \( X \) and \( Y \) to verify the stated properties.

\textbf{Construction of \( X \) and \( Y \):}

Let \( n \geq 2 \). Let \( m = 2^{\lfloor \log_2 n \rfloor} \), so $m = \Omega(n)$. Define:
\begin{itemize}
    \item \( X \in \mathbb{R}^{n \times m} \) as \( X = [e_1, e_2, \dots, e_m] \), where \( e_i \) is the \( i \)-th standard basis vector in \( \mathbb{R}^n \).
    \item \( Y \in \mathbb{R}^{n \times m} \) as:
    \[
    Y = \dfrac{1}{\sqrt{m}} \begin{pmatrix}
    H_m \\
    \mathbf{0}_{(n - m) \times m}
    \end{pmatrix},
    \]
    where \( H_m \) is the \( m \times m \) Hadamard matrix, and \( \mathbf{0} \) is a zero matrix.
\end{itemize}

\textbf{Orthogonality of \( X \) and \( Y \):}

\begin{itemize}
    \item The columns of \( X \) are orthonormal since they are standard basis vectors.
    \item For \( Y \):
    \[
    Y^\top Y = \left( \dfrac{1}{\sqrt{m}} H_m \right) ^\top \left( \dfrac{1}{\sqrt{m}} H_m \right) = \dfrac{1}{m} H_m^\top H_m = I_m,
    \]
    since \( H_m H_m^\top = m I_m \). Thus, the columns of \( Y \) are orthonormal.
\end{itemize}

\textbf{Computing \( \| X - Y \|_F \):}

Compute the Frobenius norm:
\[
\| X - Y \|_F^2 = \sum_{i=1}^n \sum_{j=1}^m \left( X_{ij} - Y_{ij} \right)^2.
\]

For \( 1 \leq i \leq m \):
\[
X_{ij} - Y_{ij} = \delta_{ij} - \dfrac{h_{ij}}{\sqrt{m}},
\]
where \( \delta_{ij} \) is the Kronecker delta and \( h_{ij} = \pm 1 \).

For \( m+1 \leq i \leq n \):
\[
X_{ij} - Y_{ij} = 0 - 0 = 0.
\]

\textbf{Diagonal terms} (\( i = j \)):
\[
\left( 1 - \dfrac{h_{ii}}{\sqrt{m}} \right)^2 = \left( 1 \mp \dfrac{1}{\sqrt{m}} \right)^2 = 1 \mp \dfrac{2}{\sqrt{m}} + \dfrac{1}{m}.
\]

\textbf{Off-diagonal terms} (\( i \neq j \), \( 1 \leq i, j \leq m \)):
\[
\left( 0 - \dfrac{h_{ij}}{\sqrt{m}} \right)^2 = \dfrac{1}{m}.
\]

\textbf{Summing all terms:}

Number of diagonal terms: \( m \).
Number of off-diagonal terms: \( m(m - 1) \).

\textbf{Diagonal sum}:

Since the \( \mp \dfrac{2}{\sqrt{m}} \) terms cancel out when summed over all \( i \):
\[
S_{\text{diag}} = m \left( 1 + \dfrac{1}{m} \right) = m + 1.
\]

\textbf{Off-diagonal sum}:
\[
S_{\text{off-diag}} = m(m - 1) \times \dfrac{1}{m} = m - 1.
\]

\textbf{Total Frobenius norm:}
\[
\| X - Y \|_F^2 = S_{\text{diag}} + S_{\text{off-diag}} = (m + 1) + (m - 1) = 2m.
\]
\[
\| X - Y \|_F = \sqrt{2m} = \Omega(\sqrt{n}).
\]

\textbf{Computing \( S_{\text{DOCS}}(X, Y) \):}

For columns \( X_j \) and \( Y_k \), the cosine similarity is:
\[
\cos(\theta_{jk}) = \dfrac{X_j^\top Y_k}{\| X_j \| \| Y_k \|} = \dfrac{Y_{jk}}{\| Y_k \|}.
\]
Since \( \| X_j \| = 1 \) and \( \| Y_k \| = 1 \), and \( Y_{jk} = \dfrac{h_{jk}}{\sqrt{m}} \), we have:
\[
\left| \cos(\theta_{jk}) \right| = \dfrac{1}{\sqrt{m}}.
\]

Thus, the maximum absolute cosine similarity for each column is \( \dfrac{1}{\sqrt{m}} \), so:
\[
S_{\text{DOCS}}(X, Y) = \dfrac{1}{\sqrt{m}}.
\]

This demonstrates that the DOCS similarity is inversely proportional to the square root of the number of columns, \( m \).

\section{Excluded Mathematical Properties}
\label{app:excluded_math_properties}

In this section, we discuss the additional mathematical properties that could be considered when evaluating similarity indices:

\begin{itemize}

    \item \textbf{Invertible Linear Transformation (ILT) Invariance \citep{raghu2017svcca}}: 
    \[
    S(X, Y) = S(XA, YB)
    \]
    where \( A \) and \( B \) are arbitrary invertible matrices. This property describes the ability of an index to remain invariant under invertible linear transformations. It is not applicable for evaluating weight similarity measures, as weight matrices are not expected to undergo arbitrary invertible transformations. Similarly, it does not apply to representation similarity measures \citep{kornblith2019similarity}.
    
    \item \textbf{Translation (TR) Invariance \citep{raghu2017svcca,klabunde2023towards}}: 
    \[
    S(X, Y) = S\left(X + \mathbf{1} \mathbf{c}^{\top}, Y + \mathbf{1} \mathbf{d}^{\top}\right)
    \]
    where \( \mathbf{c} \) and \( \mathbf{d} \) are arbitrary constant vectors. This property describes the ability of an index to be unaffected by additive shifts in the data. It is not applicable for evaluating weight similarity measures because translating the parameters would result in a fundamentally different set of parameters, which is inconsistent for evaluation purposes.
    
    \item \textbf{Affine Transformations (AT) Invariance}: 
    \[
    S(X, Y) = S\left(XA + \mathbf{1} c^{\top}, YB + \mathbf{1} d^{\top}\right)
    \]
    where \( A \) and \( B \) are arbitrary invertible matrices and \( \mathbf{c} \), \( \mathbf{d} \) are arbitrary constant vectors. This property is a combination of linear transformations and translations and is not applicable for evaluating weight similarity measures for the same reasons as ILT and TR Invariance.

\end{itemize}

\newpage

\section{Further Discussions on the Motivations of DOCS} \label{Motivation}

In this section, we provide additional justifications regarding (i) why we focus on \emph{weight similarity} rather than \emph{representational similarity}, (ii) the orthogonality of weight matrices in large language models, and (iii) the advantages of DOCS compared to other existing similarity indices.

\subsection{Weight Similarity vs. Representational Similarity}
\label{weight_rep}
As introduced in Section \ref{intro}, while prior research has explored many methods for characterizing the similarity of neural networks \citep{wu2020similarity,khosla2024soft,kriegeskorte2008representational,klabunde2023similarity,wang2020towards,barannikov2021representation,hamilton2016diachronic,rahamim2024contrasim,tang2020similarity,camastra2016intrinsic,wang2018towards,raghu2017svcca,morcos2018insights,kornblith2019similarity}, these methods often focus on \emph{representational similarity} rather than \emph{weight similarity}.

Similar representations across layers do not necessarily imply similar weight matrices. This discrepancy arises from the use of \emph{residual connections} in transformer architectures \citep{he2016deep}. This is evidenced by Figures \ref{fig:representation-before-ffn} and \ref{fig:representation-after-ffn}, which show that the input and output of the feedforward network have similar patterns of representational similarity

Here, we present another example where weight similarity reveals an intriguing finding that would otherwise be overlooked. We plot both the representational and weight similarity indices for the 01-ai/Yi-1.5-9B-Chat model. On one hand, we use the Linear CKA method to calculate the similarity between the outputs of \textsc{MLP-Up} layers. On the other hand, we employ the DOCS method to calculate the weight similarity of the \textsc{MLP-Up} layers. The experimental results are shown in Figure \ref{fig:Superiority of DOCS}.

The results underscore the strengths of DOCS in revealing the weight structure. Specifically, DOCS reveals an intricate pattern in Figure~\ref{fig:docs_similarity}, indicating that layer 9 is highly similar to layer 25 , layer 10 is highly similar to layer 26, layer 11 is highly similar to layer 27, and so on. This suggests a repetition of a section of layers within 01-ai/Yi-1.5-9B-Chat, possibly due to a specific training strategy employed to save training costs.

In contrast, Figures~\ref{fig:linear_cka_similarity1}, \ref{fig:linear_cka_similarity2}, and \ref{fig:linear_cka_similarity3} present the representational similarity computed by Linear CKA using different input sequences. Despite using multiple input sentences for the analysis, the results provide a limited understanding of the weight structure. They exhibit relatively homogeneous patterns, lacking the fine-grained layer correspondence seen with DOCS. Therefore, in this example, focusing on weight similarity deepens our understanding of the model.

\begin{figure}[H] \centering \begin{subfigure}[b]{0.24\textwidth} \includegraphics[width=\textwidth]{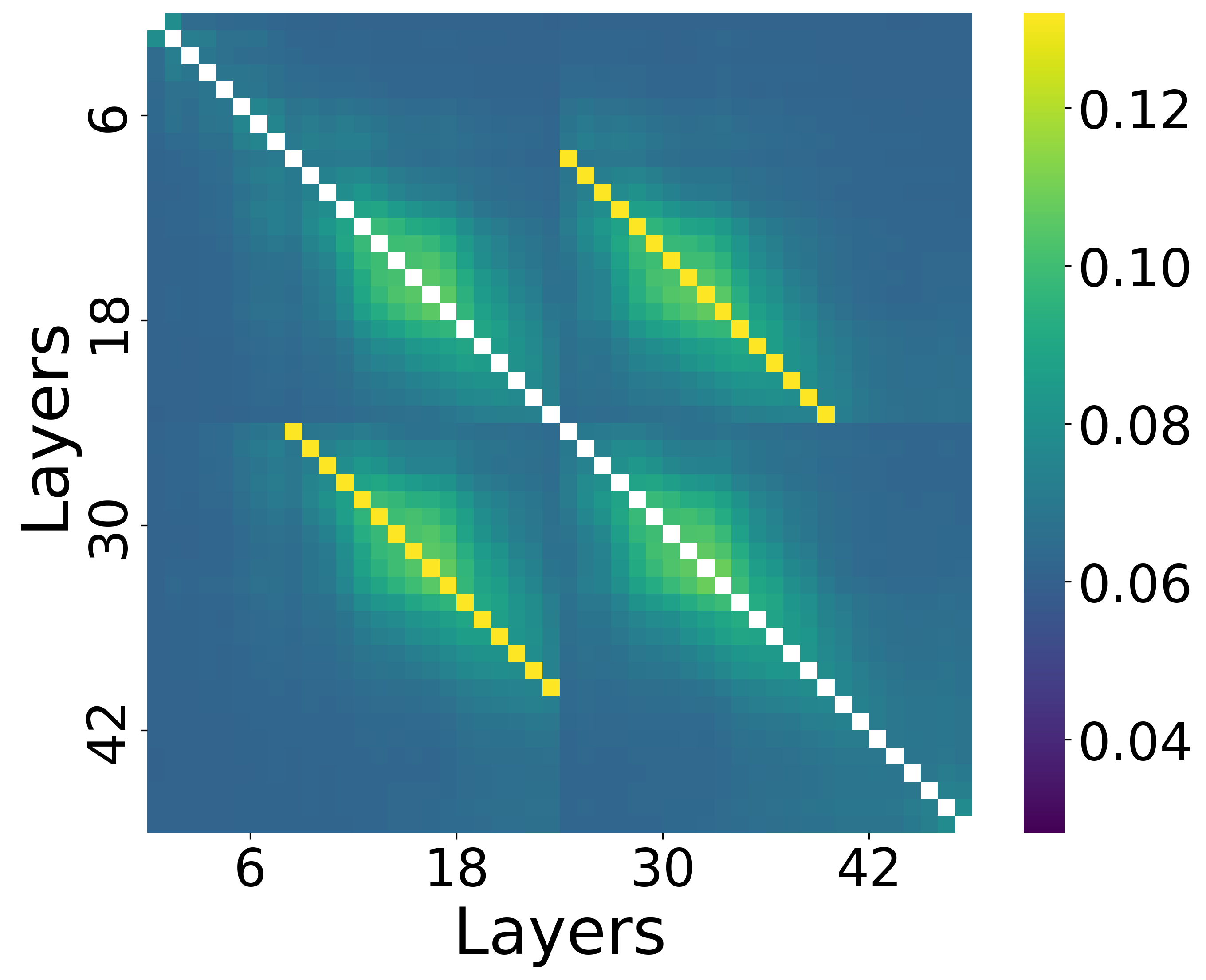} \caption{} \label{fig:docs_similarity} \end{subfigure} \hfill \begin{subfigure}[b]{0.24\textwidth} \includegraphics[width=\textwidth]{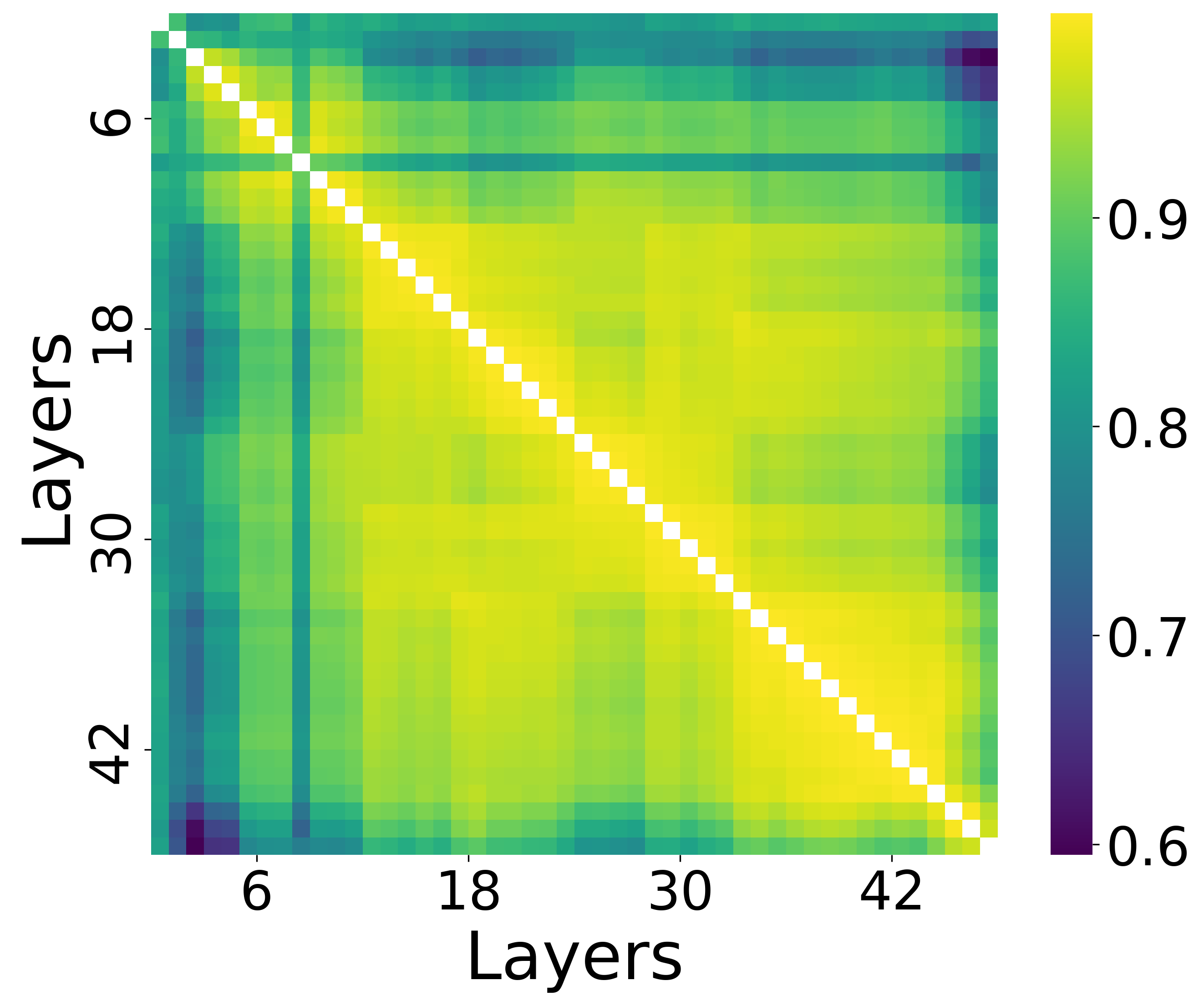} \caption{} \label{fig:linear_cka_similarity1} \end{subfigure} \hfill \begin{subfigure}[b]{0.24\textwidth} \includegraphics[width=\textwidth]{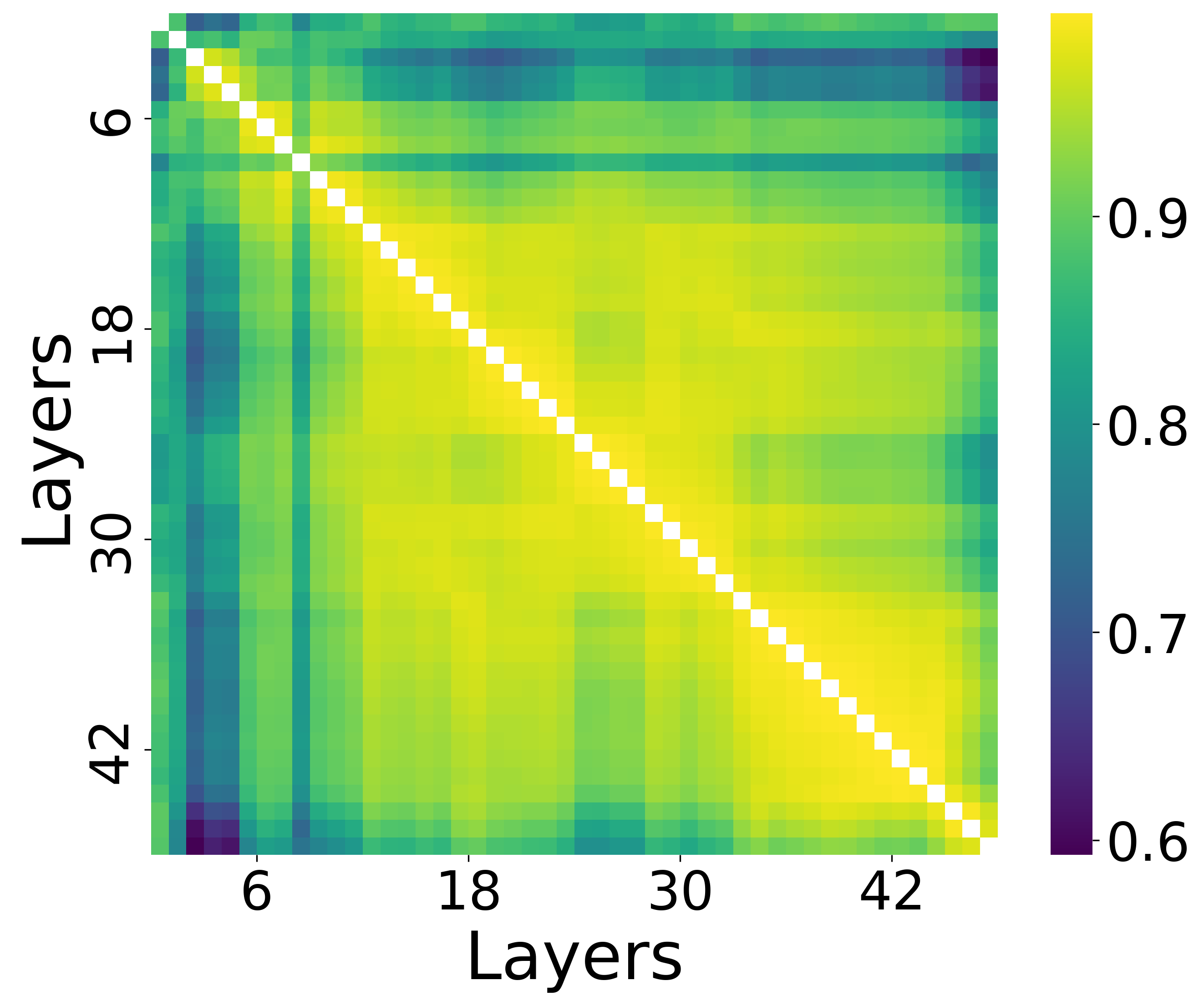} \caption{} \label{fig:linear_cka_similarity2} \end{subfigure} \hfill \begin{subfigure}[b]{0.24\textwidth} \includegraphics[width=\textwidth]{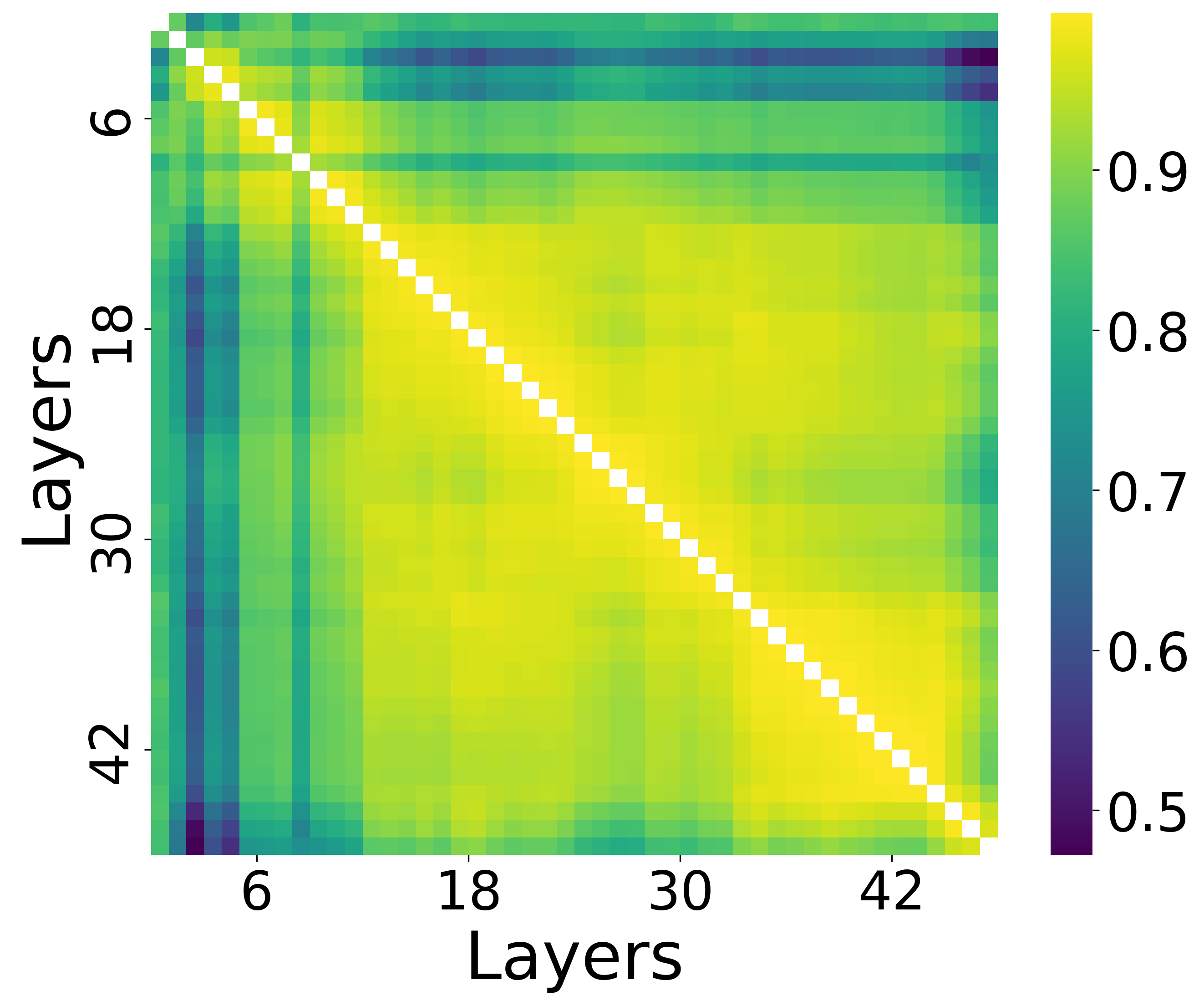} \caption{} \label{fig:linear_cka_similarity3} \end{subfigure} \caption{Comparison between DOCS and Linear CKA on the \textsc{MLP-Up} layers of the 01-ai/Yi-1.5-9B-Chat model. (a) shows the DOCS weight similarity of \textsc{MLP-Up} parameters across layers, while (b), (c), and (d) illustrate the Linear CKA representational similarity of the corresponding layer outputs with different input sentences.} \label{fig:Superiority of DOCS} \end{figure}

\subsection{Further Experiments on the Non-Discriminative Nature of Other Similarity Indices}

In Section \ref{intro}, we emphasized that many existing similarity indices, such as Canonical Correlation Analysis (CCA) \citep{ramsay1984matrix,morcos2018insights}, Singular Vector Canonical Correlation Analysis (SVCCA) \citep{raghu2017svcca}, and Linear Centered Kernel Alignment (Linear CKA) \citep{kornblith2019similarity}, are \emph{non-discriminative for orthogonal matrices}. An \emph{orthogonal matrix} $Q$ is defined by the property $Q^\top Q = Q Q^\top = I$, where $I$ is the identity matrix. This non-discriminative nature means that these indices can yield the same score when assessing the similarity between any two orthogonal matrices, regardless of their actual differences. These conclusions are presented in Section \ref{sec:math-property}, with proofs provided in Appendix \ref{sec:other-similarity-proofs}. This issue is particularly relevant in the context of LLMs, where orthogonal matrices commonly occur throughout the training process \citep{tian2023joma}. In fact, in Figure \ref{fig:comparison_of_similarity_indices}, we observe that when CKA is directly used to measure weight similarity, a large number of values concentrate in the high range of 0.78–0.80. This is likely due to the non-discriminative property for orthogonal matrices.

Here, we present additional results demonstrating a substantial degree of orthogonality among the weight matrices in LLMs. we introduce an index named the \emph{Off-Diagonal Average Cosine Similarity} metric. This metric provides a systematic way to measure the extent of orthogonality between the columns of a given weight matrix. The metric is defined as:

\[
\text{Off-Diagonal Average Cosine Similarity} (\mathbf{X}) = \frac{1}{n(n-1)} \sum_{i=1}^{n} \sum_{\substack{j=1 \\ j \neq i}}^{n} \left| \frac{\mathbf{x}_i \cdot \mathbf{x}_j}{\|\mathbf{x}_i\| \, \|\mathbf{x}_j\|} \right|,
\]

where \(\mathbf{X}\) is a matrix with \(n\) columns, and \(\mathbf{x}_i\) represents the \(i\)-th column of \(\mathbf{X}\). This formulation indicates that a larger Off-Diagonal Average Cosine Similarity value corresponds to a lower degree of orthogonality in the matrix.

To further understand the orthogonality of weight matrices in LLMs, we constructed a family of approximately orthogonal matrices, denoted as \(\mathcal{M}_\theta\), defined by:

\[
    \mathcal{M}_\theta = I_{n} + \theta \, v \, \mathbf{1}^\top,
\]

where:
\begin{itemize}
    \item \( I_{n} \) is the identity matrix of size \(n \times n\).
    \item \( \theta \in \mathbb{R} \) is a scalar controlling the deviation from orthogonality; smaller values of \(\theta\) correspond to matrices closer to orthogonal.
    \item \( v \in \mathbb{R}^{n} \) is a random vector sampled from the normal distribution \( v \sim \mathcal{N}(0, 1) \).
    \item \( \mathbf{1} \in \mathbb{R}^{n} \) is a vector of ones.
\end{itemize}

In our study, we utilized the Q matrices and O matrices from each layer of the Meta-Llama-3.1-8B-Instruct model. Additionally, we generated four sets of approximately orthogonal matrices by sampling \(\mathcal{M}_\theta\) with \(\theta\) values of 0.005, 0.003, 0.002, and 0.001, ensuring that their shapes matched those of the Q and O matrices. We then computed the Off-diagonal Average Cosine Similarity for these matrices.

The experimental results, presented in Figures \ref{fig:Orthogonal1}, reveal that the majority of the Q and O matrices from each layer of the Meta-Llama-3.1-8B-Instruct model exhibit stronger orthogonality compared to \(\mathcal{M}_{0.003}\). According to the definition of \(\mathcal{M}_\theta\), \(\mathcal{M}_{0.003}\) only has a very small perturbation added to the identity matrix. This observation suggests that the Q and O matrices in the Meta-Llama-3.1-8B-Instruct model are highly orthogonal.

\begin{figure}[H] \centering \includegraphics[width=0.45\textwidth]{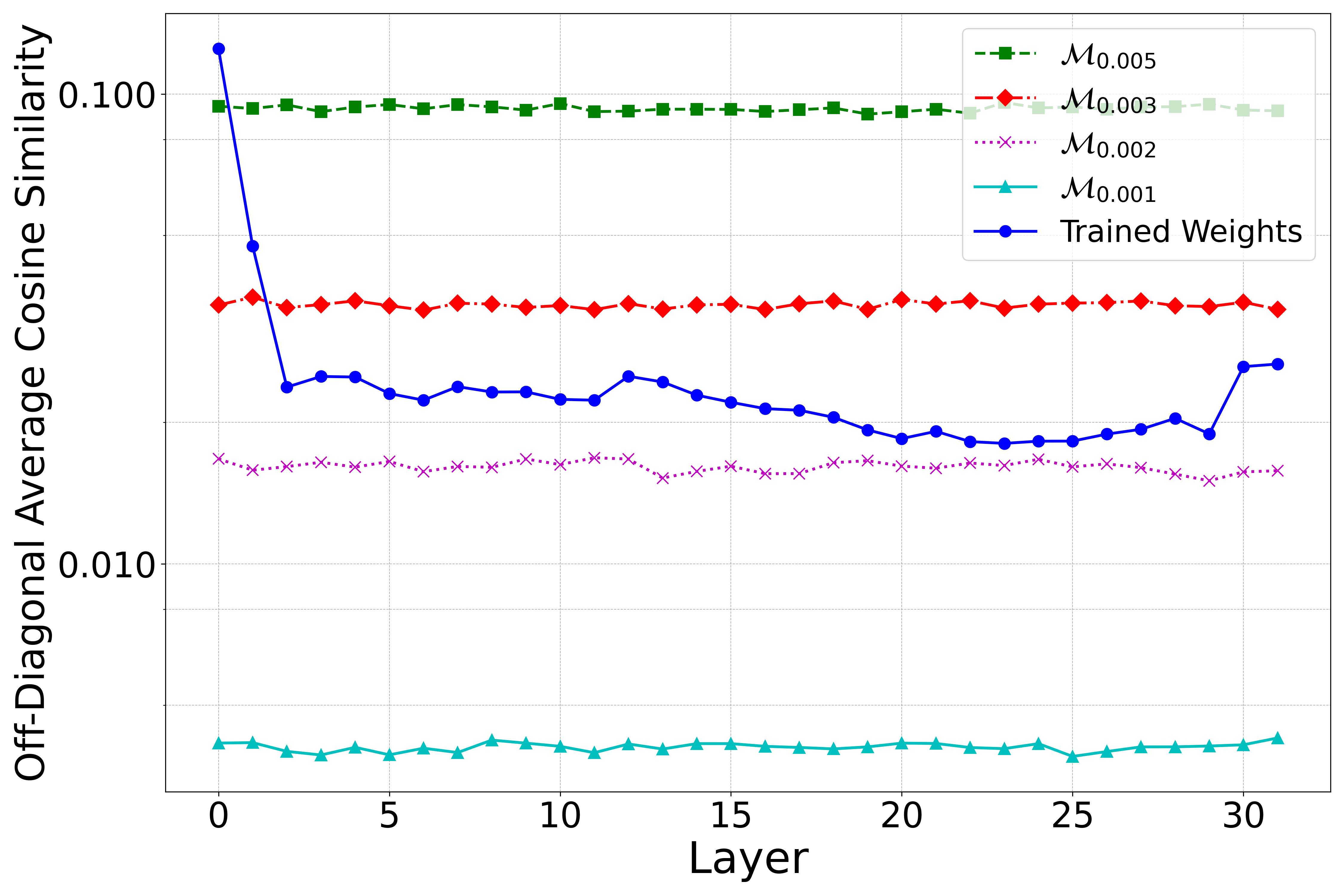} 
\includegraphics[width=0.45\textwidth]{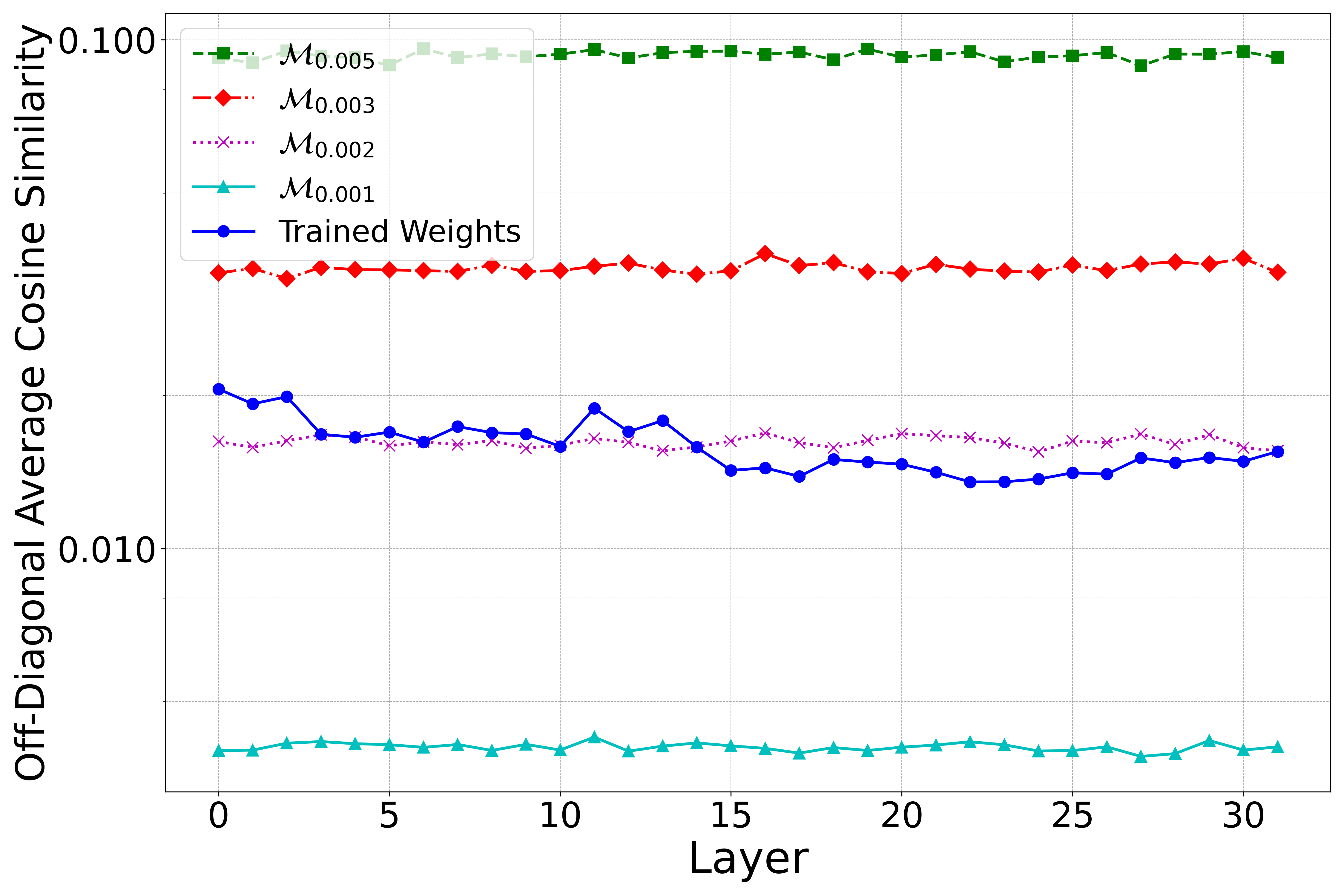} 
\caption{Average cosine similarity for Meta-Llama-3.1-8B-Instruct, with the Q matrix on the left and the O matrix on the right.} \label{fig:Orthogonal1} \end{figure}

\subsection{Advantages of DOCS over Other Similarity Indices} \label{Advantages}

In Section \ref{Comparison of Similarity Indices}, Figure~\ref{fig:comparison_of_similarity_indices} showcases the evaluation outcomes of eight different similarity indices on the \textsc{MLP-Up} layers from the Meta-Llama-3.1-8B-Instruct model. The resulting heatmaps highlight that indices like Linear Regression, Canonical Correlation Analysis (CCA), and CCA (Nuclear) fail to display discernible structural patterns. On the other hand, methods such as Singular Vector CCA (SVCCA), SVCCA (Nuclear), and Linear Centered Kernel Alignment (Linear CKA) exhibit faint block-like or striped formations in the off-diagonal regions, which might be attributed to either noise or inherent limitations in the indices themselves. These anomalies could arise due to the reduced sensitivity of these indices in differentiating between orthogonal matrices, as elaborated in Section~\ref{sec:math-property}.

To further compare the effectiveness of the DOCS method with other approaches, we conducted additional experiments using three models:

\begin{enumerate} \item[(A)] \texttt{meta-llama/Meta-Llama-3.1-8B} (the base model) \item[(B)] \texttt{meta-llama/Meta-Llama-3.1-8B-Instruct} (an instruction-tuned version of the base model) \item[(C)] A version of \texttt{meta-llama/Meta-Llama-3.1-8B} with randomly initialized weights \end{enumerate}

Intuitively, the weight matrices of corresponding layers in models (A) and (B) should exhibit significantly higher similarity compared to those between models (A) and (C), since model (C) contains random weights and lacks the learned structure present in models (A) and (B).

To quantify this, we define the \emph{similarity ratio} as the ratio of the similarity scores between models (A) and (B) to those between models (A) and (C). Specifically, for each similarity index, we compute the similarity of the corresponding \textsc{MLP-Up} and \textsc{MLP-Down} weight matrices between models (A) and (B), and between models (A) and (C). A higher ratio indicates that the similarity index is better at distinguishing between meaningful relationships in model weights (as seen in models (A) and (B)) and unrelated weights (as seen in models (A) and (C)). Thus, a higher ratio reflects the ability of the index to highlight structural patterns specific to related models while minimizing noise from uncorrelated data.

As shown in Figure~\ref{fig:base_sft_init}, the experimental results reveal that the similarity ratio computed using the DOCS method is much higher—approximately 10 times greater—than those obtained with the other indices. This demonstrates the effectiveness of DOCS in capturing meaningful similarity patterns in model weights.

\begin{figure}[H] \centering \begin{subfigure}[b]{0.24\textwidth} \includegraphics[width=\textwidth]{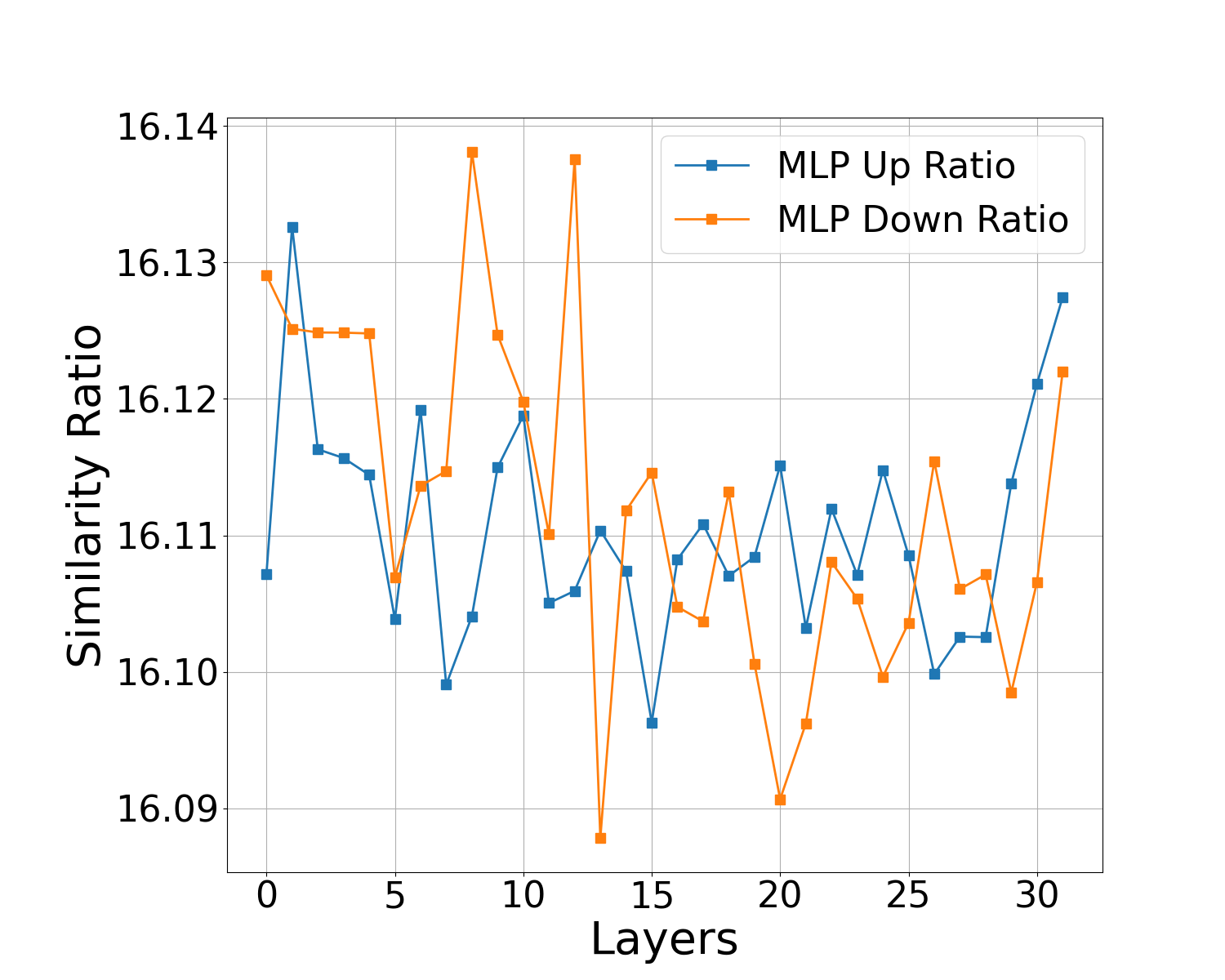} \caption{Similarity ratio computed using the DOCS method.}  
\end{subfigure} \hfill \begin{subfigure}[b]{0.24\textwidth} \includegraphics[width=\textwidth]{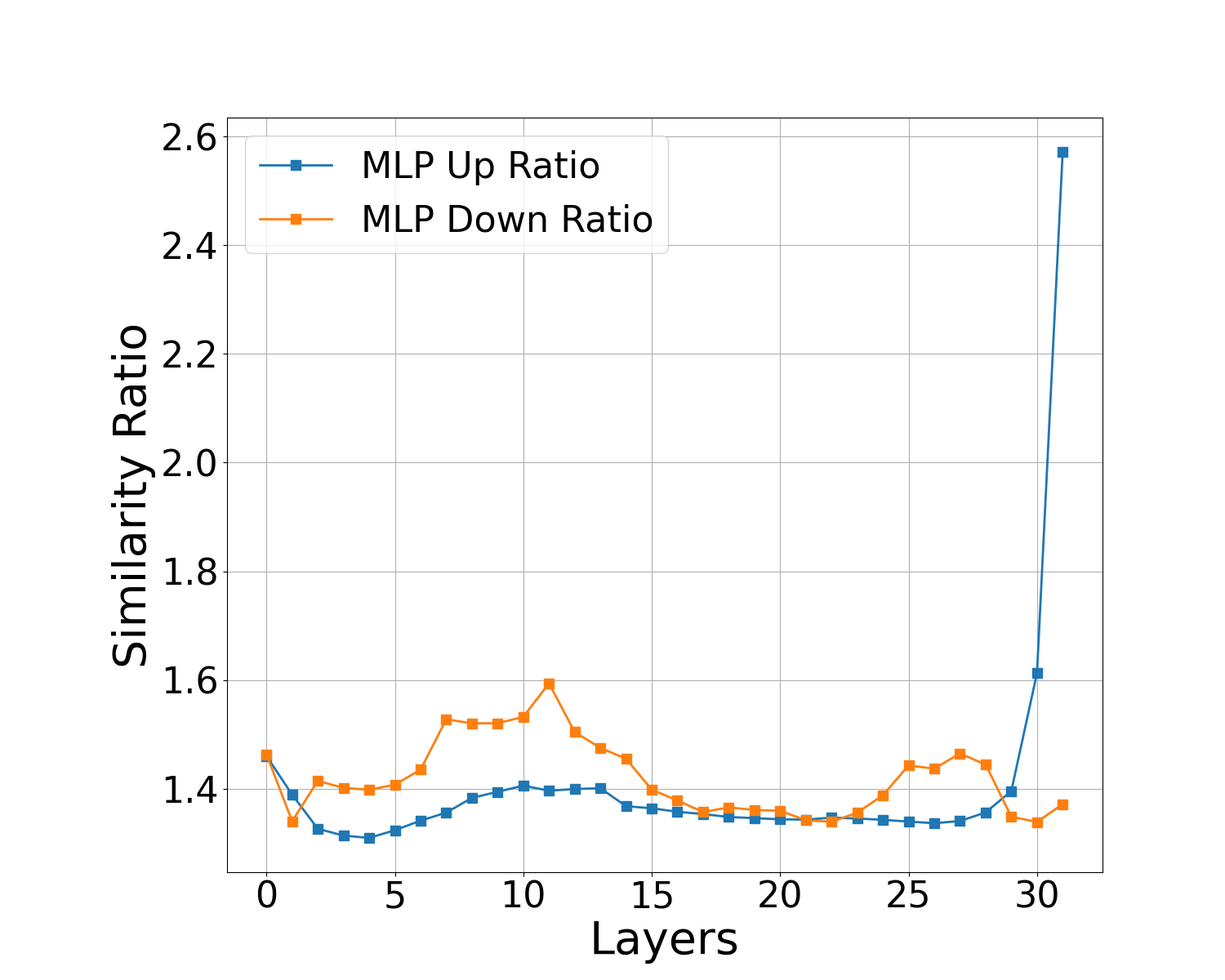} \caption{Similarity ratio computed using the Linear CKA method.} \label{fig:linear_cka_similarity} \end{subfigure} \hfill \begin{subfigure}[b]{0.24\textwidth} \includegraphics[width=\textwidth]{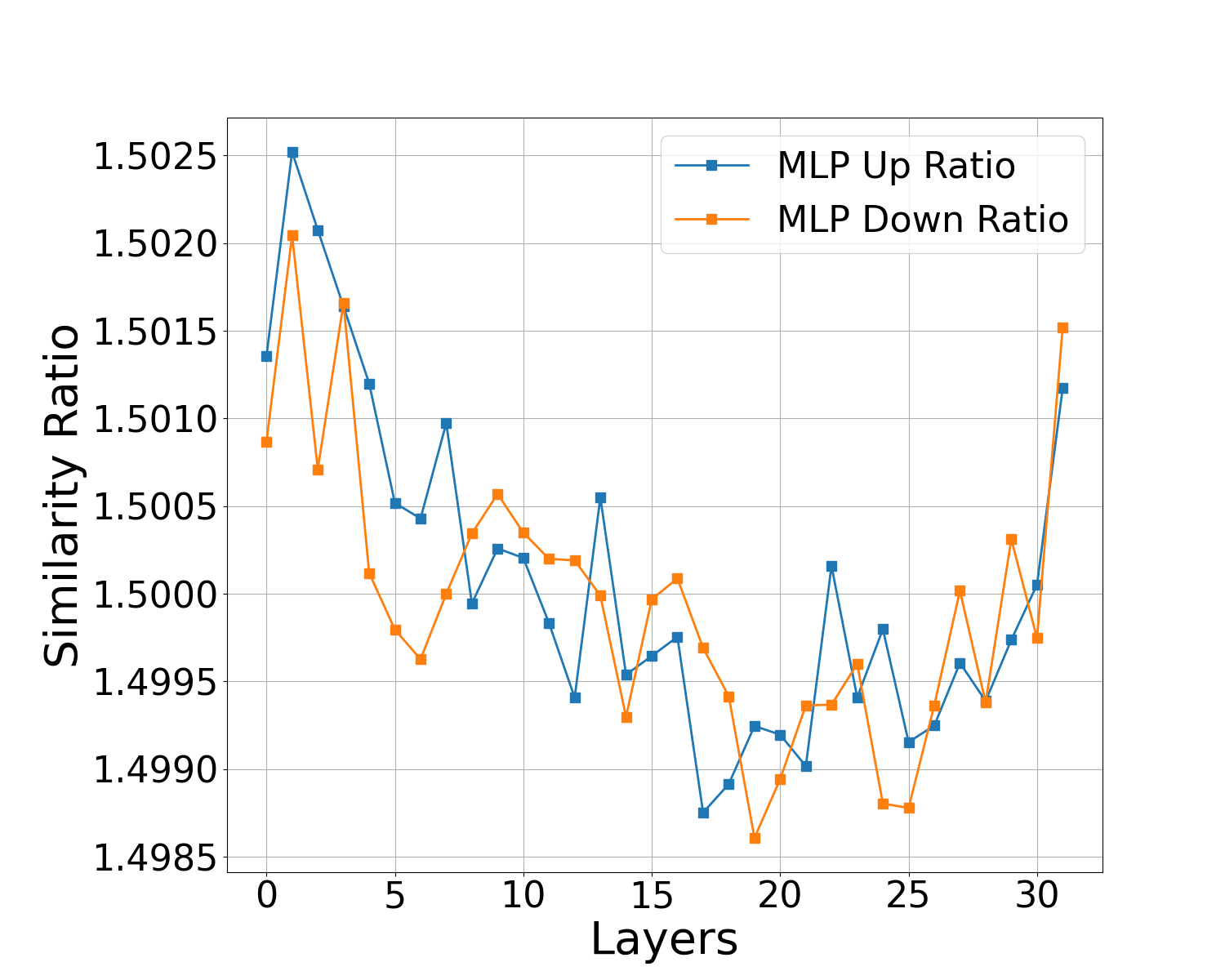} \caption{Similarity ratio computed using the SVCCA method.} \label{fig:svcca_similarity} \end{subfigure} \hfill \begin{subfigure}[b]{0.24\textwidth} \includegraphics[width=\textwidth]{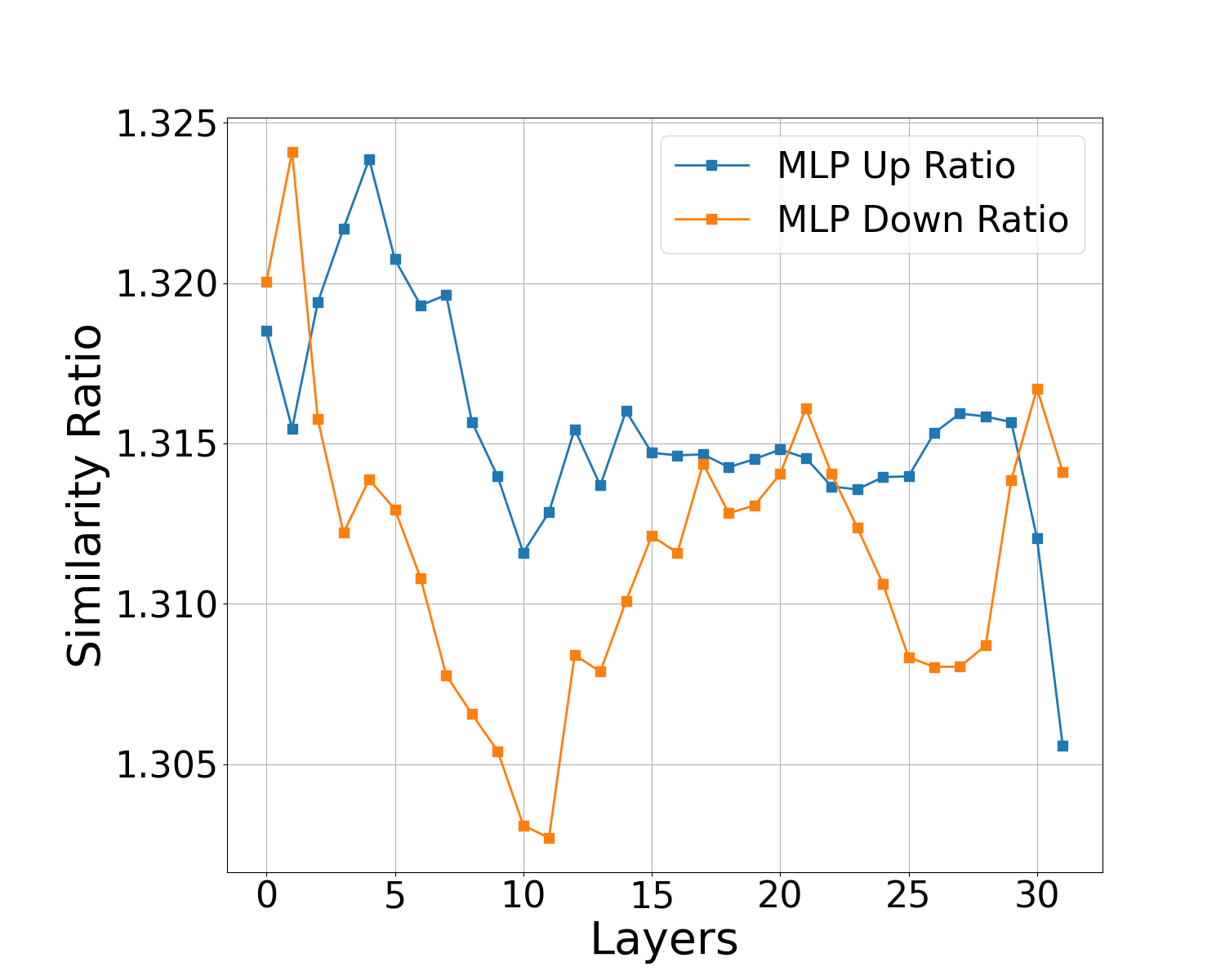} \caption{Similarity ratio computed using the SVCCA (nuclear) method.} \label{fig:svcca_nuclear_similarity} \end{subfigure} \caption{Comparison of similarity ratios between models (A) and (B) relative to models (A) and (C) across four methods: DOCS, Linear CKA, SVCCA, and SVCCA (nuclear). The DOCS method demonstrates a significantly higher ratio, indicating more effective performance.} \label{fig:base_sft_init} \end{figure}

\section{Comparison of Similarity Indices on a Randomly Initialized Model}
Figure~\ref{fig:comparison_of_similarity_indices_init} provides a visual comparison of eight different similarity indices applied to the \textsc{MLP-Up} layers of the reinitialized Meta-Llama-3.1-8B-Instruct model. We observed that all of them exhibit relatively random patterns.

\begin{figure*}[ht]  
    \centering
    \begin{subfigure}[b]{0.23\textwidth}  
        \includegraphics[width=0.925\textwidth]{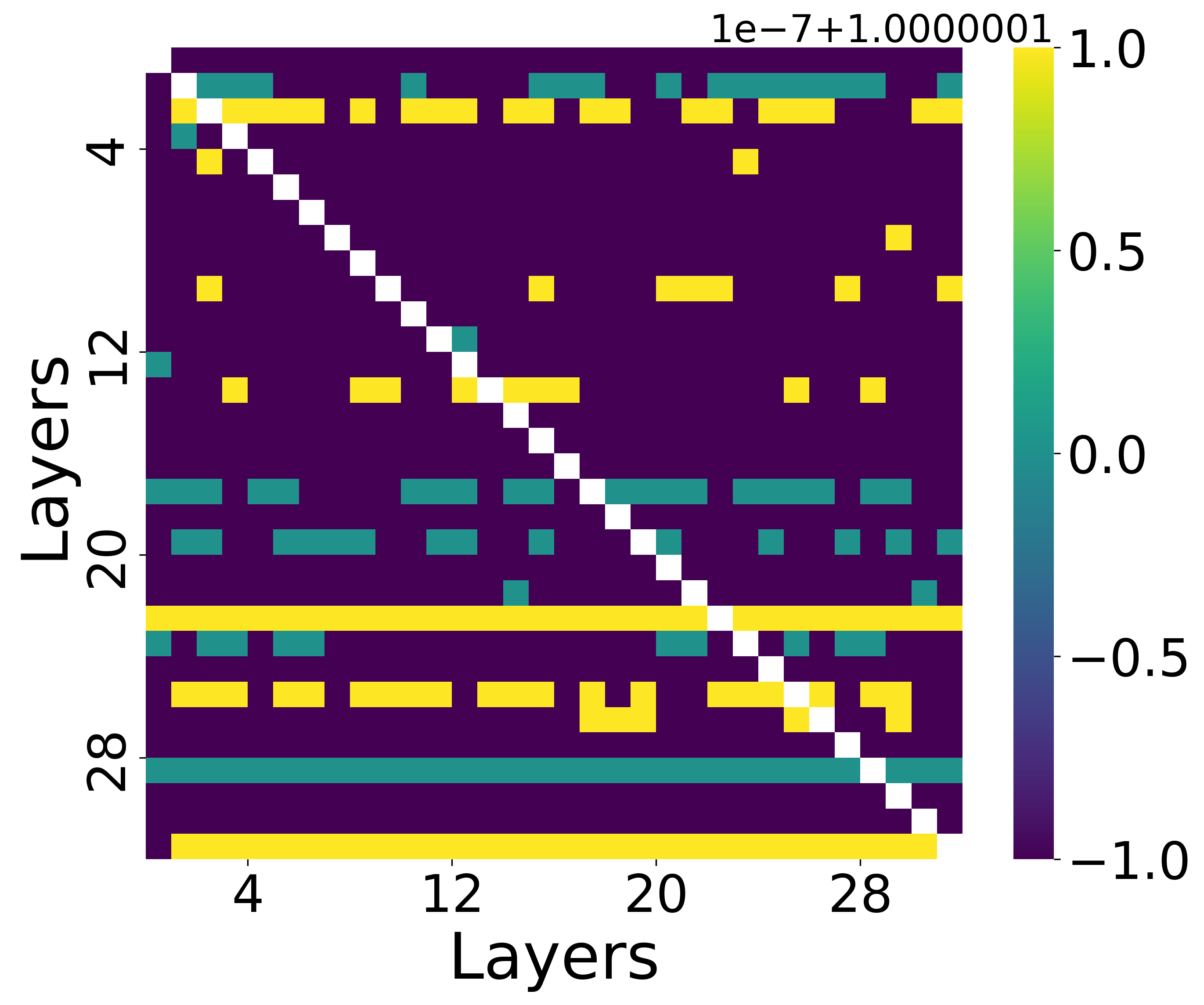}
        \caption{ Linear Regression}
    \end{subfigure}
    \hfill
    \begin{subfigure}[b]{0.23\textwidth}
        \includegraphics[width=0.90\textwidth]{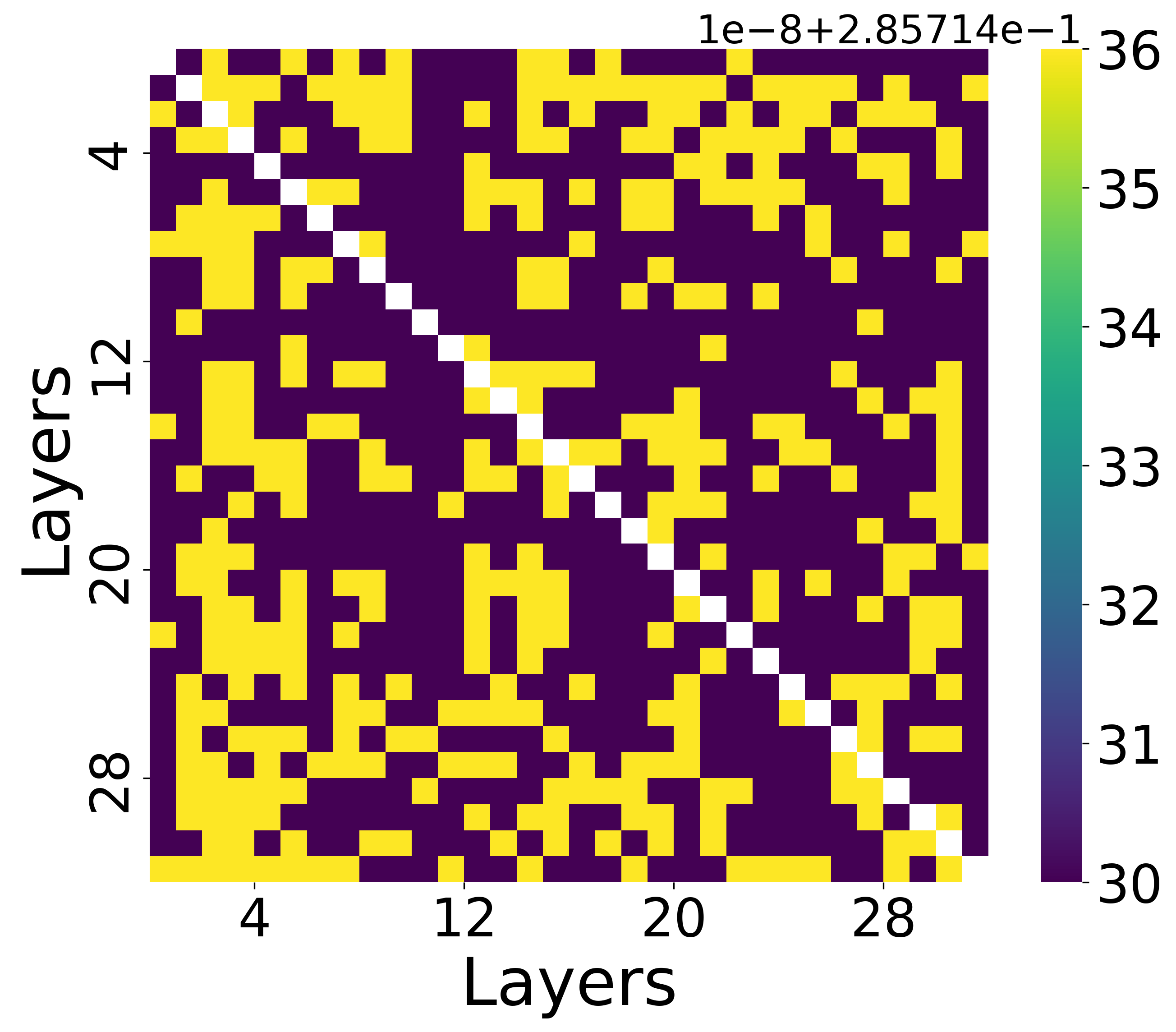}
        \caption{CCA}
    \end{subfigure}
    \hfill
    \begin{subfigure}[b]{0.23\textwidth}
        \includegraphics[width=0.89\textwidth]{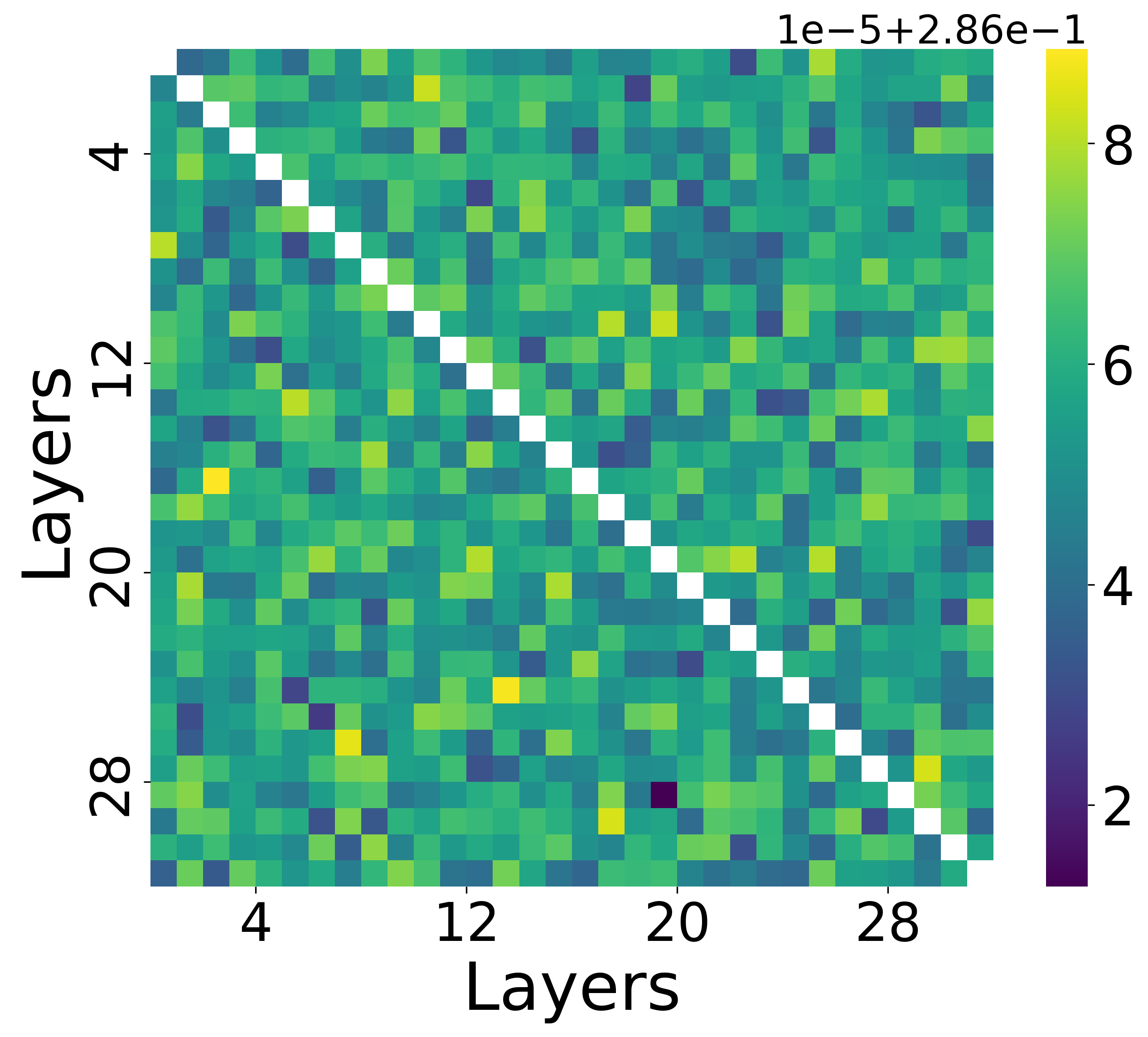}
        \caption{CCA (Nuclear)}
    \end{subfigure}
    \hfill
    \begin{subfigure}[b]{0.23\textwidth}
        \includegraphics[width=1.0\textwidth]{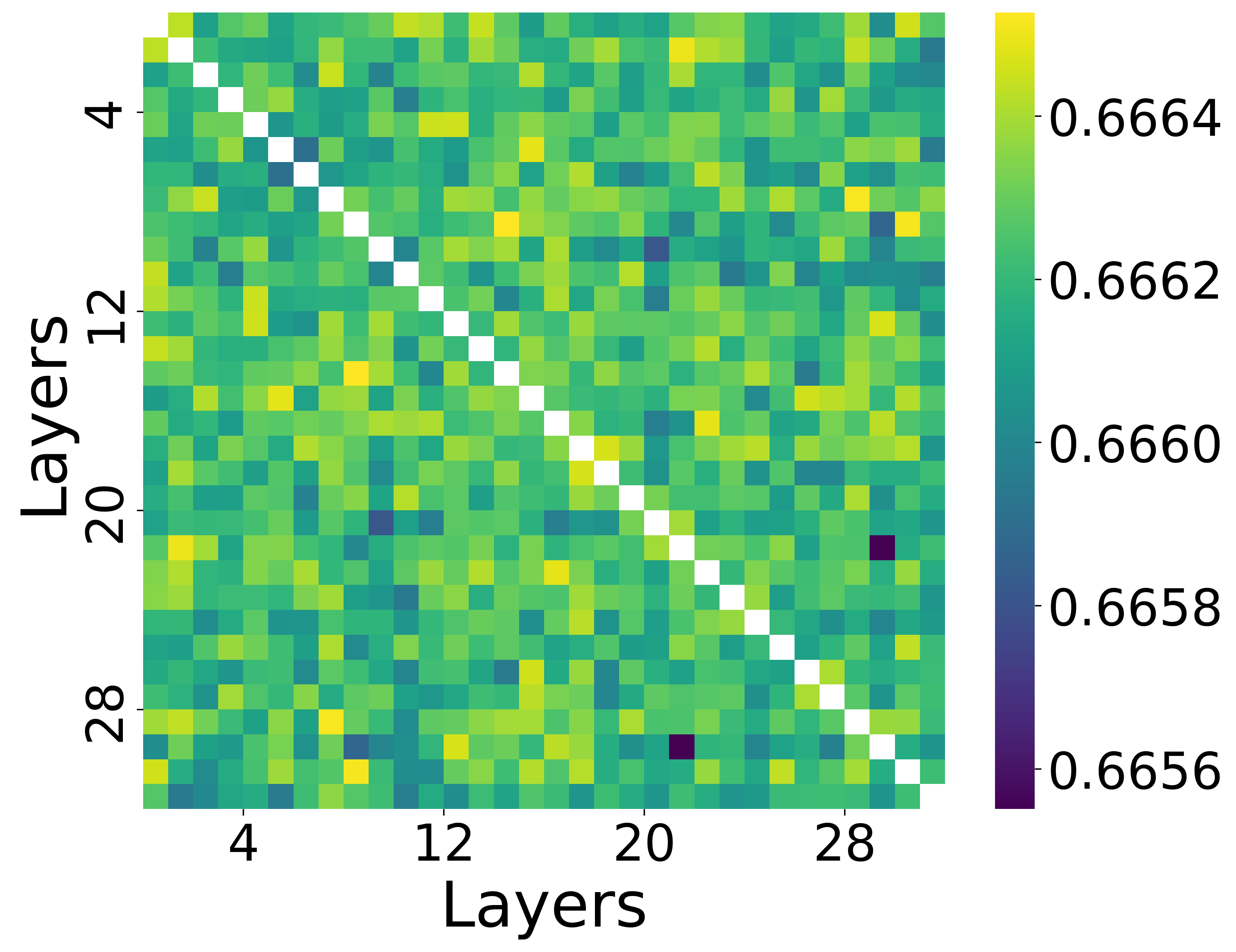}
        \caption{SVCCA}
    \end{subfigure}

    \begin{subfigure}[b]{0.23\textwidth}
        \includegraphics[width=0.95\textwidth]{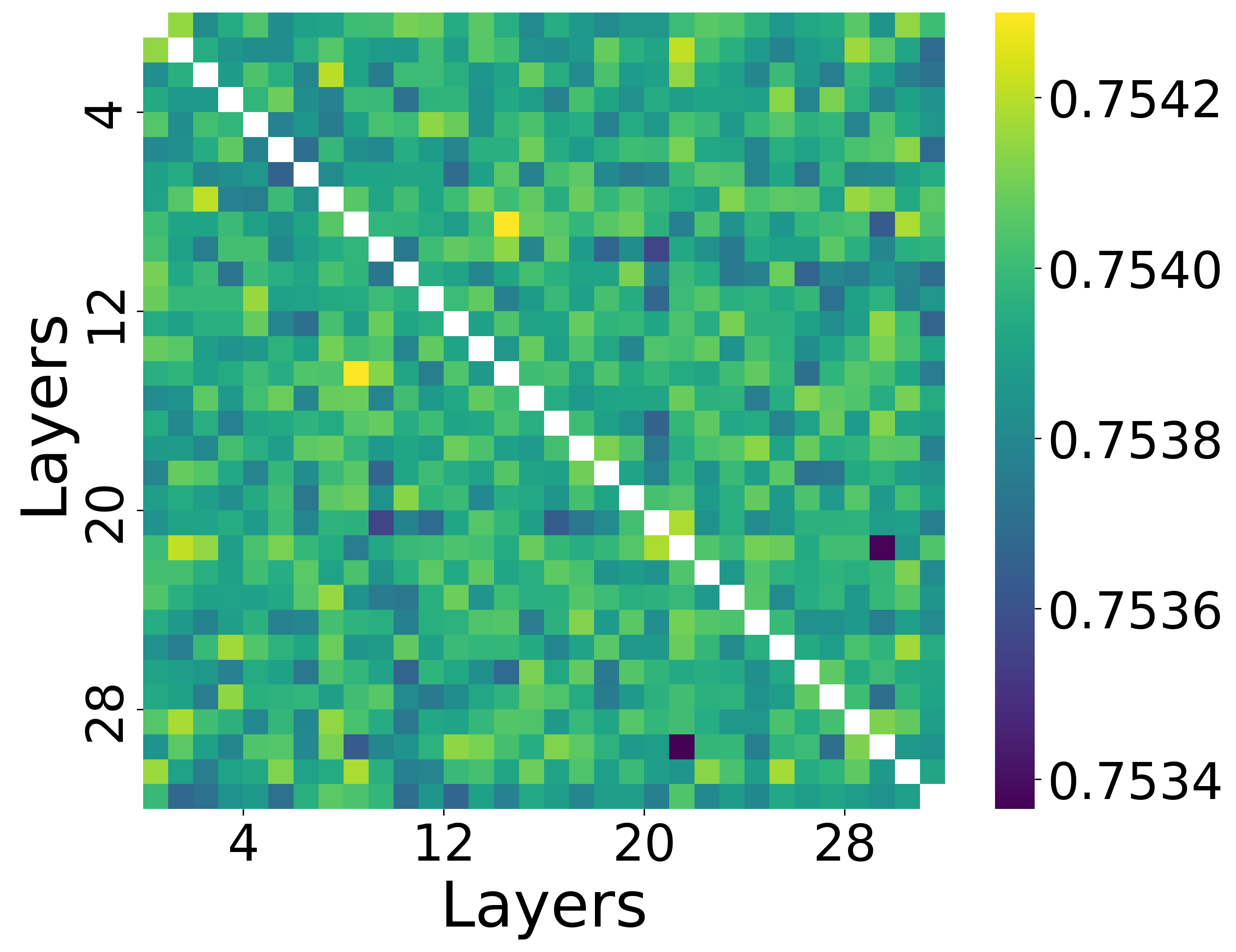}
        \caption {SVCCA (Nuclear)}
    \end{subfigure}
    \hfill
    \begin{subfigure}[b]{0.23\textwidth}
        \includegraphics[width=1.03\textwidth]{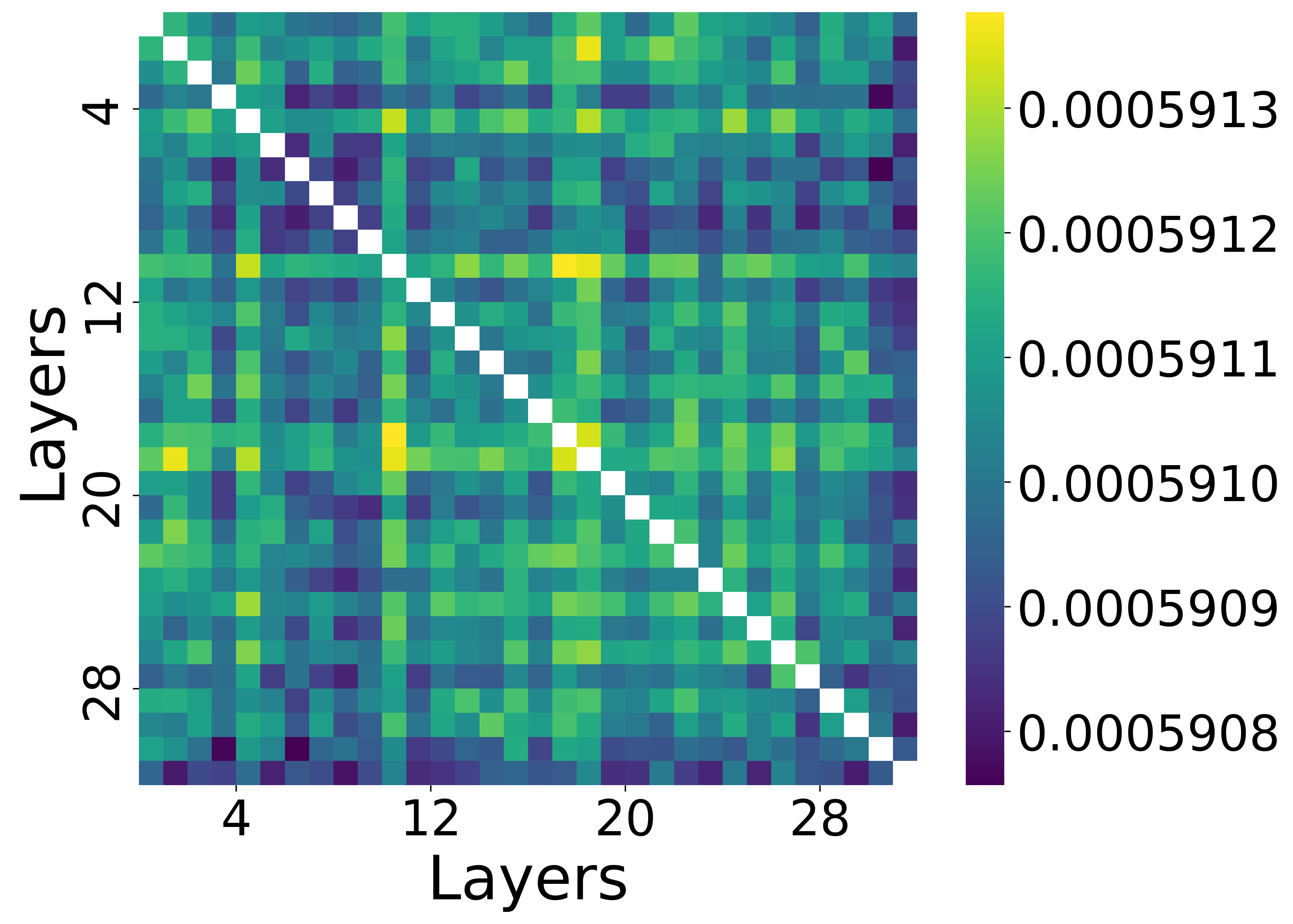}
        \caption{Linear HSIC}
    \end{subfigure}
    \hfill
    \begin{subfigure}[b]{0.23\textwidth}
        \includegraphics[width=0.96\textwidth]{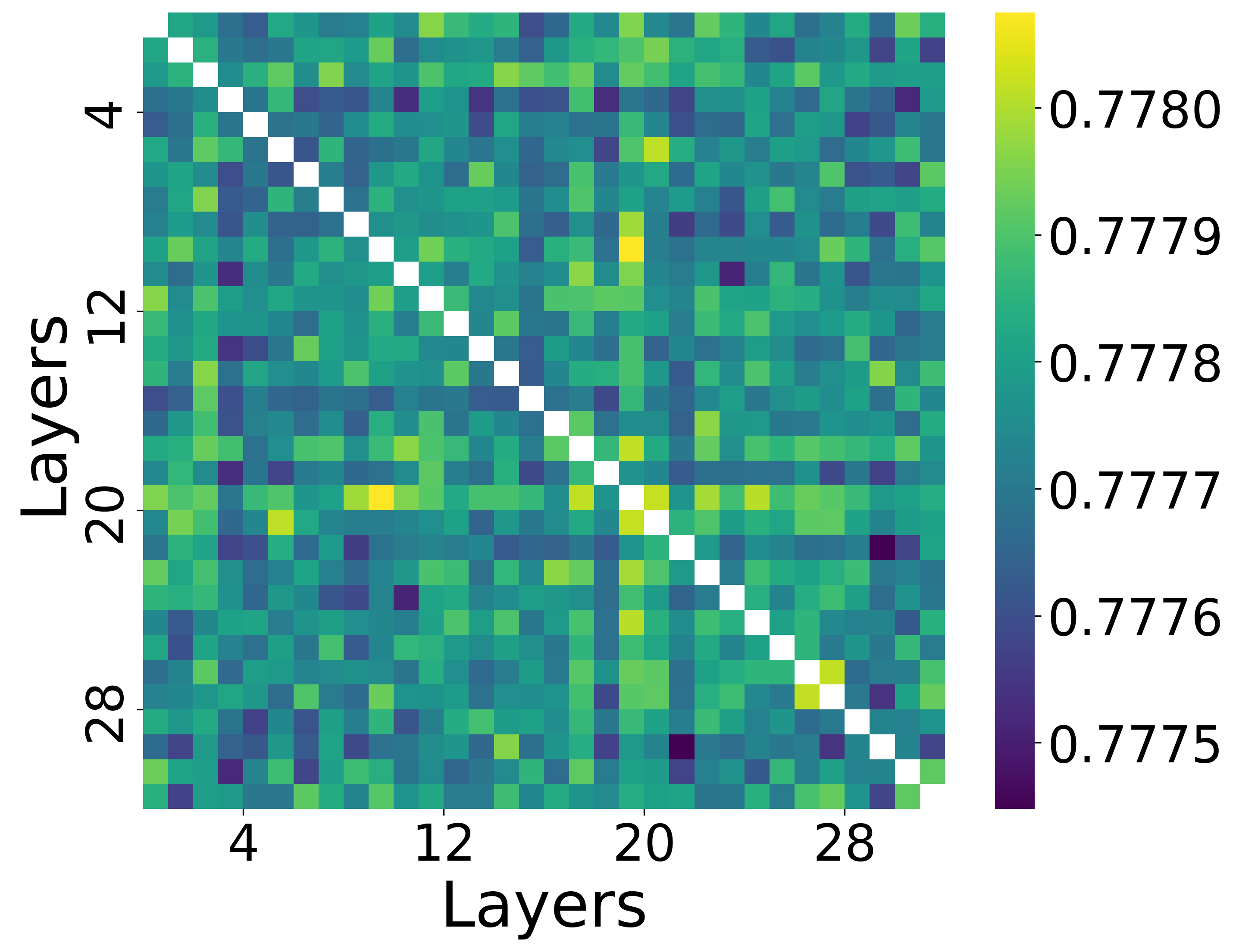}
        \caption{Linear CKA}
    \end{subfigure}
    \hfill
    \begin{subfigure}[b]{0.23\textwidth}
        \includegraphics[width=1.01\textwidth]{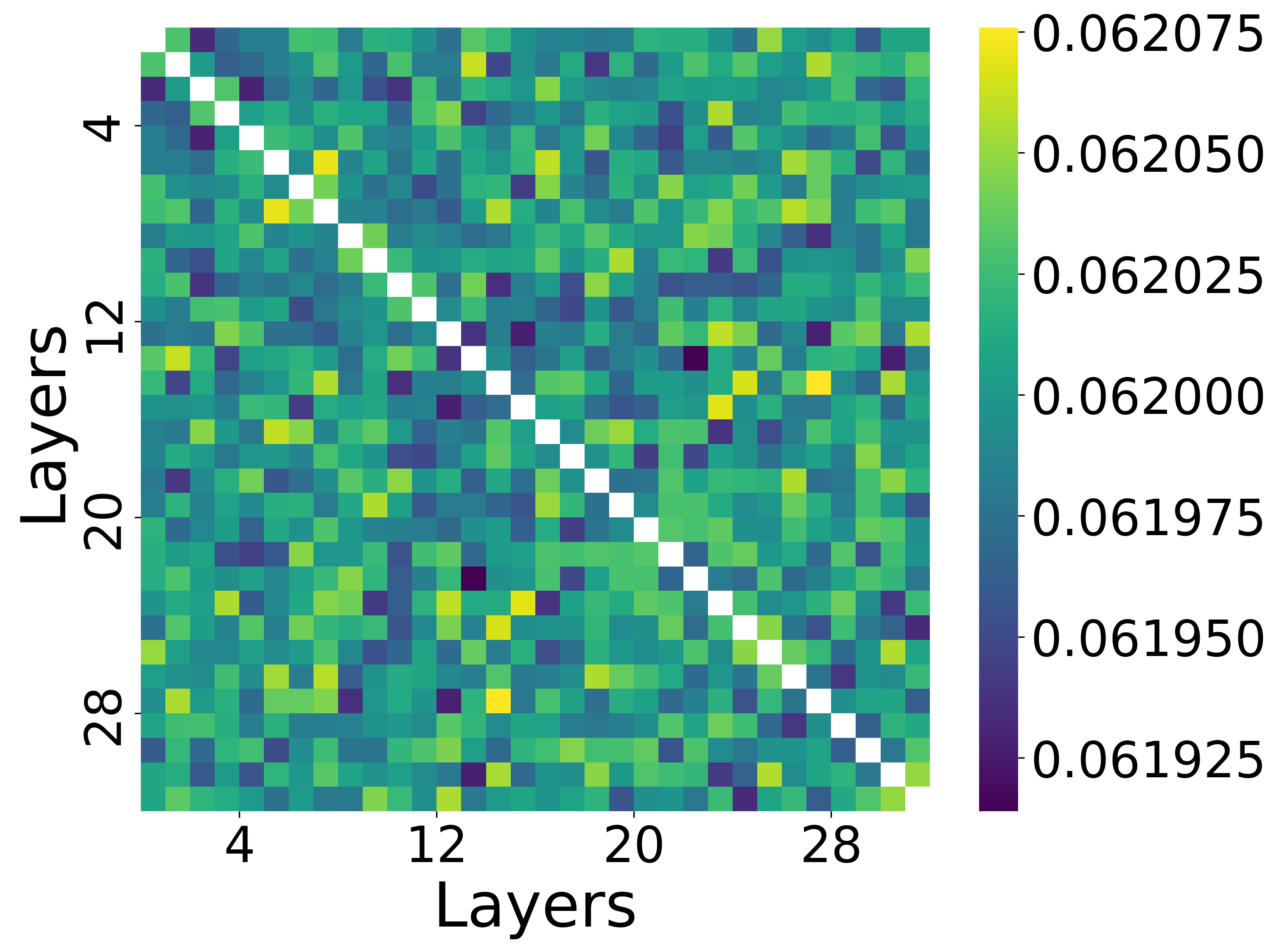}
        \caption{DOCS}
    \end{subfigure}
    \caption{Comparison of similarity indices on the \textsc{MLP-Up} layers of reinitialized Meta-Llama-3.1-8B-Instruct.}
    \label{fig:comparison_of_similarity_indices_init}
\end{figure*}

\section{Justifications for Specific Steps in the DOCS Algorithm}

In this section, we provide justifications for specific design choices made in the DOCS algorithm, focusing on the use of the Gumbel distribution and the maximization function in computing the similarity index.

\textbf{Justification for the Choice of Gumbel Distribution}

To illustrate why the Gumbel distribution is suitable for modeling the data in the DOCS algorithm, we analyze the distribution of maximum cosine similarities between columns of weight matrices from different layers.

Let \( X \) be the \textsc{MLP-Up} weight matrix from the 4th layer, and let \( Y \) be the \textsc{MLP-Up} weight matrix from the 8th layer of the \texttt{meta-llama/Meta-Llama-3.1-8B} model. We compute the vectors \( \mathbf{s}_X = \textsc{MaxCosSim}(X, Y) \) and \( \mathbf{s}_Y = \textsc{MaxCosSim}(Y, X) \), where \textsc{MaxCosSim} refers to the function that computes, for each column in one matrix, the maximum absolute cosine similarity with any column in the other matrix.

The histograms of \( \mathbf{s}_X \) and \( \mathbf{s}_Y \) are plotted in Figure~\ref{fig:gumbel_fits}. As observed, the distributions of the maximum cosine similarities closely resemble the Gumbel distribution, which is commonly used to model the distribution of extreme values (maxima or minima) of samples of random variables. This empirical observation supports our choice of fitting a Gumbel distribution to the maximum cosine similarities in the DOCS algorithm.

By fitting a Gumbel distribution to these maximum similarity values and using the location parameter \( u \) as the similarity index, we capture the central tendency of the extreme similarities in a way that is robust to outliers and sensitive to the most significant alignments between the weight matrices.

\begin{figure}[ht]
    \centering
    \begin{subfigure}[b]{0.45\textwidth}
        \includegraphics[width=\linewidth]{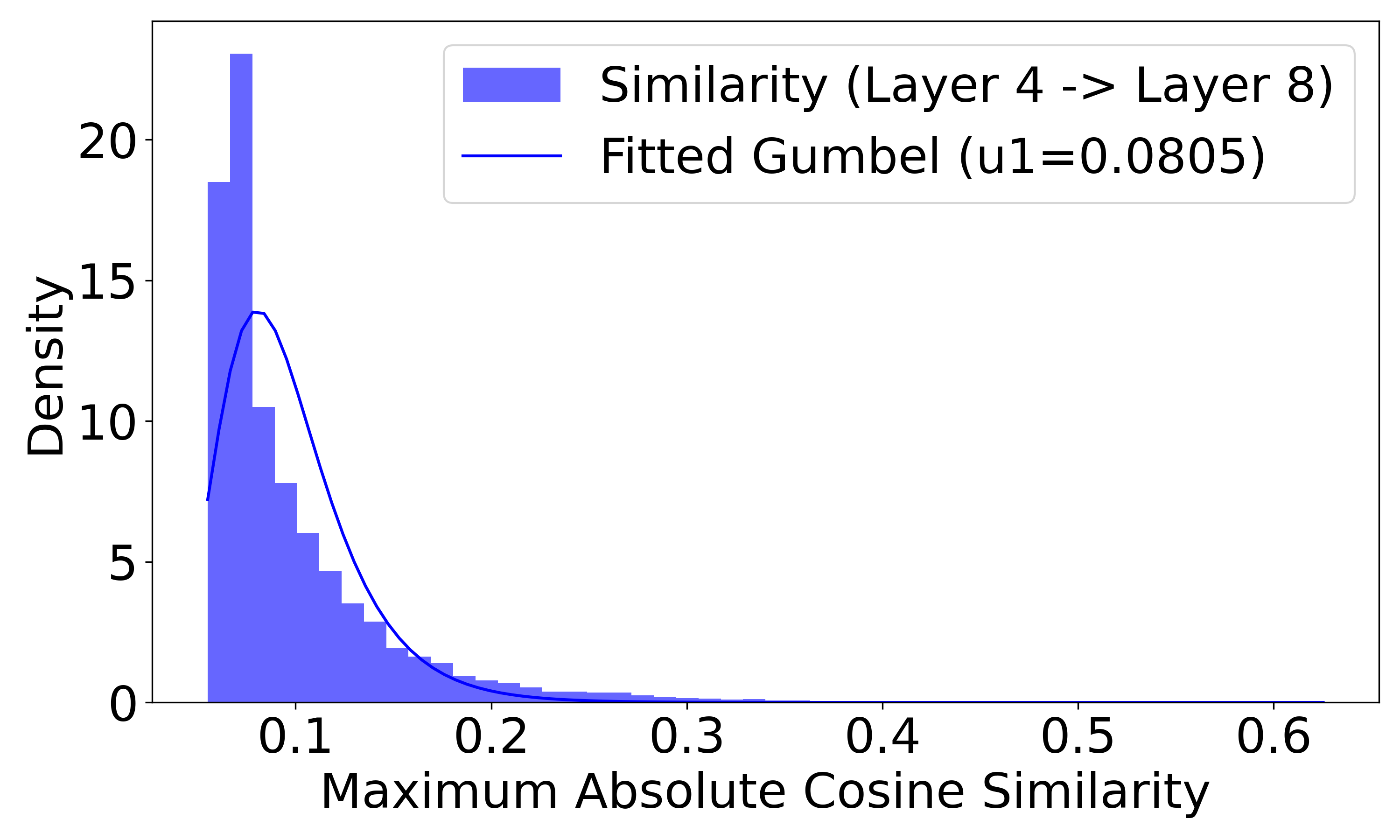}
        \caption{Histogram of \( \mathbf{s}_X \) (Layer 4 to Layer 8).}
        \label{fig:gumbel_S1}
    \end{subfigure}
    \hfill
    \begin{subfigure}[b]{0.45\textwidth}
        \includegraphics[width=\linewidth]{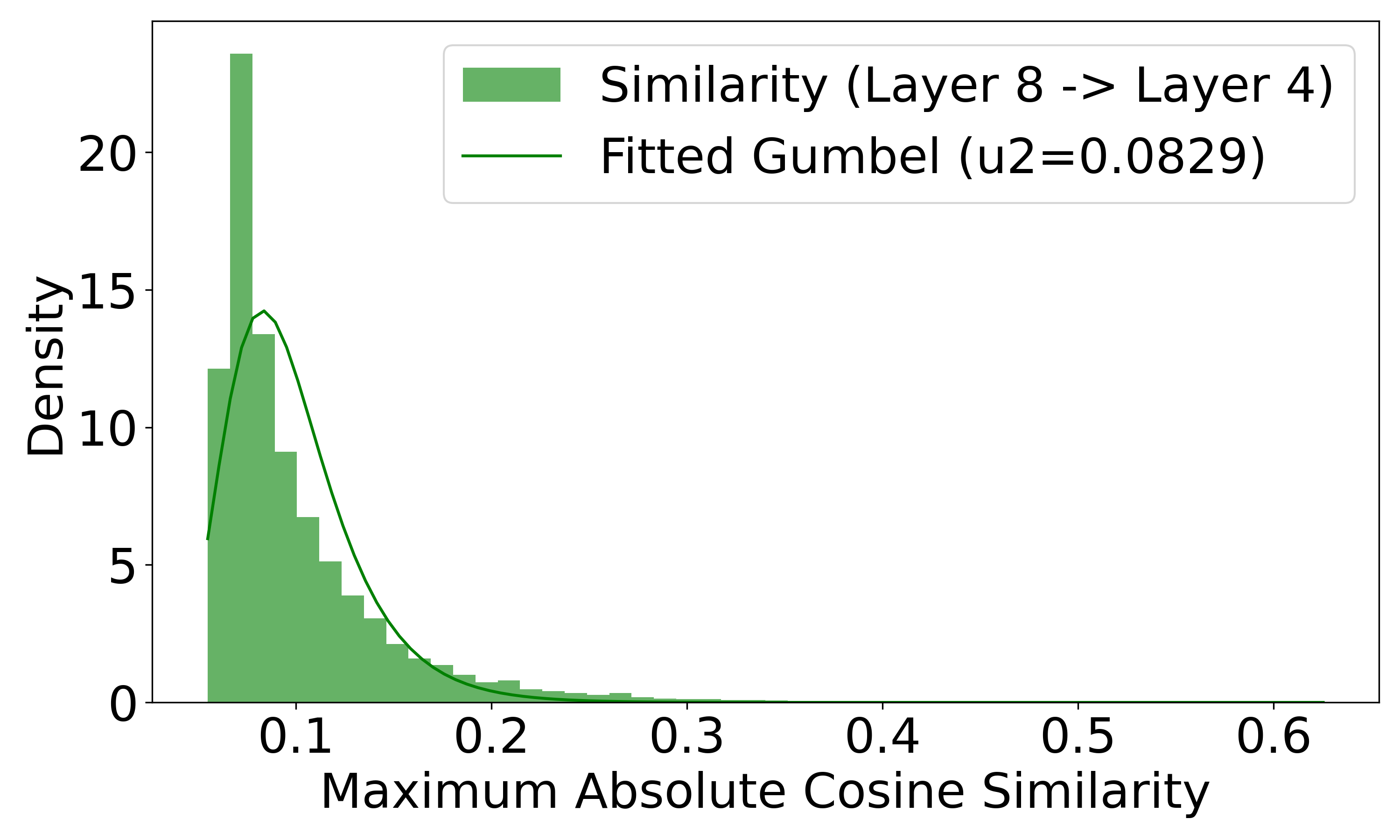}
        \caption{Histogram of \( \mathbf{s}_Y \) (Layer 8 to Layer 4).}
        \label{fig:gumbel_S2}
    \end{subfigure}
    \caption{Histograms of the maximum cosine similarity vectors \( \mathbf{s}_X \) and \( \mathbf{s}_Y \) for the \textsc{MLP-Up} weight matrices from layers 4 and 8. The overlaid curves represent the fitted Gumbel distributions.}
    \label{fig:gumbel_fits}
\end{figure}

\textbf{Importance of the Maximization Function}

The maximization operation in the \textsc{MaxCosSim} function plays a crucial role in the DOCS algorithm. To demonstrate this, we compare the results of the standard DOCS algorithm with a variant where the maximum operation is replaced by averaging.

In the standard DOCS algorithm, for each column in matrix \( X \), we compute the maximum absolute cosine similarity with any column in matrix \( Y \). This approach emphasizes the strongest alignments between the weight matrices, which are essential for detecting structural similarities.

Alternatively, using the average of the absolute cosine similarities incorporates all pairwise similarities, including weaker alignments. This averaging process can dilute the impact of the most significant similarities by combining them with less relevant ones.

Figure~\ref{fig:average} compares the similarity heatmaps generated by the average-based DOCS and the standard DOCS algorithm for the \textsc{MLP-Up} parameter matrices of the \texttt{meta-llama/Meta-Llama-3.1-8B} model.

\begin{figure}[H]
    \centering
    \begin{subfigure}[b]{0.45\textwidth}
        \includegraphics[width=\linewidth]{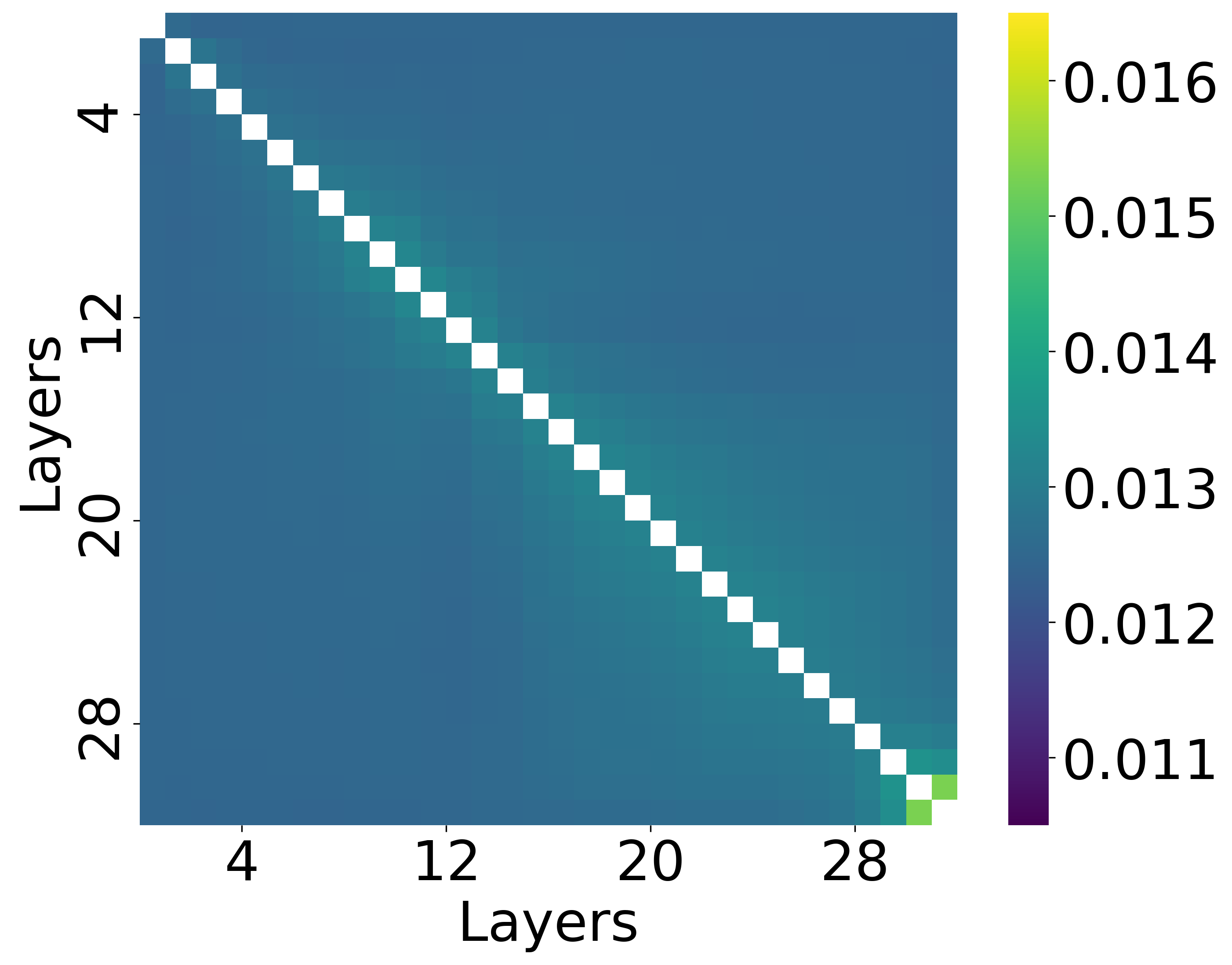}
        \caption{Similarity heatmap using average-based DOCS.}
        \label{fig:average_similarity}
    \end{subfigure}
    \hfill
    \begin{subfigure}[b]{0.45\textwidth}
        \includegraphics[width=\linewidth]{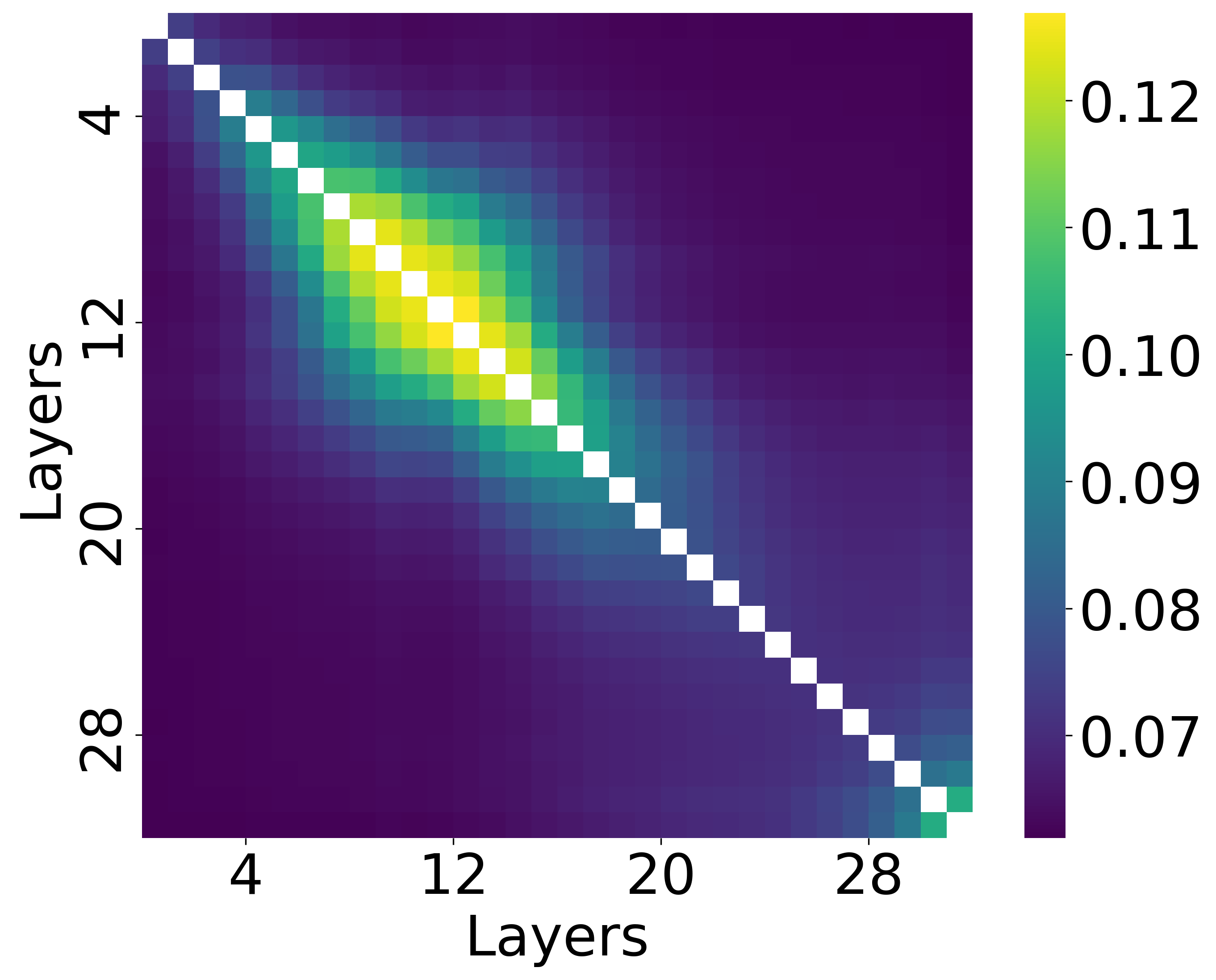}
        \caption{Similarity heatmap using standard DOCS algorithm.}
    \end{subfigure}
    \caption{Comparison of similarity heatmaps for the \textsc{MLP-Up} parameter matrices of the \texttt{meta-llama/Meta-Llama-3.1-8B} model: (a) average-based DOCS and (b) standard DOCS algorithm.}
    \label{fig:average}
\end{figure}

As shown in Figure~\ref{fig:average}, the similarity heatmap generated by the average-based DOCS lacks clear structural patterns and appears more uniform. In contrast, the heatmap produced by the standard DOCS algorithm reveals distinct structural similarities between specific layers, evident through pronounced diagonal and off-diagonal patterns.

This comparison highlights the importance of using the maximum operation in the \textsc{MaxCosSim} function. By focusing on the strongest relationships between columns of the weight matrices, the DOCS algorithm effectively captures meaningful structural patterns that might be obscured when using an averaging approach. The maximum operation ensures that significant alignments are emphasized, enhancing the sensitivity of the similarity measure to relevant features in the model's architecture.

\section{Computation Details of Gini Coefficient}
\label{gini_detail}
The computation of the Gini coefficient involves several steps. First, we exclude the diagonal elements from the similarity matrix as they are always equal to one, focusing solely on inter-layer relationships. For each row, the remaining elements are normalized by dividing them by the sum of all row values, ensuring that the resulting coefficients reflect the relative distribution of similarity scores within the row. The Gini coefficient for each row is then calculated to quantify the inequality in the distribution of these scores, with higher values indicating a greater concentration of similarity among fewer layer pairs. Finally, the mean Gini coefficient across all rows is computed to summarize the overall inequality in the similarity matrix.

A higher Gini coefficient suggests a more uneven distribution of similarity scores, implying a greater concentration of significant similarities in fewer layer pairs, which potentially reveals structural characteristics of the model parameters. As demonstrated in Table \ref{tab:gini_values}, our proposed method (DOCS) achieves higher Gini coefficients compared to other methods, indicating its ability to isolate significant layer similarities.

\section{Heatmaps of DOCS Scores on Various LLMs} \label{sec:other-heatmaps}

\begin{figure*}[ht]
    \centering
    \begin{subfigure}[b]{0.3\textwidth}
        \includegraphics[width=\textwidth]{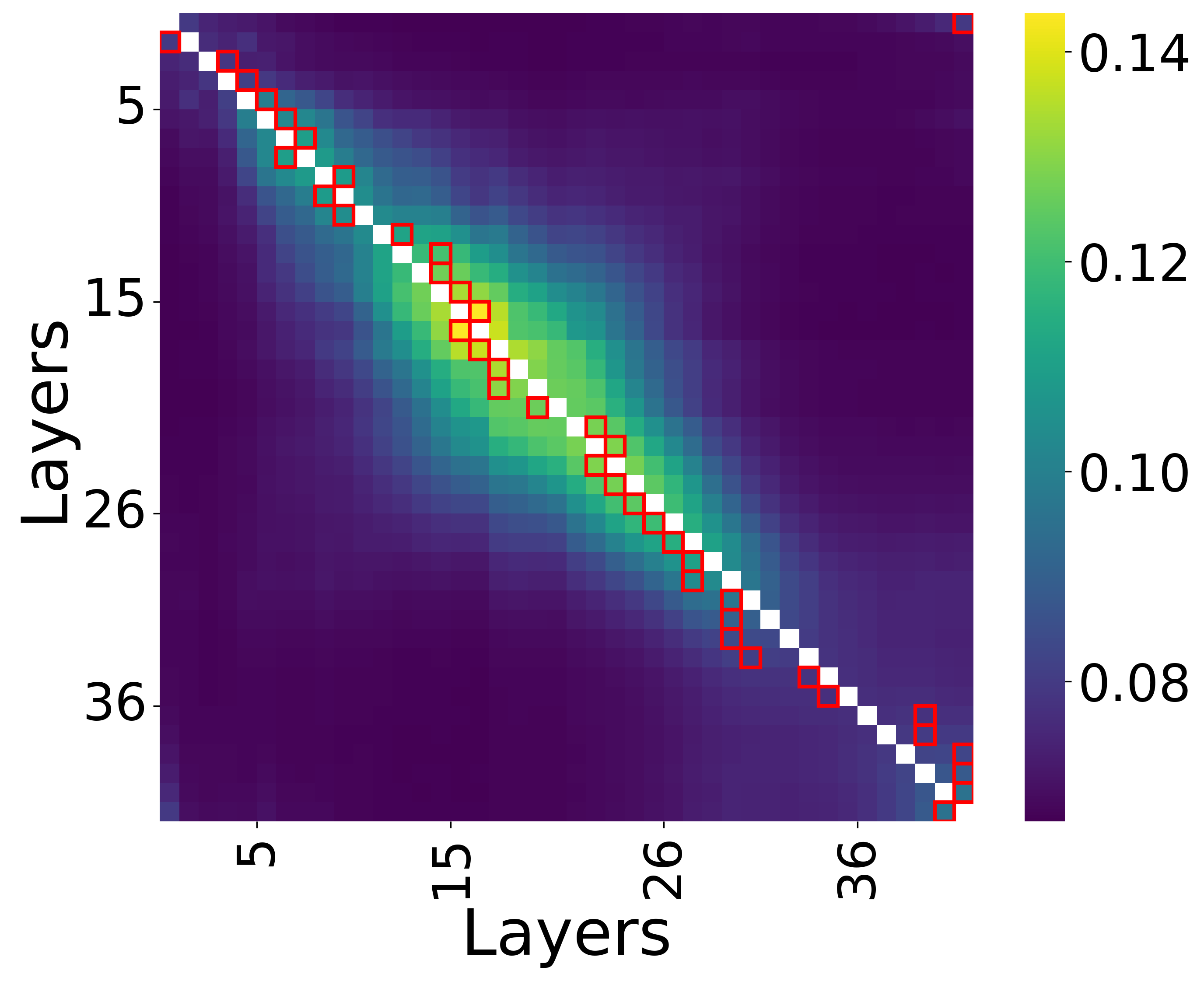} 
        \caption{\textsc{MLP-Up}}
    \end{subfigure}
    \hfill
    \begin{subfigure}[b]{0.3\textwidth}
        \includegraphics[width=\textwidth]{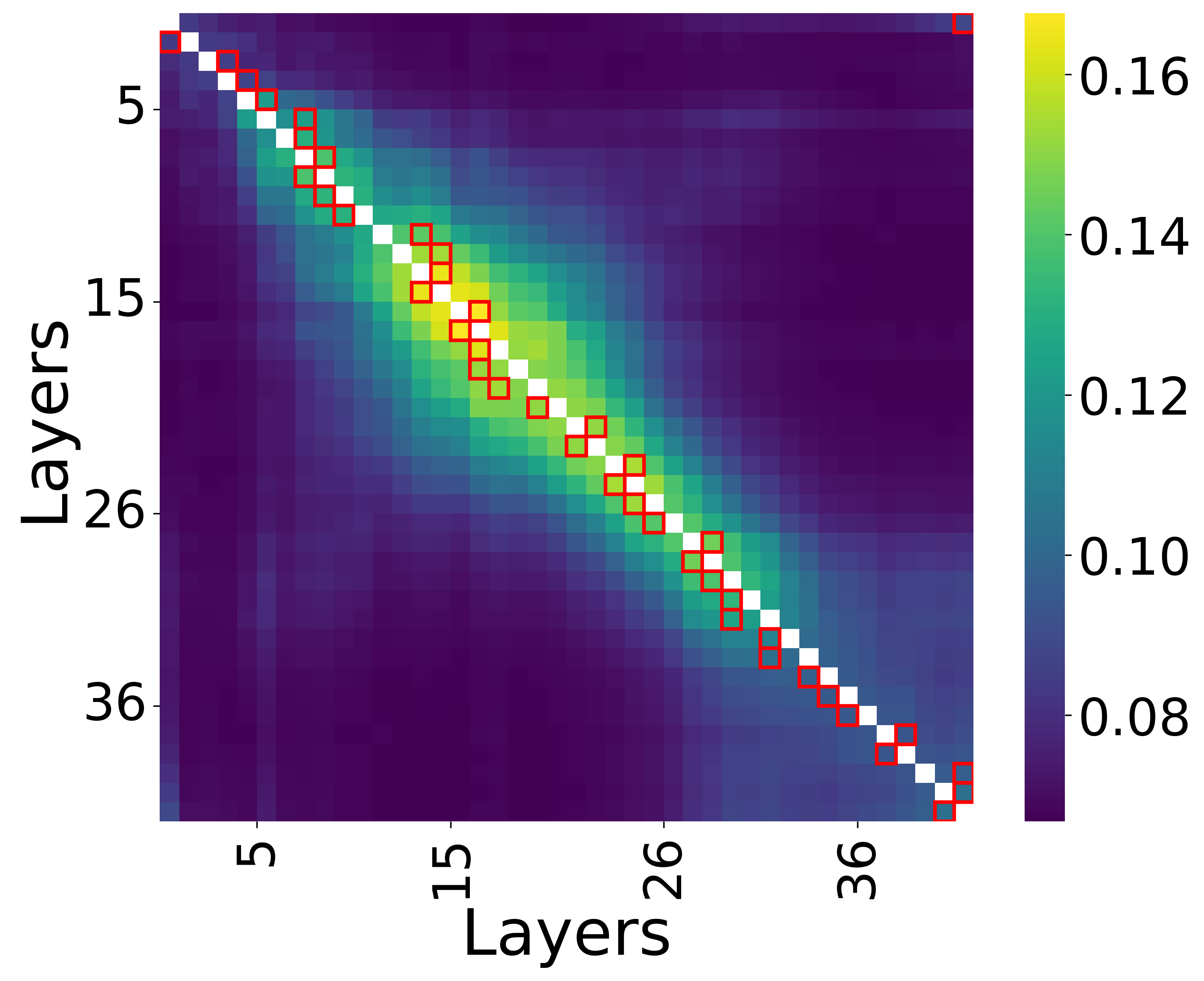} 
        \caption{\textsc{MLP-Down}}
    \end{subfigure}
    \hfill
    \begin{subfigure}[b]{0.3\textwidth}
        \includegraphics[width=\textwidth]{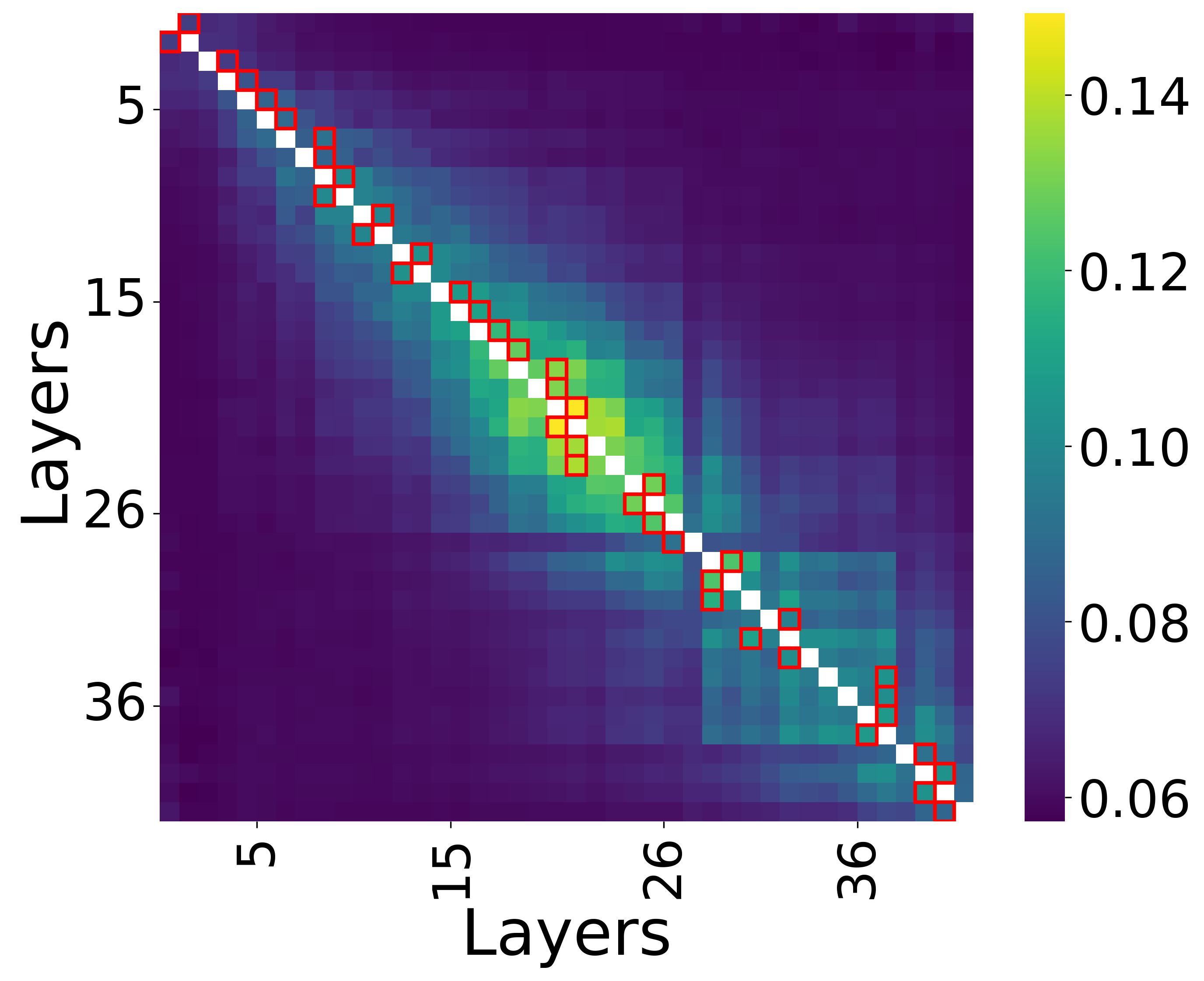} 
        \caption{$W_k$}
    \end{subfigure}

    \begin{subfigure}[b]{0.3\textwidth}
        \includegraphics[width=\textwidth]{gemma9b_p1/v_list_gemma-2-9b_gumbel_u_mean_similarity_matrix_highlighted_heatmap.png} 
        \caption{$W_v$}
    \end{subfigure}
    \hfill
    \begin{subfigure}[b]{0.3\textwidth}
        \includegraphics[width=\textwidth]{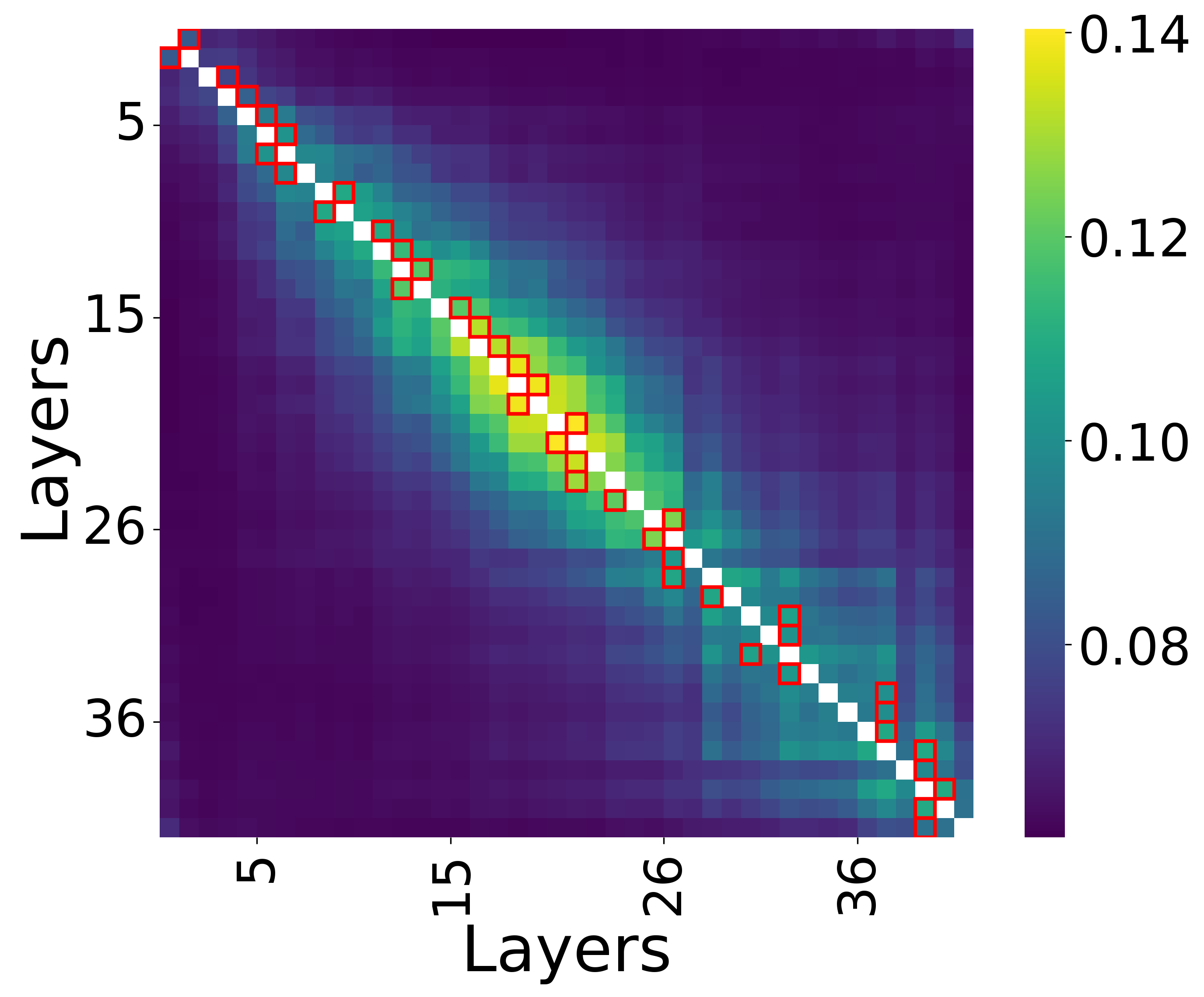} 
        \caption{$W_q$}
    \end{subfigure}
    \hfill
    \begin{subfigure}[b]{0.3\textwidth}
        \includegraphics[width=\textwidth]{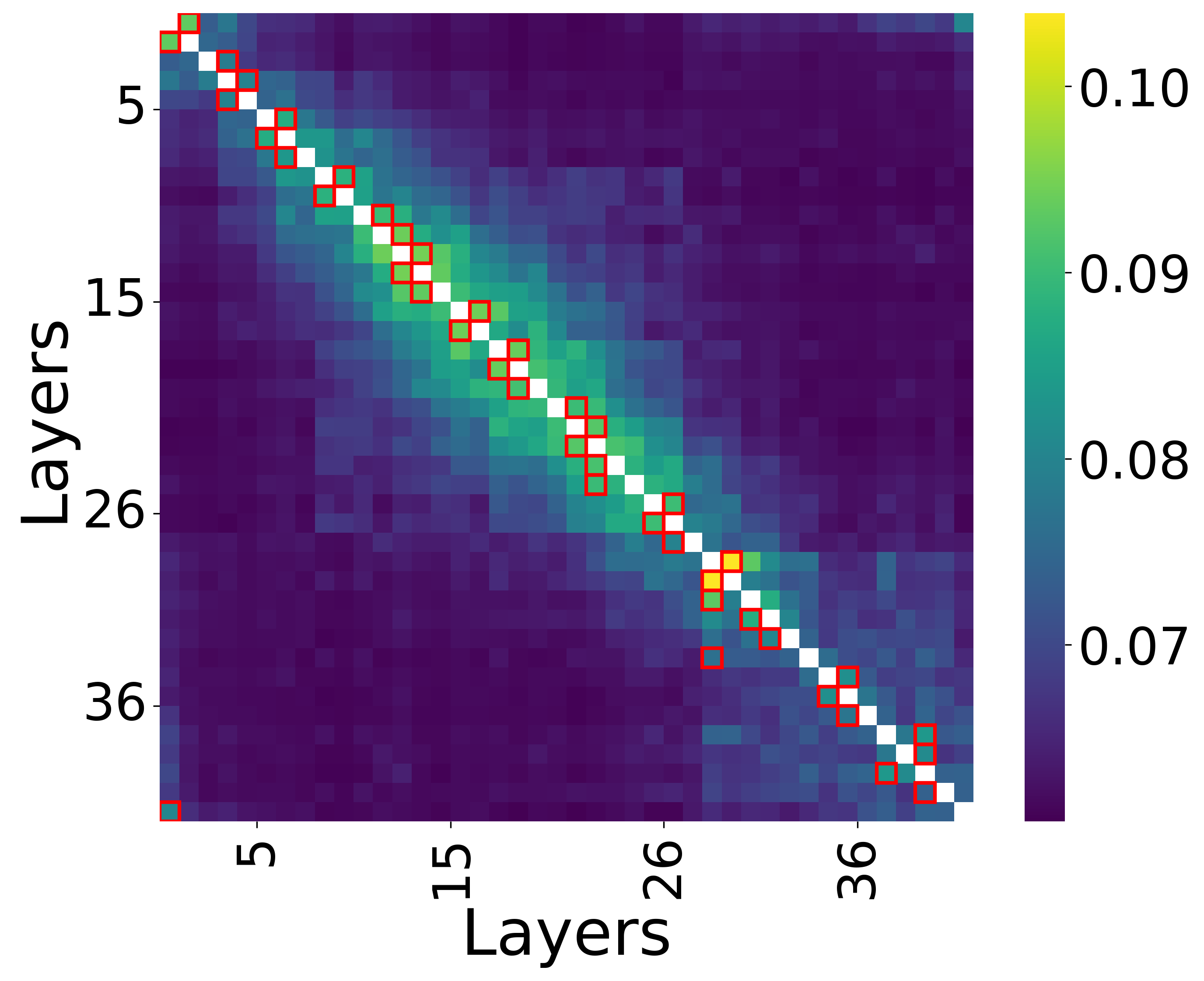} 
        \caption{$W_o$}
    \end{subfigure}
    
    \caption{DOCS scores between transformer layers in gemma-2-9b.}
\end{figure*}

\begin{figure*}[ht] 
    \centering
    \begin{subfigure}[b]{0.3\textwidth}
        \includegraphics[width=\textwidth]{gemma27b_p1/mlp_up_list_gemma-2-27b_gumbel_u_mean_similarity_matrix_highlighted_heatmap.png} 
        \caption{\textsc{MLP-Up}}
    \end{subfigure}
    \hfill
    \begin{subfigure}[b]{0.3\textwidth}
        \includegraphics[width=\textwidth]{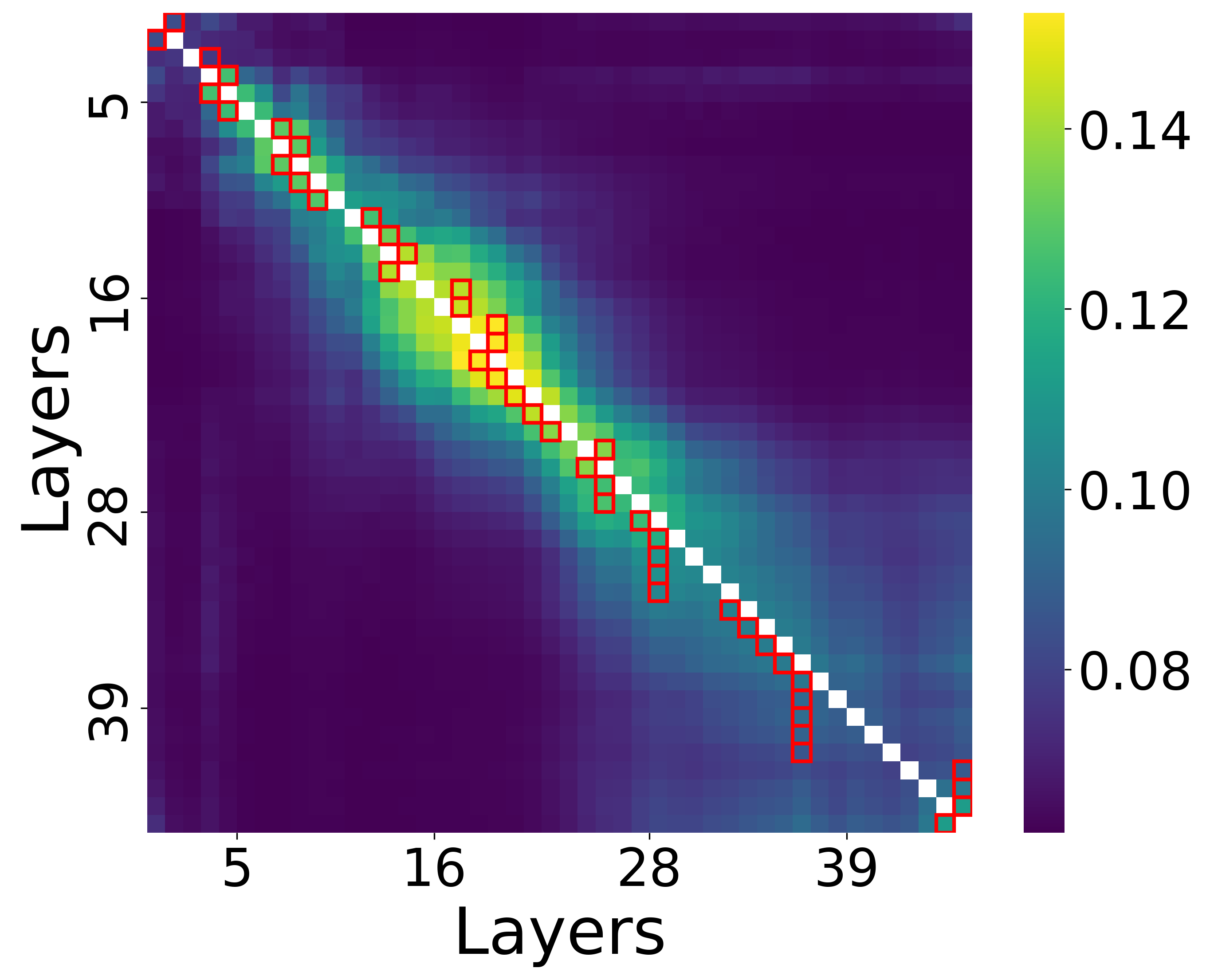} 
        \caption{\textsc{MLP-Down}}
    \end{subfigure}
    \hfill
    \begin{subfigure}[b]{0.3\textwidth}
        \includegraphics[width=\textwidth]{gemma27b_p1/k_list_gemma-2-27b_gumbel_u_mean_similarity_matrix_highlighted_heatmap.png} 
        \caption{$W_k$}
    \end{subfigure}

    \begin{subfigure}[b]{0.3\textwidth}
        \includegraphics[width=\textwidth]{gemma27b_p1/v_list_gemma-2-27b_gumbel_u_mean_similarity_matrix_highlighted_heatmap.png} 
        \caption{$W_v$}
    \end{subfigure}
    \hfill
    \begin{subfigure}[b]{0.3\textwidth}
        \includegraphics[width=\textwidth]{gemma27b_p1/q_list_gemma-2-27b_gumbel_u_mean_similarity_matrix_highlighted_heatmap.png} 
        \caption{$W_q$}
    \end{subfigure}
    \hfill
    \begin{subfigure}[b]{0.3\textwidth}
        \includegraphics[width=\textwidth]{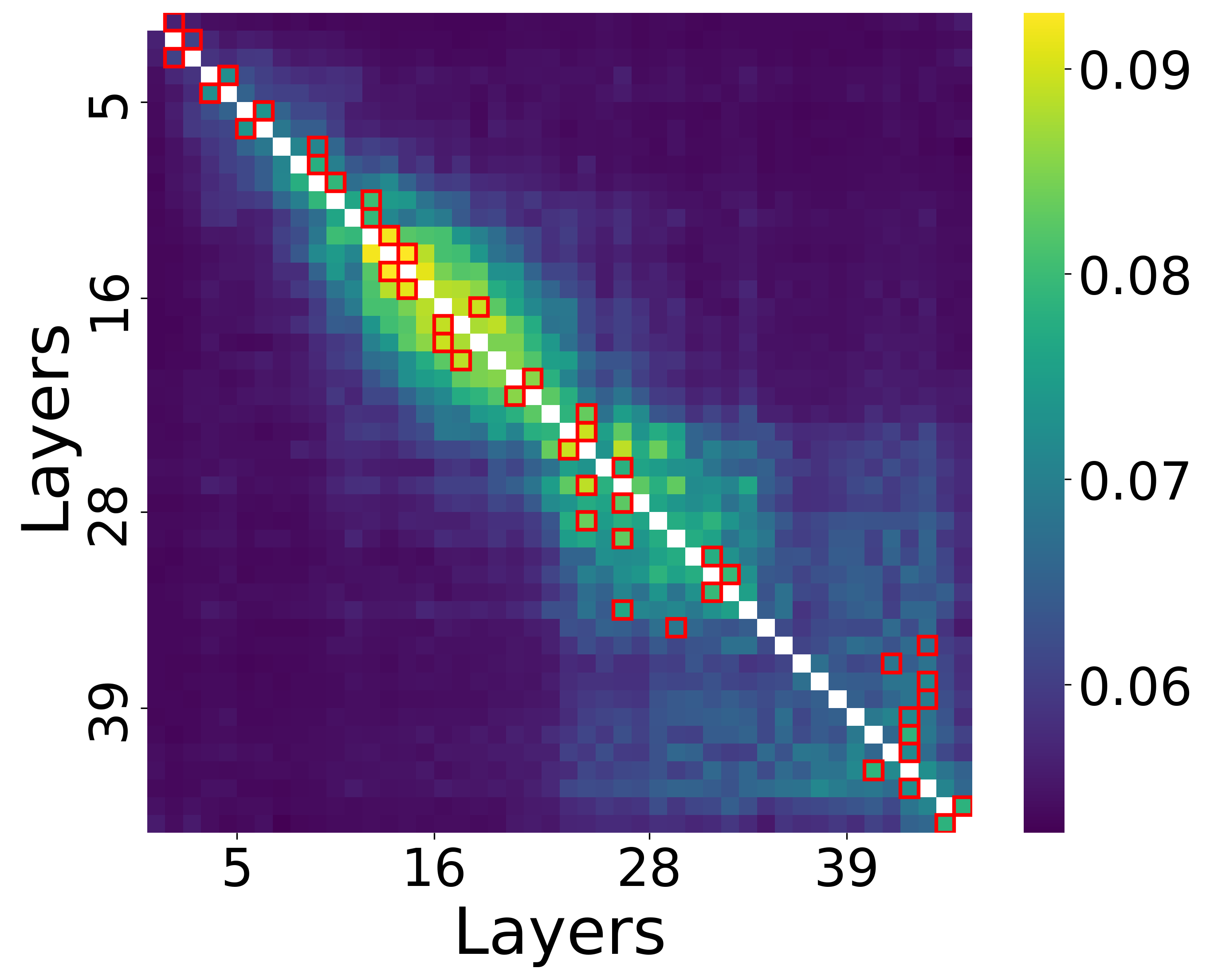} 
        \caption{$W_o$}
    \end{subfigure}
    
    \caption{DOCS scores between transformer layers in gemma-2-27b.}
\end{figure*}

\begin{figure*}[ht] 
    \centering
    \begin{subfigure}[b]{0.3\textwidth}
        \includegraphics[width=\textwidth]{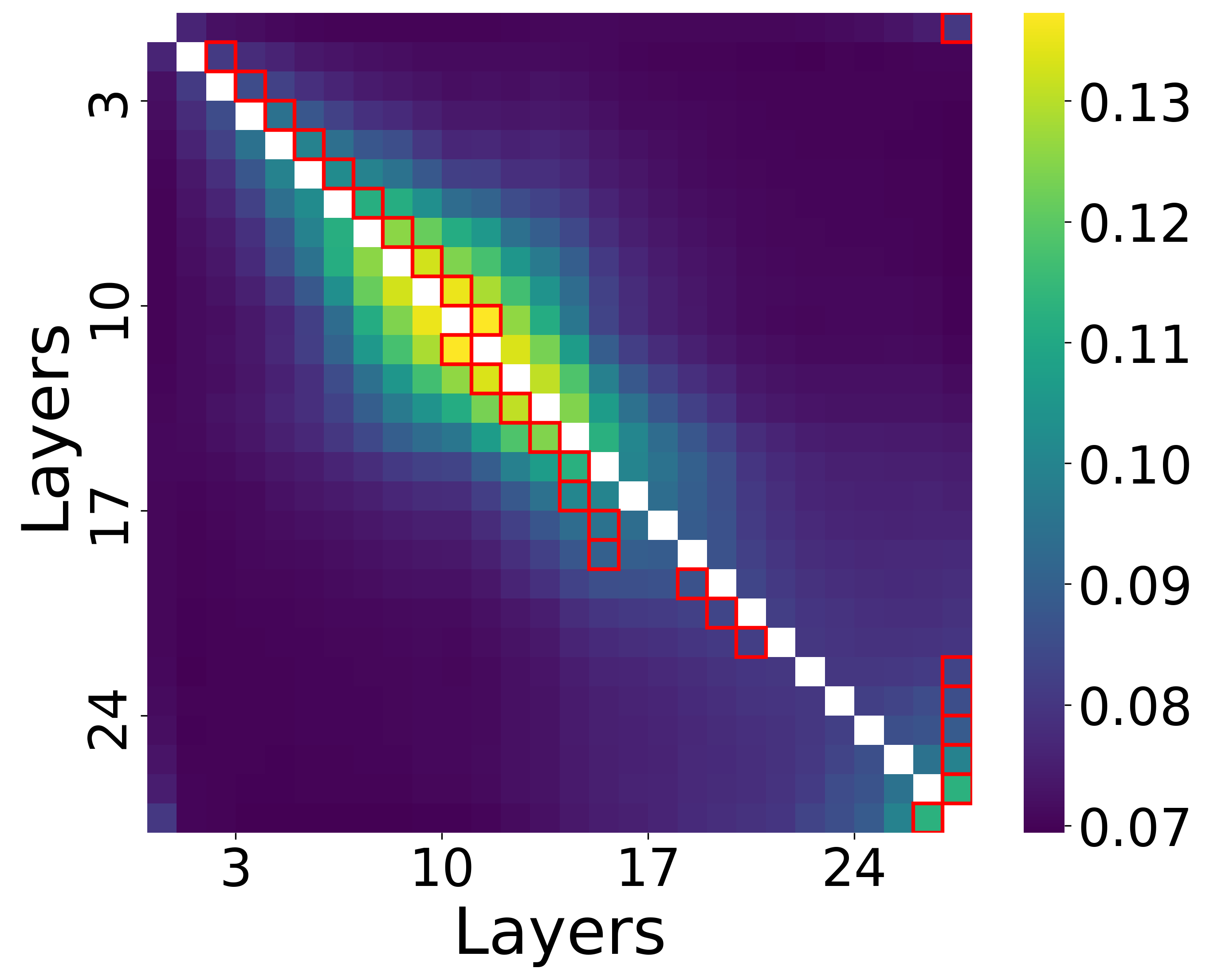} 
        \caption{\textsc{MLP-Up}}
    \end{subfigure}
    \hfill
    \begin{subfigure}[b]{0.3\textwidth}
        \includegraphics[width=\textwidth]{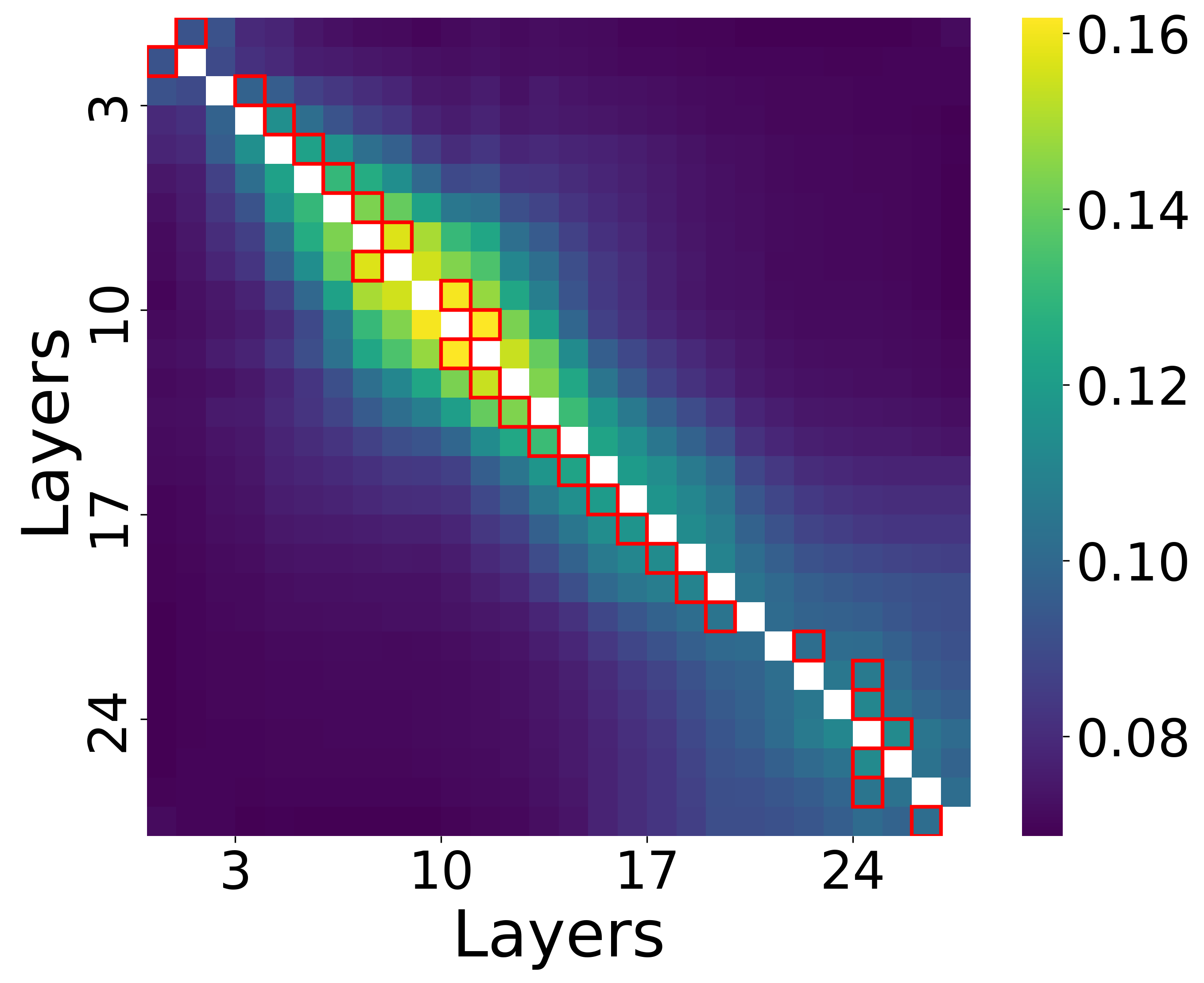} 
        \caption{\textsc{MLP-Down}}
    \end{subfigure}
    \hfill
    \begin{subfigure}[b]{0.3\textwidth}
        \includegraphics[width=\textwidth]{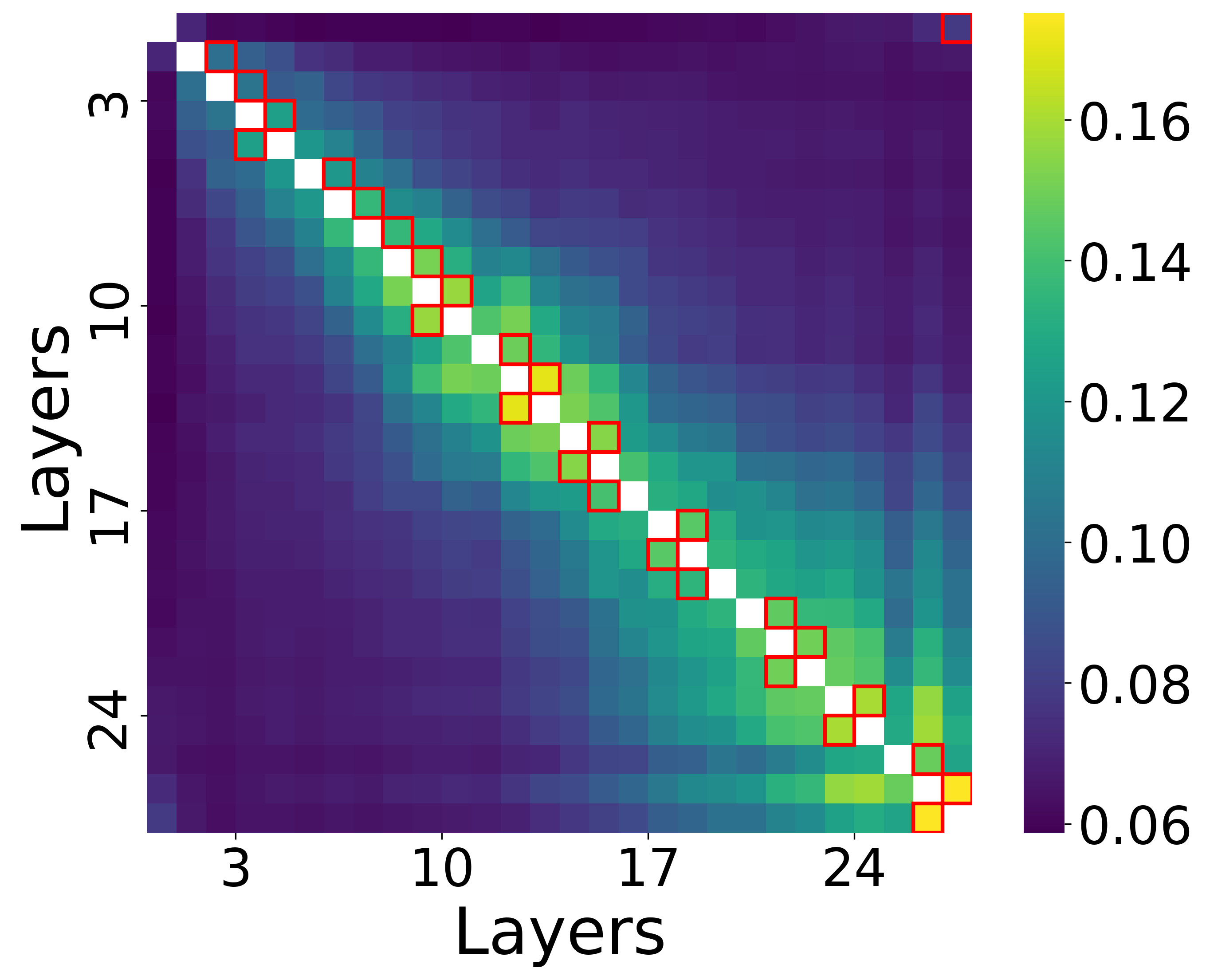} 
        \caption{$W_k$}
    \end{subfigure}

    \begin{subfigure}[b]{0.3\textwidth}
        \includegraphics[width=\textwidth]{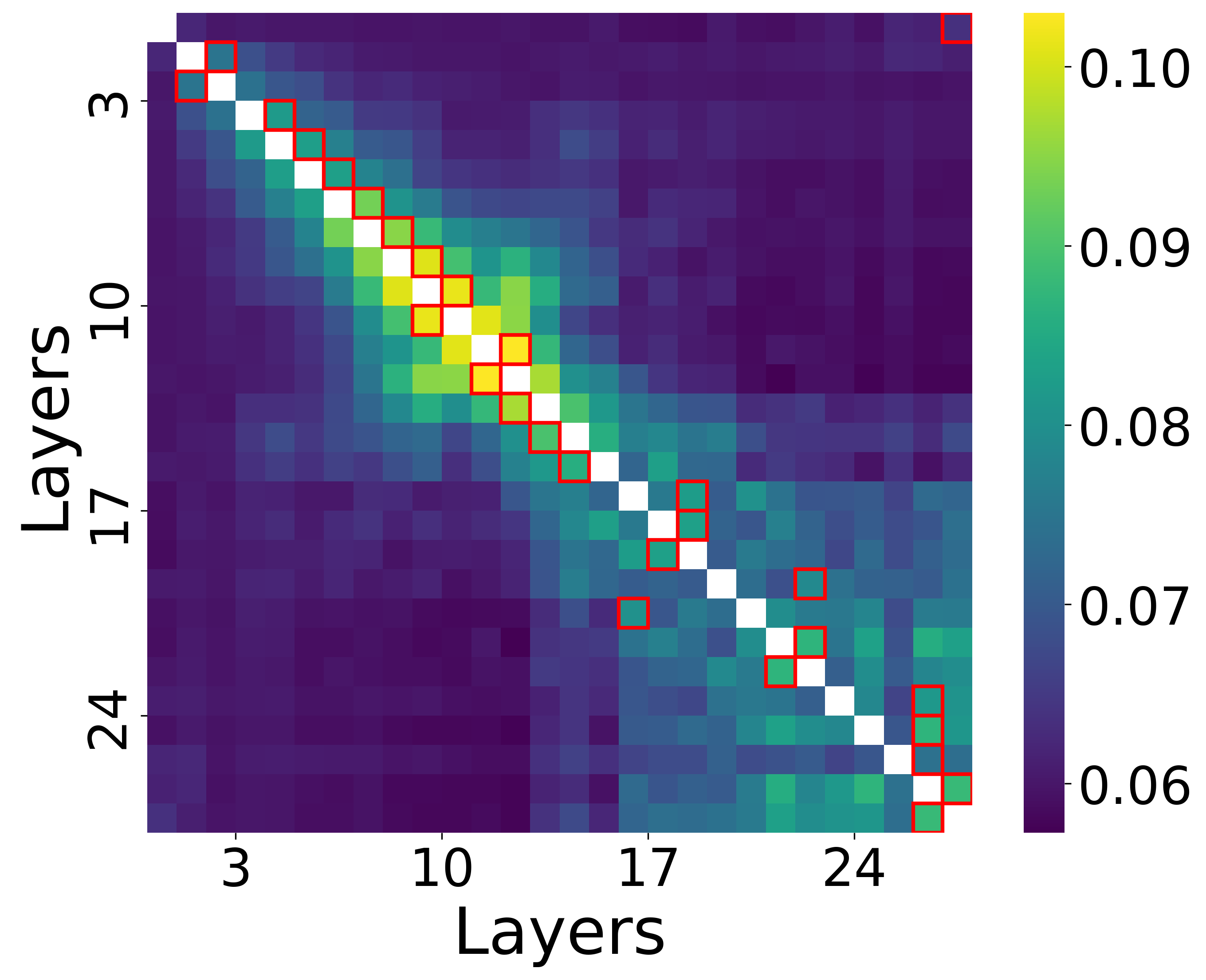} 
        \caption{$W_v$}
    \end{subfigure}
    \hfill
    \begin{subfigure}[b]{0.3\textwidth}
        \includegraphics[width=\textwidth]{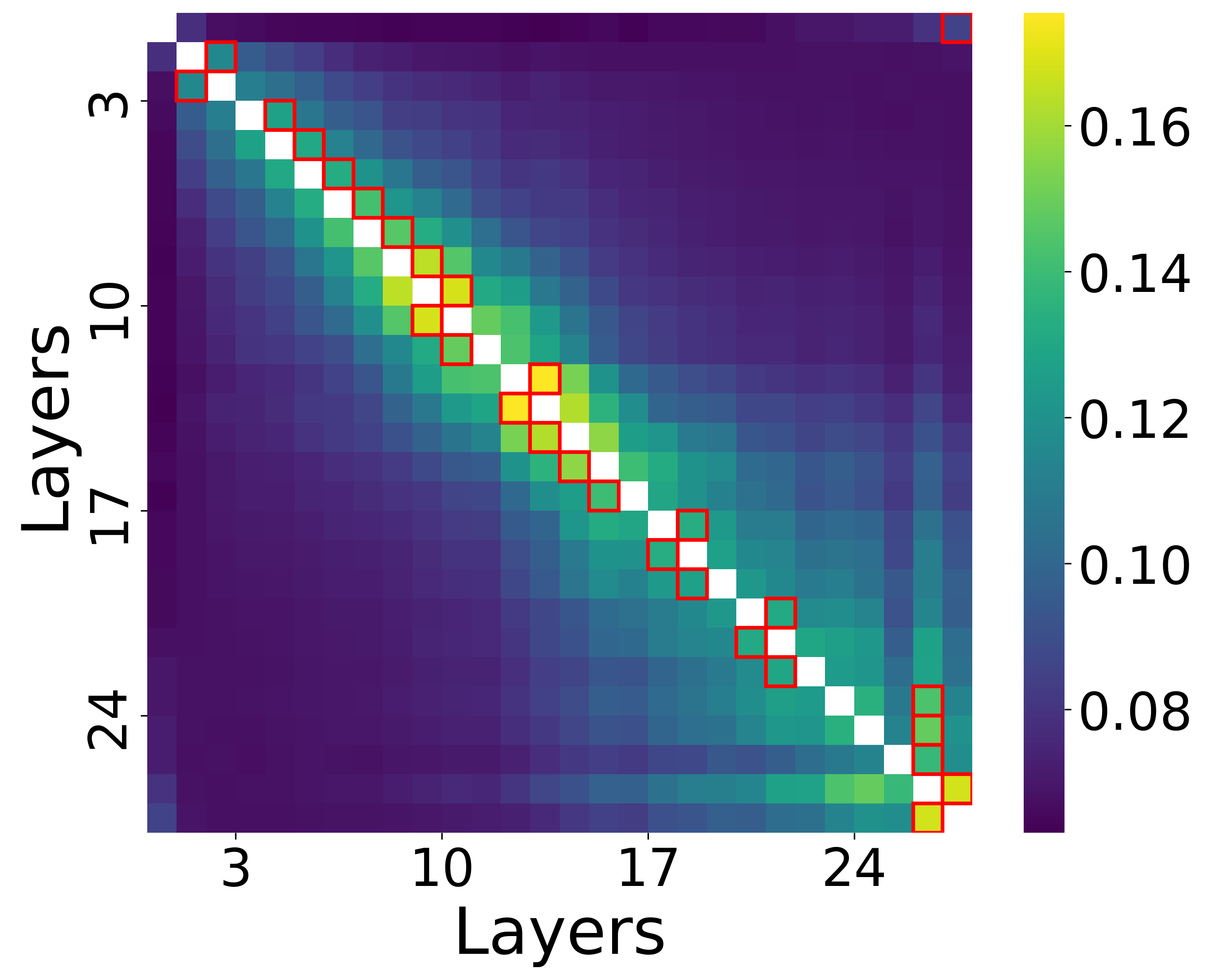} 
        \caption{$W_q$}
    \end{subfigure}
    \hfill
    \begin{subfigure}[b]{0.3\textwidth}
        \includegraphics[width=\textwidth]{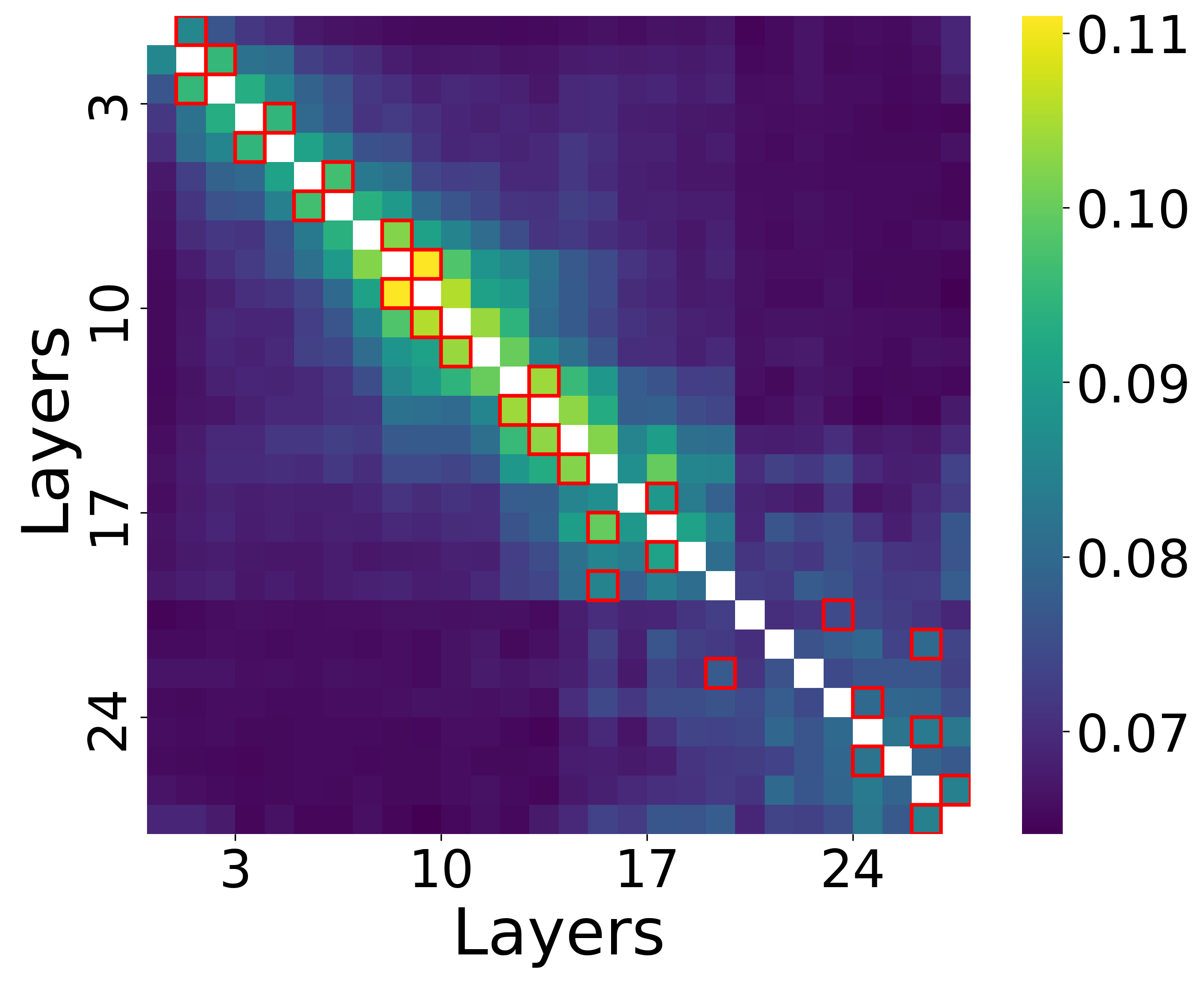} 
        \caption{$W_o$}
    \end{subfigure}
    
    \caption{DOCS scores between transformer layers in Llama-3.1-3B.}
\end{figure*}

\begin{figure*}[ht] 
    \centering
    \begin{subfigure}[b]{0.3\textwidth}
        \includegraphics[width=\textwidth]{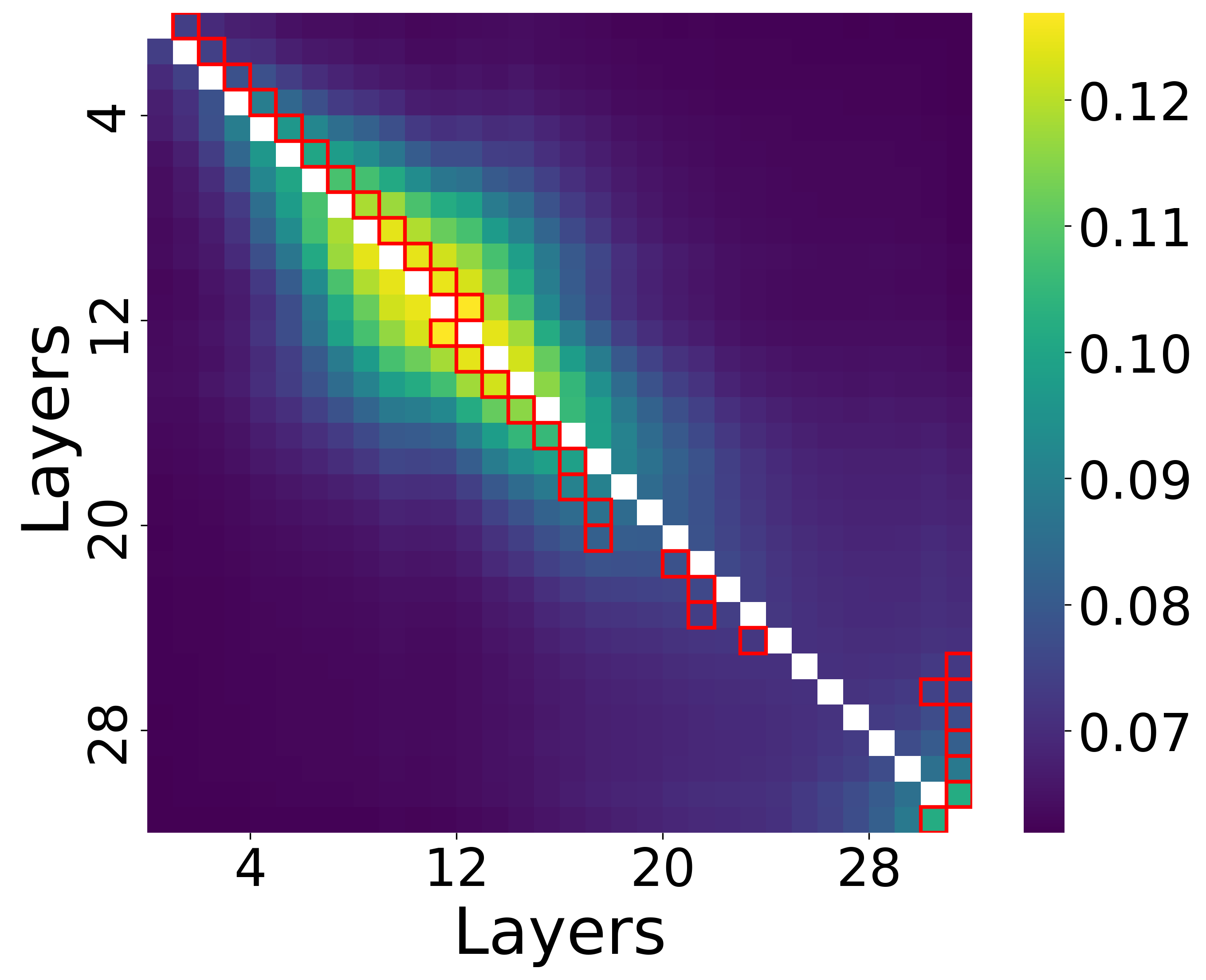} 
        \caption{\textsc{MLP-Up}}
    \end{subfigure}
    \hfill
    \begin{subfigure}[b]{0.3\textwidth}
        \includegraphics[width=\textwidth]{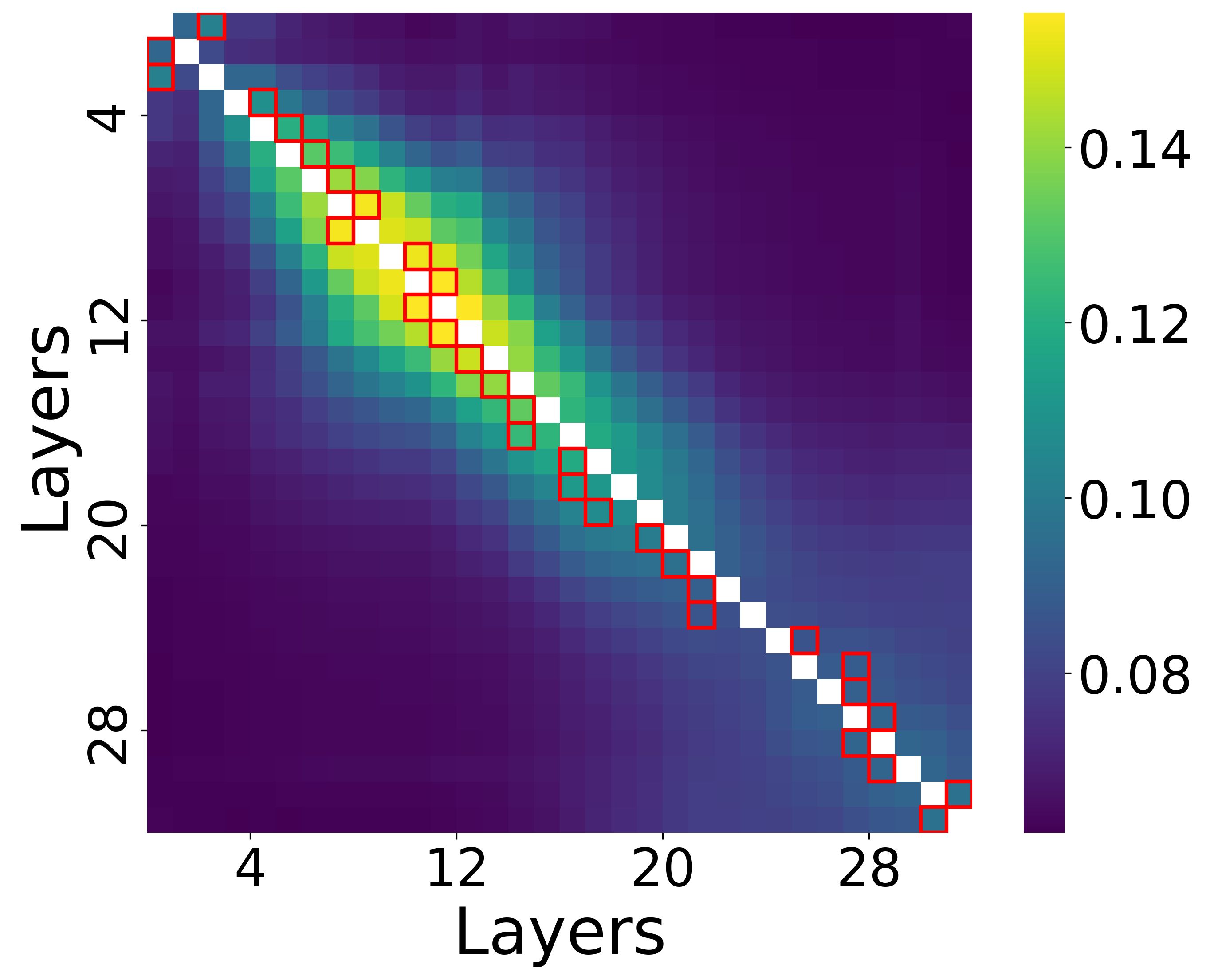} 
        \caption{\textsc{MLP-Down}}
    \end{subfigure}
    \hfill
    \begin{subfigure}[b]{0.3\textwidth}
        \includegraphics[width=\textwidth]{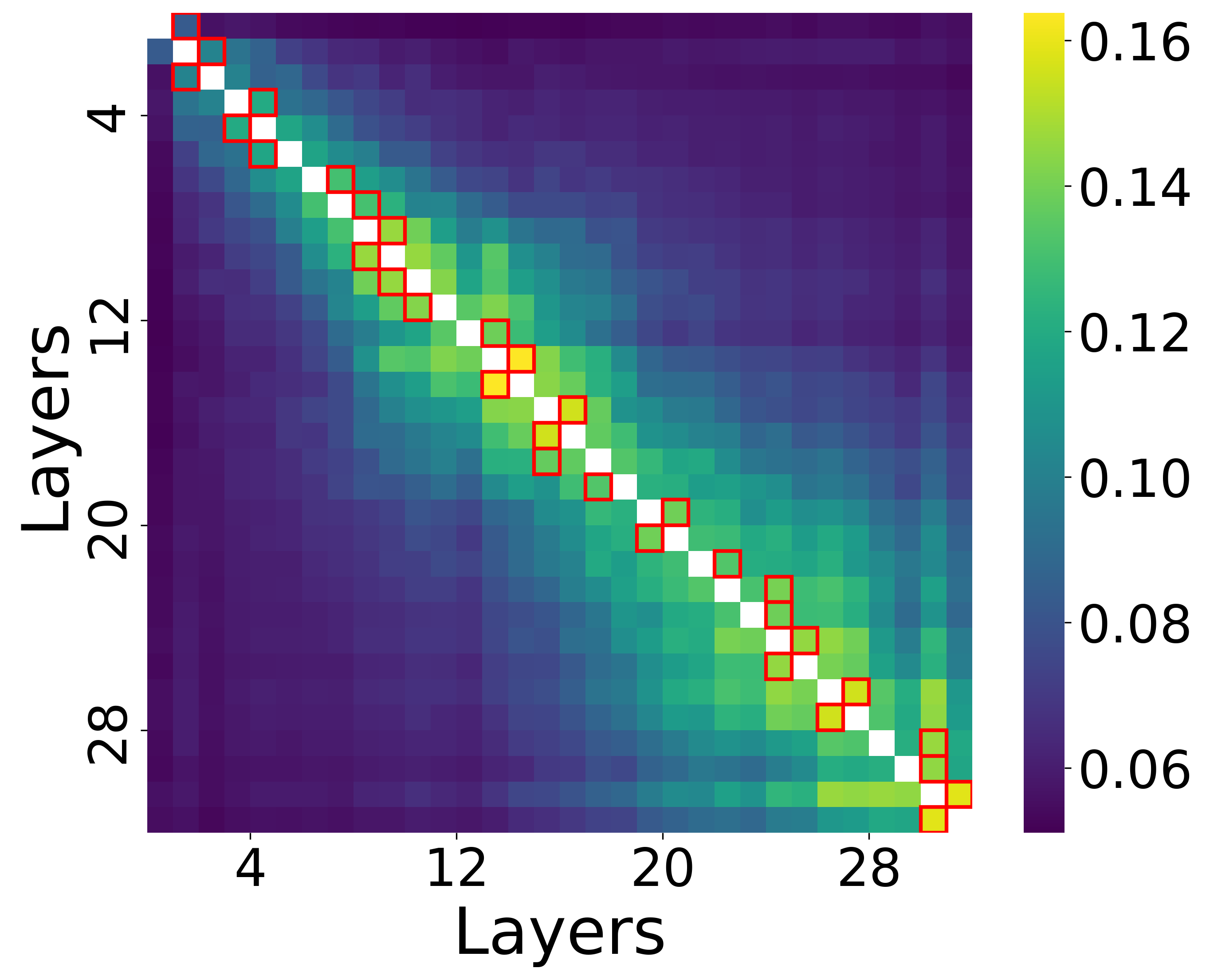} 
        \caption{$W_k$}
    \end{subfigure}

    \begin{subfigure}[b]{0.3\textwidth}
        \includegraphics[width=\textwidth]{llama8b_p1/v_list_Meta-Llama-3-1-8B_gumbel_u_mean_similarity_matrix_highlighted_heatmap.png} 
        \caption{$W_v$}
    \end{subfigure}
    \hfill
    \begin{subfigure}[b]{0.3\textwidth}
        \includegraphics[width=\textwidth]{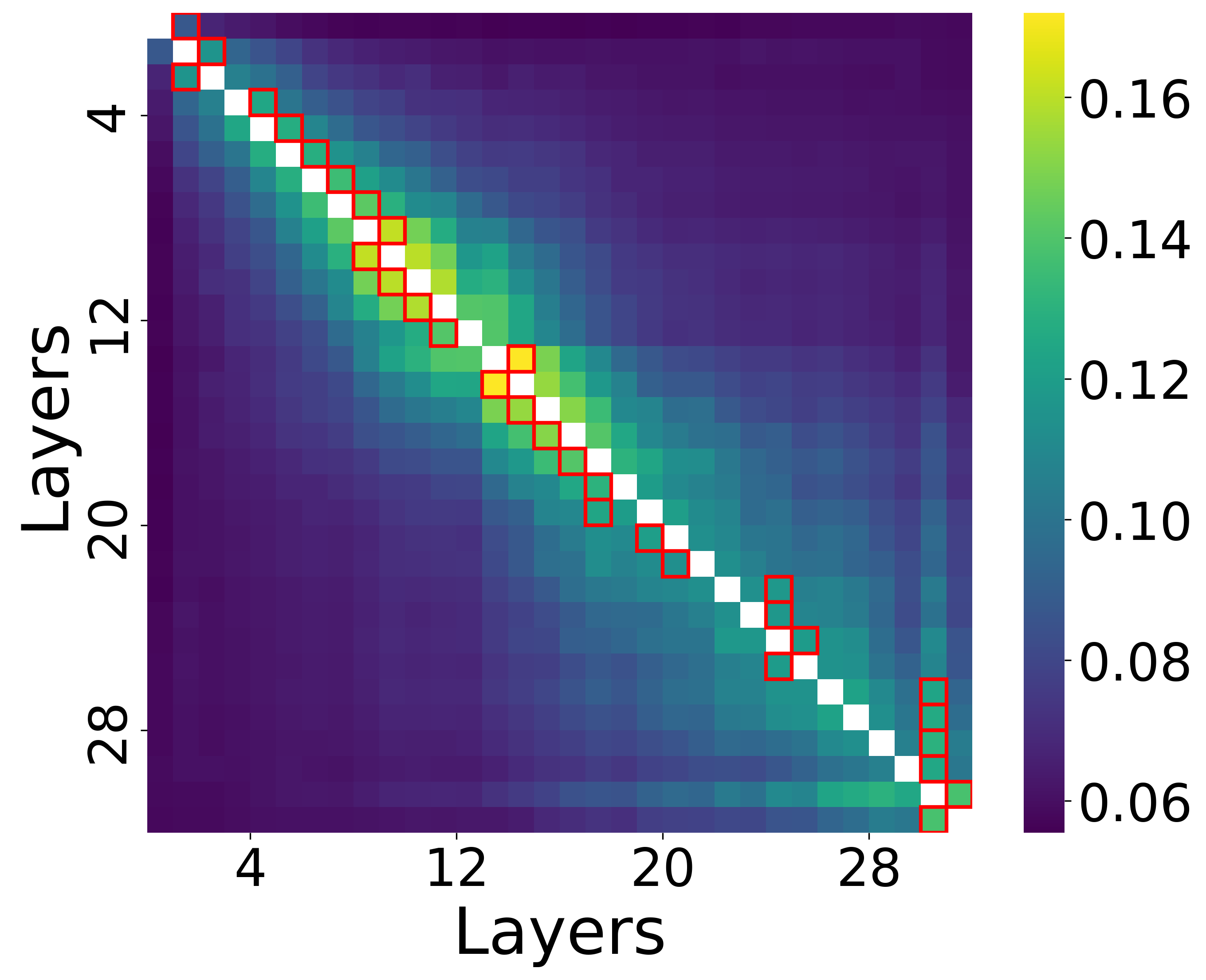} 
        \caption{$W_q$}
    \end{subfigure}
    \hfill
    \begin{subfigure}[b]{0.3\textwidth}
        \includegraphics[width=\textwidth]{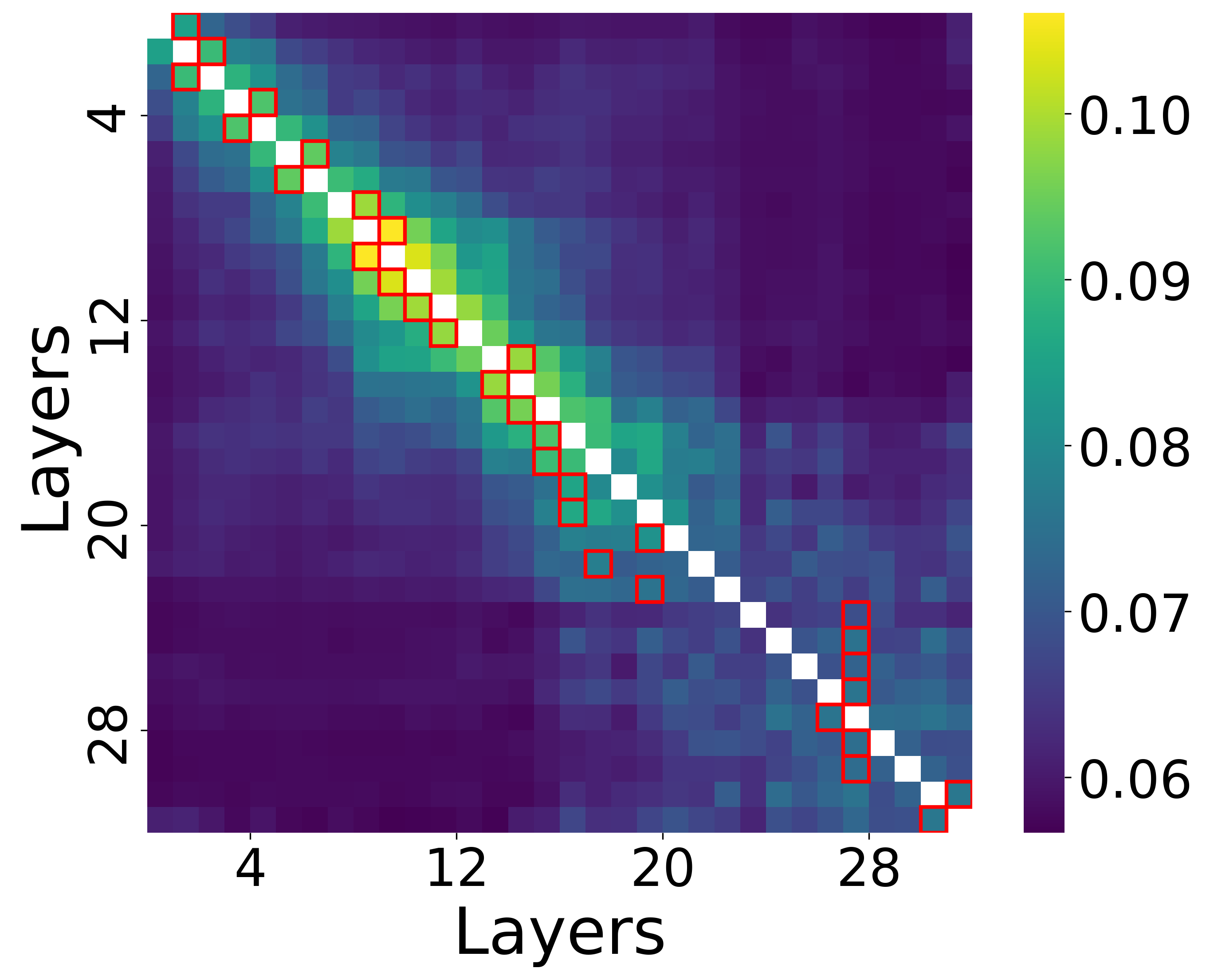} 
        \caption{$W_o$}
    \end{subfigure}
    
    \caption{DOCS scores between transformer layers in Llama-3.1-8B.}
\end{figure*}

\end{document}